%% file: main.tex
\documentclass{article}

\usepackage[numbers,sort&compress]{natbib}
\usepackage[preprint]{neurips_2026}
\usepackage{geometry}
\usepackage{tcolorbox}
\usepackage{upgreek}
\usepackage{mathrsfs}
\input{header.tex}

\input{def}

\title{Uniform Diffusion Models Revisited: Leave-One-Out Denoiser and Absorbing State Reformulation}

\author{Samson Gourevitch$^{*, 1}$
\quad
Yazid Janati$^{*, 2, 4}$ 
\quad 
Dario Shariatian$^3$\\
\textbf{
Umut Simsekli$^3$ \quad Eric Moulines$^{4, 5}$ \quad Eric P. Xing$^{2,4}$ 
\quad 
Alain Durmus$^1$} \\
$^1$ CMAP, Ecole polytechnique \quad $^2$ Institute of Foundation Models\\ $^3$ Inria, PSL Research University \quad $^4$ MBZUAI \quad $^5$ EPITA, LRE}
\date{}

\begin{document}

\maketitle

\begin{abstract}

Discrete diffusion models are often trained through clean-data prediction, but the prediction can be used in different ways to define the reverse dynamics. In Masked Diffusion Models (MDM) these choices largely coincide, whereas in Uniform Diffusion Models (UDM) they do not. We show that the standard plug-in bridge parameterization for UDM is not optimized by the denoising posterior, but by a \emph{leave-one-out} posterior that predicts each clean token without using its own noisy observation. This identifies a mismatch between the plug-in ELBO and the usual cross-entropy denoising objective.
We characterize the leave-one-out target and derive exact conversions between the denoiser, the leave-one-out posterior, and the score. These conversions allow us to disentangle parameterization and training objective. Our results also lead to inference improvements without any additional training through an informed predictor-corrector sampler and improved temperature sampling based on the leave-one-out predictor.
 We further introduce an absorbing-state reformulation of uniform diffusion that preserves the UDM joint law while decomposing it into masked-diffusion-like sampling operations, with simpler denoising posteriors, carry-over unmasking, and a natural remasking mechanism. On language modeling, leave-one-out parameterizations consistently improve UDM generation, while the absorbing construction matches or surpasses masked diffusion. These results suggest that the empirical gap between masked and uniform diffusion is driven less by the choice of marginals themselves than by parameterization and sampling design. The code and models can be found at \url{https://github.com/samsongourevitch/rev_udm}.

\end{abstract}

\blfootnote{* Equal contribution}
\blfootnote{Corresponding authors: \texttt{samson.gourevitch@polytechnique.edu}, \quad \texttt{yazid.janati@mbzuai.ac.ae} \quad and \\
\texttt{dario.shariatian@inria.fr}}
\input{body/introduction.tex}
\input{body/background.tex}
\input{body/method.tex}
\input{body/absorbing_uniform.tex}
\input{body/related_works.tex}
\input{body/experiments.tex}

\input{body/conclusion.tex}
\bibliographystyle{plainnat}
\bibliography{bibliography}

\newpage
\appendix
\crefalias{section}{appendix}
\crefalias{subsection}{appendix}
\crefalias{subsubsection}{appendix}
\section*{Appendix Outline}
The appendix is organized as follows. \Cref{appendix:loo} proves the leave-one-out optimality result, gives the conversion formulas between the leave-one-out denoiser, denoiser and score, and derives the Gibbs conditional used by the predictor-corrector sampler. \Cref{apdx:audm} gives the full derivation of AUDM and ReAUDM, including the continuous-time AUDM objective and the lifted resampling construction. \Cref{app:mudm} gives the masked-diffusion lifting of UDM and proves \Cref{prop:mudm-joint}. \Cref{app:bridge-extension} discusses the bridge extension underlying the plug-in parameterization. \Cref{app:predictor-corrector} details the predictor-corrector sampler. In \Cref{sec:ctmc} we provide the continuous-time CTMC perspective of discrete diffusion models. In \Cref{app:max-coupling} we detail Maximal Coupling Uniform Diffusion and its ELBO. \Cref{app:loo-sensitivity} discusses the leave-one-out invariance diagnostic. \Cref{app:exp-details} gives experimental details and additional results.
\input{appendix/loo_denoiser.tex}

\input{appendix/audm.tex}
\input{appendix/mudm.tex}
\input{appendix/bridge_linearization.tex}
\input{appendix/predictor_corrector.tex}
\input{appendix/ctmc.tex}
\input{appendix/max_coupling.tex}

\input{appendix/loo_sensitivity.tex}
\input{appendix/experiments.tex}

\end{document}

%% file: header.tex
\usepackage{ifthen}
\usepackage{xargs}
\usepackage{amsmath}
\usepackage{bm}
\usepackage{amssymb}
\usepackage{dsfont}
\usepackage{mathtools}
\usepackage{amsthm}
\usepackage{booktabs}
\usepackage{colortbl}
\usepackage{comment}
\usepackage{stmaryrd}
\usepackage[colorlinks=true,linkcolor=blue,citecolor=blue,urlcolor=blue]{hyperref}
\usepackage{todonotes}
\usepackage{wrapfig}
\usepackage{caption}
\usepackage{subcaption}
\usepackage{float}
\usepackage{algorithm}
\usepackage{algpseudocode}
\usepackage{enumitem}
\definecolor{crimson}{RGB}{220,20,60}

\newcommand\blfootnote[1]{%
  \begingroup
  \renewcommand\thefootnote{}\footnote{#1}%
  \addtocounter{footnote}{-1}%
  \endgroup
}
\usepackage[framemethod=TikZ]{mdframed}
\mdfdefinestyle{propFrame}{%
    linecolor=black!8!white,
    outerlinewidth=.6pt,
    roundcorner=5pt,
    innertopmargin=7pt,
    innerbottommargin=7pt,
    innerrightmargin=8pt,
    innerleftmargin=8pt,
    backgroundcolor=black!3!white
}

\mdfsetup{
  skipabove=.2cm,
  skipbelow=0pt
}

\usepackage[capitalize,noabbrev]{cleveref}

\theoremstyle{plain}

\newtheorem{proposition}{Proposition}
\newtheorem{lemma}{Lemma}

\theoremstyle{definition}

\theoremstyle{plain}
\newtheorem{remark}{Remark}

\crefname{appendix}{appendix}{appendices}
\Crefname{appendix}{Appendix}{Appendices}
\crefname{algorithm}{algorithm}{algorithms}
\Crefname{algorithm}{Algorithm}{Algorithms}
\algrenewcommand\algorithmicrequire{\textbf{Require:}}
\algrenewcommand\algorithmicensure{\textbf{Ensure:}}

\newcounter{hypA}
\newenvironment{hypA}{\refstepcounter{hypA}\begin{itemize}
  \item[({\bf A\arabic{hypA}})]}{\end{itemize}}       

%% file: def.tex
\def\vocab{K}
\def\loo{\mathrm{loo}}
\def\audm{\mathrm{AUDM}}

\def\maxcoupling{\mathrm{MCUD}}
\def\len{L}
\def\noise{\mathbf{u}}
\def\msv{\mathsf{V}}

\def\msx{\mathsf{X}}

\def\rset{\mathbb{R}}

\def\rmd{\mathrm{d}}

\def\eqsp{\,}

\def\one{\mathbf{1}}

\def\pE{\mathbb{E}}
\def\softmax{\mathrm{softmax}}
\def\pP{\mathbb{P}}

\def\indic{\mathds{1}}

\def\eqdef{\vcentcolon=}

\def\wrt{w.r.t.}

\def\law{\text{Law}}

\def\last{n}
\newcommand{\tk}[1]{{t_{#1}}}
\newcommand{\sk}[1]{{s_{#1}}}
\newcommand{\onehot}[1]{e_{#1}}

\def\mask{\mathsf{m}}

\newcommand{\dotp}[2]{\langle #1, #2 \rangle}

\newcommand{\assp}[1]{(\textbf{A}\textbf{#1})}

\def\txts{\textstyle}

\newcommand{\intset}[2]{\llbracket #1, #2 \rrbracket}
\newcommand{\kldivergence}[2]{\mathrm{KL}(#1\|#2)}

 \def\rset{{\mathbb{R}}}

 \def\rmd{\mathrm{d}}
 \def\eqsp{\;}
 \def\eqdef{\coloneqq}

\newcommand{\tbar}{\mathrel{\raisebox{0.15ex}{$\scriptscriptstyle\mid$}}}

\newcommandx\vartarg[4][4=]{
    \ifthenelse{\equal{#2}{}}{
        \ifthenelse{\equal{#3}{}}{
            \varpi^{#4} _{\!#1}
        }{
            \varpi^{#4} _{\!#1}(#3)
            }
    }{
        \ifthenelse{\equal{#3}{}}{
            \varpi^{#4} _{\!#1}(\cdot|#2)
        }{
            \varpi^{#4} _{\!#1}(#3|#2)
        }
    }}

\newcommandx\targ[4][4=]{
    \ifthenelse{\equal{#2}{}}{
        \ifthenelse{\equal{#3}{}}{
            \pi^{#4} _{\!#1}
        }{
            \pi^{#4} _{\!#1}(#3)
            }
    }{
        \ifthenelse{\equal{#3}{}}{
            \pi^{#4} _{\!#1}(\cdot|#2)
        }{
            \pi^{#4} _{\!#1}(#3|#2)
        }
    }}

\newcommandx\interp[4][4=\interpscale]{
    \ifthenelse{\equal{#2}{}}{
        \ifthenelse{\equal{#3}{}}{
            \pi^{#4} _{#1}
        }{
            \pi^{#4} _{#1}(#3)
            }
    }{
        \ifthenelse{\equal{#3}{}}{
            \pi^{#4} _{#1}(\cdot|#2)
        }{
            \pi^{#4} _{#1}(#3|#2)
        }
    }}

\newcommandx\dpdata[4][4=]{
\ifthenelse{\equal{#2}{}}{
    \ifthenelse{\equal{#3}{}}{
        p^{d, #4} _{#1}
    }{
        p^{d, #4} _{#1}(#3)
    }
}{
    \ifthenelse{\equal{#3}{}}{
        p^{d, #4} _{#1}(\cdot|#2)
    }{
        p^{d, #4} _{#1}(#3|#2)
    }
}}

\newcommandx\hpdata[4][4=]{
\ifthenelse{\equal{#2}{}}{
    \ifthenelse{\equal{#3}{}}{
        \hat{p}^{#4} _{#1}
    }{
        \hat{p}^{#4} _{#1}(#3)
    }
}{
    \ifthenelse{\equal{#3}{}}{
        \hat{p}^{#4} _{#1}(\cdot|#2)
    }{
        \hat{p}^{#4} _{#1}(#3|#2)
    }
}}
\newcommandx\pdata[4][4=]{
    \ifthenelse{\equal{#2}{}}{
        \ifthenelse{\equal{#3}{}}{
            p^{#4} _{#1}
        }{
            p^{#4} _{#1}(#3)
        }
    }{
        \ifthenelse{\equal{#3}{}}{
            p^{#4} _{#1}(\cdot|#2 )
        }{
            p^{#4} _{#1}(#3|#2)
        }
    }}

\newcommandx\barpdata[4][4=]{
    \ifthenelse{\equal{#2}{}}{
        \ifthenelse{\equal{#3}{}}{
            \bar{p}^{#4} _{#1}
        }{
            \bar{p}^{#4} _{#1}(#3)
        }
    }{
        \ifthenelse{\equal{#3}{}}{
            \bar{p}^{#4} _{#1}(\cdot|#2 )
        }{
            \bar{p}^{#4} _{#1}(#3|#2)
        }
    }}

\newcommandx\pnoise[4][4=]{
\ifthenelse{\equal{#2}{}}{
    \ifthenelse{\equal{#3}{}}{
        \upsilon^{#4} _{#1}
    }{
        \upsilon^{#4} _{#1}(#3)
    }
}{
    \ifthenelse{\equal{#3}{}}{
        \upsilon^{#4} _{#1}(\cdot|#2 )
    }{
        \upsilon^{#4} _{#1}(#3|#2)
    }
}}

\newcommandx\ipdata[4][4=]{
\ifthenelse{\equal{#2}{}}{
    \ifthenelse{\equal{#3}{}}{
        \bar{p}^{#4} _{#1}
    }{
        \bar{p}^{#4} _{#1}(#3)
    }
}{
    \ifthenelse{\equal{#3}{}}{
        \bar{p}^{#4} _{#1}(\cdot|#2 )
    }{
        \bar{p}^{#4} _{#1}(#3|#2)
    }
}}

\newcommandx\ratemat[4][4=]{
\ifthenelse{\equal{#2}{}}{
    \ifthenelse{\equal{#3}{}}{
        R^{#4} _{#1}
    }{
        \lambda^{#4} _{#1}(#3)
    }
}{
    \ifthenelse{\equal{#3}{}}{
        R^{#4} _{#1}(\cdot | #2)
    }{
        R^{#4} _{#1}(#3 | #2)
    }
}}

\newcommandx\rratemat[4][4=]{
\ifthenelse{\equal{#2}{}}{
    \ifthenelse{\equal{#3}{}}{
        \bar R^{#4} _{#1}
    }{
        \bar\lambda^{#4} _{#1}(#3)
    }
}{
    \ifthenelse{\equal{#3}{}}{
        \bar R^{#4} _{#1}(\cdot | #2)
    }{
        \bar R^{#4} _{#1}(#3 | #2)
    }
}}

\newcommandx\mdm[4][4=]{
\ifthenelse{\equal{#2}{}}{
    \ifthenelse{\equal{#3}{}}{
        p^{\mathsf{d}, #4} _{#1}
    }{
        p^{\mathsf{d}, #4} _{#1}(#3)
    }
}{
    \ifthenelse{\equal{#3}{}}{
        p^{\mathsf{d}, #4} _{#1}(\cdot|#2)
    }{
        p^{\mathsf{d}, #4} _{#1}(#3|#2)
    }
}}

\newcommandx\jpdata[4][4=]{
\ifthenelse{\equal{#2}{}}{
    \ifthenelse{\equal{#3}{}}{
        \tilde{p}^{#4} _{#1}
    }{
        \tilde{p}^{#4} _{#1}(#3)
    }
}{
    \ifthenelse{\equal{#3}{}}{
        \tilde{p}^{#4} _{#1}(\cdot|#2)
    }{
        \tilde{p}^{#4} _{#1}(#3|#2)
    }
}}

\newcommandx\cpdata[4][4=]{
    \ifthenelse{\equal{#2}{}}{
        \ifthenelse{\equal{#3}{}}{
            \pi^{#4} _{#1}
        }{
            \pi^{#4} _{#1}(#3)
        }
    }{
        \ifthenelse{\equal{#3}{}}{
            \pi^{#4} _{#1}(\cdot|#2)
        }{
            \pi^{#4} _{#1}(#3|#2)
        }
    }}

  \newcommandx\potg[4][4=]{
    \ifthenelse{\equal{#2}{}}{
        \ifthenelse{\equal{#3}{}}{
            g^{#4} _{#1}
        }{
            g^{#4} _{#1}(#3)
        }
    }{
        \ifthenelse{\equal{#3}{}}{
            g^{#4} _{#1}(\cdot|#2)
        }{
            g^{#4} _{#1}(#3|#2)
        }
    }}

\newcommandx\jcpdata[4][4=]{
    \ifthenelse{\equal{#2}{}}{
        \ifthenelse{\equal{#3}{}}{
            \bar\pi^{#4} _{#1}
        }{
            \bar\pi^{#4} _{#1}(#3)
        }
    }{
        \ifthenelse{\equal{#3}{}}{
            \bar\pi^{#4} _{#1}(\cdot|#2)
        }{
            \bar\pi^{#4} _{#1}(#3|#2)
        }
    }}

\newcommandx\dtilt[4][4=]{
    \ifthenelse{\equal{#2}{}}{
        \ifthenelse{\equal{#3}{}}{
            \pi^{d, #4} _{#1}
        }{
            \pi^{d, #4} _{#1}(#3)
        }
    }{
        \ifthenelse{\equal{#3}{}}{
            \pi^{d, #4} _{#1}(\cdot|#2)
        }{
            \pi^{d, #4} _{#1}(#3|#2)
        }
    }}

\newcommandx\tilt[4][4=]{
    \ifthenelse{\equal{#2}{}}{
        \ifthenelse{\equal{#3}{}}{
            \pi^{#4} _{#1}
        }{
            \pi^{#4} _{#1}(#3)
        }
    }{
        \ifthenelse{\equal{#3}{}}{
            \pi^{#4} _{#1}(\cdot|#2)
        }{
            \pi^{#4} _{#1}(#3|#2)
        }
    }}

\newcommand\updata[3]{
    \ifthenelse{\equal{#2}{}}{
        \ifthenelse{\equal{#3}{}}{
            \bar{p}_{#1}
        }{
            \bar{p}_{#1}(#3)
        }
    }{
        \ifthenelse{\equal{#3}{}}{
            \bar{p}_{#1}(\cdot|#2)
        }{
            \bar{p}_{#1}(#3|#2)
        }
    }}

\newcommand\phat[3]{
    \ifthenelse{\equal{#2}{}}{
        \ifthenelse{\equal{#3}{}}{
            \hat{p}_{#1}
        }{
            \hat{p}_{#1}(#3)
        }
    }{
        \ifthenelse{\equal{#3}{}}{
            \hat{p}_{#1}(\cdot|#2)
        }{
            \hat{p}_{#1}(#3|#2)
        }
    }}

\newcommandx\dfw[4][4=]{
        \ifthenelse{\equal{#3}{}}{
            q^{d, #4} _{\smash{#1}}(\cdot|#2)
        }{
            q^{d, #4} _{\smash{#1}}(#3|#2)
        }
    }

\newcommandx\fw[4][4=]{
    \ifthenelse{\equal{#2}{}}{
        \ifthenelse{\equal{#3}{}}{
            q^{#4} _{#1}
        }{
            q^{#4} _{#1}(#3)
        }
        }{
        \ifthenelse{\equal{#3}{}}{
            q^{#4} _{\smash{#1}}(\cdot | #2 )
        }{
            q^{#4} _{\smash{#1}}(#3 | #2 )
        }
        }
    }

\newcommandx\barfw[4][4=]{
    \ifthenelse{\equal{#2}{}}{
        \ifthenelse{\equal{#3}{}}{
            \bar{q}^{#4} _{#1}
        }{
            \bar{q}^{#4} _{#1}(#3)
        }
        }{
        \ifthenelse{\equal{#3}{}}{
            \bar{q}^{#4} _{\smash{#1}}(\cdot | #2 )
        }{
            \bar{q}^{#4} _{\smash{#1}}(#3 | #2 )
        }
        }
    }
\newcommandx\fwp[3][3=]{
        \ifthenelse{\equal{#3}{}}{
            \bq_{\smash{#1}}(#2)
        }{
            \bq^{#3}_{\smash{#1}}(#2)
        }
    }

\newcommandx\sfwp[3][3=]{
    \ifthenelse{\equal{#3}{}}{
        \tilde\bq_{\smash{#1}}(#2)
    }{
        \tilde\bq^{#3}_{\smash{#1}}(#2)
    }
}
\newcommand{\Simplex}[1]{\Delta_{#1}}

\newcommandx\denoiser[5][4=, 5=0]{
    \ifthenelse{\equal{#2}{}}{
        \ifthenelse{\equal{#3}{}}{
            \hat\bx^{#4}_{#5}(\cdot, #1)
        }{
            \hat\bx^{#4} _{#5}(#3, #1)
        }
    }{
        \ifthenelse{\equal{#3}{}}{
            \hat\bx^{#4} _{#5}(\cdot, #1|#2)
        }{
            \hat\bx^{#4} _{#5}(#3, #1|#2)
        }
    }
}

\newcommandx\matrixdenoiser[5][4=, 5=0]{
    \ifthenelse{\equal{#2}{}}{
        \ifthenelse{\equal{#3}{}}{
            \hat\bx_{0\tbar #1}(\cdot)
        }{
            \hat\bx_{0\tbar #1}(#3)
        }
    }{
        \ifthenelse{\equal{#3}{}}{
            \hat\bx_{0\tbar #1}(\cdot|#2)
        }{
            \hat\bx_{0\tbar #1}(#3|#2)
        }
    }
}

\newcommandx\loodenoiser[5][4=, 5=0]{
    \ifthenelse{\equal{#2}{}}{
        \ifthenelse{\equal{#3}{}}{
            \hat\bx^{\loo}_{0\tbar #1}(\cdot)
        }{
            \hat\bx^{\loo}_{0\tbar #1}(#3)
        }
    }{
        \ifthenelse{\equal{#3}{}}{
            \hat\bx^{\loo}_{0\tbar #1}(\cdot|#2)
        }{
            \hat\bx^{\loo}_{0\tbar #1}(#3|#2)
        }
    }
}

    \newcommandx\hdenoiser[4][4=]{
    \ifthenelse{\equal{#2}{}}{
        \ifthenelse{\equal{#3}{}}{
            \hat{D}^{#4}_{#1}
        }{
            \hat{D}^{#4} _{#1}(#3)
        }
    }{
        \ifthenelse{\equal{#3}{}}{
            \hat{D}^{#4} _{#1}(\cdot|#2)
        }{
            \hat{D}^{#4} _{#1}(#3|#2)
        }
    }}

\newcommandx\epspred[4][4=]{
    \ifthenelse{\equal{#2}{}}{
        \ifthenelse{\equal{#3}{}}{
            \varepsilon^{#4}_{#1}
        }{
            \varepsilon^{#4} _{#1}(#3)
        }
    }{
        \ifthenelse{\equal{#3}{}}{
            \varepsilon^{#4} _{#1}(\cdot|#2)
        }{
            \varepsilon^{#4} _{#1}(#3|#2)
        }
    }}

\newcommand\clf[3]{
    \ifthenelse{\equal{#2}{}}{
        g_{#1}(#3|\cdot)
    }{
        g _{#1}(#3|#2)
    }
}

\newcommand\cfgdist[3]{
    \ifthenelse{\equal{#2}{}}{
        p^w _{#1}(#3|\cdot)
    }{
        p^w _{#1}(#3|#2)
    }
}

\newcommand\cscore[3]{
    \ifthenelse{\equal{#2}{}}{
        \ifthenelse{\equal{#3}{}}{
            s _{#1}
        }{
            s _{#1}(#3)
        }
    }{
        \ifthenelse{\equal{#3}{}}{
            s _{#1}(\cdot|#2)
        }{
            s _{#1}(#3|#2)
        }
    }}

 \def\bpi{\bm{\pi}}
 \def\bpiu{\bm{\upsilon}}

\def\interpscale{\eta}

\def\categorical{\mathrm{Cat}}

\def\bx{\mathbf{x}}
\def\btau{\bm{\tau}}
\def\br{\mathbf{r}}
\def\bn{\mathbf{n}}

\def\bp{\mathbf{p}}

\def\by{\mathbf{y}}

\def\bz{\mathbf{z}}
\def\bq{\mathbf{q}}

\def\param{\theta}

\newcommandx{\hpredx}[3][2=0,3=\param]{\smash{m^{#3} _{#2|#1}}}
\newcommandx{\predx}[2][2=0]{\smash{m _{#2|#1}}}
\newcommandx{\prednoise}[2][2=\param]{\smash{\epsilon^{#2} _{#1}}}

\newcommandx\score[4][4=]{
\ifthenelse{\equal{#2}{}}{
    \ifthenelse{\equal{#3}{}}{
        \mathbf{s}^{#4} _{#1}
    }{
        \mathbf{s}^{#4} _{#1}(#3)
    }
}{
    \ifthenelse{\equal{#3}{}}{
        s^{#4} _{#1}(\cdot|#2)
    }{
        s^{#4} _{#1}(#3|#2)
    }
}}

\newcommand\scoreD[3]{\mathbf{s}_{#1}(#2,#3)}

\newcommand{\alain}[1]{\todo[linecolor=cyan,backgroundcolor=cyan!25,bordercolor=cyan]{\tiny{\textbf{AD}: #1}}}

\newcommand{\ps}[2]{\langle #1, #2 \rangle}

%% file: body/introduction.tex
\section{Introduction}
\label{sec:introduction}

Discrete diffusion models have emerged as a strong alternative to autoregressive generation for language and other structured discrete domains \citep{labs2025mercuryultrafastlanguagemodels, geminidiffusion2025blog, nie2025largelanguagediffusionmodels}. Unlike left-to-right models, they support parallel token updates, controllable infilling, and a flexible trade-off between compute and sample quality at inference time. Recent models show that they can scale competitively on modern text benchmarks \citep{lou2024discrete,sahoo2024simple,peebles2023dit,nie2025scaling}.
Although early work emphasized Uniform Diffusion Models (UDM), in which each position is replaced by a uniformly sampled token \citep{austin2021structured,hoogeboom2021argmax,schiff2024simple}, more recent approaches have relied on Masked Diffusion Models (MDM), where corruption replaces tokens by a separate mask symbol, motivated by their better perplexity at large vocabulary size \citep{lou2024discrete,sahoo2024simple,sahoo2025diffusion}. 
This empirical shift left open fundamental questions regarding parameterization choices in UDMs. To make this question precise, we  distinguish three common parameterizations: the network prediction can be used as (i) a score parameterizing a rate matrix \citep{lou2024discrete, sun2023scorebased, pham2025discrete}, 
(ii) a denoiser, where the network assigns probabilities to possible clean  $\bx_0$, and the reverse transition is computed as the mixture of the $\bx_0$-conditioned reverse transitions $\fw{s\tbar 0, t}{\bx_0, \bx_t}{}$, weighted by these probabilities
\citep{austin2021structured,gu2022vectorquantizeddiffusionmodel}, or (iii) a direct clean-token proxy $\smash{\denoiser{t}{}{\bx_t}[\param]}$, plugged instead of $\bx_0$ in $\fw{s\tbar 0, t}{\bx_0, \bx_t}{}$ \citep{hoogeboom2021argmax,sahoo2024simple,schiff2024simple,vonrutte2025gidd}.

\paragraph{Contributions.} In this paper, we first show that for uniform diffusion the plug-in bridge parameterization, the third option above, does not reach its optimum for the standard ELBO loss at the denoising posterior, which has been the central quantity in diffusion models. We show that for this plug-in parameterization, the ELBO is instead minimized by the leave-one-out posterior: to predict the clean token at position $\ell$, the model uses the noisy tokens at all positions except $\ell$. This distinction is mostly hidden in masked diffusion because the parameterizations (ii) and (iii) coincide.

This observation exposes a mismatch between two standard objectives for UDMs: the plug-in ELBO targets a leave-one-out posterior,
whereas the usual cross-entropy objective targets the plain denoising posterior. We use the conversion formulas between these two
quantities to train either parameterization with either objective, namely a denoiser through the ELBO or a leave-one-out predictor
through cross-entropy. Empirically, across objectives and samplers, we find that parameterizing the leave-one-out posterior
consistently improves generative performance. These conversions are also useful at inference time. They turn a trained denoiser into the leave-one-out quantities needed for an informed predictor-corrector sampler, in the spirit of \citep{zhao2025informed}, without training an auxiliary model. They also indicate where to apply sampling heuristics: applying top-$p$ or temperature to the leave-one-out predictor, rather than to the denoiser itself, improves generation at no additional training cost.

  We finally revisit the gap between uniform and masked diffusion processes \citep{sahoo2024simple,vonrutte2025gidd,rutte2026scaling}. We ask whether this gap is caused by the difference in marginals themselves, or by the parameterization and sampling scheme usually paired with each process. To separate these effects, we propose two absorbing-state constructions of uniform diffusion that preserves the UDM joint law while
  inheriting structural simplifications of MDM. We coin these Absorbing State Uniform Diffusion Model (AUDM) and Masked Uniform Diffusion Model (MUDM). We show that MUDM makes it posibble to reuse the MDM denoiser and obtain a joint process with UDM joint law. As for AUDM, it reaches perplexity and generation quality comparable to masked diffusion. The empirical advantage of masked diffusion therefore appears to stem not from masked versus uniform marginals, but from parameterization choices that can be
  transferred to a model with UDM marginals.

We validate our theoretical insights and methods through extensive experiments on large scale language modeling tasks as well as a smaller scale Sudoku task. In all the cases considered, the leave-one-out parameterization performs better than its denoiser counterpart, and our absorbing construction is on par or slightly better than masked diffusion models.

%% file: body/background.tex
\paragraph{Notation.}
We consider a discrete space $\msx=\msv^L$ with $\msv=\intset{1}{\vocab}$, and write $\Simplex{\vocab}=\{p\in\rset^K:\ p_i\ge 0,\ \sum_{i} p_i=1\}$ for the probability simplex. Tokens are identified with their one-hot vectors in $\rset^K$ via the canonical basis $(e_1,\ldots,e_K)$, so categorical distributions are points in $\Simplex{K}$ (or $\Simplex{K}^L$ in the multi-token case). For masked diffusion, the mask token is denoted $\mask$. We write $\bx\in\msx$, $x\in\msv$, $\bx^\ell$ for the $\ell$-th token of $\bx$, and $\bx^{-\ell} \in \msv^{\len - 1}$ for $\bx$ with the $\ell$-th token removed.
For $\pi \in \Delta_K$, we  denote by $\categorical(\pi)$ the categorical distribution with parameter $\pi$. For ease of notation, we also denote by $x \mapsto \categorical(x;\pi)= \ps{x}{\pi}$ the associated density with respect to the counting measure.

\section{Preliminaries on discrete diffusion models}
\label{sec:preliminaries}

We first briefly introduce discrete diffusion models \citep{sahoo2024simple, shi2024simplified, austin2021structured, hoogeboom2021argmax, schiff2024simple}. Denoting by $p_0 $ a data distribution on $\msx$, this class of models consists in transporting a reference distribution $\smash{\pdata{1}{}{\cdot} \eqdef \prod_{\ell = 1}^\len \categorical(\cdot;\bpi^\ell)}$, with $\smash{\bpi \in (\Simplex{\vocab})^{\len}}$, to the data distribution $\pdata{0}{}{}$ through successive transitions.
We consider a Markov process $(X_t)_{t \in [0,1]}$ started at $\pdata{0}{}{}$ with transitions 
$\fw{t\tbar s}{\bx_s}{\bx_t}[] = \prod_{\ell=1}^L \fw{t\tbar s}{\bx_s^{\ell}}{\bx_t^{\ell}} [\ell]$, and
\begin{equation}
    \label{eq:transition}
    \fw{t\tbar s}{\bx_s^{\ell}}{\bx_t}[\ell]
    =
    \categorical\!\left(\bx_t;\alpha_{t\tbar s} \bx_s^{\ell}+(1-\alpha_{t\tbar s})\bpi^{\ell}\right)\eqsp, \quad 0\leq s < t \leq 1 \eqsp.
\end{equation}
Here $(\alpha_t)_{t\in[0,1]}$ is a monotone noise schedule with $\alpha_0=1$, $\alpha_1 \approx 0$, and $\alpha_{t\tbar s}\eqdef \alpha_t/\alpha_s$. We denote by $\pdata{t}{}{}$ the marginal of $X_t$. By reversing the Markov process in time, we obtain a generative model for $\pdata{0}{}{}$ with source distribution $\pdata{1}{}{}$ and transitions given by
\begin{equation}
    \label{eq:reverse}
    \txts \pdata{s\tbar t}{\bx_t}{\bx_s}
    = 
    \sum_{\bx_0}
    \fw{s\tbar 0,t}{\bx_0,\bx_t}{\bx_s}\,
    \pdata{0\tbar t}{\bx_t}{\bx_0} \eqsp,
\end{equation}
with $\pdata{0 \tbar t}{\bx_t}{\cdot}$ the denoising posterior and $\fw{s\tbar 0, t}{}{}$ being the conditional distribution of $X_s$ given $(X_0, X_t)$. We refer to this quantity, which is central to the paper, as the \emph{bridge}. 
Note that for $s <t$, $\fw{t \tbar s}{}{}$ and $\fw{s\tbar 0,t}{}{}$ factorize across tokens, \emph{e.g.}, $\smash{\fw{s \tbar 0, t}{\bx_0, \bx_t}{\bx_s} = \prod_{\ell = 1}^L \fw{s\tbar 0,t}{\bx_0^\ell, \bx_t^\ell}{\bx_s^\ell}[\ell]}$ with 
\begin{align}
    \fw{s\tbar 0,t}{x_0, x_t}{\cdot}[\ell]
    & = \categorical\bigg(\cdot; \frac{\big[ \alpha_{t\tbar s} x_t + (1 - \alpha_{t\tbar s})\dotp{x_t}{\bpi^\ell}\one_\vocab\big] \odot \big[\alpha_s x_0 + (1 - \alpha_s) \bpi^{\ell} \big]}{\alpha_t \dotp{x_t}{x_0} + (1 - \alpha_t) \dotp{x_t}{\bpi^{\ell}}} \bigg)\eqsp,
    \label{eq:bridge-prob}
\end{align}
 when $\fw{t\tbar 0}{x_0}{x_t}[\ell] > 0$. To define a practical model, we approximate the reverse transitions. To this end, we consider the following joint law with factorized transitions parameterized by $\param$:
\begin{equation}
    \label{eq:plugin-reverse}
     \pdata{0:n}{}{\bx_{\tk{0:n}}}[\param] = \pdata{1}{}{\bx_\tk{\last}} \prod_{i = 1} ^\last \pdata{\sk{i}\tbar \tk{i}}{\bx_{\tk{i}}}{\bx_{\sk{i}}}[\param] \quad \text{where \quad $\pdata{s\tbar t}{\bx_t}{\bx_s}[\param] \eqdef \prod_{\ell = 1} ^\len \pdata{s\tbar t}{\bx_t}{\bx^\ell _s}[\param, \ell] \eqsp,$}
\end{equation}
for a choice of timesteps $0 = t_0 < \ldots < t_n = 1$, setting $s_i = t_{i - 1}$ for notational convenience.
Discrete diffusion models are then trained by minimizing the \emph{expected} NELBO $\pE[\mathrm{L}_{\last}(X_0; \param)]$. For a fixed $\bx_0$, $\mathrm{L}_{\last}(\bx_0;\param)$ is given, up to an additive constant independent of $\param$, by:
\begin{equation}
    \label{eq:elbo-discrete}
    \mathrm{L}_\last(\bx_0;\theta)
    \!=\! \mathbb{E}\bigg[ -\!\log \pdata{0\tbar \tk{1}}{X_{\tk{1}}}{\bx_0}[\param] + \sum_{i=2}^n \kldivergence{\fw{\sk{i}|0, \tk{i}}{X_0, X_{\tk{i}}}{}}{\pdata{\sk{i}|\tk{i}}{X_\tk{i}}{}[\param]} \bigg| X_0 = \bx_0 \bigg] \eqsp.
\end{equation}
\paragraph{Uniform and masked diffusion models.} In this paper we focus on Masked Diffusion Models (MDM) and Uniform Diffusion Models (UDM). For MDM \cite{sahoo2024simple,shi2024simplified}, we take $\bpi^\ell = \mask \eqdef \onehot{\vocab}$ where $\mask$ denotes the mask token and we assume that any state containing this token has zero data probability, \emph{i.e.}, $\pdata{0}{}{\bx}=0$ whenever $\bx^\ell=\mask$ for some $\ell$. 
For UDM \cite{austin2021structured,hoogeboom2021argmax, schiff2024simple}, $\bpi^\ell \eqdef \mathbf{1} / K$ where $\mathbf{1}$ is the vector with all ones, \emph{i.e.}, the uniform distribution.
The MDM bridge is given, for $\ell \in \intset{1}{\len}$, by:
\begin{equation}
\label{eq:mdm-og-bridge}
\fw{s\tbar 0, t}{x_0, x_t}{x_s}[\ell] = \begin{cases}
\categorical \left(x_s; x_0 \right) & \text{if $x_t = x_0$} \\
\categorical \left(x_s; \frac{\alpha_s - \alpha_t}{1 - \alpha_t} x_0 + \frac{1 - \alpha_s}{1 - \alpha_t} \mask \right) \quad & \text{if $x_t = \mask$}
\end{cases}
\end{equation}
and is undefined for $x_t \notin \{\mask, x_0\}$. 
Regarding UDM, the bridge has the explicit form 
\begin{equation}
\label{eq:usdm-bridge}
\fw{s\tbar 0,t}{x_0,x_t}{x_s}[\ell]\!=\!\categorical \left(x_s; \frac{
        K\alpha_t \dotp{x_t}{x_0} x_t
        \!+\!(\alpha_{t\tbar s}-\alpha_t) x_t
        \!+\!(\alpha_s-\alpha_t) x_0
        \!+\!D_{s, t}\one_\vocab / \vocab
    }{
        K\alpha_t \dotp{x_t}{x_0} + 1-\alpha_t
    } \right),
\end{equation}
with
$
    D_{s,t}\eqdef (1-\alpha_{t\tbar s})(1-\alpha_s).
$

%% file: body/method.tex
\section{Elucidating the parameterizations in discrete diffusion models}
\subsection{The different parameterizations of the reverse transitions}
A central design choice is how to parameterize the family of reverse conditionals $\{\pdata{s\tbar t}{}{}[\param,\ell]\}_{\ell=1}^{\len}$. A natural approach is to let it be the direct output of a neural network with input $(\bx_t,s,t)$. As observed in \cite{austin2021structured}, this approach does not exploit the inductive bias associated with the forward process. Another option is to use a neural network that takes $(\bx_t,t)$ as input and produces a simplex-valued prediction, denoted by $\smash{\denoiser{t}{}{\bx_t}[\param]}$, from which the reverse conditionals are constructed through the bridge. There are two common instantiations of this idea. The \textit{marginalization} parameterization \cite{austin2021structured,gu2022vectorquantizeddiffusionmodel} interprets this prediction as the factorized distribution $\smash{\pdata{0\tbar t}{\bx_t}{}[\param] \eqdef \prod_{\ell = 1}^\len \categorical(\cdot; \denoiser{t}{}{\bx_t}[\param]^\ell)}$ and defines
\begin{equation}
\label{eq:param_marginalization}
\hspace{2.5cm} \txts \jpdata{s\tbar t}{\bx_t}{\bx_s}[\param] = \sum_{\bx_0} \fw{s\tbar 0,t}{\bx_0,\bx_t}{\bx_s}\, \pdata{0\tbar t}{\bx_t}{\bx_0}[\param] \tag{Marginalization}\eqsp,
\end{equation}
whereas the \textit{bridge plug-in} parameterization \cite{hoogeboom2021argmax, sahoo2024simple, shi2024simplified, schiff2024simple, vonrutte2025gidd} uses the same simplex-valued prediction directly inside the bridge \eqref{eq:bridge-prob}:
\begin{equation}
\label{eq:param_plug_in}
\hspace{2cm} \hpdata{s\tbar t}{\bx_t}{\bx_s}[\param] =
\fw{s\tbar 0,t}{\denoiser{t}{}{\bx_t}[\param],\bx_t}{\bx_s}\eqsp. \tag{Bridge Plug-in}
\end{equation}
Note that these two parameterizations coincide if $ \Simplex{\vocab} \ni \bx_0^\ell \mapsto \fw{s\tbar 0,t}{\bx_0^\ell,\bx_t^\ell}{\bx_s^\ell}[\ell]$ is affine in $\bx_0^\ell$. 
\begin{remark}[Bridge extensions]
There are two extension choices required for our analysis. First, the bridge is determined by the Bayes' formula only on pairs $(x_0,x_t)$ such that $\fw{t\tbar 0}{x_0}{x_t}[\ell]>0$. Outside this support, its value is fixed by convention. For UDM no such completion is needed, while for MDM we use the completion (and keep the same notation) of \cite{shi2024simplified} which agrees with \eqref{eq:mdm-og-bridge} on supported pairs:
\begin{equation}
\label{eq:mdm-bridge}
\fw{s\tbar 0, t}{x_0, x_t}{x_s}[\ell] = \begin{cases}
\categorical \left(x_s; x_t \right) & \text{if $x_t \neq \mask$} \\
\categorical \left(x_s; \frac{\alpha_s - \alpha_t}{1 - \alpha_t} x_0 + \frac{1 - \alpha_s}{1 - \alpha_t} \mask \right) \quad & \text{otherwise}.
\end{cases}
\end{equation}
 Second, the plug-in parameterization evaluates the first bridge argument at the simplex-valued prediction $\denoiser{t}{}{\bx_t}[\param]^\ell$, which is not necessarily a one-hot vector. The extensions we consider next satisfy 
$\smash{\fw{s\tbar 0, t}{\nu, x_t}{x_s}[\ell] = \fw{t \tbar s}{x_s}{x_t}[\ell]\fw{s\tbar 0}{\nu}{x_s}[\ell] / \fw{t\tbar 0}{\nu}{x_t}[\ell]}$ for $\nu \in \Simplex{\vocab}$. 
We refer to this property as Assumption \assp{\ref{ass:simplex-bayes-bridge}} in \Cref{appendix:loo}. It holds by construction for the UDM bridge \eqref{eq:usdm-bridge}, and \eqref{eq:mdm-bridge} also satisfies it, as shown in \Cref{appendix:loo}. 
\end{remark}
A recurring argument \cite{sahoo2025diffusion, sahoo2024simple,vonrutte2025gidd} used to motivate \ref{eq:param_plug_in} is that, for a fixed $\bx_0$, the minimizer of \eqref{eq:elbo-discrete} is given for any $t$ by $\denoiser{t}{}{}[\param] \equiv \bx_0$.
However, this argument only characterizes the minimizer of the conditional objective for a fixed $\bx_0$. The actual training objective  of discrete diffusion models is instead the expected NELBO given by
\begin{equation}
    \label{eq:kl-sum}
    \pE[\mathrm{L}_{n}(X_0; \param)]
    =
    \sum_{i = 1}^\last \sum_{\ell=1}^{\len}
    \pE \big[\kldivergence{
        \pdata{\sk{i} \tbar \tk{i}}{X_\tk{i}}{}[\ell]
    }{
        \pdata{\sk{i}\tbar \tk{i}}{X_\tk{i}}{}[\param,\ell]
    }
    \big]
    +
    \mathrm{Const} \eqsp,
\end{equation}
with $\smash{\pdata{s\tbar t}{\bx_t}{\bx^\ell _s}[\ell] \eqdef \sum_{\bx^{-\ell} _s} \pdata{s\tbar t}{\bx_t}{\bx_s}}$ the $\ell$-th token marginal transition. Hence, we can see that the minimum is attained when $\smash{\pdata{\sk{i}\tbar \tk{i}}{\bx_\tk{i}}{}[\param] = \prod_{\ell = 1}^\len \pdata{\sk{i} \tbar \tk{i}}{\bx_\tk{i}}{}[\ell]}$. Moreover, from \eqref{eq:reverse} and \eqref{eq:bridge-prob} we have:
\[
 \txts
 \pdata{s \tbar t}{\bx_t}{\bx_s^\ell}[\ell] = \sum_{\bx^\ell _0} \fw{s\tbar 0, t}{\bx_0^\ell, \bx_t^\ell}{\bx_s^\ell}[\ell]\, \pdata{0 \tbar t}{\bx_t}{\bx^\ell _0}[\ell]\eqsp,
\]
we conclude that the \textit{marginalization} parameterization reaches the minimum of the loss by taking $\smash{\pdata{0\tbar t}{\bx_t}{\bx_0}[\param] = \prod_{\ell = 1}^\len \pdata{0\tbar t}{\bx_t}{\bx^\ell_0}[\ell]}$, or equivalently, $\denoiser{t}{}{\bx_t}[\param] = \pE[X_0 | X_t = \bx_t]$. By contrast, the same conclusion is more delicate for \ref{eq:param_plug_in} since the canonical extension of \eqref{eq:bridge-prob}, as defined after \eqref{eq:usdm-bridge}, is not necessarily linear with respect to $\bx_0^{\ell}$ so it cannot simply be expressed as the marginalization of a denoising posterior\footnote{Interestingly, with an affine extension of the bridge, which always exists, the two parameterizations are equivalent by construction (see \Cref{app:bridge-extension} for a more thorough discussion).}. For MDM this distinction makes no difference, since \eqref{eq:mdm-bridge} is token-wise affine in $x_0$ and thus
\[
\jpdata{s\tbar t}{\bx_t}{\bx_s}[\param]
=
\fw{s\tbar 0,t}{\denoiser{t}{}{\bx_t}[\param],\bx_t}{\bx_s},
\]
so the \textit{marginalization} parameterization is itself the \textit{bridge plug-in} parameterization, and a minimizer is the denoising distribution. For UDM however, the normalization term in \eqref{eq:usdm-bridge} makes the bridge nonlinear as a function of $x_0$, so the equivalence between the two parameterizations does not hold. In that setting, it is no longer obvious which object the \ref{eq:param_plug_in} parameterization recovers at optimum, nor whether it can still attain the minimum of the expected NELBO. The next subsection answers both questions.



\subsection{Leave-one-out parameterization}
\label{sec:loo}
We first recall the factorized \emph{leave-one-out} (LOO) denoising distribution as introduced in \cite{sun2023scorebased, zhao2025informed}:
\begin{equation}
\label{eq:def_LOO-Denoiser}
\pdata{0\tbar t}{\bx_t}{\bx_0}[\loo]
:=
\prod_{\ell=1}^{\len}
\pdata{0\tbar t}{\bx_t^{-\ell}}{\bx_0^\ell}[\mathrm{loo}, \ell]  \quad \text{ where  } \quad \pdata{0\tbar t}{\bx^{- \ell} _t}{\bx^\ell _0}[\loo, \ell] \propto \sum_{\bx_0^{-\ell}} \pdata{0}{}{\bx_0} \prod_{j \neq \ell} ^L \fw{t\tbar 0}{\bx^j _0}{\bx^j _t}[j] \eqsp.
\vspace{-.3cm}
\end{equation}
Equivalently, $\pdata{0\tbar t}{\bx_t}{}[\loo] = \prod_{\ell = 1}^\len \categorical(\loodenoiser{t}{}{\bx_t}^\ell)$ with $\loodenoiser{t}{}{\bx_t}^\ell = \pE[X^\ell _0 | X^{-\ell} _t= \bx^{-\ell} _t]$. This quantity is relevant for the \ref{eq:param_plug_in} parameterization since it allows to express the minimizer of the expected NELBO under such parameterization, as made precise in the following Proposition.

\begin{mdframed}[style=propFrame,nobreak=true]
\begin{proposition}
\label{prop:loo}
For the \ref{eq:param_plug_in} parameterization, under Assumption \assp{\ref{ass:simplex-bayes-bridge}}, a minimizer of \eqref{eq:kl-sum} is $\denoiser{t}{}{\bx_t}[\param_\star] = \loodenoiser{t}{}{\bx_t}$.
In addition,
\[
 \txts \pdata{s\tbar t}{\bx_t}{}[\ell]
= \fw{s\tbar 0,t}{\loodenoiser{t}{}{\bx_t}^\ell,\bx_t^\ell}{}[\ell] \eqsp.
\]
Furthermore, this minimizer is unique in the case of UDM.
\end{proposition}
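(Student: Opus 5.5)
The plan is to prove the identity $\pdata{s\tbar t}{\bx_t}{}[\ell] = \fw{s\tbar 0,t}{\loodenoiser{t}{}{\bx_t}^\ell,\bx_t^\ell}{}[\ell]$ first, and then deduce both the minimization and the uniqueness claims from it. The first step is to write the token marginal of the true reverse transition as a ratio of joint probabilities:
\[
\pdata{s\tbar t}{\bx_t}{x_s}[\ell] = \frac{\pP(X_s^\ell = x_s, X_t = \bx_t)}{\pP(X_t=\bx_t)} .
\]
I will expand the numerator via the Markov structure and the tokenwise factorization of the forward kernels. Since $X_t^\ell$ depends on the rest only through $X_s^\ell$, the numerator equals
\[
\sum_{\bx_0} \pdata{0}{}{\bx_0}\prod_{j\neq\ell}\fw{t\tbar 0}{\bx_0^j}{\bx_t^j}[j]\,
\fw{s\tbar 0}{\bx_0^\ell}{x_s}[\ell]\,\fw{t\tbar s}{x_s}{\bx_t^\ell}[\ell].
\]

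Next I will sum over $\bx_0^{-\ell}$. By \eqref{eq:def_LOO-Denoiser}, this produces $Z(\bx_t^{-\ell})\,\pdata{0\tbar t}{\bx_t^{-\ell}}{\bx_0^\ell}[\loo,\ell]$, where $Z(\bx_t^{-\ell})$ is the normalizing constant. The map $x_0\mapsto \fw{s\tbar 0}{x_0}{x_s}[\ell]=\ps{x_s}{\alpha_s x_0+(1-\alpha_s)\bpi^\ell}$ is affine in $x_0$. Hence averaging over $\bx_0^\ell$ under the LOO posterior amounts to evaluating it at the mean $\nu\eqdef\loodenoiser{t}{}{\bx_t}^\ell$, so the numerator becomes $Z\,\fw{t\tbar s}{x_s}{\bx_t^\ell}[\ell]\,\fw{s\tbar 0}{\nu}{x_s}[\ell]$. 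The same computation, with $\fw{t\tbar 0}{}{}$ (also affine in $x_0$) in place of the product, gives the denominator $Z\,\fw{t\tbar 0}{\nu}{\bx_t^\ell}[\ell]$. Taking the ratio and invoking Assumption \assp{\ref{ass:simplex-bayes-bridge}} yields exactly $\fw{s\tbar 0,t}{\nu,\bx_t^\ell}{x_s}[\ell]$. For MDM, I will restrict to $\bx_t$ with positive probability so that the denominators are nonzero.

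For the minimization claim, I will use \eqref{eq:kl-sum}, which is a sum of nonnegative KL terms plus a constant. The plug-in model with $\denoiser{t}{}{\bx_t}[\param_\star]^\ell=\loodenoiser{t}{}{\bx_t}^\ell$ gives $\pdata{s\tbar t}{\bx_t}{}[\param_\star,\ell]=\pdata{s\tbar t}{\bx_t}{}[\ell]$ for every $i$ and $\ell$ by the identity above. This makes each KL term vanish, so the global minimum is attained. This assumes the network class is rich enough to represent this function, as is implicit in the statement.

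Uniqueness for UDM is the step needing care. Any minimizer must make every KL term zero for all $\bx_t$, and for UDM every $\bx_t$ has positive mass. It therefore suffices to show that $\nu\mapsto \fw{s\tbar 0,t}{\nu,x_t}{}[\ell]$ is injective on $\Simplex{\vocab}$ for some step with $\alpha_s>0$. For that step, I will read off $\nu$ from the bridge. By Assumption \assp{\ref{ass:simplex-bayes-bridge}}, the bridge at $x_s=e_k$ equals $w_k(\alpha_s\nu_k+(1-\alpha_s)/\vocab)/c(\nu)$, where the weights $w_k=\fw{t\tbar s}{e_k}{x_t}[\ell]>0$ are known and $c(\nu)>0$. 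Dividing by $w_k$ recovers the vector $\alpha_s\nu+(1-\alpha_s)\one/\vocab$ up to a positive scalar. That scalar is pinned down because this vector has entries summing to $1$, and then $\nu$ follows since $\alpha_s>0$. Note that the $i=1$ step ($s=0$) is already trivially injective. The main obstacle I anticipate is making this support and positivity bookkeeping precise, in particular justifying that the minimizer is determined at all $\bx_t$ and not merely almost everywhere. For UDM this is automatic from full support, which is exactly why uniqueness is claimed only in that case; for MDM the bridge \eqref{eq:mdm-bridge} ignores $\nu$ whenever $x_t\neq\mask$.
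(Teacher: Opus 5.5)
Your proposal is correct. For the identity $\pdata{s\tbar t}{\bx_t}{}[\ell]=\fw{s\tbar 0,t}{\loodenoiser{t}{}{\bx_t}^\ell,\bx_t^\ell}{}[\ell]$ you use the same ingredients as the paper: summing out $\bx_0^{-\ell}$ to get the leave-one-out posterior, affinity of the forward kernel in its first argument, and \assp{\ref{ass:simplex-bayes-bridge}}. The difference is that you start from the joint probability $\pP(X_s^\ell=x_s,X_t=\bx_t)$ rather than from the one-hot bridge averaged against the denoiser. This spares you the product identity on pairs with $\fw{t\tbar 0}{x_0}{x_t}[\ell]=0$, i.e.\ the Chapman--Kolmogorov remark the paper needs for MDM. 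It also gives positivity of the denominator directly from $Z\,\fw{t\tbar 0}{\nu}{\bx_t^\ell}[\ell]=\pdata{t}{}{\bx_t}>0$.

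The uniqueness step is where your route genuinely differs. The paper works with the explicit formula \eqref{eq:usdm-bridge}: it sums the off-diagonal entries, uses strict monotonicity in $\nu_k$ to recover $\nu_k$, and then solves for the other coordinates. You instead use \assp{\ref{ass:simplex-bayes-bridge}} to read off $\fw{s\tbar 0}{\nu}{}[\ell]$ up to a positive scalar, by dividing by $w_k=\fw{t\tbar s}{\onehot{k}}{x_t}[\ell]$. You then fix the scalar by normalization and invert $\nu\mapsto\alpha_s\nu+(1-\alpha_s)\bpi^\ell$ using $\alpha_s>0$.

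Your argument is formula-free and applies to any corruption for which the $w_k$ and $\fw{t\tbar 0}{\nu}{x_t}[\ell]$ are positive. It also shows transparently why uniqueness fails for MDM: there $w_k=0$ for every $k\neq x_t$ when $x_t\neq\mask$. The paper's argument, in exchange, gives an explicit inverse in UDM coordinates.

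Two small points to tighten:
\begin{itemize}
\item $w_k>0$ for $k\neq x_t$ requires $\alpha_{t\tbar s}<1$, i.e.\ a strictly decreasing schedule. This is the same assumption as the paper's $\alpha_s-\alpha_t>0$, but you should state it.
\item Each grid time $t_i$ enters only its own KL term, so you need injectivity at every step, not at ``some'' step. Your argument does deliver this, because $\alpha_s>\alpha_t\ge 0$ on every step.
\end{itemize}
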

\end{mdframed}

The proof is given in \Cref{sec:proof-crefprop:loo}. Thus, the \ref{eq:param_plug_in} parameterization can attain the minimum of the ELBO, but the optimal prediction is, in general, the leave-one-out posterior rather than the denoising posterior, which can be substantially different quantities, see \Cref{app:loo-vs-denoiser}.
Masked diffusion is a special case in which the distinction between the leave-one-out posterior and the denoising posterior disappears on masked positions: if $\bx_t^\ell=\mask$, then $\smash{\pdata{0\tbar t}{\bx_t^{-\ell}}{\bx_0^\ell}[\loo,\ell]=\pdata{0\tbar t}{\bx_t}{\bx_0^\ell}[\ell]}$. On unmasked positions, by contrast, the denoiser is the Dirac mass at $\bx_t^\ell$ whereas the leave-one-out posterior need not be.

The difference between the leave-one-out and denoiser is not an obstacle to using the usual training and sampling tools, because for UDM the leave-one-out posterior, the denoising posterior, and the score can be converted into one another explicitly, \emph{i.e.}
\begin{equation}
    \label{eq:udm-loo-to-denoiser}
    \pdata{0\tbar t}{\bx_t}{\bx_0^\ell}[\ell]
    \propto
    \pdata{0\tbar t}{\bx^{-\ell} _t}{\bx^\ell _0}[\loo,\ell]
    \fw{t\tbar 0}{\bx^\ell _0}{\bx^\ell _t}[\ell]
\end{equation}
and conversely this provides a conversion from denoiser to LOO denoiser since for UDM, $\fw{t\tbar 0}{x_0}{x_t}[\ell] > 0$ for all $x_0$, so we have $\pdata{0\tbar t}{\bx_t^{-\ell}}{\bx_0^\ell}[\loo, \ell]
\propto \pdata{0\tbar t}{\bx_t}{\bx^\ell _0}[\ell] / \fw{t\tbar 0}{\bx^\ell _0}{\bx^\ell _t}[\ell]$.
These formulas hold more generally for other corruption processes; see \Cref{app:loo-to-denoiser}. We can also convert between the leave-one-out posterior and the concrete score \cite{campbell2022continuous, lou2024discrete} $\score{t}{}{\bx_t} \in \rset^{\len \times \vocab}$ defined by $\langle \by^\ell, \score{t}{}{\bx_t}^\ell \rangle = \pdata{t}{}{\by} / \pdata{t}{}{\bx_t}$ with $\by^\ell \neq \bx^\ell _t$ and $\smash{\by^{-\ell} = \bx^{-\ell} _t}$. We then have that
    \begin{equation} 
      \label{eq:score-ratio-loo}
    \langle \by^\ell , \score{t}{}{\bx_t}^\ell \rangle = \fw{t\tbar 0}{\loodenoiser{t}{}{\bx_t}^\ell}{\by^\ell} \big/  \fw{t\tbar 0}{\loodenoiser{t}{}{\bx_t}^\ell}{\bx^\ell _t} \eqsp.
    \end{equation}

We now focus on the practical consequences of these conversion formulas: 

\paragraph{Training with the leave-one-out.} A first consequence is that recent works on UDMs that use the \ref{eq:param_plug_in} parameterization \cite{hoogeboom2021argmax,schiff2024simple,sahoo2025diffusion} therefore train a model whose target is the leave-one-out posterior rather than the denoising posterior.
A structural property of this target is that $\loodenoiser{t}{}{\bx_t}^\ell$ does not depend on $\bx_t^\ell$. This invariance can be enforced architecturally with a Hollow Transformer \cite{sun2023scorebased,zhao2025informed}, whose $\ell$-th output is prevented from attending to the $\ell$-th input position. However, for text modeling this architecture is harder to train and performs worse than standard attention \cite{zhao2025informed}, which leaves the invariance to be learned. In that case, the sensitivity of the $\ell$-th prediction to $\bx_t^\ell$ provides a direct diagnostic for suboptimality of the learned leave-one-out model or a trained denoiser, provided that we convert it first to a leave-one-out using \eqref{eq:udm-loo-to-denoiser}; see \Cref{app:loo-sensitivity}.

The conversion \eqref{eq:udm-loo-to-denoiser} allows us to train a leave-one-out model with the usual denoising cross-entropy. Indeed, if a network outputs logits $\bm f_\param(\bx_t,t)$ and we set the parameterized prediction as $\denoiser{t}{}{\bx_t}[\param]^\ell=\softmax(\bm f_\param(\bx_t,t)^\ell)$, then the associated denoiser is
\begin{equation}
    \label{eq:udm-loo-to-denoiser-matrix}
    \txts
    \bm{d}_\param(\bx_t, t)^\ell
    =
    \softmax\!\left(
        \bm{f}_\param(\bx_t,t)^\ell 
        +
        \log\left(1+\frac{\vocab\alpha_t}{1-\alpha_t}\right)\cdot \bx_t^\ell
    \right)
    \eqsp .
\end{equation}
Thus, for UDM, one may optimize the standard cross-entropy loss 
\begin{equation}
\label{eq:cross-entropy-training}
\txts - \int^1 _{0}\pE\big[\sum_{\ell=1}^{\len}\log\,\dotp{X_0^\ell}{\bm{d}_\param(X_t, t)^\ell}\big]\, \rmd t \eqsp,
\end{equation}
while the underlying prediction $\denoiser{t}{}{\bx_t}[\param]$ parameterizes the leave-one-out posterior. At optimality $\bm{d}_\param(\bx_t, t) = \pE[X_0 | X_t = \bx_t]$ and since the conversion formulas define a one-to-one correspondence between the denoiser and leave-one-out denoiser \emph{in the case of UDM}, we have $\denoiser{t}{}{\bx_t}[\param] = \loodenoiser{t}{}{\bx_t}$. Experimentally we find that this loss outperforms the standard cross-entropy loss in which the model parameterizes a plain denoiser and performs as well as the continuous-time UDM plug-in NELBO used in \cite{schiff2024simple} while being simpler.

\paragraph{Predictor-corrector sampling.} The leave-one-out denoiser also gives access to the one-coordinate conditional of $\pdata{t}{}{}$ given by $\pdata{t}{\bx^{-\ell} _t}{\bx^\ell _t}[\ell] \eqdef \pdata{t}{}{\bx_t} / \sum_{\tilde\bx^\ell _t}  \pdata{t}{}{\tilde\bx_t}$. Following \cite{sun2023scorebased} it holds that: 
\begin{equation}
    \label{eq:cond-distr-loo}
    \pdata{t}{\bx_t^{-\ell}}{}[\ell] 
    = \fw{t\tbar 0}{\loodenoiser{t}{}{\bx_t}^\ell}{}[\ell] 
    = \categorical(\alpha_t \loodenoiser{t}{}{\bx_t}^\ell + (1 - \alpha_t) \bpi^\ell) \eqsp.
\end{equation}
See \Cref{app:predictor-corrector} for a proof. This observation yields a predictor-corrector \cite{campbell2022continuous,wang2026remaskingdiscretediffusionmodels} in the same spirit as \cite{zhao2025informed} without needing to train an additional model. Concretely, iterating the corresponding Gibbs updates gives a corrector kernel that preserves $\pdata{t}{}{}$, making it a valid corrector step. If one trains a denoiser or score function instead, the conversion formulas above recover the leave-one-out posterior needed to evaluate the same corrector. In practice, we use the margin-based heuristic of \cite{zhao2025informed}, with parallel updates on low-confidence positions; the full sampler is given in \Cref{app:predictor-corrector}.

%% file: body/absorbing_uniform.tex
\section{Bridging Masked and Uniform Diffusion}
\label{sec:audm}
In the previous section we clarified the difference between existing parameterizations of UDMs. We now give an alternative view: uniform diffusion can be represented as an absorbing-state diffusion after conditioning on a random sequence of absorbing states. This perspective is useful since it bridges uniform and masked diffusion. This bridge can be made exact and we show in \Cref{sec:mudm} a construction that preserves UDM joint law while using a masked diffusion denoiser.

\subsection{Absorbing state uniform diffusion}
\label{subsec:audm}
Let $U\!\sim\!\pnoise{}{}{} \eqdef \mathrm{Uniform}(\intset{1}{\vocab})^\len$ be an auxiliary noise variable. 
Conditionally on a realization $U = \noise$, define a Markov process \((X_t)_{t \in [0,1]}\) with transition kernel
\[ 
   \txts  \fw{t\tbar s}{\bx_s, \noise}{\bx_t} = \prod_{\ell = 1}^\len \categorical(\bx^\ell _t; \alpha_{t\tbar s} \bx^\ell _s + (1 - \alpha_{t\tbar s}) \noise^\ell ) \eqsp.
\]
Thus, conditioned on \(U=\mathbf u\), each coordinate evolves as an absorbing-state diffusion whose absorbing state is fixed to \(\noise^\ell\). In contrast with the usual absorbing diffusion, the absorbing state is not a distinguished mask token, but a random token sampled independently at each position. 

Conditionally on $U = \noise$, the law of $X_t$ is
\[ 
\txts \pdata{t}{\noise}{\bx_t} \eqdef \sum_{\bx_0} \fw{t\tbar 0}{\bx_0, \noise}{\bx_t} \, \pdata{0}{}{\bx_0}
\] 
Averaging over the auxiliary variable recovers the usual uniform-diffusion marginal:
\[ 
    \txts \sum_\noise \pdata{t}{\noise}{\bx_t} \, \pnoise{}{}{\noise} = \pdata{t}{}{\bx_t} \eqsp,
\]
which follows from the identity $\sum_\noise \fw{t\tbar s}{\bx_s, \noise}{\bx_t} \, \pnoise{}{}{\noise} = \fw{t\tbar s}{\bx_s}{\bx_t}$ which states that the UDM forward transition is a mixture of absorbing-state transitions. Therefore, the joint process $(U, X_t)$ has the same marginal law for $X_t$ as the original UDM, while its conditional dynamics given $U$ are those of an
absorbing-state process with position-dependent absorbing states.

This conditional viewpoint gives a direct analogue of the masked-diffusion carry-over structure. Indeed, since \(U\) is fixed in the conditional process, the bridge $\fw{s\tbar 0, t}{\bx_0, \bx_t, \noise}{\bx_s}$ is token-wise and, for each coordinate \(\ell\), is given by
\begin{equation}
\label{eq:audm-bridge}
\fw{s\tbar 0, t}{\bx^\ell _0, \bx^\ell _t, \noise^\ell}{\bx^\ell _s}[\ell] = \begin{cases}
\categorical \left(\bx^\ell _s; \bx^\ell _t \right) & \text{if $\bx^\ell _t \neq \noise^\ell$} \\
\categorical \left(\bx^\ell _s; \frac{\alpha_s - \alpha_t}{1 - \alpha_t} \bx^\ell _0 + \frac{1 - \alpha_s}{1 - \alpha_t} \noise^\ell \right) \quad & \text{otherwise}.
\end{cases}
\end{equation}
The same dichotomy appears in the conditional denoiser. By definition: 
\begin{equation}
    \label{eq:audm-posterior}
   \pdata{0\tbar t}{\bx_t, \noise}{\bx_0}
   \propto
   \pdata{0}{}{\bx_0}
   \prod_{\ell: \bx_t^\ell = \noise^\ell}
   \big[\alpha_t \indic_{\bx_0^\ell = \bx_t^\ell} + (1 - \alpha_t)\big]
   \prod_{\ell: \bx_t^\ell \neq \noise^\ell}
   \indic_{\bx_0^\ell = \bx_t^\ell}.
\end{equation}
Consequently, for every coordinate $\ell$ with $\bx_t^\ell\neq \noise^\ell$, the posterior is degenerate at $\bx_t^\ell$; meaning that $
\pdata{0\tbar t}{\bx_t; \noise}{}[\ell] = \categorical(\bx_t^\ell)
$.
In contrast, the coordinates with $\bx_t^\ell=\noise^\ell$ remain ambiguous and must be predicted from the full noisy sequence. However, not all convenient properties of MDM carry over to this setting. In particular, unlike in MDMs \cite{ou2024your,zheng2024masked}, the noise-conditioned denoiser remains explicitly time-dependent. For example, if we consider the single token case $\len = 1$, then when $\bx^\ell _t = \noise$, we have that $\pdata{0\tbar t}{\bx_t; \noise}{\bx_t} = \pdata{0}{}{\bx_t} / (1 - \alpha_t \sum_{\tilde\bx_0 \neq \bx_t} \pdata{0}{}{\tilde\bx_0})$, showing that the noise-conditioned denoiser is indeed time-dependent. 

\paragraph{Model and objective.} The structure above suggests the following parameterization:
\begin{equation}
    \label{eq:carry-over-uniform}
    \hat{\bx}^\param_0(\bx_t,t;\noise)^\ell
    \eqdef
    \begin{cases}
        \softmax(\bm{f}_\param(\bx_t, t; \noise)^\ell) & \text{if $\bx_t^\ell = \noise^\ell$}\\
        \bx_t^\ell & \text{if $\bx_t^\ell \neq \noise^\ell$}\,.
    \end{cases}
\end{equation}
We now define the corresponding generative model. The sampler first draws a latent absorbing sequence $\mathbf u \sim \pnoise{}{}{}$, initializes the terminal state from the point mass at $\noise$, and then runs the learned reverse process. The joint model is 
\[
\txts \pdata{0:n}{}{\bx_\tk{0:n}}[\param] \eqdef \sum_{\noise} \pdata{\tk{n}}{\noise}{\bx_\tk{n}} \prod_{i=1}^{n} \pdata{\sk{i} \tbar \tk{i}}{\bx_\tk{i}, \noise}{\bx_\sk{i}}[\param]\, \pnoise{}{}{\noise} \eqsp.
\]
with $\pdata{s\tbar t}{\bx_t, \noise}{\bx_s}[\param] \eqdef \fw{s\tbar 0, t}{\hat{\mathbf{x}}^\param _0(\bx_t, t; \noise), \bx_t, \noise}{\bx_s}$; \emph{i.e.} we use the \ref{eq:param_plug_in} parameterization which coincides here with \ref{eq:param_marginalization}.  
Applying Jensen's inequality to $-\log \pdata{0}{}{\bx_0}[\param]$ where $\pdata{0}{}{}[\param]$ is the time-zero marginal of the joint model above and taking the limit $n \to \infty$ in the conditional NELBO \eqref{eq:elbo-discrete} appearing in the upper bound gives the following continuous-time objective. A proof of the next Proposition is provided in \Cref{apdx:audm}.
\begin{mdframed}[style=propFrame]
\begin{proposition}
    \label{prop:absorbing-uniform-cont-loss}
    A continuous time NELBO for $- \log \pdata{0}{}{\bx_0}[\param]$ is 
\begin{multline}
    \label{eq:absorbing-ctmc-loss}
        \txts \mathrm{L}^{\audm} _{\infty}(\bx_0; \param)
        \eqdef
        - \int_0^1
         \frac{\alpha^\prime _t}{1-\alpha_t}\,
        \pE\big[
            \sum_{\ell=1}^{\len}
            \indic_{X^\ell _t = {\color{crimson}U^\ell}}
            \big\{
                1-\dotp{{\color{crimson}U^\ell}}{\hat{\bx}^\param _0(X_t,t,{\color{crimson}U})^\ell}
                \\
                -
                \indic_{\bx^\ell _0 \neq {\color{crimson}U^\ell}} \cdot
                \big(
                    1+ \log \, \dotp{\bx^\ell _0}{ \hat{\bx}^\param _0(X_t,t,{\color{crimson}U})^\ell}
                \big)
            \big\}
        \big]\rmd t,
\end{multline}
where the expectation is \wrt\ $U \sim \text{Uniform}(\intset{1}{\vocab})^{\otimes \len}$, and $X_t \sim \fw{t\tbar 0}{\bx_0,U}{}$.
\end{proposition}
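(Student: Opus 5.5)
The plan has three parts: a Jensen step over the latent noise, a per-token computation of the discrete-time KL terms conditional on $U$, and a continuous-time limit.

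\emph{Jensen step.} Write the time-zero marginal of the joint model as $\pdata{0}{}{\bx_0}[\param] = \sum_{\noise} \pnoise{}{}{\noise}\, \pdata{0}{\noise}{\bx_0}[\param]$. Here $\pdata{0}{\noise}{}[\param]$ is the time-zero marginal of the chain started at $\pdata{\tk{n}}{\noise}{}$ and run with the kernels $\pdata{\sk{i}\tbar\tk{i}}{\cdot,\noise}{}[\param]$. Jensen's inequality then gives
\[
-\log \pdata{0}{}{\bx_0}[\param] \le \pE_{U}\big[-\log \pdata{0}{U}{\bx_0}[\param]\big].
\]
For fixed $\noise$, I bound $-\log \pdata{0}{\noise}{\bx_0}[\param]$ by the discrete NELBO \eqref{eq:elbo-discrete}, computed with the conditional forward process $\fw{t\tbar s}{\cdot,\noise}{}$ and its bridge \eqref{eq:audm-bridge}. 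As $\alpha_1\approx 0$, the terminal KL between $\fw{1\tbar 0}{\bx_0,\noise}{}$ and the point mass at $\noise$ is negligible, or it vanishes under the convention $\alpha_1=0$. Every remaining term factorizes over tokens, because both the bridge and the model kernel are products over $\ell$.

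\emph{Per-token KL.} Fix $\ell$, a pair $s<t$, and $\noise$, and write $\hat x$ for $\hat{\bx}^\param_0(\bx_t,t;\noise)^\ell$.
\begin{itemize}
\item If $\bx_t^\ell\neq\noise^\ell$, both the true bridge and the model kernel equal $\categorical(\bx_t^\ell)$, so the KL is $0$. (On the support, $\bx_t^\ell\neq\noise^\ell$ forces $\bx_t^\ell=\bx_0^\ell$.)
\item If $\bx_t^\ell=\noise^\ell$, set $a=\frac{\alpha_s-\alpha_t}{1-\alpha_t}$. The true bridge is $a\bx_0^\ell+(1-a)\noise^\ell$ and the model is $a\hat x+(1-a)\noise^\ell$.
\begin{itemize}
\item When $\bx_0^\ell=\noise^\ell$, the true bridge is $\delta_{\noise^\ell}$, and the KL is $-\log\big(1-a+a\dotp{\noise^\ell}{\hat x}\big)$.
\item When $\bx_0^\ell\neq\noise^\ell$, the true bridge has mass $a$ on $\bx_0^\ell$ and $1-a$ on $\noise^\ell$. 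The KL is
\[
a\log\frac{1}{\dotp{\bx_0^\ell}{\hat x}} + (1-a)\log\frac{1-a}{1-a+a\dotp{\noise^\ell}{\hat x}}.
\]
\end{itemize}
\end{itemize}
Note that $\hat x$ need not vanish on $\noise^\ell$, since the softmax in \eqref{eq:carry-over-uniform} ranges over all $K$ tokens. This is why the $\dotp{U^\ell}{\hat{\bx}}$ term appears in the loss.

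\emph{Continuous-time limit.} Take $s=t-\delta$, so that $a=-\alpha'_t\delta/(1-\alpha_t)+o(\delta)$. I expand each case to first order in $a$:
\begin{itemize}
\item $-\log(1-a+a\dotp{\noise^\ell}{\hat x}) = a\big(1-\dotp{\noise^\ell}{\hat x}\big)+O(a^2)$.
\item $(1-a)\log(1-a) = -a+O(a^2)$.
\item $-(1-a)\log(1-a+a\dotp{\noise^\ell}{\hat x}) = a\big(1-\dotp{\noise^\ell}{\hat x}\big)+O(a^2)$.
\end{itemize}
Collecting terms, the KL equals $a\{1-\dotp{\noise^\ell}{\hat x}\}$ when $\bx_0^\ell=\noise^\ell$. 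When $\bx_0^\ell\neq\noise^\ell$ it equals $a\{-\log\dotp{\bx_0^\ell}{\hat x} - 1 + 1-\dotp{\noise^\ell}{\hat x}\}$. Both cases are the single expression
\[
a\big\{1-\dotp{\noise^\ell}{\hat x}-\indic_{\bx_0^\ell\neq\noise^\ell}\big(1+\log\dotp{\bx_0^\ell}{\hat x}\big)\big\}.
\]
Multiplying by the indicator $\indic_{X_t^\ell=U^\ell}$, summing over $\ell$ and over a refining partition $t_0<\dots<t_n$, and taking expectations over $X_{t_i}\sim\fw{t_i\tbar 0}{\bx_0,U}{}$ and $U$, gives a Riemann sum. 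It converges to the integral of $-\frac{\alpha'_t}{1-\alpha_t}\pE[\cdots]$, which is \eqref{eq:absorbing-ctmc-loss}.

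The first reconstruction term $-\log\pdata{0\tbar t_1}{}{}[\param]$ must also vanish as $t_1\to0$. This holds because $\alpha_{t_1}\to1$: with probability tending to one, $X_{t_1}^\ell\neq U^\ell$ or $X_{t_1}^\ell=\bx_0^\ell$, and the carry-over then gives zero loss.

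\emph{Main obstacle.} I expect the hard step to be justifying the limit rigorously. The $O(a^2)$ remainders must be uniform enough to be summed, which requires $\hat x$ to stay bounded away from zero on $\bx_0^\ell$, or at least the integrand to be integrable near $t=1$ where $1-\alpha_t\to0$. I would state the result under smoothness of $\alpha$ and continuity of $t\mapsto\hat{\bx}_0^\param$, and use dominated convergence.
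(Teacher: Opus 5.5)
Your Jensen step and per-token KL computation are correct. After Jensen, however, you take a different route from the paper's appendix proof, though it is the route the main text announces. You compute the exact KLs of \eqref{eq:elbo-discrete} under the bridge \eqref{eq:audm-bridge}, expand to first order in $a$, and pass to a Riemann sum. The paper instead plugs two quantities directly into the CTMC ELBO \eqref{eq:ctmc-specific-elbo} with $\pi^\ell=U^\ell$ and evaluates $\Phi$: the conditional score $\frac{\alpha_t}{1-\alpha_t}\indic_{\tilde x = X_0^\ell}\indic_{X_0^\ell\neq U^\ell}$ and the model score $\frac{\alpha_t}{1-\alpha_t}\dotp{\tilde x}{\hat{\bx}^\param_0(X_t,t;U)^\ell}$. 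Your expansions reproduce exactly those $\Phi$ terms; for example, $\Phi(0,b)=b$ is your $-\log(1-a+a\dotp{\noise^\ell}{\hat x})\approx a(1-\dotp{\noise^\ell}{\hat x})$. Your route is self-contained and identifies the formula as the limit of the discrete bounds for the $n$-step model. The paper's route inherits the boundary behaviour from \eqref{eq:ctmc-specific-elbo}, which has no reconstruction term, so it never handles $t\to0$ by hand. That boundary is where your argument has a gap.

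\textbf{The gap: the reconstruction term.} Consider positions with $\bx_0^\ell=U^\ell$, which occur with probability $1/\vocab$ each under $U$.
\begin{itemize}
\item These positions have $X_t^\ell=U^\ell$ for every $t$, so the carry-over branch of \eqref{eq:carry-over-uniform} never applies to them.
\item The last step has $a=(\alpha_0-\alpha_{t_1})/(1-\alpha_{t_1})=1$. It therefore contributes $-\log\dotp{U^\ell}{\hat{\bx}^\param_0(X_{t_1},t_1;U)^\ell}$.
\item This quantity is strictly positive for a softmax output and has no reason to vanish as $t_1\to0$. So your claim that $X_{t_1}^\ell=\bx_0^\ell$ yields zero loss is false on these positions.
\end{itemize}

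\textbf{How to repair it.} The singular endpoint is $t=0$, not $t=1$ as you wrote. Since $\alpha_0=1$, we have $\int_0^\epsilon\frac{-\alpha'_t}{1-\alpha_t}\rmd t=+\infty$. On these positions the factor $1-\dotp{U^\ell}{\hat{\bx}}$ is not compensated by $\pP(X_t^\ell=U^\ell)$, which equals one. This gives a dichotomy:
\begin{itemize}
\item If $\dotp{U^\ell}{\hat{\bx}^\param_0(X_t,t;U)^\ell}\not\to1$ as $t\to0$, the right-hand side of \eqref{eq:absorbing-ctmc-loss} is $+\infty$ and the bound holds trivially.
\item If it tends to one, and $\hat{\bx}^\param_0$ extends continuously to $t=0$, then the reconstruction term does vanish, since $X_{t_1}\to\bx_0$ in probability.
\end{itemize}
The same dichotomy is needed in your Riemann-sum step. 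Near $t=0$ the ratios $a_i$ are not small: for $\alpha_t=1-t$ on a uniform grid, $a_i=1/i$. So the $O(a^2)$ remainders are not uniformly controlled, precisely on these positions.

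\textbf{A smaller point.} With a point-mass prior at $\noise$ and $\alpha_1>0$, the terminal KL is infinite, not negligible. Either take $\alpha_1=0$, or use the prior $\pdata{\tk{n}}{\noise}{}$ of the joint model, for which the terminal KL is a $\param$-independent constant.
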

\end{mdframed}
This loss can be viewed as a generalization of the continuous-time ELBO for MDMs. Indeed, upon taking $U^\ell \sim \categorical(\mask)$ for all $\ell \in \intset{1}{\len}$ we recover the MDM ELBO by further imposing that $\dotp{\mask}{\hat{\bx}^\param _0(\bx,t)^\ell} = 0$ for any $\bx$, which is referred to as the zero-remasking property as in \cite{sahoo2024simple}.

\subsection{AUDM with resampling recovers UDM}
\label{sec:reaudm}
\begin{algorithm}[t]
\caption{Remasked AUDM sampler}
\label{alg:clean-conditioned-remasking}
\footnotesize
\begin{algorithmic}
\Require time grid $0 = \tk{0} < \tk{1} < \cdots < \tk{n} = 1$
\State Draw $U_{\tk{n}} \sim \mathrm{Uniform}(\intset{1}{\vocab})^{\otimes \len}$ and $X_{\tk{n}} \sim \pdata{\tk{n}}{U_{\tk{n}}}{\,\cdot}$
\For{$i=n-1,n-2,\dots,0$}
    \State Draw $X_0 \sim \pdata{0\tbar \tk{i+1}}{X_{\tk{i+1}}, U_{\tk{i+1}}}{}[\param]$
    \State Draw $X_{\tk{i}} \sim \fw{\tk{i}\tbar 0,\tk{i+1}}{X_0, X_{\tk{i+1}}}{}$
    \State Draw $U^\ell_{\tk{i}} \sim
    \begin{cases}
        \updelta_{X^\ell_{\tk{i}}}, & X^\ell_{\tk{i}} \neq X^\ell_0,\\
        \categorical\!\left(\frac{X^\ell_0+\alpha_{\tk{i}}(\one-X^\ell_0)}{1+(\vocab-1)\alpha_{\tk{i}}}\right), & X^\ell_{\tk{i}} = X^\ell_0,
    \end{cases}$
    \Comment{in parallel over $\ell$}
\EndFor
\State \Return $X_{\tk{0}}$
\end{algorithmic}
\end{algorithm}

The construction above matches the single-time marginals of uniform diffusion. We now show that, by resampling the absorbing sequence along the reverse chain, one can recover the full UDM reverse-chain: 
\begin{equation}
    \label{eq:udm-joint}
    \txts \pdata{0:n}{}{\bx_{\tk{0:n}}}
    =
    \pdata{\tk{n}}{}{\bx_{\tk{n}}}
    \prod_{i = 1}^{n}
    \pdata{\sk{i}\tbar \tk{i}}{\bx_{\tk{i}}}{\bx_{\sk{i}}}
    \eqsp,
\end{equation}
We describe the resampling procedure in \Cref{alg:clean-conditioned-remasking}. It is a discrete-time augmented state Markov chain with one step transition 
\[ 
\txts \barpdata{s\tbar t}{\bx_t, \noise_t}{\bx_s, \noise_s} = \sum_{\bx_0} \pnoise{s\tbar 0}{\bx_0, \bx_s}{\noise_s} \fw{s\tbar 0, t}{\bx_0, \bx_t}{\bx_s}  \, \pdata{0\tbar t}{\bx_t, \noise_t}{\bx_0} \eqsp.
\]  
Equivalently, given $(\bx_t, \noise_t)$, the sampler first draws $X_0$ from the absorbing-state denoiser, then draws $X_s$ from the UDM bridge, and finally refreshes the absorbing sequence by drawing $U_s$ conditionally on $X_0, X_s$. Thus, compared with the fixed-$U$ sampler described above, the only additional operation is the final resampling step. The resampling distribution is defined as 
\begin{equation}
    \label{main-eq:noise_cond_x0_xs}
    \pnoise{s\tbar 0}{\bx_0, \bx_s}{\noise_s}
    \eqdef
    \frac{\pnoise{}{}{\noise_s}\fw{s\tbar 0}{\bx_0,\noise_s}{\bx_s}}
    {\fw{s\tbar 0}{\bx_0}{\bx_s}}
    =
    \prod_{\ell=1}^{\len}
    \begin{cases}
        \categorical(\noise^\ell _s; \bx^\ell _s) & \bx^\ell_s \neq \bx^\ell_0, \\
        \categorical\!\left(\noise^\ell_s;\frac{\alpha_s \one + (1 - \alpha_s) \bx^\ell_0)}{1+(\vocab-1)\alpha_s}\right), & \bx^\ell_s = \bx^\ell_0,
    \end{cases}
\end{equation}
The first case says that, if a coordinate changed between $\bx^\ell _0$ and $\bx^\ell _s$, then the absorbing value, or noise, must be the observed value $\bx^\ell _s$. The second case accounts for the ambiguity when the coordinate did not change. This sampler preserves the UDM joint law \eqref{eq:udm-joint} marginally as stated in the next Proposition. 
\begin{mdframed}[style=propFrame]
    \begin{proposition}
        \label{prop:remasked-audm-udm-joint}
       The trajectory generated in \Cref{alg:clean-conditioned-remasking} satisfies $\law(X_{\tk{0}},\ldots,X_{\tk{n}}) = \pdata{0:n}{}{}$, where $\pdata{0:n}{}{}$ is the uniform UDM reverse-chain law defined in \eqref{eq:udm-joint}.
    \end{proposition}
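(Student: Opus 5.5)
Throughout, I read \Cref{alg:clean-conditioned-remasking} with the exact absorbing-state denoiser $\pdata{0\tbar t}{\bx_t,\noise_t}{}$ of \eqref{eq:audm-posterior}, since that is what the augmented kernel $\barpdata{s\tbar t}{}{}$ is written with. The plan is to show by backward induction that the augmented chain $(X_{\tk{i}},U_{\tk{i}})$ satisfies the invariant
\[
\law(X_{\tk{i}},\ldots,X_{\tk{n}},U_{\tk{i}}) = \pdata{\tk{i}:n}{}{}(\bx_{\tk{i}:n})\,\pnoise{}{}{\noise_{\tk{i}}}\,\pdata{\tk{i}}{\noise_{\tk{i}}}{\bx_{\tk{i}}}/\pdata{\tk{i}}{}{\bx_{\tk{i}}}.
\]
Here $\pdata{\tk{i}:n}{}{}$ is the UDM reverse-chain law of the last $n-i+1$ states. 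In words: the current noise is distributed as the forward posterior $\pP(U=\noise\mid X_{\tk{i}}=\bx_{\tk{i}})$ of the augmented forward model of \Cref{subsec:audm}. This posterior depends only on the current state and is independent of the past trajectory. The conclusion at $i=0$ follows by marginalizing out $U_{\tk{0}}$.

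The base case $i=n$ is immediate from the initialization: $U_{\tk{n}}\sim\pnoise{}{}{}$ and $X_{\tk{n}}\sim\pdata{\tk{n}}{U_{\tk{n}}}{}$. This gives the joint $\pnoise{}{}{\noise}\pdata{\tk{n}}{\noise}{\bx}$, whose $X$-marginal is $\pdata{\tk{n}}{}{}$ by the mixture identity $\sum_\noise \pdata{t}{\noise}{\bx_t}\pnoise{}{}{\noise}=\pdata{t}{}{\bx_t}$.

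For the inductive step, I write $t=\tk{i+1}$ and $s=\tk{i}$, and work conditionally on the past trajectory, which by the invariant enters only through $\bx_t$. The first step marginalizes the draw of $X_0$ against the current noise posterior:
\[
\sum_{\noise_t}\pP(U=\noise_t\mid X_t=\bx_t)\,\pdata{0\tbar t}{\bx_t,\noise_t}{\bx_0}=\pdata{0\tbar t}{\bx_t}{\bx_0}.
\]
This is the tower property in the augmented forward model $(X_0,U,X_t)$. It uses only that $X_0$ and $U$ are independent and that the $X_t$-marginal is the UDM one. Consequently, given the past, $X_0$ has the UDM denoising posterior. Drawing $X_s$ from the UDM bridge then gives $X_s\sim\pdata{s\tbar t}{\bx_t}{}$ by \eqref{eq:reverse}, so the $X$-part of the invariant propagates.

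The second step checks the law of the refreshed noise. I need
\[
\sum_{\bx_0}\pnoise{s\tbar 0}{\bx_0,\bx_s}{\noise_s}\,\pP(X_0=\bx_0\mid X_s=\bx_s,X_t=\bx_t)=\pP(U=\noise_s\mid X_s=\bx_s).
\]
The first ingredient is that, in the UDM forward chain, $X_0$ given $(X_s,X_t)$ equals $X_0$ given $X_s$, by the Markov property. The second is that, in the augmented forward model at time $s$, \eqref{main-eq:noise_cond_x0_xs} is exactly $\pP(U=\noise_s\mid X_0,X_s)$ by Bayes. The tower property then closes the induction, and the new noise depends on the past only through $\bx_s$.

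I expect the main obstacle to be the first step of the inductive step. The invariant must be formulated so that the noise inherited from time $t$, which was resampled using $X_0$ and an earlier state, is genuinely distributed as the single-time forward posterior given $X_t$ and carries no extra information about the trajectory. Choosing the invariant as above and verifying the second step is what handles this. Beyond that, I would double-check the product form in \eqref{main-eq:noise_cond_x0_xs} by a one-token Bayes computation, and confirm that the augmented forward model factorizes as $\pdata{0}{}{}\otimes\pnoise{}{}{}$ followed by $\fw{t\tbar 0}{\cdot,\noise}{}$.
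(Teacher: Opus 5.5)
Your proposal is correct and is essentially the paper's proof. Your invariant (the current noise is distributed as the forward posterior $\pnoise{t}{\bx_t}{}$ given the current state, independently of the past) is exactly the induction hypothesis of \Cref{prop:lifted-marginal}. Your two steps (the mixture identity for the denoiser, then the Markov property plus Bayes for the refreshed noise) are the probabilistic reading of the computation in \Cref{lem:lifted-process-marginalization}, where the paper uses the bridge identity $q_{s|0,t}\,q_{t|0}=q_{s|0}\,q_{t|s}$ algebraically.

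One small precision. The tower step for $X_0$ needs the joint law of $(X_0,X_t)$ in the augmented model to be the UDM one, not merely the $X_t$-marginal. This comes from the transition mixture identity $\sum_{\noise}\pnoise{}{}{\noise}\,\fw{t\tbar 0}{\bx_0,\noise}{\bx_t}=\fw{t\tbar 0}{\bx_0}{\bx_t}$ together with $X_0\perp U$.
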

\end{mdframed}
The proof is given in \Cref{sec:proof-remasking}. This result essentially shows that uniform diffusion can be decomposed into simple sampling operations: first sampling from an absorbing-state denoiser, then the uniform bridge, and then resampling the absorbing-state according to \eqref{main-eq:noise_cond_x0_xs}. In practice, we take as approximate model one that uses the factorized transitions of the joint law on the augmented states generated by \Cref{alg:clean-conditioned-remasking}, which is equivalent to replacing the exact $\pdata{0\tbar t}{\bx_t,\noise_t}{}$ by the approximation introduced in the previous section. For the initialization we set $X_\tk{n} = U_\tk{n}$. 

\subsection{Masked Uniform Diffusion}
\label{sec:mudm}
The absorbing-state lifting in the previous sections gives a noise-conditioned analogue of masked diffusion. We now give a complementary lifting in which the connection to masked diffusion is exact: conditioned on latent transition times, the UDM denoising problem is precisely a MDM denoising problem. Following \cite{chen2024fast}, we introduce the transition times $\bm{\tau}\!\in\!\rset^\len _{\geq 0}$ with c.d.f $\smash{\mathbb{P}(\bm{\tau}^\ell<t)=1 - \alpha_t}$. Conditionally on $(X_0,\btau)$, we consider the process $(X_t)_{t\in(0,1]}$ evolving in reverse time with $\smash{X_1 \sim \mathrm{Uniform}([\intset{1}{\vocab}])^{\otimes \len}}$ and with transitions for $0 < s < t < 1$ given by 
\[
    \fw{s\tbar 0,t}{\bx_0,\bx_t,\bm{\tau}}{\bx_s}
    =
    \prod_{\ell=1}^{\len}
    \begin{cases}
        \categorical(\bx_s^\ell;\bx_0^\ell)
        & \text{if $s<\bm{\tau}^\ell\le t$,}\\
        \categorical(\bx_s^\ell;\bx_t^\ell)
        & \text{otherwise,}
    \end{cases}
    \eqsp .
\]
Essentially $X^\ell _s$ equals $X^\ell _0$ exactly when the jump occurs between $s$ and $t$; otherwise no jump occurs on \((s,t]\), so \(X_s^\ell=X_t^\ell\). Furthermore, the conditional distribution of $X_t$ is 
\[ 
    \txts \fw{t\tbar 0}{\bx_0, \btau}{\bx_t} = \prod_{\ell = 1}^\len \categorical \left(
        \bx_t^\ell;
        \indic_{\bm{\tau}^\ell>t}\bx_0^\ell
        +
        \indic_{\bm{\tau}^\ell\le t} \one / \vocab 
    \right) \eqsp,
\]
and thus, marginalizing over $\btau$ recovers the UDM forward transition $\fw{t\tbar 0}{\bx_0}{\bx_t}$. 

Define $\tilde\bx_t(\bm{\tau})^\ell \eqdef \bx_t^\ell$ when $\bm{\tau}^\ell>t$ and $\tilde\bx_t(\bm{\tau})^\ell \eqdef \mask$ when $\bm{\tau}^\ell\le t$. We show in \Cref{app:mudm} that the transitions of the reverse Markov chain conditioned only on $\btau$ admits the following one step transition 
\[
    \pdata{s\tbar t}{\bx_t,\bm{\tau}}{\bx_s}
    =
    \sum_{\bx_0}
    \fw{s\tbar 0,t}{\bx_0,\bx_t,\bm{\tau}}{\bx_s}\,
    \pdata{0\tbar t}{\tilde\bx_t(\bm{\tau})}{\bx_0}[\mathrm{mask}]
    \eqsp,
\]
where $\pdata{0\tbar t}{}{}[\mathrm{mask}]$ is the denoiser of a MDM. This shows that we can recycle a MDM denoiser to define a discrete diffusion model with UDM marginals. 
\paragraph{Recovering the UDM joint distribution.} We can go one step further and recover the full UDM joint law \eqref{eq:udm-joint}. We apply the same logic as in \Cref{sec:reaudm} and \Cref{alg:clean-conditioned-remasking} and resample the transition times after each reverse step according to $\smash{j_{s\tbar 0}(\btau_s|\bx_0,\bx_s) \eqdef \prod_{\ell = 1}^\len j^\ell _{s\tbar 0}(\btau^\ell _s | \bx^\ell _0, \bx^\ell _s)}$ with 
\[
    j_{s\tbar 0}(\btau^\ell_s|\bx^\ell_0,\bx^\ell_s)
    =
    \begin{cases}
        \frac{-\alpha'_{\btau^\ell_s}}{1-\alpha_s}\indic_{\btau^\ell_s\le s} 
        & \text{if $\bx^\ell_s \neq \bx^\ell_0$,}\\
        \frac{-\alpha'_{\btau^\ell_s}}{1+(\vocab-1)\alpha_s}
        \big(
            \indic_{\btau^\ell_s\le s}
            +
            \vocab\indic_{\btau^\ell_s>s}
        \big)
        & \text{if $\bx^\ell_s = \bx^\ell_0$.}
    \end{cases}
 \]
The corresponding lifted reverse transition is then
\[
    \barpdata{s\tbar t}{\bx_t,\bm{\tau}_t}{\bx_s,\bm{\tau}_s}
    \eqdef
    \sum_{\bx_0}
    j_{s\tbar 0}(\bm{\tau}_s|\bx_0,\bx_s)\,
    \fw{s\tbar 0,t}{\bx_0,\bx_t}{\bx_s}\,
    \pdata{0\tbar t}{\tilde\bx_t(\bm{\tau}_t)}{\bx_0}[\mathrm{mask}]
    \eqsp .
\]
This has a simple remasking interpretation. If $\bx^\ell_s \neq \bx^\ell_0$, then the transition must already have occurred before time $s$, so  $\btau^\ell_s \le s$. If $\bx^\ell_s = \bx^\ell_0$, there are two possible explanations: either the position has not transitioned yet, so $\btau^\ell_s > s$, or it has already transitioned before time $s$ and the uniform resampling happened to return $\bx^\ell_0$. The law $j_{s\tbar 0}$ reweights these two cases appropriately. 

The resulting algorithm analogous to \Cref{alg:clean-conditioned-remasking} is provided in \Cref{alg:mudm-flag-resampling}. We then have the following result:
\begin{mdframed}[style=propFrame]
\begin{proposition}
    \label{prop:mudm-joint}
       The trajectory generated in \Cref{alg:mudm-flag-resampling} satisfies $\law(X_{\tk{0}},\ldots,X_{\tk{n}}) = \pdata{0:n}{}{}$, where $\pdata{0:n}{}{}$ is the uniform UDM reverse-chain law defined in \eqref{eq:udm-joint}.
\end{proposition}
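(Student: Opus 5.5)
We prove the claim by induction along the reverse chain, showing that the augmented chain $(X_{t_i}, \btau_{t_i})$ keeps a joint law of a specific form at every step. The invariant is: at each step $i$, $X_{t_i}$ has UDM marginal $\pdata{t_i}{}{}$, and conditionally on the past trajectory $X_{t_{i:n}}$, the law of $\btau_{t_i}$ is the posterior $\pdata{}{}{}$-law of the transition times given $X_{t_i}$ alone. Denote this posterior by $\rho_t(\btau|\bx_t)\propto \sum_{\bx_0}\pdata{0}{}{\bx_0}\fw{t\tbar 0}{\bx_0,\btau}{\bx_t}\prod_\ell(-\alpha'_{\btau^\ell})$. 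If this invariant holds at every step, the law of the whole trajectory is Markov in $X$ with the correct transitions, which is exactly \eqref{eq:udm-joint}.

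\textbf{Initialization.} At $t_n=1$, $X_1$ is uniform. Since $\alpha_1\approx 0$ (taken as exact), the times satisfy $\btau^\ell\le 1$ almost surely and are independent of $X_1$. So $\rho_1(\cdot|\bx_1)$ is simply the prior of $\btau$, and the invariant holds.

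\textbf{Inductive step.} Assume $(X_t,\btau_t)\sim \pdata{t}{}{\bx_t}\rho_t(\btau_t|\bx_t)$, conditionally on the past. Two identities are needed.

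First, the MDM-recycling identity stated before the proposition (proved in \Cref{app:mudm}):
\[
\pdata{0\tbar t}{\tilde\bx_t(\btau)}{\bx_0}[\mathrm{mask}]=\pdata{}{}{\bx_0\mid \bx_t,\btau}.
\]
This gives
\[
\sum_{\btau_t}\rho_t(\btau_t|\bx_t)\,\pdata{0\tbar t}{\tilde\bx_t(\btau_t)}{\bx_0}[\mathrm{mask}]=\pdata{0\tbar t}{\bx_t}{\bx_0}.
\]
Hence, after marginalizing $\btau_t$, drawing $X_0$ from the mask denoiser and then $X_s$ from the UDM bridge $\fw{s\tbar 0,t}{}{}$ yields the triple $(X_0,X_s)$ with law $\pdata{0\tbar t}{\bx_t}{\bx_0}\fw{s\tbar 0,t}{\bx_0,\bx_t}{\bx_s}$. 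Summed over $\bx_0$, this is the UDM reverse kernel $\pdata{s\tbar t}{\bx_t}{\bx_s}$ of \eqref{eq:reverse}.

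Second, $j_{s\tbar 0}$ must be the exact conditional of $\btau$ given $(X_0,X_s)=(\bx_0,\bx_s)$ under the lifted forward model. Concretely,
\[
j_{s\tbar 0}(\btau|\bx_0,\bx_s)=\frac{\prod_\ell(-\alpha'_{\btau^\ell})\,\fw{s\tbar 0}{\bx_0,\btau}{\bx_s}}{\fw{s\tbar 0}{\bx_0}{\bx_s}}.
\]
This is checked token-wise. If $\bx_s^\ell\neq\bx_0^\ell$, the likelihood is $\indic_{\btau^\ell\le s}/\vocab$ and the normalizer is $(1-\alpha_s)/\vocab$. If $\bx_s^\ell=\bx_0^\ell$, the likelihood is $\indic_{\btau^\ell>s}+\indic_{\btau^\ell\le s}/\vocab$ and the normalizer is $\alpha_s+(1-\alpha_s)/\vocab$. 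Multiplying through by $\vocab$ reproduces the displayed formula.

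By the forward Markov structure (the UDM forward chain is Markov, and $X_0$ screens off), the conditional law of $\btau$ given $(X_0,X_s,X_t,\dots)$ depends only on $(X_0,X_s)$. Averaging $j_{s\tbar 0}$ against $\pdata{}{}{\bx_0\mid \bx_s,\bx_t}$ then gives the conditional $\pdata{}{}{\btau\mid\bx_s,\bx_t}$. The main obstacle is that the invariant requires $\rho_s(\btau|\bx_s)$ instead, with $\btau$ independent of the future $\bx_t$ given $\bx_s$. This does not hold for the true lifted forward process, because $\btau$ shared across times correlates $X_s$ and $X_t$.

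I would resolve this by weakening the invariant. Rather than requiring $\btau_s$ to equal the true posterior, I would require only that its marginalization identity holds: $\btau_s$ is drawn via $j_{s\tbar 0}$ from a pair $(X_0,X_s)$ that is distributed as $\pdata{}{}{\bx_0,\bx_s}$ given $X_s$, that is, from $\pdata{0\tbar s}{\bx_s}{\bx_0}$. The concern is that the sampled $X_0$ has law $\pdata{}{}{\bx_0|\bx_s,\bx_t}$, not $\pdata{0\tbar s}{\bx_s}{}$.

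This is handled by viewing the algorithm as the lifted forward process with $\btau$ resampled after each step. Along the reversed UDM chain this is the process $X_0\to X_s$ with fresh $\btau_s\sim\rho(\cdot|X_0,X_s)$ and then $X_s\to X_t$, so the auxiliary $\btau_s$ is used only to decode the step $s\to s'$. Hence the real requirement is only the marginalization identity at each step. With the pair $(\btau_s,X_s)$ generated in this lifted process as $X_0\sim\pdata{0}{}{}$, $X_s\sim \fw{s\tbar 0}{X_0}{}$, $\btau_s\sim j(\cdot|X_0,X_s)$, we get $\pdata{}{}{\bx_0|\bx_s,\btau_s}=\pdata{0\tbar s}{\tilde\bx_s(\btau_s)}{}[\mathrm{mask}]$.

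To conclude, I would write the joint law of the algorithm's trajectory with the $X_0$ draws at each step explicit, and sum over the latent variables from the last step backward. At each step, the sum over $\btau$ collapses the mask denoiser into $\pdata{0\tbar t}{\bx_t}{}$ through the two identities above, using the marginalization over the previous step's $\bx_0$ inside $j$. This yields \eqref{eq:udm-joint}, in the same way as the proof of \Cref{prop:remasked-audm-udm-joint}.
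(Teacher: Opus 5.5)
Your first-paragraph invariant (conditionally on $X_{t_{i:n}}$, $\btau_{t_i}\sim j_{t_i}(\cdot\mid X_{t_i})$) is exactly the right one. Both ingredients you verify are also correct: the $\btau$-mixture identity that collapses the mask denoiser to $p_{0\tbar t}(\cdot\mid\bx_t)$, and the Bayes form of $j_{s\tbar 0}$. The gap is that you never show the invariant propagates. Worse, you assert that it fails and replace it with a ``weakened invariant''. The concern attached to that weakened invariant is left open, namely that the sampled $X_0$ has law $p(\bx_0\mid\bx_s,\bx_t)$ rather than $p_{0\tbar s}(\bx_0\mid\bx_s)$. Your closing paragraph then assumes precisely that summing the previous step's $\bx_0$ inside $j_{s\tbar 0}$ returns $j_s(\cdot\mid\bx_s)$, which is the point at issue. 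The obstacle itself is spurious. It is real for the shared-$\btau$ process of \cite{chen2024fast}, where a single $\btau$ drives the whole path and stays correlated with $X_t$ given $X_s$. But \Cref{alg:mudm-flag-resampling} never generates $X_s$ from that process: it uses the plain UDM bridge $q_{s\tbar 0,t}$ and then draws a fresh $\btau_s$.

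The missing step is short. Under the inductive hypothesis, after marginalizing $\btau_t$, the pair $(X_0,X_s)$ given $X_t=\bx_t$ has law
\[
p_{0\tbar t}(\bx_0\mid\bx_t)\,q_{s\tbar 0,t}(\bx_s\mid\bx_0,\bx_t)=p_0(\bx_0)\,q_{s\tbar 0}(\bx_s\mid\bx_0)\,q_{t\tbar s}(\bx_t\mid\bx_s)/p_t(\bx_t)
\]
by the bridge identity. So $(X_0,X_s,X_t)$ is a segment of the UDM forward Markov chain, and $p(\bx_0\mid\bx_s,\bx_t)=p_{0\tbar s}(\bx_0\mid\bx_s)$. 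Consequently
\[
\sum_{\bx_0}p_{0\tbar s}(\bx_0\mid\bx_s)\,j_{s\tbar 0}(\btau\mid\bx_0,\bx_s)=\sum_{\bx_0}p_0(\bx_0)\,j(\btau)\,q_{s\tbar 0}(\bx_s\mid\bx_0,\btau)/p_s(\bx_s)=j_s(\btau\mid\bx_s),
\]
which does not depend on $\bx_t$. Put together, this is the one-step identity the paper proves:
\[
\int \bar p_{s\tbar t}(\bx_s,\btau_s\mid\bx_t,\btau_t)\,j_t(\btau_t\mid\bx_t)\,\rmd\btau_t = p_{s\tbar t}(\bx_s\mid\bx_t)\,j_s(\btau_s\mid\bx_s).
\]
Its right-hand side says two things: $X_s$ follows the UDM reverse kernel, and $\btau_s$ is $j_s(\cdot\mid X_s)$-distributed given the entire later trajectory. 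That is exactly your original invariant. Iterating the identity along the grid, as in the ReAUDM proof, gives the result; no weakened invariant is needed. Also, the algorithm initializes $\btau_{t_n}\sim j_{t_n}(\cdot\mid X_{t_n})$, so the base case holds exactly without assuming $\alpha_1=0$.
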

\end{mdframed}
This formalizes the idea that UDM is naturally associated with a remasking mechanism absent from standard MDM, and clarifies the relation between these two constructions. \Cref{app:mudm} gives the full derivation and algorithm.

%% file: body/related_works.tex
\section{Related Works}
\paragraph{Leave-one-out denoiser.} \emph{Categorical ratio matching} \cite{sun2023scorebased} trains discrete diffusion models by learning the concrete score $\score{t}{}{}$; by \eqref{eq:score-ratio-loo}, learning the conditional distribution in \eqref{eq:cond-distr-loo}, which is also leave-one-out, is enough to recover the score and has motivated the use of the Hollow Transformer architecture. As recalled in \Cref{sec:categorical_ratio_matching}, using the Hollow Transformer is essential for their loss function to recover the correct conditional distribution; otherwise, the model can simply learn the identity function. Here we show instead that standard UDMs trained with the bridge parameterization and ELBO already target the same leave-one-out object, and we propose a simple cross-entropy objective whose \emph{unique} minimizer is the leave-one-out conditional distribution, without imposing architectural constraints.

\paragraph{Predictor-corrector samplers.} Predictor-corrector samplers alternate a reverse transition step with a corrector kernel that preserves the marginal $\pdata{t}{}{}$. For discrete diffusion, \cite{campbell2022continuous} propose an uninformed birth-death corrector whose rates combine the forward and reverse generators, and ReMDM \cite{wang2026remaskingdiscretediffusionmodels} introduces remasking in masked diffusion, which can be read as predictor steps interleaved with a single corrector step. Closer to our setting, \cite{zhao2025informed} train an auxiliary leave-one-out model to build an informed Gibbs corrector. For UDMs, our key observation is that, leveraging the leave-one-out representation, this correction does not require a separately trained model. Finally, in the recent work \cite{deschenaux2026diffusiondualitychapterii} the authors propose the $\Psi$-samplers framework, which recovers specific instances of known predictor-corrector samplers. Interestingly, when specialized to UDMs, their sampler assumes that the trained network has learned a LOO denoiser and thus does not work when the network has learned a denoiser instead. We provide in \Cref{sec:loo-denoiser-confusion} the general form that applies in both cases. 

\paragraph{UDMs with augmented states.} Several recent works also augment the state space of UDMs, but for different purposes. \cite{zheng2024reparameterizeddiscretediffusionmodel} introduces routing variables that decide, in a confidence-dependent way, which positions should be treated as already clean during sampling. \cite{liu2024think} conditions on a binary corruption mask and learns a separate planner that chooses which positions to denoise as well as a mask-conditioned denoiser, leading to a planner-denoiser factorization. \cite{chen2024fast} instead uses latent transition times, which reveals an exact connection between UDMs and masked denoising, although the implemented sampler ultimately plugs in an ordinary denoiser that is not conditioned on these times. Our absorbing-state construction is different in spirit: the auxiliary variable is part of an exact lifted model, and the resampling step is chosen so that marginalizing the augmentation recovers the UDM reverse-chain law rather than only providing a decoding heuristic. We give the detailed comparison in \Cref{sec:audm-related-works}.

%% file: body/experiments.tex
\section{Experiments}
\label{sec:experiments}
\begin{figure}[t]
    \centering
    \begin{subfigure}[t]{0.48\textwidth}
        \centering
        \includegraphics[width=\linewidth]{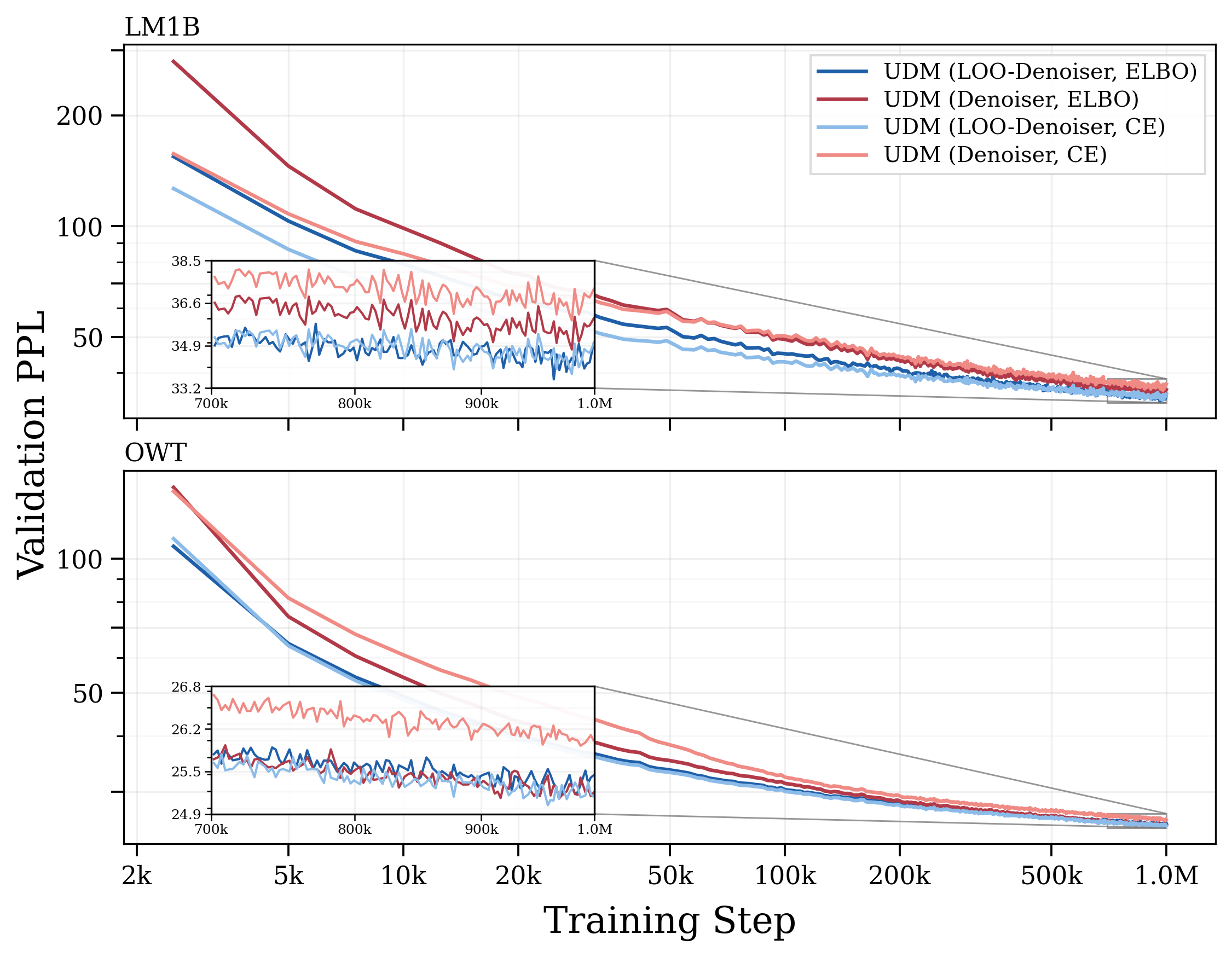}
        \label{fig:training_loss_loo_denoiser}
    \end{subfigure}
    \hfill
    \begin{subfigure}[t]{0.48\textwidth}
        \centering
        \includegraphics[width=\linewidth]{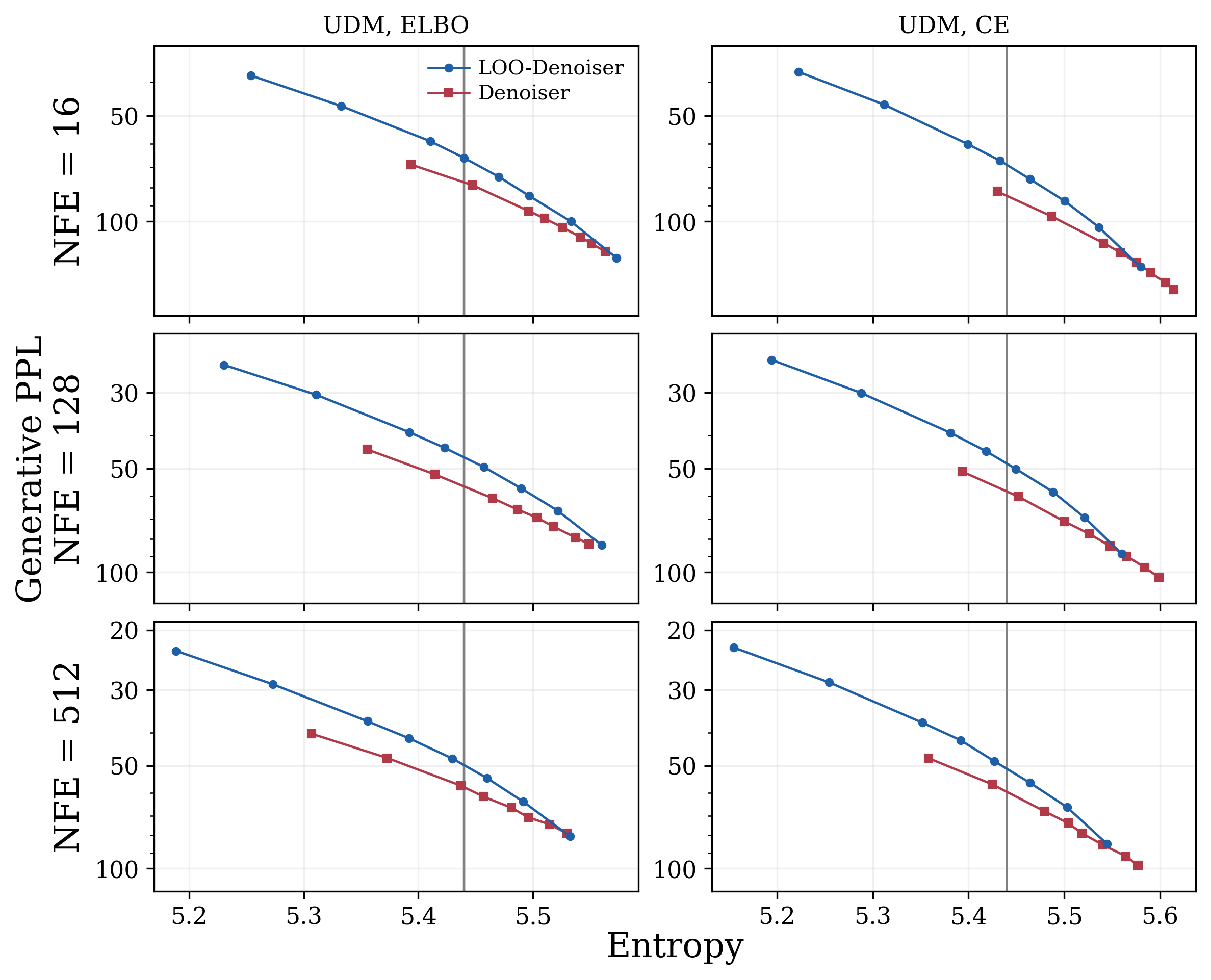}
        \label{fig:generative_ppl_loo_denoiser}
    \end{subfigure}
    \captionsetup{font=small}
    \vspace{-.5cm}
    \caption{Comparison between the denoiser and leave-one-out parameterizations. \textbf{Left:} validation perplexity during training. \textbf{Right:} top-$p$ sampling Gen-PPL frontier, obtained by sweeping ($p \in [0.8, 1.0]$). Top$-p$ is applied to the LOO denoiser and denoiser respectively. The curves are shown only up to entropy 5.6. The vertical line corresponds to the dataset entropy}
    \label{fig:parameterization-comparison}
\end{figure}
This section tests the practical implications of our analysis for UDM parameterization, sampling, and reverse process design. We compare the LOO denoiser and plain denoiser as primitive learned objects, evaluate the predictor-corrector sampler of \Cref{sec:loo}, and compare AUDM and ReAUDM from \Cref{sec:audm} against UDMs and MDMs. We do not benchmark against wrapper methods or more elaborate UDM variants, which act at another level of the modeling pipeline; our conclusions are complementary to such methods and can guide the parameterization on which they build. We evaluate primarily on language modeling and use Sudoku as a smaller structured discrete check. All the experimental details are provided in \Cref{app:exp-details}.

\textbf{Language modeling.\,}
We experiment on One Billion Word (LM1B \cite{chelba2014billionword}) and OpenWebText (OWT \cite{Gokaslan2019OpenWeb}), following \cite{lou2024discrete,sahoo2024simple,sahoo2025diffusion}. LM1B uses the $\mathrm{bert\text{-}base\text{-}uncased}$ tokenizer with $\len = 128$, while OWT uses the $\mathrm{gpt2}$ tokenizer with $\len = 1024$. All models use the same DiT-based architecture as \cite{lou2024discrete,sahoo2025diffusion} and are trained from scratch for 1M steps with batch size 512. To assess sample quality, we report GPT-2 Large generative perplexity (Gen-PPL) with the OWT-trained model. Similarly to \cite{liu2024think,pynadath2026generativefrontiersevaluationmatters} we rely on the \emph{generative frontier} where we plot the Gen-PPL as a function of 1-gram entropy. We vary the entropy using either temperature or nucleus sampling. This evaluation is substantially more informative than reporting the performance at a single entropy, as emphasized by \cite{pynadath2026generativefrontiersevaluationmatters} since different entropies may reverse the Gen-PPL ordering of two models.

\textbf{Sudoku.\,}
We also run a $9 \times 9$ Sudoku completion experiment to test the same modeling choices on a different structured discrete task. We use a similar setup to \citep{ye2025autoregressiondiscretediffusioncomplex}. The model is a smaller DiT with about 6M parameters, trained on a 50k-puzzle Sudoku split. We sweep the learning rate independently for every method family and report, for each method, the learning rate that maximizes the average validation solve rate across seeds and NFEs. The full setup is summarized in \Cref{tab:sudoku-exp}.

\subsection{Leave-one-out parameterization}
\newcommand{\bestcat}[1]{\cellcolor{orange!25}#1}
\begin{table}[!t]
\centering
\captionsetup{font=small}
\caption{Perplexity ($\downarrow$). OWT is the validation distribution; other columns report zero-shot transfer. Bold marks the overall best value, and orange the best within each method family.}
\label{tab:ppl-eval-multidataset_ppl}
\makebox[\linewidth][c]{%
\resizebox{0.90\linewidth}{!}{%
\begin{tabular}{lllr|rrrrrr}
\toprule
Method & Loss & Param. & OWT & AG News & Lambada & LM1B & PTB & PubMed & WikiText \\
\midrule
\rowcolor{blue!10}
\multicolumn{10}{l}{Absorbing Uniform Diffusion} \\
& ELBO & Denoiser & \underline{23.32} & \textbf{61.37} & \textbf{45.87} & \textbf{72.02} & \textbf{72.21} & \textbf{43.01} & \textbf{33.22} \\
\addlinespace
\rowcolor{blue!10}
\multicolumn{10}{l}{Uniform Diffusion} \\
& Cross-Entropy & Denoiser & 25.45 & 74.03 & 49.04 & 83.43 & 84.13 & 44.66 & 39.17 \\
& Cross-Entropy & LOO-Denoiser & \bestcat{25.18} & \bestcat{72.66} & 46.91 & 81.54 & 87.79 & \bestcat{43.72} & \bestcat{36.67} \\
& ELBO & Denoiser & 25.20 & 73.33 & 46.80 & \bestcat{80.33} & 84.94 & 45.75 & 37.44 \\
& ELBO & LOO-Denoiser & 25.30 & 73.82 & \bestcat{\underline{46.22}} & 82.03 & \bestcat{83.24} & 44.52 & 37.23 \\
\addlinespace
\rowcolor{blue!10}
\multicolumn{10}{l}{Max Coupling Uniform Diffusion} \\
& ELBO & Denoiser & 32.96 & 112.64 & 64.71 & 117.21 & 110.78 & 65.82 & 51.40 \\
& ELBO & LOO-Denoiser & \bestcat{32.21} & \bestcat{109.97} & \bestcat{62.44} & \bestcat{112.70} & \bestcat{106.53} & \bestcat{61.20} & \bestcat{49.41} \\
\midrule
\rowcolor{blue!10}
\multicolumn{10}{l}{Masked Diffusion} \\
& ELBO & Denoiser & \textbf{22.90} & \underline{64.57} & 48.79 & \underline{73.06} & \underline{74.99} & \underline{43.13} & \underline{35.16} \\
\bottomrule
\end{tabular}
}}
\end{table}
\paragraph{LOO denoiser vs denoiser.} We first isolate the effect of the parameterization by training UDMs with either the denoiser or the LOO denoiser parameterization under matched architectures. We consider both cross-entropy and continuous-time ELBO training for both parameterizations, using the standard UDM ELBO of \cite{schiff2024simple} and the Maximal Coupling\footnote{This model is related to the DDIM sampler for UDMs derived in \cite[Appendix A]{song2021ddim} and is used to confirm that our findings carry over to different processes. This process is discussed extensively in \Cref{sec:max-coupling}} ELBO derived in \Cref{prop:max-coupling-ctmc-loss}. The configuration ELBO with the denoiser parameterization is the continuous-time analogue of D3PM Uniform trained without the auxiliary cross-entropy term in the
hybrid objective \cite[Equation 5]{austin2021structured}. Although \cite{schiff2024simple} compare against D3PM Uniform, their comparison also changes the training objective: the D3PM baseline is trained with a discrete-time loss, while the plug-in UDM model uses a continuous-time loss, which already improves perplexity; see \cite[Figure 2]{schiff2024simple}. We therefore train both the denoiser and LOO parameterizations with the same continuous-time loss, isolating the effect of the parameterization itself.

The validation curves on LM1B and OWT, on the left plot of \Cref{fig:parameterization-comparison} show that the LOO parameterization improves optimization, with better perplexity throughout the training. The zero-shot results in \Cref{tab:ppl-eval-multidataset_ppl} show that this trend transfers beyond OWT where in the three settings the LOO improves over its denoiser counterpart on almost all the datasets considered.
We then evaluate generation with top-$p$ frontiers, sweeping $p \in [0.8,1.0]$ and plotting Gen-PPL against 1-gram entropy. The right panel of \Cref{fig:parameterization-comparison} shows that the LOO parameterization improves the frontier in the relevant low-entropy regime.
\paragraph{Cross-entropy training.} \Cref{fig:parameterization-comparison} shows that cross-entropy training with the denoiser is the weakest combination, despite its simplicity. Using the LOO parameterization largely removes this drawback: cross-entropy becomes
competitive with ELBO training while remaining simpler to implement. Our practical takeaway is therefore to use the LOO denoiser as the default UDM prediction target, especially with cross-entropy training.
\paragraph{Applying top-$p$ to the LOO denoiser.} In the comparison in \Cref{fig:parameterization-comparison}, top-$p$ is applied to the learned primitive itself: to the LOO denoiser before using \ref{eq:param_plug_in} and to the denoiser before using \ref{eq:param_marginalization}. A useful consequence of our conversion formulas is that one can also start from a denoiser-trained model, convert the denoiser to the LOO representation, apply top-$p$ there, and then sample with the bridge plug-in parameterization. This gives a non-negligible \textbf{improvement of the frontier at no additional cost} when using top-$p$ sampling and reduces the gap with the LOO parameterization, as shown in \Cref{fig:nucleus-application-frontier}.
\begin{figure}[ht!]
    \centering
    \includegraphics[width=0.95\textwidth]{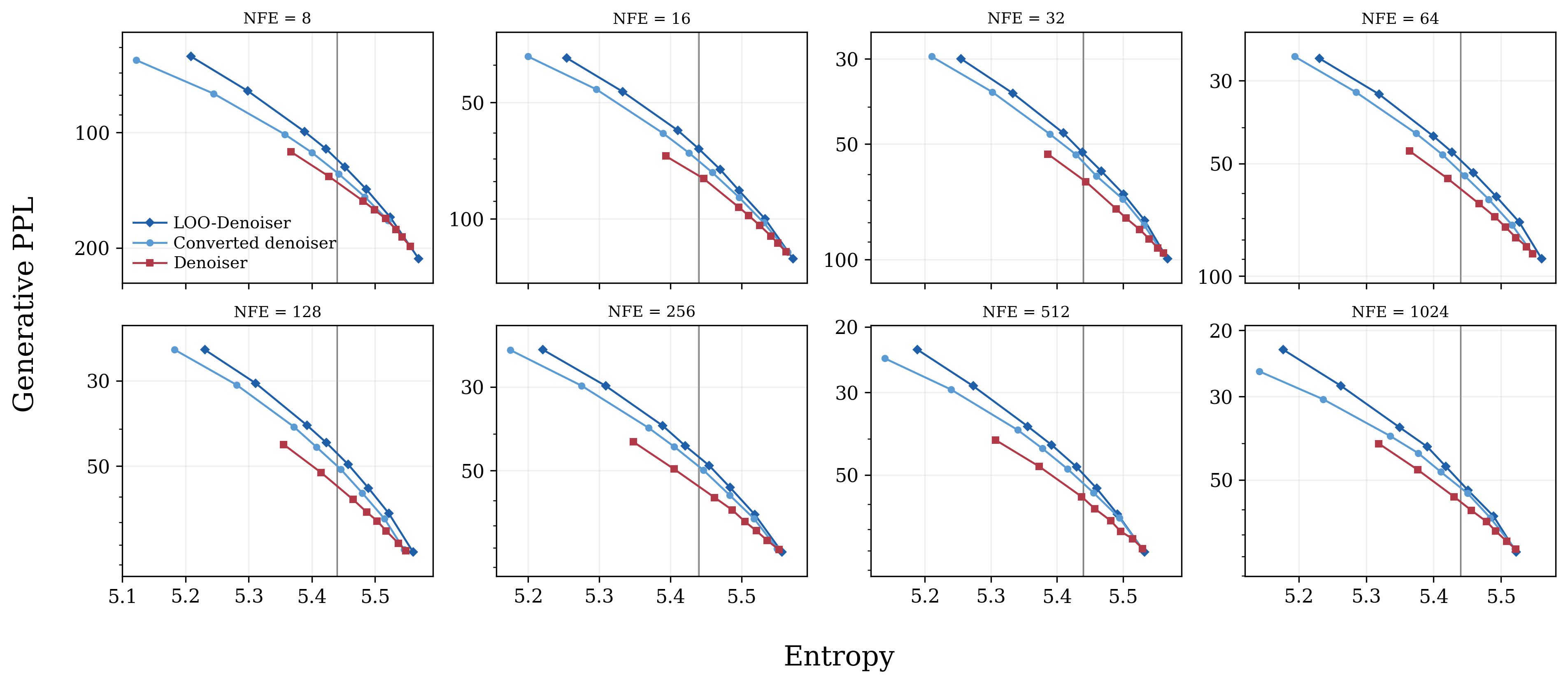}
    \caption{Top-$p$ sampling Gen-PPL frontier, obtained by sweeping ($p \in [0.8, 1.0]$). Top-$p$ sampling is applied to the denoiser, denoiser converted into LOO denoiser and LOO denoiser.} 
    \label{fig:nucleus-application-frontier}
\end{figure}
\paragraph{Predictor-corrector}
We next evaluate the predictor-corrector sampler (\Cref{alg:pc}) described at the end of \Cref{sec:loo}. As detailed in \Cref{app:predictor-corrector}, the LOO parameterization gives access to a corrector step without training an auxiliary model, in contrast to \cite{zhao2025informed}. We use the confidence-based resampling rule of \cite{zhao2025informed}, with $M \in \intset{1}{5}$ corrector steps per predictor step and $k \in \intset{1}{15}$ parallel token updates per corrector step. We show the results for a trained LOO denoiser on the left panel of \Cref{fig:predictor-corrector} and for a trained denoiser converted into a LOO denoiser on the right panel. The resulting predictor-corrector frontiers Pareto-dominate the ancestral temperature frontiers in both cases. In addition to being substantially better, a key advantage is that this correction is cheap: in our implementation, the predictor-corrector sampler has the same runtime as the regular ancestral sampler. For completeness, we also report the comparison with top-$p$ sampling in \Cref{fig:corrector-vs-nucleus}. Thus, the LOO parameterization is useful not only as a training target, but also as a training-free mechanism for improving inference.
\begin{figure}[t]
    \centering
    \begin{subfigure}[t]{0.49\textwidth}
        \centering
        \includegraphics[width=\linewidth]{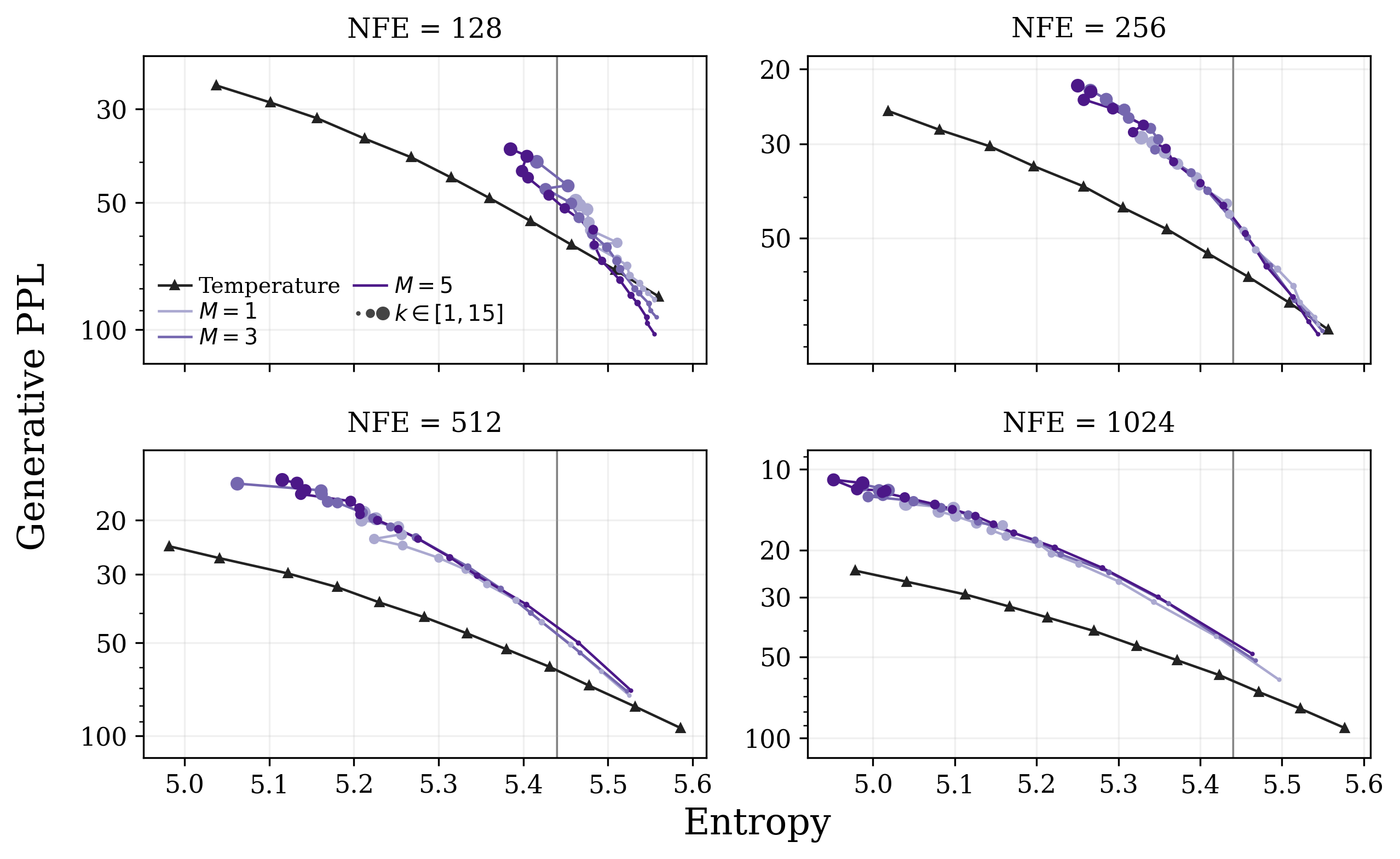}
    \end{subfigure}
    \hfill
    \begin{subfigure}[t]{0.49\textwidth}
        \centering
        \includegraphics[width=\linewidth]{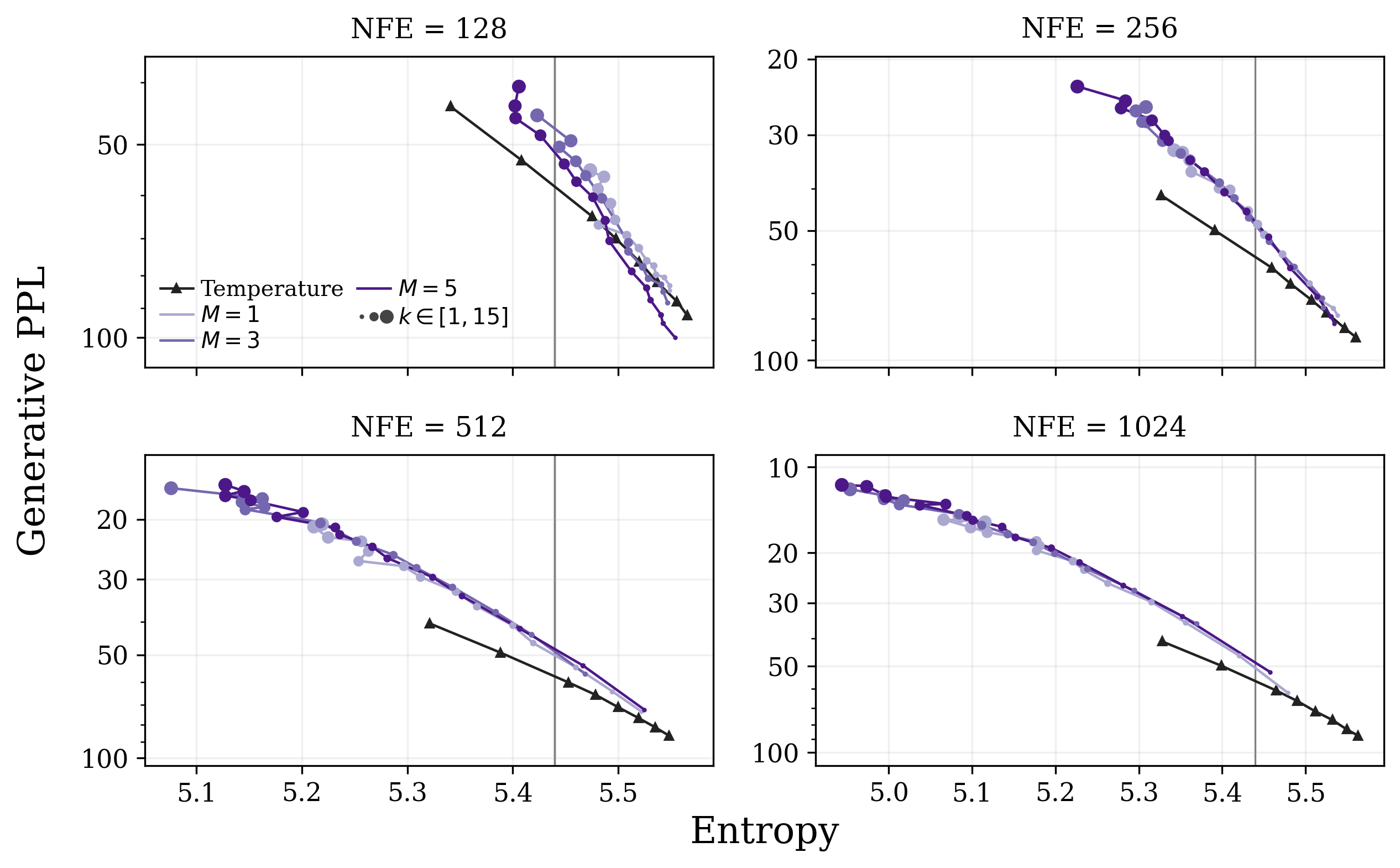}
    \end{subfigure}
    \captionsetup{font=small}
    \caption{Gen-PPL frontiers. Temperature sampling Gen-PPL frontier, obtained by sweeping the temperature in $[0.8, 1.10]$. \textbf{Left:} predictor-corrector sampling using a trained LOO denoiser against ancestral sampling using the same LOO denoiser. \textbf{Right:} predictor-corrector sampling using a trained plain denoiser converted at inference time to a LOO against ancestral sampling using the same plain denoiser. The vertical line corresponds to the dataset entropy}
    \label{fig:sampling-frontiers}
    \label{fig:predictor-corrector}
    \label{fig:loo-converted-denoiser-frontier}
\end{figure}

\subsection{Absorbing-state uniform diffusion}
  We finally evaluate the absorbing-state uniform diffusion model of \Cref{sec:audm}. AUDM uses the same 170M-parameter DiT backbone as the UDM and MDM baselines, and adds a 7M-parameter module to process the auxiliary noise variable $U$, corresponding to a 4.1\% parameter overhead. AUDM gives the best likelihood results among the uniform-diffusion variants in \Cref{tab:ppl-eval-multidataset_ppl}. It improves over UDM on the OWT validation distribution and
  on every zero-shot dataset. It also outperforms MDM on all zero-shot datasets, although MDM remains slightly better on OWT validation perplexity. At sampling time, \Cref{fig:sudoku-audm-frontiers} shows that AUDM remains competitive along the generative frontier, and is often slightly better than MDM, uniformly across NFEs. Thus, the absorbing-state construction preserves the marginals of UDMs while improving its likelihood.

The Sudoku results in \Cref{fig:sudoku-accuracy-vs-nfe} give a smaller-scale check of the same conclusions outside language modeling. The best solve rates are obtained by UDMs trained with the leave-one-out target, with cross-entropy and ELBO both reaching nearly perfect accuracy. UDM also outperforms MDM on this task. We see that AUDM improves over MDM even though they share structural properties, and interestingly, the ReAUDM construction improves over AUDM even though they use the same checkpoint and the sampling scheme does not add any additional cost or learned component.

\begin{figure}[t]
    \centering
    \begin{subfigure}[t]{0.48\textwidth}
        \centering
        \includegraphics[width=\linewidth]{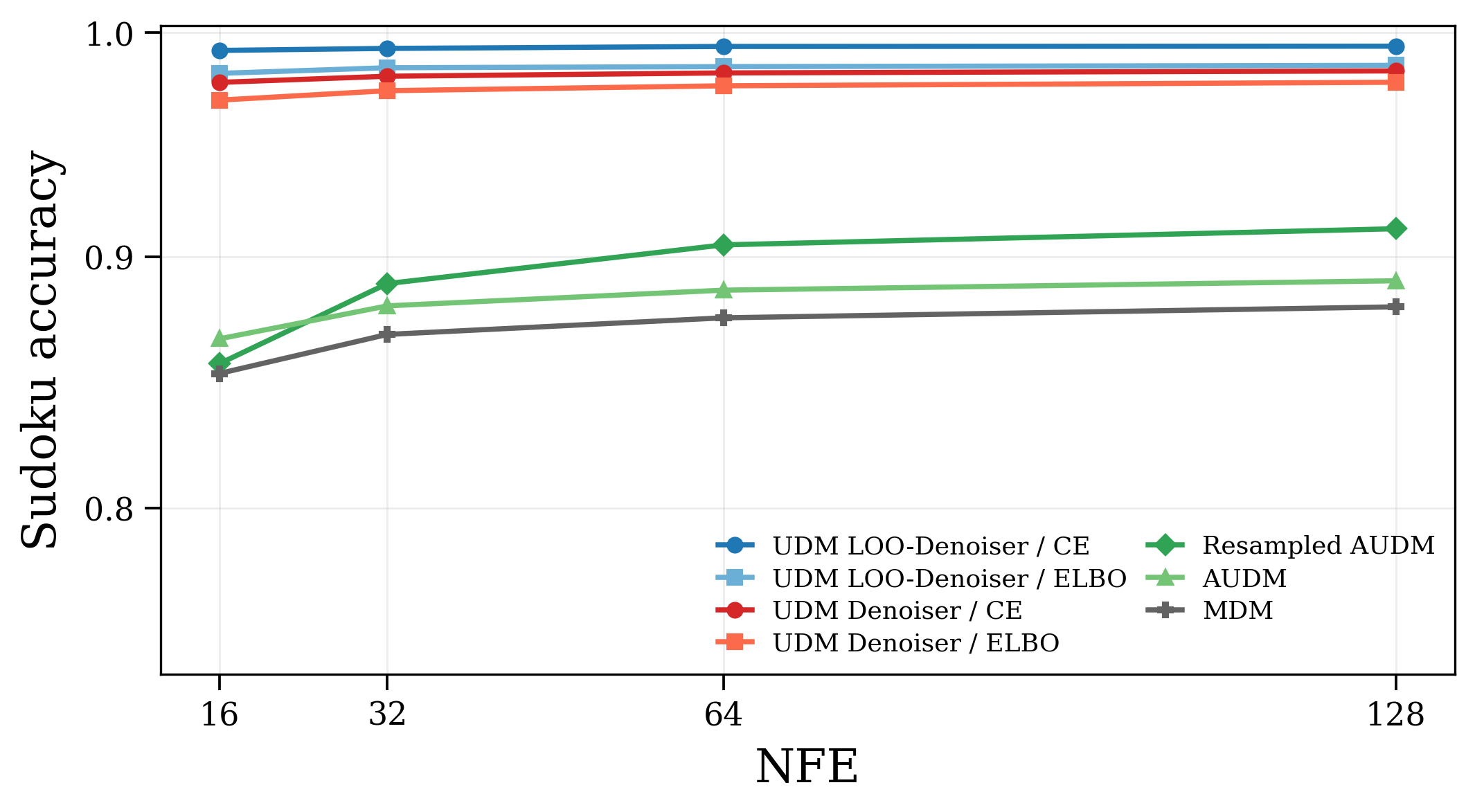}
    \end{subfigure}
    \hfill
    \begin{subfigure}[t]{0.48\textwidth}
        \centering
        \includegraphics[width=\linewidth]{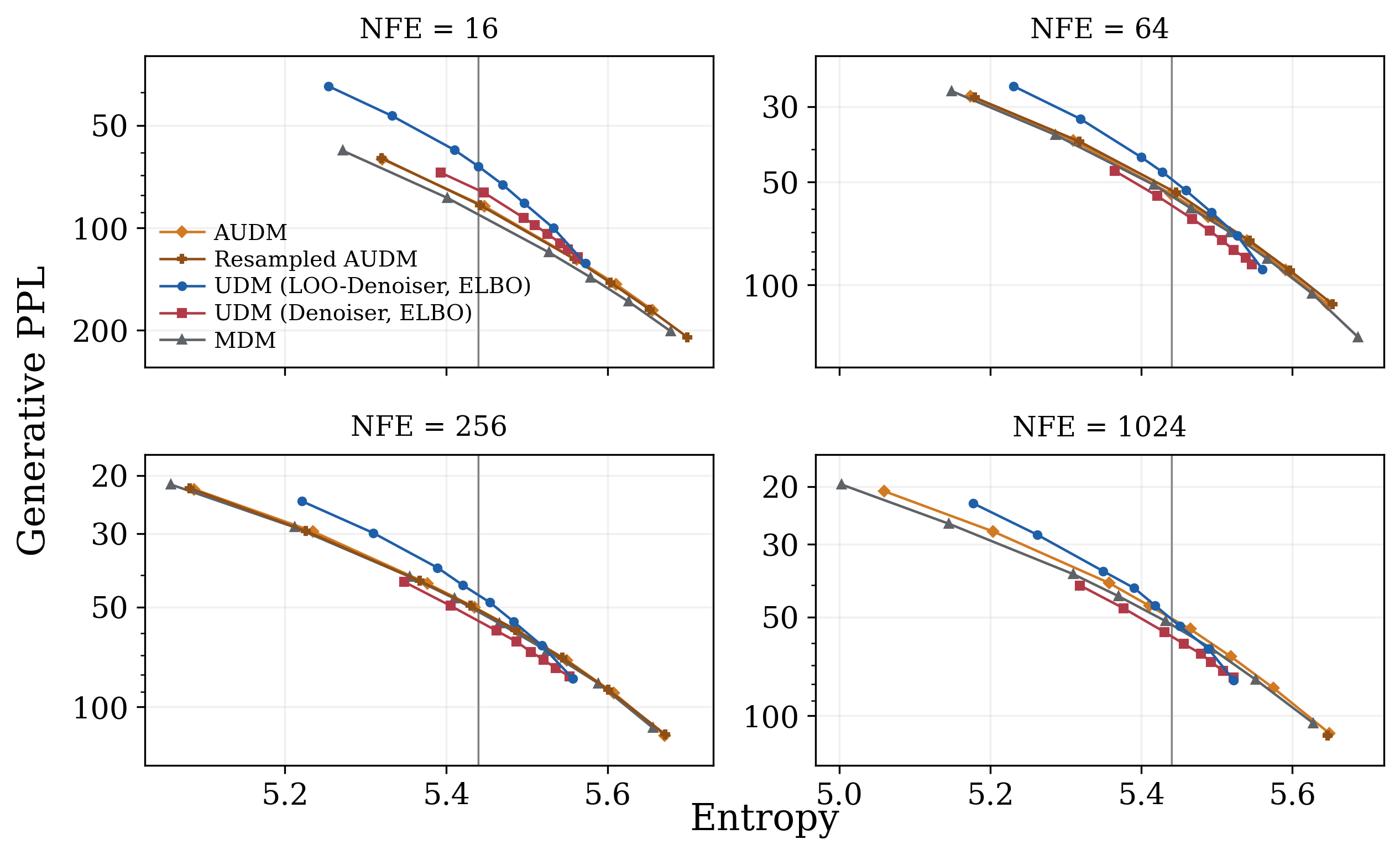}
    \end{subfigure}
    \captionsetup{font=small}
    \caption{\textbf{Left:} Sudoku solve rate as a function of NFE for the best learning rate of each method. \textbf{Right:} AUDM, resampled AUDM, UDM, and MDM Gen-PPL frontier. The vertical line corresponds to the dataset entropy.}
    \label{fig:sudoku-audm-frontiers}
    \label{fig:sudoku-accuracy-vs-nfe}
    \label{fig:genppl_vs_entropy}
\end{figure}

%% file: body/conclusion.tex
\section{Conclusion}

We revisited uniform discrete diffusion from two complementary perspectives: the parameterization of the reverse kernel and the design of the forward process. On the parameterization side, we showed that the bridge plug-in objective of UDMs is optimized not by the standard denoising posterior but by a leave-one-out posterior. We derived exact conversions between the denoiser, the leave-one-out predictor, and the score, which separate the choice of training target from the choice of sampling parameterization. In practice, this viewpoint leads to a simple cross-entropy objective for the leave-one-out predictor that improves over the standard cross-entropy parameterization, a way to apply top-$p$ or temperature in the most effective representation, and a predictor-corrector sampler that can be obtained from a trained model without any auxiliary network.

On the process-design side, we introduced an absorbing-state reformulation of uniform diffusion, together with a resampled version that recovers the full UDM reverse chain. By conditioning on the transition time, we showed that it is even possible to re-use the masked diffusion denoiser while keeping UDM joint law. These constructions show that masked-diffusion-like denoisers, carry-over structure, and remasking can be transferred to a model that remains faithful to the uniform-diffusion viewpoint. Empirically, leave-one-out parameterizations consistently improve optimization and the generative frontier of UDMs, while AUDM and ReAUDM match or surpass masked diffusion on several frontier evaluations while improving likelihood within the uniform-diffusion family. Altogether, our results indicate that the practical gap between masked and uniform diffusion is driven less by the corruption marginals themselves than by the learned representation and the inference scheme built on top of them.

\textbf{Limitations and future work.\,}\
Our evidence for the advantage of the leave-one-out target remains mainly empirical, and a theoretical explanation for its improved generative frontier remains open. In addition, although AUDM and ReAUDM clarify how masked-style structure can be transferred to uniform diffusion and substantially improve likelihood, the best generative frontiers are still obtained by leave-one-out UDMs. Improving these absorbing-state models through better parameterizations or sampling schemes is therefore an important direction for future work. Finally, when training the AUDM denoiser, we condition on one-hot absorbing tokens. A natural extension would be to replace this with a simplex-valued conditioning vector, possibly with mass on the mask, so as to interpolate within a single network between AUDM-type posteriors, the standard UDM denoiser obtained at $\mathbf{1}/K$, and more general hybrid corruption mechanisms.

\paragraph{Acknowledgments. } The authors thank Jawad Chemaou for carefully proofreading the paper and providing helpful comments.

%% file: appendix/loo_denoiser.tex
\section{Leave-one-out (LOO) denoiser}
\label{appendix:loo}
\subsection{Proof of \Cref{prop:loo}}
\label{sec:proof-crefprop:loo}
The proof of \Cref{prop:loo} relies on the \emph{plug-in} of the prediction of the neural network into the bridge. This assumes a certain extension of the bridge, initially defined for the one-hots to the simplex. To this end, we consider the following assumption: 
\begin{hypA}
\label{ass:simplex-bayes-bridge}
\emph{Bayes-compatible simplex extension.} 
For every $x_s,x_t \in \msv$, and
$\nu \in \Simplex{\vocab}$ such that $\fw{t\tbar 0}{\nu}{x_t}[\ell]>0$, the bridge extension used in \ref{eq:param_plug_in}
satisfies
\[
\fw{s\tbar 0,t}{\nu,x_t}{x_s}[\ell]
=
\frac{
\fw{t\tbar s}{x_s}{x_t}[\ell]\,
\fw{s\tbar 0}{\nu}{x_s}[\ell]
}{
\fw{t\tbar 0}{\nu}{x_t}[\ell]
}
\eqsp.
\]

\end{hypA}

\newcounter{savedpropositionloo}
\setcounter{savedpropositionloo}{\value{proposition}}
\begingroup
\setcounterref{proposition}{prop:loo}
\addtocounter{proposition}{-1}
\renewcommand{\theHproposition}{appendix.loo.\arabic{proposition}}

\begin{mdframed}[style=propFrame]
\begin{proposition}[Generalized form]
For the \ref{eq:param_plug_in} parameterization, under
\assp{\ref{ass:simplex-bayes-bridge}}, a minimizer of \eqref{eq:kl-sum} is
obtained by setting
$\denoiser{t}{}{\bx_t}[\param_\star] = \loodenoiser{t}{}{\bx_t}$ on the support of
$\pdata{t}{}{}$. In particular, for such $\bx_t$,
\[
\txts \pdata{s\tbar t}{\bx_t}{}[\ell]
=
\fw{s\tbar 0,t}{\loodenoiser{t}{}{\bx_t}^\ell,\bx_t^\ell}{}[\ell]
\eqsp.
\]
Furthermore, this minimizer is unique in the case of UDM.
\end{proposition}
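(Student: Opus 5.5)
Since \eqref{eq:kl-sum} is a sum of nonnegative KL terms, one per time index $i$ and token $\ell$, and the plug-in model factorizes over tokens, it suffices to show that for every $\bx_t$ in the support of $\pdata{t}{}{}$ and every $\ell$ the choice $\nu=\loodenoiser{t}{}{\bx_t}^\ell$ yields $\fw{s\tbar 0,t}{\nu,\bx_t^\ell}{}[\ell]=\pdata{s\tbar t}{\bx_t}{}[\ell]$. Each KL term then vanishes, and the expected NELBO attains its lower bound $\mathrm{Const}$. A single prediction $\denoiser{t}{}{\bx_t}[\param_\star]$ serves every $s<t$, so one target works for all $i$ simultaneously. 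This reduces the proposition to the displayed identity.

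To prove the identity, I would compute the true token marginal $\pdata{s\tbar t}{\bx_t}{x_s}[\ell]$ directly from the forward process. By Bayes, it is proportional to $\sum_{\bx_s^{-\ell}}\pdata{s}{}{\bx_s}\fw{t\tbar s}{\bx_s}{\bx_t}$. Write $\pdata{s}{}{\bx_s}=\sum_{\bx_0}\pdata{0}{}{\bx_0}\prod_j\fw{s\tbar 0}{\bx_0^j}{\bx_s^j}[j]$ and use the token factorization of $\fw{t\tbar s}{}{}$. Summing over $\bx_s^{-\ell}$ with Chapman--Kolmogorov gives $\prod_{j\ne\ell}\fw{t\tbar 0}{\bx_0^j}{\bx_t^j}[j]$. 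Summing then over $\bx_0^{-\ell}$ produces, up to a constant, $\pdata{0\tbar t}{\bx_t^{-\ell}}{\bx_0^\ell}[\loo,\ell]$ from \eqref{eq:def_LOO-Denoiser}. Hence
\[
\pdata{s\tbar t}{\bx_t}{x_s}[\ell]\propto \fw{t\tbar s}{x_s}{\bx_t^\ell}[\ell]\sum_{x_0}\fw{s\tbar 0}{x_0}{x_s}[\ell]\,\pdata{0\tbar t}{\bx_t^{-\ell}}{x_0}[\loo,\ell]
=\fw{t\tbar s}{x_s}{\bx_t^\ell}[\ell]\,\fw{s\tbar 0}{\nu}{x_s}[\ell],
\]
where the last step uses that $\fw{s\tbar 0}{\cdot}{x_s}$ is affine in its first argument (\eqref{eq:transition}), with $\nu=\loodenoiser{t}{}{\bx_t}^\ell$. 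Summing over $x_s$ with Chapman--Kolmogorov identifies the normalizer as $\fw{t\tbar 0}{\nu}{\bx_t^\ell}[\ell]$. By \assp{\ref{ass:simplex-bayes-bridge}}, the resulting expression is exactly $\fw{s\tbar 0,t}{\nu,\bx_t^\ell}{x_s}[\ell]$.

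The step needing care is checking that this normalizer is positive, so that the assumption applies. It is proportional to $\pdata{t}{}{\bx_t}>0$ on the support, because the same computation without conditioning on $x_s$ gives $\pdata{t}{}{\bx_t}\propto \fw{t\tbar 0}{\nu}{\bx_t^\ell}[\ell]$ times a positive marginal of $\bx_t^{-\ell}$. For MDM, the appendix verification that \eqref{eq:mdm-bridge} satisfies \assp{\ref{ass:simplex-bayes-bridge}} is taken as given.

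\emph{Uniqueness for UDM.} A minimizer must make every KL term vanish almost surely. By the above, this means $\fw{s\tbar 0,t}{\mu,\bx_t^\ell}{}[\ell]=\fw{s\tbar 0,t}{\nu,\bx_t^\ell}{}[\ell]$ with $\mu$ the candidate prediction, and in particular this must hold for some $s<t$. For UDM, $\fw{t\tbar 0}{\mu}{x_t}>0$, so by \assp{\ref{ass:simplex-bayes-bridge}} this forces $\fw{s\tbar 0}{\mu}{\cdot}/\fw{t\tbar 0}{\mu}{\bx_t^\ell}=\fw{s\tbar 0}{\nu}{\cdot}/\fw{t\tbar 0}{\nu}{\bx_t^\ell}$, i.e. $\alpha_s\mu+(1-\alpha_s)\one/K=c(\alpha_s\nu+(1-\alpha_s)\one/K)$ for a scalar $c$. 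Summing coordinates gives $c=1$, and since $\alpha_s>0$ we get $\mu=\nu$. I expect the main obstacle to be this uniqueness step: the bridge is nonlinear in $x_0$, so injectivity in $\nu$ must be extracted carefully from the Bayes form and the normalization constraint rather than from linearity.
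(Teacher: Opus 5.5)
Your proposal is correct. For existence you use the same ingredients as the paper: affinity of $\nu\mapsto q^\ell_{s|0}(\cdot\,|\,\nu)$, Chapman--Kolmogorov, positivity of $q^\ell_{t|0}(\bx_t^\ell\,|\,\nu)=p_t(\bx_t)/p_t(\bx_t^{-\ell})$, and a final appeal to \assp{\ref{ass:simplex-bayes-bridge}}. Only the bookkeeping differs. The paper starts from the bridge mixture $\sum_{x_0}q^\ell_{s|0,t}\,p^\ell_{0|t}$, uses the product identity $q_{t|0}\,q_{s|0,t}=q_{t|s}\,q_{s|0}$ (checked separately off-support, for the MDM completion), and then substitutes the LOO-to-denoiser conversion. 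You start from the Bayes time reversal $p_s\,q_{t|s}/p_t$, so the one-hot bridge never appears and the off-support discussion is unnecessary.

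Uniqueness is where you genuinely depart. The paper writes out the closed-form UDM bridge and recovers $\nu$ coordinatewise, using that $1-\langle e_k, q^\ell_{s|0,t}(\cdot\,|\,\nu,e_k)\rangle$ is strictly decreasing in $\nu_k$. You apply \assp{\ref{ass:simplex-bayes-bridge}} to both candidates, cancel the transition factor, and use normalization to reduce injectivity to that of the affine map $\nu\mapsto\alpha_s\nu+(1-\alpha_s)\mathbf 1/K$. Your route is shorter and equally constructive: divide the bridge by $q^\ell_{t|s}(\bx_t^\ell\,|\,\cdot)$, renormalize, and invert the affine map. It also extends verbatim to any reference distribution with full support, and it shows why uniqueness fails for MDM, since the cancelled factor vanishes for $x_s\neq\bx_t^\ell$ on unmasked positions.

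Do state that cancellation explicitly. It requires $q^\ell_{t|s}(\bx_t^\ell\,|\,x_s)\ge(1-\alpha_{t|s})/K>0$ for every $x_s$, i.e.\ $\alpha_t<\alpha_s$. This is the same strict monotonicity the paper uses ($\alpha_s-\alpha_t>0$, $D_{s,t}>0$), and it also yields your $\alpha_s>0$.
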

\end{mdframed}
\endgroup

\setcounter{proposition}{\value{savedpropositionloo}}
\begin{proof}
Fix $0<s<t$, $\, \ell \in \intset{1}{\len}$ and $\bx_t \in \msx$ such that
$\pdata{t}{}{\bx_t}>0$. We write
$\nu_\star^\ell \eqdef \loodenoiser{t}{}{\bx_t}^\ell$. We first prove that
\begin{equation}
\label{eq:loo-proof-target}
\pdata{s\tbar t}{\bx_t}{}[\ell]
=
\fw{s\tbar 0,t}{\nu_\star^\ell,\bx_t^\ell}{}[\ell]
\eqsp.
\end{equation}
Since \eqref{eq:kl-sum} is a sum of token-wise KL divergences, this identity implies
that $\denoiser{t}{}{\bx_t}[\param_\star] = \loodenoiser{t}{}{\bx_t}$ is one minimizer. For every $\bx_s^\ell \in \msv$, the definition of the marginal reverse transition and the factorization of the bridge across the dimensions yields
\begin{align*}
\pdata{s\tbar t}{\bx_t}{\bx_s^\ell}[\ell]
&=
\txts \sum_{\bx_s^{-\ell}} \pdata{s\tbar t}{\bx_t}{\bx_s} \\
&=
\txts \sum_{\bx_s^{-\ell},\bx_0}
\fw{s\tbar 0,t}{\bx_0,\bx_t}{\bx_s}\,
\pdata{0\tbar t}{\bx_t}{\bx_0} \\
&=
\txts \sum_{\bx_0^\ell}
\fw{s\tbar 0,t}{\bx_0^\ell,\bx_t^\ell}{\bx_s^\ell}[\ell]\,
\pdata{0\tbar t}{\bx_t}{\bx_0^\ell}[\ell].
\end{align*}
Moreover, for every $(\bx_0^\ell,\bx_t^\ell)$, the product identity
\[
\fw{t\tbar 0}{\bx_0^\ell}{\bx_t^\ell}[\ell] \fw{s\tbar 0,t}{\bx_0^\ell,\bx_t^\ell}{\bx_s^\ell}[\ell]
=
\fw{t\tbar s}{\bx_s^\ell}{\bx_t^\ell}[\ell]\,
\fw{s\tbar 0}{\bx_0^\ell}{\bx_s^\ell}[\ell]
\eqsp.
\]
holds. This is the Bayes' formula if $\fw{t\tbar 0}{\bx_0^\ell}{\bx_t^\ell}[\ell]>0$. Otherwise the left hand side vanishes by the Chapman--Kolmogorov identity and we still get equality. We also write the posterior marginal at position
$\ell$ as
\[
\pdata{0\tbar t}{\bx_t}{\bx_0^\ell}[\ell]
=
\frac{
\pdata{0\tbar t}{\bx_t^{-\ell}}{\bx_0^\ell}[\loo,\ell]\,
\fw{t\tbar 0}{\bx_0^\ell}{\bx_t^\ell}[\ell]
}{
\sum_{\tilde{\bx}_0^\ell \in \msv}
\pdata{0\tbar t}{\bx_t^{-\ell}}{\tilde{\bx}_0^\ell}[\loo,\ell]\,
\fw{t\tbar 0}{\tilde{\bx}_0^\ell}{\bx_t^\ell}[\ell]
}
\eqsp.
\]
Since $x_0 \mapsto \fw{t\tbar 0}{x_0}{\bx^\ell _t}[\ell]$  is linear in its first argument, the denominator is exactly
$\fw{t\tbar 0}{\nu_\star^\ell}{\bx_t^\ell}[\ell]$. It is strictly positive since it is
the conditional probability of $\bx_t^\ell$ given $\bx_t^{-\ell}$ under $\pdata{t}{}{}$ by \Cref{prop:gibbs-conditional},
and the fixed state $\bx_t$ is in the support of $\pdata{t}{}{}$. Substituting the two
previous identities yields
\[
\pdata{s\tbar t}{\bx_t}{\bx_s^\ell}[\ell]
=
\frac{
\fw{t\tbar s}{\bx_s^\ell}{\bx_t^\ell}[\ell]
\sum_{\bx_0^\ell \in \msv}
\fw{s\tbar 0}{\bx_0^\ell}{\bx_s^\ell}[\ell]\,
\pdata{0\tbar t}{\bx_t^{-\ell}}{\bx_0^\ell}[\loo,\ell]
}{
\fw{t\tbar 0}{\nu_\star^\ell}{\bx_t^\ell}[\ell]
}
\eqsp.
\]
Using again the linearity of the forward transition,
\[
\sum_{\bx_0^\ell \in \msv}
\fw{s\tbar 0}{\bx_0^\ell}{\bx_s^\ell}[\ell]\,
\pdata{0\tbar t}{\bx_t^{-\ell}}{\bx_0^\ell}[\loo,\ell]
=
\fw{s\tbar 0}{\nu_\star^\ell}{\bx_s^\ell}[\ell]
\]
and therefore
\[
\pdata{s\tbar t}{\bx_t}{\bx_s^\ell}[\ell]
=
\frac{
\fw{t\tbar s}{\bx_s^\ell}{\bx_t^\ell}[\ell]\,
\fw{s\tbar 0}{\nu_\star^\ell}{\bx_s^\ell}[\ell]
}{
\fw{t\tbar 0}{\nu_\star^\ell}{\bx_t^\ell}[\ell]
}
\eqsp.
\]

By \assp{\ref{ass:simplex-bayes-bridge}}, applied with $\nu=\nu_\star^\ell$ and
$x_t=\bx_t^\ell$, we have
\[
\pdata{s\tbar t}{\bx_t}{\bx_s^\ell}[\ell]
=
\frac{
\fw{t\tbar s}{\bx_s^\ell}{\bx_t^\ell}[\ell]\,
\fw{s\tbar 0}{\nu_\star^\ell}{\bx_s^\ell}[\ell]
}{
\fw{t\tbar 0}{\nu_\star^\ell}{\bx_t^\ell}[\ell]
}
=
\fw{s\tbar 0,t}{\nu_\star^\ell,\bx_t^\ell}{\bx_s^\ell}[\ell],
\]
which is exactly \eqref{eq:loo-proof-target}. 

We now prove uniqueness in the UDM case. Fix $\ell$ and $\bx_t$. The corresponding
token-wise term in \eqref{eq:kl-sum} is minimized if and only if the model marginal
transition coincides with $\pdata{s\tbar t}{\bx_t}{}[\ell]$. Under the plug-in
parameterization, this marginal is of the form
$\fw{s\tbar 0,t}{\nu,\bx_t^\ell}{}[\ell]$ with
$\nu = \denoiser{t}{}{\bx_t}[\param]^\ell \in \Simplex{K}$. It is therefore enough
to prove that, for UDM, the map
\[
\nu \longmapsto \fw{s\tbar 0,t}{\nu,\bx_t^\ell}{}[\ell]
\]
is injective.

Write $\bx_t^\ell = \onehot{k}$. By \eqref{eq:usdm-bridge}, for every
$\nu \in \Simplex{K}$,
\[
\fw{s\tbar 0,t}{\nu,\onehot{k}}{}[\ell]
=
\categorical\!\left(
\cdot;
\frac{
\vocab \alpha_t \nu_k \onehot{k}
+ (\alpha_{t\tbar s}-\alpha_t)\onehot{k}
+ (\alpha_s-\alpha_t)\nu
+ D_{s,t}\one/\vocab
}{
\vocab\alpha_t \nu_k + 1-\alpha_t
}
\right).
\]
Hence, for every $j \neq k$,
\begin{equation}
\label{eq:bridge-injective}
\dotp{\onehot{j}}{\fw{s\tbar 0,t}{\nu,\onehot{k}}{}[\ell]}
=
\frac{
(\alpha_s-\alpha_t)\nu_j + D_{s,t}/\vocab
}{
\vocab\alpha_t \nu_k + 1-\alpha_t
}
\eqsp.
\end{equation}
Summing \eqref{eq:bridge-injective} over $j \neq k$ gives
\[
1-\dotp{\onehot{k}}{\fw{s\tbar 0,t}{\nu,\onehot{k}}{}[\ell]}
=
\frac{
(\alpha_s-\alpha_t)(1-\nu_k) + (\vocab-1)D_{s,t}/\vocab
}{
\vocab\alpha_t \nu_k + 1-\alpha_t
}
\eqsp.
\]
Since $0<s<t$, we have
$\alpha_s-\alpha_t > 0$ and $D_{s,t} > 0$.
The numerator is therefore strictly decreasing in $\nu_k$, whereas the denominator is
strictly increasing in $\nu_k$. The ratio is thus strictly decreasing and determines
$\nu_k$ uniquely from $\fw{s\tbar 0,t}{\nu,\onehot{k}}{}[\ell]$. Once $\nu_k$ is known,
\eqref{eq:bridge-injective} yields, for every $j \neq k$,
\[
\nu_j
=
\frac{
(\vocab\alpha_t\nu_k + 1-\alpha_t)
\dotp{\onehot{j}}{\fw{s\tbar 0,t}{\nu,\onehot{k}}{}[\ell]}
- D_{s,t}/\vocab
}{
\alpha_s-\alpha_t
}
\eqsp.
\]
Hence every coordinate of $\nu$ is uniquely determined by
$\fw{s\tbar 0,t}{\nu,\onehot{k}}{}[\ell]$, so the map is injective. Since
$\nu_\star^\ell = \loodenoiser{t}{}{\bx_t}^\ell$ attains the minimum by the first part
of the proof, it is the unique minimizer in the UDM case.
\end{proof}

Assumption \assp{\ref{ass:simplex-bayes-bridge}} is a condition on the simplex extension used by the
plug-in parameterization. When $\fw{t\tbar 0}{x_0}{x_t}[\ell]>0$, the bridge is defined via the Bayes' formula as
\[
\fw{s\tbar 0,t}{x_0,x_t}{x_s}[\ell]
=
\frac{
\fw{t\tbar s}{x_s}{x_t}[\ell]\,
\fw{s\tbar 0}{x_0}{x_s}[\ell]
}{
\fw{t\tbar 0}{x_0}{x_t}[\ell]
}
\eqsp.
\]
When $\fw{t\tbar 0}{x_0}{x_t}[\ell]=0$, the value of the one-hot bridge must be completed
by convention. This first completion does not determine the second extension to a
simplex-valued first argument, and \assp{\ref{ass:simplex-bayes-bridge}} requires this second
extension to keep the Bayes-ratio form on the simplex.

For the MDM bridge used in \eqref{eq:mdm-bridge}, this compatibility holds on the simplex, where $\dotp{\mask}{\nu}=0$. If $\bx_t^\ell=\onehot{k} \neq \mask$
and $\dotp{\onehot{k}}{\nu}>0$, then
\[
\frac{
\fw{t\tbar s}{\bx_s^\ell}{\onehot{k}}[\ell]\,
\fw{s\tbar 0}{\nu}{\bx_s^\ell}[\ell]
}{
\fw{t\tbar 0}{\nu}{\onehot{k}}[\ell]
}
=
\frac{
\alpha_{t\tbar s}\alpha_s\dotp{\onehot{k}}{\nu}\,
\indic\{\bx_s^\ell=\onehot{k}\}
}{
\alpha_t\dotp{\onehot{k}}{\nu}
}
=
\indic\{\bx_s^\ell=\onehot{k}\},
\]
so the Bayes-ratio expression gives $\categorical(\cdot;\bx_t^\ell)$, which is the first
branch of \eqref{eq:mdm-bridge}. If $\bx_t^\ell=\mask$, then
\[
\frac{
\fw{t\tbar s}{\bx_s^\ell}{\mask}[\ell]\,
\fw{s\tbar 0}{\nu}{\bx_s^\ell}[\ell]
}{
\fw{t\tbar 0}{\nu}{\mask}[\ell]
}
=
\frac{\alpha_s-\alpha_t}{1-\alpha_t}
\dotp{\bx_s^\ell}{\nu}\indic\{\bx_s^\ell\neq\mask\}
+ \frac{1-\alpha_s}{1-\alpha_t}\indic\{\bx_s^\ell=\mask\},
\]
or, equivalently,
\[
\categorical\left(
\cdot;
\frac{\alpha_s-\alpha_t}{1-\alpha_t}\nu
+ \frac{1-\alpha_s}{1-\alpha_t}\mask
\right),
\]
which is the second branch of \eqref{eq:mdm-bridge} with $x_0$ replaced by $\nu$. Therefore \assp{\ref{ass:simplex-bayes-bridge}} is verified and \Cref{prop:loo} holds.

However, the assumption is not automatic for an arbitrary simplex extension. For example, suppose
that, on unsupported visible pairs, one completes the bridge by the rule
$\categorical(\cdot;\frac{\alpha_s-\alpha_t}{1-\alpha_t} x_0+ \frac{1-\alpha_s}{1-\alpha_t} x_t)$ and then extends this rule to $\nu$ as
$\categorical(\cdot;\frac{\alpha_s-\alpha_t}{1-\alpha_t} \nu+\frac{1-\alpha_s}{1-\alpha_t} x_t)$, without the MDM case split. If
$x_t=\onehot{k}$, $\dotp{\onehot{k}}{\nu}>0$, and $j\neq k$, then the
Bayes-ratio extension satisfies
\[
\frac{
\fw{t\tbar s}{\onehot{j}}{\onehot{k}}[\ell]\,
\fw{s\tbar 0}{\nu}{\onehot{j}}[\ell]
}{
\fw{t\tbar 0}{\nu}{\onehot{k}}[\ell]
}
=0,
\]
whereas the naive simplex extension gives
\[
\dotp{\onehot{j}}{\frac{\alpha_s-\alpha_t}{1-\alpha_t}\nu+\frac{1-\alpha_s}{1-\alpha_t}\onehot{k}}
=
\frac{\alpha_s-\alpha_t}{1-\alpha_t}\dotp{\onehot{j}}{\nu}.
\]
Thus \assp{\ref{ass:simplex-bayes-bridge}} fails in general. If the leave-one-out posterior
has positive mass on such a token $\onehot{j}$, plugging it into this naive extension
does not reproduce the exact marginal reverse transition, and the conclusion of
\Cref{prop:loo} can be false. Therefore, depending on the extension that one uses for the bridge, \Cref{prop:loo} may or may not hold. In the paper however, both the bridge considered satisfy \assp{\ref{ass:simplex-bayes-bridge}}. And the assumption stated in the main paper: $\fw{t \tbar 0}{x_0}{x_t}[\ell] > 0$ for all $x_0, x_t \in \msv$, is a sufficient condition for the canonical bridge extension to verify \assp{\ref{ass:simplex-bayes-bridge}}.

Even when \assp{\ref{ass:simplex-bayes-bridge}} holds for the bridge in \eqref{eq:mdm-bridge},
uniqueness of the minimizer is not automatic because the map
$\nu \mapsto \fw{s\tbar 0,t}{\nu,\bx_t^\ell}{}[\ell]$ may not be injective. For instance, the map is not injective for MDMs. Indeed, if
$\bx_t^\ell \neq \mask$, then by \eqref{eq:mdm-bridge},
$\fw{s\tbar 0,t}{\nu,\bx_t^\ell}{}[\ell] = \categorical(\cdot;\bx_t^\ell)$ for every
$\nu$ in the simplex. Therefore the loss does not identify a unique value of
$\nu$ on visible positions. In particular, the denoiser is also a minimizer.

\subsection{Conversion Formula}
\label{app:loo-to-denoiser}
In the next Proposition we provide the conversion formulas for arbitrary $\bpi$. Note that we are only interested in recovering $\pdata{0\tbar t}{\bx_t}{\bx_0}[\ell]$ and $\pdata{0\tbar t}{\bx_t^{- \ell}}{\bx_0^\ell}[\loo, \ell]$ for  $x_t$ satisfying $\pdata{t}{}{\bx_t}>0$ and $\pdata{t}{}{\bx_t^{-\ell}}>0$. Outside this support, the conditional densities may be defined arbitrarily.
\begin{mdframed}[style=propFrame]
\begin{proposition}
  \label{prop:loo_to_denoiser}
  It holds for any $\bx_t$ such that $\pdata{t}{}{\bx_t}>0$,
  \begin{equation}
    \label{eq:loo_to_denoiser}
    \pdata{0\tbar t}{\bx_t}{\bx_0^\ell}[\ell] =\pdata{t}{}{\bx_t^{-\ell}}
\fw{t\tbar 0}{\bx_0^\ell}{\bx_t^\ell}[\ell]\,
\pdata{0\tbar t}{\bx_t^{- \ell}}{\bx_0^\ell}[\loo, \ell]/  \pdata{t}{}{\bx_t} \eqsp.
  \end{equation}
Conversely, suppose that $\fw{t\tbar 0}{\bx_0^\ell}{\bx_t^\ell}[\ell] > 0$ for any $\bx_0$ and $\bx_t$, it holds for any $\bx_t^{-\ell}$, $\pdata{t}{}{\bx_t^{-\ell}}>0$,
  \begin{equation}
    \pdata{0\tbar t}{\bx_t^{- \ell}}{\bx_0^\ell}[\loo, \ell] = \frac{\pdata{t}{}{\bx_t}\pdata{0\tbar t}{\bx_t}{\bx_0^\ell}[\ell] }{\pdata{t}{}{\bx_t^{-\ell}}
\fw{t\tbar 0}{\bx_0^\ell}{\bx_t^\ell}[\ell]} \eqsp.
  \end{equation}
\end{proposition}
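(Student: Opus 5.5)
The plan is to write both the denoising posterior marginal and the leave-one-out posterior as ratios of joint probabilities of $(X_0^\ell, X_t)$. Both formulas then follow by comparing these ratios, using the conditional independence built into the forward process. Throughout, $\pdata{t}{}{\bx_t^{-\ell}} \eqdef \sum_{\tilde\bx_t^\ell}\pdata{t}{}{\tilde\bx_t}$ denotes the marginal of $X_t^{-\ell}$.

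\textbf{Key factorization.} Since the forward transition factorizes across tokens,
\[
\pP(X_0^\ell=\bx_0^\ell, X_t=\bx_t)=\sum_{\bx_0^{-\ell}}\pdata{0}{}{\bx_0}\prod_{j}\fw{t\tbar 0}{\bx_0^j}{\bx_t^j}[j]
=\fw{t\tbar 0}{\bx_0^\ell}{\bx_t^\ell}[\ell]\sum_{\bx_0^{-\ell}}\pdata{0}{}{\bx_0}\prod_{j\neq\ell}\fw{t\tbar 0}{\bx_0^j}{\bx_t^j}[j].
\]
The remaining sum is, by the definition \eqref{eq:def_LOO-Denoiser}, the unnormalized leave-one-out posterior. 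Its normalizing constant over $\bx_0^\ell$ equals $\pdata{t}{}{\bx_t^{-\ell}}$: summing over $\bx_0^\ell$ integrates out $X_0$ against the forward kernels at positions $j\neq\ell$. Equivalently, $X_t^\ell$ is conditionally independent of $X_t^{-\ell}$ given $X_0^\ell$. Hence
\[
\pP(X_0^\ell=\bx_0^\ell, X_t=\bx_t)=\fw{t\tbar 0}{\bx_0^\ell}{\bx_t^\ell}[\ell]\,\pdata{0\tbar t}{\bx_t^{-\ell}}{\bx_0^\ell}[\loo,\ell]\,\pdata{t}{}{\bx_t^{-\ell}}.
\]
When $\pdata{t}{}{\bx_t^{-\ell}}=0$ both sides vanish, so the arbitrary definition of the leave-one-out posterior there is harmless.

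\textbf{Forward direction.} For $\pdata{t}{}{\bx_t}>0$, dividing the displayed joint by $\pdata{t}{}{\bx_t}$ gives $\pdata{0\tbar t}{\bx_t}{\bx_0^\ell}[\ell]$ on the left and exactly \eqref{eq:loo_to_denoiser} on the right. Note that $\pdata{t}{}{\bx_t}>0$ forces $\pdata{t}{}{\bx_t^{-\ell}}>0$, so the leave-one-out posterior is well defined here.

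\textbf{Converse.} Assume the forward kernel is strictly positive and fix $\bx_t^{-\ell}$ with $\pdata{t}{}{\bx_t^{-\ell}}>0$. For any choice of $\bx_t^\ell$ we have $\pdata{t}{}{\bx_t}=\sum_{\bx_0^\ell}\fw{t\tbar 0}{\bx_0^\ell}{\bx_t^\ell}[\ell]\,\pdata{0\tbar t}{\bx_t^{-\ell}}{\bx_0^\ell}[\loo,\ell]\,\pdata{t}{}{\bx_t^{-\ell}}>0$, by positivity of the kernel. So the denoiser is well defined for every $\bx_t^\ell$, and we may divide the joint identity by $\pdata{t}{}{\bx_t^{-\ell}}\fw{t\tbar 0}{\bx_0^\ell}{\bx_t^\ell}[\ell]>0$ to obtain the second formula.

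The only step needing care is the support bookkeeping: the converse must hold for \emph{every} $\bx_t^\ell$, including ones where $\pdata{t}{}{\bx_t}$ could a priori vanish. The strict positivity of the kernel resolves this. The rest is a direct computation, and the identification of the normalizer with $\pdata{t}{}{\bx_t^{-\ell}}$ is a routine marginalization.
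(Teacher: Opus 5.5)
Your proof is correct and follows essentially the same route as the paper: factor the position-$\ell$ forward kernel out of the joint law of $(X_0^\ell, X_t)$, recognize the remaining sum as $\pdata{t}{}{\bx_t^{-\ell}}$ times the leave-one-out posterior, and divide by $\pdata{t}{}{\bx_t}$ (resp.\ by $\pdata{t}{}{\bx_t^{-\ell}}$ times the kernel). The differences are cosmetic. You obtain the converse directly in the stated unnormalized form, whereas the paper divides and then renormalizes. You also make explicit the support check that $\pdata{t}{}{\bx_t}>0$ for every $\bx_t^\ell$ under strict positivity of the kernel, which the paper leaves implicit.
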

\end{mdframed}

\begin{proof}

By definition, for every $\ell \in \intset{1}{L}$,
every $\bx_t \in \msx$, and every $\bx_0^\ell \in \msv$, we have
\[
\pdata{0\tbar t}{\bx_t}{\bx_0^\ell}[\ell]
=
\frac{
\sum_{\bx_0^{-\ell}} \pdata{0}{}{\bx_0}
\fw{t\tbar 0}{\bx_0^\ell}{\bx_t^\ell}[\ell]
\prod_{j \neq \ell} \fw{t\tbar 0}{\bx_0^j}{\bx_t^j}[j]
}{
\pdata{t}{}{\bx_t}
}.
\]
Since $\fw{t\tbar 0}{\bx_0^\ell}{\bx_t^\ell}[\ell]$ does not depend on $\bx_0^{-\ell}$,
this can be rewritten as
\[
\pdata{0\tbar t}{\bx_t}{\bx_0^\ell}[\ell]
=
\frac{
\pdata{t}{}{\bx_t^{-\ell}}
\fw{t\tbar 0}{\bx_0^\ell}{\bx_t^\ell}[\ell]
\pdata{0\tbar t}{\bx_t^{-\ell}}{\bx_0^\ell}[\loo,\ell]
}{
\pdata{t}{}{\bx_t}
}
\propto
\fw{t\tbar 0}{\bx_0^\ell}{\bx_t^\ell}[\ell]\,
\pdata{0\tbar t}{\bx_t^{-\ell}}{\bx_0^\ell}[\loo,\ell].
\]
This is the conversion identity stated in \Cref{eq:udm-loo-to-denoiser}. In other
words, the denoiser is obtained from the leave-one-out posterior by reinserting the
local likelihood term
$\fw{t\tbar 0}{\bx_0^\ell}{\bx_t^\ell}[\ell]$ associated with the observation at
position $\ell$.

Conversely, assume that for the considered $(t,\bx_t,\ell)$, we have
$\fw{t\tbar 0}{\bx_0^\ell}{\bx_t^\ell}[\ell] > 0$ for every $\bx_0^\ell \in \msv$.
Dividing the previous identity by $\fw{t\tbar 0}{\bx_0^\ell}{\bx_t^\ell}[\ell]$ and
normalizing over $\bx_0^\ell$ yields
\begin{equation}
\label{eq:loo-from-denoiser-general}
\pdata{0\tbar t}{\bx_t^{-\ell}}{\bx_0^\ell}[\loo,\ell]
=
\frac{
\pdata{0\tbar t}{\bx_t}{\bx_0^\ell}[\ell] \big/ \fw{t\tbar 0}{\bx_0^\ell}{\bx_t^\ell}[\ell]
}{
\sum_{\tilde{\bx}_0^\ell \in \msv}
\pdata{0\tbar t}{\bx_t}{\tilde{\bx}_0^\ell}[\ell] \big/ \fw{t\tbar 0}{\tilde{\bx}_0^\ell}{\bx_t^\ell}[\ell]
}
\eqsp.
\end{equation}

\end{proof}

Therefore, the conversion from the denoiser to the leave-one-out posterior is
available exactly when the forward has full support. In UDM this condition is
satisfied for every $t>0$, so the two representations can always be converted into
one another. In MDM, by contrast, the condition fails for unmasked positions. If
$\bx_t^\ell = \mask$, the likelihood is constant in $\bx_0^\ell$, so the two
representations coincide. If $\bx_t^\ell \neq \mask$, the denoiser collapses to the
Dirac mass at $\bx_t^\ell$, and the leave-one-out posterior cannot be recovered
from it.

For UDM, these same relations can be rewritten in matrix form. Since
\[
\fw{t\tbar 0}{\bx_0^\ell}{\bx_t^\ell}[\ell]
=
\alpha_t \dotp{\bx_t^\ell}{\bx_0^\ell} + \frac{1-\alpha_t}{\vocab},
\]
the first identity above yields
\begin{equation}
\label{eq:loo-to-denoiser-uniform}
\pdata{0\tbar t}{\bx_t}{}[\ell]
= \categorical\left(
\frac{
(1-\alpha_t)\loodenoiser{t}{}{\bx_t}^\ell
+
\vocab \alpha_t \dotp{\bx_t^\ell}{\loodenoiser{t}{}{\bx_t}^\ell}\bx_t^\ell
}{
1-\alpha_t+\vocab \alpha_t \dotp{\bx_t^\ell}{\loodenoiser{t}{}{\bx_t}^\ell}
}\right)
\eqsp.
\end{equation}
Conversely, the inverse relation can be written as
\begin{equation}
  \label{eq:denoiser-to-loo}
\loodenoiser{t}{}{\bx_t}^\ell
=
\frac{
\left(1+(\vocab-1)\alpha_t\right)\matrixdenoiser{t}{}{\bx_t}^\ell
-
\vocab \alpha_t \dotp{\bx_t^\ell}{\matrixdenoiser{t}{}{\bx_t}^\ell}\bx_t^\ell
}{
1+(\vocab-1)\alpha_t-\vocab \alpha_t \dotp{\bx_t^\ell}{\matrixdenoiser{t}{}{\bx_t}^\ell}
}
\eqsp.
\end{equation}
Where $\matrixdenoiser{t}{}{\bx_t}^\ell = \pdata{0\tbar t}{\bx_t}{}[\ell]$.

For MDM, the conversion is explicit only on the support of the forward process. If
$\bx_t^\ell = \mask$, then $\fw{t\tbar 0}{\bx_0^\ell}{\mask}[\ell] = 1-\alpha_t$ for
every $\bx_0^\ell \in \msv$, so
\[
\pdata{0\tbar t}{\bx_t}{}[\ell]
=
\categorical(\loodenoiser{t}{}{\bx_t}^\ell)
\eqsp.
\]
If instead $\bx_t^\ell \neq \mask$, then $\fw{t\tbar 0}{\bx_0^\ell}{\bx_t^\ell}[\ell] = \alpha_t \indic\{\bx_0^\ell=\bx_t^\ell\}$, hence
\[
\pdata{0\tbar t}{\bx_t}{}[\ell]
=
\categorical(\bx_t^\ell)
\eqsp.
\]
Therefore, on unmasked positions, the denoiser no longer contains enough information
to reconstruct the leave-one-out posterior, so the inverse formula is not available.

\begin{remark}
    When these conversion formulas hold in both directions, as they do for UDM, the uniqueness of the denoiser as a minimizer of the loss is equivalent to the uniqueness of the leave-one-out denoiser as a minimizer of the loss.
\end{remark}

\subsection{Score and leave-one-out denoiser}
\label{apdx-sec:score-loo}
Recall that the score matrix $\score{t}{}{\bx_t} \in \rset^{\len \times \vocab}$ is defined by
\[
\dotp{\by^\ell}{\score{t}{}{\bx_t}^\ell}
=
\pdata{t}{}{\by} \big/ \pdata{t}{}{\bx_t},
\]
for every $\by \in \msx$ such that $\by^{-\ell} = \bx_t^{-\ell}$. This quantity appears naturally in the CTMC time-reversal; see \Cref{sec:ctmc}. By \cite[Proposition 1]{campbell2022continuous}, it is related to the denoiser through
\begin{equation}
\label{eq:score-identity}
 \dotp{\by^\ell}{\score{t}{}{\bx_t}^\ell}
=
\sum_{\bx_0^\ell}
\frac{\fw{t\tbar 0}{\bx_0^\ell}{\by^\ell}[\ell]}{\fw{t\tbar 0}{\bx_0^\ell}{\bx_t^\ell}[\ell]}
\pdata{0\tbar t}{\bx_t}{\bx_0^\ell}[\ell]
\eqsp.
\end{equation}
This already shows that the score can be parameterized from the denoiser. The same quantity can also be expressed directly in terms of the leave-one-out denoiser:
\begin{equation}
\label{eq:loo-to-score-general}
\dotp{\by^\ell}{\score{t}{}{\bx_t}^\ell}
=
\frac{
\fw{t\tbar 0}{\loodenoiser{t}{}{\bx_t}^\ell}{\by^\ell}[\ell]
}{
\fw{t\tbar 0}{\loodenoiser{t}{}{\bx_t}^\ell}{\bx_t^\ell}[\ell]
}\eqsp,
\end{equation}
for every $\by \in \msx$ such that $\by^{-\ell} = \bx_t^{-\ell}$.

\begin{proof}
By definition of the score, for every $\by \in \msx$ such that $\by^{-\ell} = \bx_t^{-\ell}$,
\[
\dotp{\by^\ell}{\score{t}{}{\bx_t}^\ell}
=
\frac{\pdata{t}{}{\by}}{\pdata{t}{}{\bx_t}}.
\]
Expanding the numerator using the factorization of the forward process gives
\[
\pdata{t}{}{\by}
=
\sum_{\bx_0} \pdata{0}{}{\bx_0} \fw{t\tbar 0}{\bx_0}{\by}
=
\sum_{\bx_0} \pdata{0}{}{\bx_0}
\fw{t\tbar 0}{\bx_0^\ell}{\by^\ell}[\ell]
\prod_{j \neq \ell} \fw{t\tbar 0}{\bx_0^j}{\bx_t^j}[j].
\]
Summing first over $\bx_0^{-\ell}$, we obtain
\[
\pdata{t}{}{\by}
=
\sum_{\bx_0^\ell \in \msv}
\fw{t\tbar 0}{\bx_0^\ell}{\by^\ell}[\ell]
\sum_{\bx_0^{-\ell}} \pdata{0}{}{\bx_0}
\prod_{j \neq \ell} \fw{t\tbar 0}{\bx_0^j}{\bx_t^j}[j].
\]
Using the definition of the leave-one-out posterior, the inner sum can be written as
\[
\sum_{\bx_0^{-\ell}} \pdata{0}{}{\bx_0}
\prod_{j \neq \ell} \fw{t\tbar 0}{\bx_0^j}{\bx_t^j}[j]
=
\pdata{t}{}{\bx_t^{-\ell}}
\pdata{0\tbar t}{\bx_t^{-\ell}}{\bx_0^\ell}[\loo,\ell].
\]
Therefore,
\[
\pdata{t}{}{\by}
=
\pdata{t}{}{\bx_t^{-\ell}}
\sum_{\bx_0^\ell \in \msv}
\fw{t\tbar 0}{\bx_0^\ell}{\by^\ell}[\ell]
\pdata{0\tbar t}{\bx_t^{-\ell}}{\bx_0^\ell}[\loo,\ell].
\]
Since the map $\nu \mapsto \fw{t\tbar 0}{\nu}{\by^\ell}[\ell]$ is affine in its first argument, this becomes
\[
\pdata{t}{}{\by}
=
\pdata{t}{}{\bx_t^{-\ell}}
\fw{t\tbar 0}{\loodenoiser{t}{}{\bx_t}^\ell}{\by^\ell}[\ell].
\]
Taking $\by = \bx_t$ yields in the same way
\[
\pdata{t}{}{\bx_t}
=
\pdata{t}{}{\bx_t^{-\ell}}
\fw{t\tbar 0}{\loodenoiser{t}{}{\bx_t}^\ell}{\bx_t^\ell}[\ell].
\]
Dividing the two equations gives
\[
\dotp{\by^\ell}{\score{t}{}{\bx_t}^\ell}
=
\frac{
\fw{t\tbar 0}{\loodenoiser{t}{}{\bx_t}^\ell}{\by^\ell}[\ell]
}{
\fw{t\tbar 0}{\loodenoiser{t}{}{\bx_t}^\ell}{\bx_t^\ell}[\ell]
},
\]
which is exactly \eqref{eq:loo-to-score-general}.
\end{proof}

\subsection{Categorical ratio matching}
\label{sec:categorical_ratio_matching}

In this section we review \emph{categorical ratio matching} as introduced in \cite{sun2023scorebased}. For $\bx_t,\by$ that differ only at position $\ell$, the concrete score used in the main text satisfies
\begin{equation}
\label{eq:crm-score-conditional}
\langle \by^\ell, \score{t}{}{\bx_t}^\ell \rangle
=
\frac{\pdata{t}{}{\by}}{\pdata{t}{}{\bx_t}}
=
\frac{\pdata{t}{\bx_t^{-\ell}}{\by^\ell}[\ell]}
{\pdata{t}{\bx_t^{-\ell}}{\bx_t^\ell}[\ell]}
\eqsp.
\end{equation}
Thus it is enough to learn the one-position conditional marginal matrix $\bm{p}^{*} _t$ which is such that $\bm{p}^{*} _t(\bx_t) \in \rset^{\len \times \vocab}$ and $\langle x, \bm{p}^{*} _t(\bx_t)^\ell \rangle = \pdata{t}{\bx^{-\ell} _t}{x}[\ell]$ since the score is recovered by probing this matrix at $\by^\ell$ and $\bx_t^\ell$ and taking the ratio. In its population form, \emph{categorical ratio matching} (CRM) trains a network $\bm{p}_\param$ by minimizing
\begin{align}
\label{eq:crm-population-loss}
\mathcal{L}^{\mathrm{CRM}}_t(\param)
& =
\sum_{\ell = 1}^\len
\pE_{\pdata{t}{}{}}
\left[
\kldivergence{
\pdata{t}{X_t^{-\ell}}{}[\ell]
}{
\categorical(\bm{p}_\param(X_t, t)^\ell)
}
\right] \\ 
& = - \sum_{\ell = 1}^\len \sum_{\bx_t, \tilde\bx^\ell _t} \log\, \langle \tilde\bx^\ell _t, \bm{p}_\param (\bx_t, t)^\ell \rangle \, \pdata{t}{\bx^{-\ell} _t}{\tilde\bx^\ell _t}[\ell] \pdata{t}{}{\bx_t} + C \eqsp.
\end{align}
The minimizer of \eqref{eq:crm-population-loss} is therefore $\bm{p}_\param= \bm{p}^* _t$. This loss is however intractable as we cannot sample from $\pdata{t}{\bx^{-\ell} _t}{}[\ell]$ for any $\bx^{-\ell} _t$. By assuming that the network $\bm{p}_\param$ is leave-one-out; \emph{i.e.} the output $\bm{p}_\param(\bx_t, t)^\ell$ is independent of the input $\bx^\ell _t$, the previous loss then becomes equal, up to a constant, to 
\begin{equation}
\label{eq:crm-sampled-loss}
\widetilde{\mathcal{L}}^{\mathrm{CRM}}_t(\param)
=
-
\sum_{\ell = 1}^\len
\pE_{\pdata{t}{}{}}
\left[
\log \, \dotp{X_t^\ell}{\bm{p}_\param(X_t, t)^\ell}
\right]
\eqsp.
\end{equation}
Without the leave-one-out assumption, this second loss is minimized when $\bm{p}_\param(\bx_t, t)^\ell=\bx_t^\ell$. 

\cite{sun2023scorebased} also considers a LOO denoiser parameterization which follows from \Cref{prop:gibbs-conditional}. 
 Therefore, if $\smash{\denoiser{t}{}{\bx_t}[\param]}$ is a model for the leave-one-out denoiser, one may use the parameterization
\begin{equation}
\label{eq:crm-loo-parameterization}
\bm{p}_\param(\bx_t, t) = \alpha_t \denoiser{t}{}{\bx_t}[\param] + (1-\alpha_t)\bpi^\ell
\eqsp.
\end{equation}
The same restriction is still needed after this reparameterization. If $\denoiser{t}{}{\bx_t}[\param]^\ell$ may depend on $\bx_t^\ell$, the minimizer of \eqref{eq:crm-sampled-loss} maximizes the probability of $\bx_t$ under $\categorical(\bp_\param(\bx_t, t))$, hence at optimum $\denoiser{t}{}{\bx_t}[\param]^\ell=\bx_t^\ell$ and $\bm{p}_\param(\bx_t,t)^\ell=\alpha_t\bx_t^\ell+(1-\alpha_t)\bpi^\ell$. 
If instead the admissible functions satisfy $\denoiser{t}{}{\bx_t}[\param]^\ell=\denoiser{t}{}{\tilde{\bx}_t}[\param]^\ell$ whenever $\bx_t^{-\ell}=\tilde{\bx}_t^{-\ell}$, then the unique minimizer is $\denoiser{t}{}{\bx_t}[\param]=\loodenoiser{t}{}{\bx_t}$. 

\subsection{Conditional marginal in terms of the LOO denoiser}
\label{sec:proof_gibbs}

\begin{mdframed}[style=propFrame]
\begin{proposition}
\label{prop:gibbs-conditional}
  For every $\ell \in \intset{1}{\len}$, it holds
  \begin{equation}
    \label{eq:gibbs-conditional}
    \pdata{t}{\bx_t^{-\ell}}{}[\ell] = \categorical\big(\alpha_t \loodenoiser{t}{}{\bx_t}^\ell + (1-\alpha_t)\bpi^\ell\big) \eqsp.
  \end{equation}
\end{proposition}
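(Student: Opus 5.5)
The plan is to compute the one-coordinate conditional $\pdata{t}{\bx_t^{-\ell}}{\bx_t^\ell}[\ell]=\pdata{t}{}{\bx_t}/\pdata{t}{}{\bx_t^{-\ell}}$ directly, mirroring the computation already carried out in \Cref{apdx-sec:score-loo} for the score. Fix $\ell$ and $\bx_t^{-\ell}$ with $\pdata{t}{}{\bx_t^{-\ell}}>0$, where $\pdata{t}{}{\bx_t^{-\ell}}=\sum_{\tilde\bx_t^\ell}\pdata{t}{}{\tilde\bx_t}$ is the marginal of the other coordinates. Outside this support the conditional is arbitrary and the identity holds by convention.

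First, I expand $\pdata{t}{}{\bx_t}=\sum_{\bx_0}\pdata{0}{}{\bx_0}\prod_{j}\fw{t\tbar 0}{\bx_0^j}{\bx_t^j}[j]$ using the factorization of the forward kernel. I then isolate the factor at position $\ell$ and sum first over $\bx_0^{-\ell}$. By the definition \eqref{eq:def_LOO-Denoiser}, the inner sum $\sum_{\bx_0^{-\ell}}\pdata{0}{}{\bx_0}\prod_{j\neq\ell}\fw{t\tbar 0}{\bx_0^j}{\bx_t^j}[j]$ equals $\pdata{t}{}{\bx_t^{-\ell}}\,\pdata{0\tbar t}{\bx_t^{-\ell}}{\bx_0^\ell}[\loo,\ell]$. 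This holds because the normalizing constant of the LOO posterior is exactly the marginal of $X_t^{-\ell}$: summing the inner expression over $\bx_0^\ell$ gives $\sum_{\bx_0}\pdata{0}{}{\bx_0}\prod_{j\ne\ell}\fw{t\tbar 0}{\bx_0^j}{\bx_t^j}[j]=\pdata{t}{}{\bx_t^{-\ell}}$. Hence
\[
\pdata{t}{}{\bx_t}=\pdata{t}{}{\bx_t^{-\ell}}\sum_{\bx_0^\ell}\fw{t\tbar 0}{\bx_0^\ell}{\bx_t^\ell}[\ell]\,\pdata{0\tbar t}{\bx_t^{-\ell}}{\bx_0^\ell}[\loo,\ell].
\]

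Next, I use that $x_0\mapsto\fw{t\tbar 0}{x_0}{x_t}[\ell]=\dotp{x_t}{\alpha_t x_0+(1-\alpha_t)\bpi^\ell}$ is affine in $x_0$. The weights sum to one, so the sum collapses to $\dotp{\bx_t^\ell}{\alpha_t\loodenoiser{t}{}{\bx_t}^\ell+(1-\alpha_t)\bpi^\ell}$, using $\loodenoiser{t}{}{\bx_t}^\ell=\sum_{x}x\,\pdata{0\tbar t}{\bx_t^{-\ell}}{x}[\loo,\ell]$. Dividing by $\pdata{t}{}{\bx_t^{-\ell}}$ gives $\pdata{t}{\bx_t^{-\ell}}{\bx_t^\ell}[\ell]=\categorical(\bx_t^\ell;\alpha_t\loodenoiser{t}{}{\bx_t}^\ell+(1-\alpha_t)\bpi^\ell)$. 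This holds for every value of $\bx_t^\ell$, and $\loodenoiser{t}{}{\bx_t}^\ell$ does not depend on $\bx_t^\ell$, so this is the claimed identity of distributions.

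There is no real obstacle; the only care needed is bookkeeping. Concretely: identify the LOO normalizer with $\pdata{t}{}{\bx_t^{-\ell}}$, and check that $\loodenoiser{t}{}{\bx_t}^\ell$ is genuinely a function of $\bx_t^{-\ell}$ only. The latter ensures that evaluating the right-hand side at different $\bx_t^\ell$ yields a single normalized categorical law.
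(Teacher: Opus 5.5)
Your proposal is correct and follows essentially the same route as the paper: expand the time-$t$ marginal, sum over $\bx_0^{-\ell}$ first to obtain the marginal of $\bx_t^{-\ell}$ times the leave-one-out posterior, collapse the remaining sum using that the forward kernel is affine in its first argument, and divide by the marginal of $\bx_t^{-\ell}$. The only cosmetic difference is that the paper evaluates the marginal at a generic $\by$ with $\by^{-\ell}=\bx_t^{-\ell}$ to get the whole conditional law at once, whereas you evaluate at $\bx_t$ and then use that $\loodenoiser{t}{}{\bx_t}^\ell$ does not depend on $\bx_t^\ell$.
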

\end{mdframed}

\begin{proof}

Fix $\ell \in \intset{1}{\len}$ and let
$\by \in \msx$ satisfy $\by^{-\ell}=\bx_t^{-\ell}$. Expanding the marginal of $\by$
and summing first over $\bx_0^{-\ell}$ gives
\[
\pdata{t}{}{\by}
=
\sum_{\bx_0^\ell \in \msv}
\fw{t\tbar 0}{\bx_0^\ell}{\by^\ell}[\ell]
\sum_{\bx_0^{-\ell}}
\pdata{0}{}{\bx_0}
\prod_{j \neq \ell} \fw{t\tbar 0}{\bx_0^j}{\bx_t^j}[j].
\]
By definition of the leave-one-out posterior,
\[
\sum_{\bx_0^{-\ell}}
\pdata{0}{}{\bx_0}
\prod_{j \neq \ell} \fw{t\tbar 0}{\bx_0^j}{\bx_t^j}[j]
=
\pdata{t}{}{\bx_t^{-\ell}}
\pdata{0\tbar t}{\bx_t^{-\ell}}{\bx_0^\ell}[\loo,\ell].
\]
Therefore
\[
\pdata{t}{}{\by}
=
\pdata{t}{}{\bx_t^{-\ell}}
\sum_{\bx_0^\ell \in \msv}
\fw{t\tbar 0}{\bx_0^\ell}{\by^\ell}[\ell]
\pdata{0\tbar t}{\bx_t^{-\ell}}{\bx_0^\ell}[\loo,\ell].
\]
Using the affine extension of the forward kernel in its first argument, this becomes
\[
\pdata{t}{}{\by}
=
\pdata{t}{}{\bx_t^{-\ell}}
\fw{t\tbar 0}{\loodenoiser{t}{}{\bx_t}^\ell}{\by^\ell}[\ell].
\]
Dividing by $\pdata{t}{}{\bx_t^{-\ell}}$ yields
\[
\pdata{t}{\bx_t^{-\ell}}{\by^\ell}[\ell]
=
\fw{t\tbar 0}{\loodenoiser{t}{}{\bx_t}^\ell}{\by^\ell}[\ell].
\]
Taking $\by=\bx_t$ gives \Cref{prop:gibbs-conditional}.
\end{proof}

\subsection{An example of leave-one-out and denoiser differences}
\label{app:loo-vs-denoiser}

In order to confirm some intuition on the difference between the leave-one-out and the denoiser, we train a uniform diffusion model with the \ref{eq:param_plug_in} parameterization that reaches a leave-one-out denoiser at optimality and a model with the \ref{eq:param_marginalization} parameterization that recovers a denoiser. We then plot a heatmap corresponding to the probabilities output by the leave-one-out and the denoiser before passing them into their respective updates during sampling. We show the results in \Cref{fig:mnist_loo_vs_denoiser}.

In this case, the difference is mainly on the edges of the digits, where the denoiser is more confident than the leave-one-out posterior. This is because, on the edges of the digits, the local observation $\bx_t^\ell$ is comparatively more informative about $\bx_0^\ell$ than on the inner pixels where the value of $\bx_0^\ell$ is already determined by its neighbors.

However, this difference varies with time, for large $t$, so at the start of sampling, \eqref{eq:loo-to-denoiser-uniform} shows that the denoiser smoothes the prediction of the loo by convolving them with the marginal of the $\ell$-th token. This blurring effect is observed in practice earlier in the sampling.

\begin{figure}[t]
    \centering
    \includegraphics[width=\linewidth]{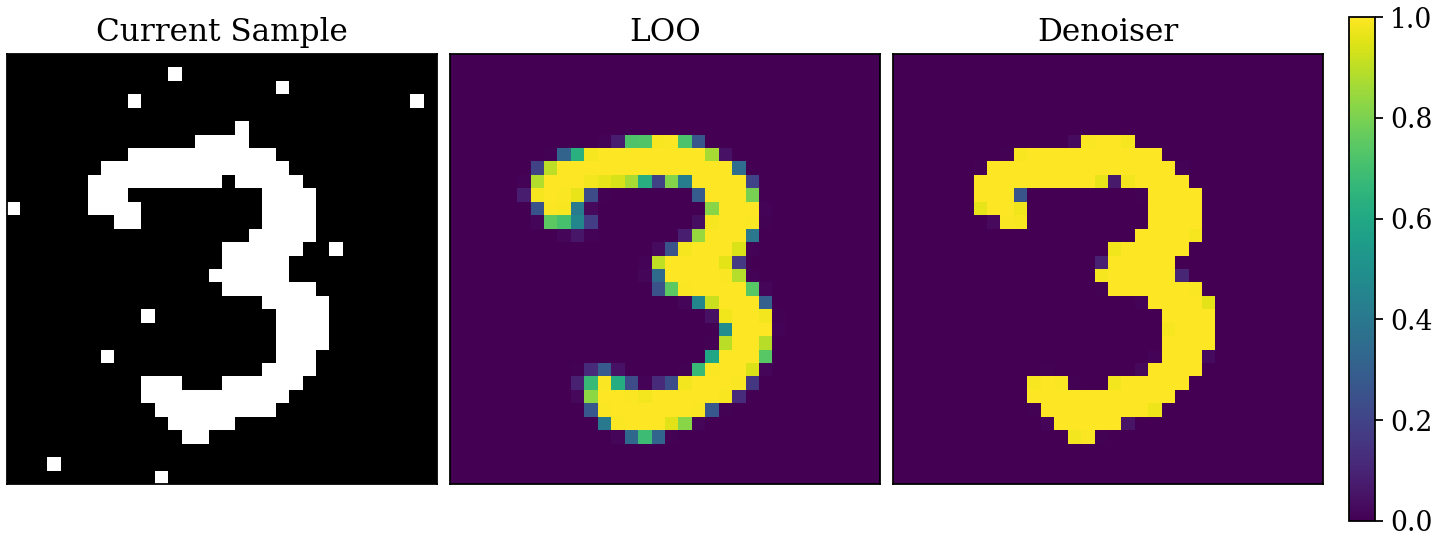}
    \caption{Comparison between the leave-one-out posterior $\loodenoiser{t}{}{\bx_t}$ and the denoising posterior $\matrixdenoiser{t}{}{\bx_t}$.}
    \label{fig:mnist_loo_vs_denoiser}
\end{figure}


%% file: appendix/audm.tex
\section{Absorbing State Uniform Diffusion}
\label{apdx:audm}
\subsection{AUDM NELBO}
\begin{algorithm}[t]
\caption{AUDM sampler}
\label{alg:audm-sampler}
\begin{algorithmic}
\Require grid $0=\tk{0}<\cdots<\tk{n}=1$
\State Draw $U \sim \pnoise{}{}{}$
\State Draw $X_{\tk{n}} \sim \pdata{\tk{n}}{U}{\,\cdot}$
\For{$i=n{-}1,\dots,0$}
    \State $\hat{\bx}_{0,i} \gets \hat{\bx}^\param_0(X_{\tk{i+1}},\tk{i+1};U)$
    \State \emph{In parallel over $\ell$:}
    \State \quad if $X^\ell_{\tk{i+1}} \neq U^\ell$:\quad $X^\ell_{\tk{i}} \gets X^\ell_{\tk{i+1}}$
    \State \quad else:\quad $X^\ell_{\tk{i}} \sim \categorical\!\left(\frac{\alpha_{\tk{i}}-\alpha_{\tk{i+1}}}{1-\alpha_{\tk{i+1}}}\hat{\bx}_{0,i}^\ell+\frac{1-\alpha_{\tk{i}}}{1-\alpha_{\tk{i+1}}}U^\ell\right)$
\EndFor
\State \Return $X_{\tk{0}}$
\end{algorithmic}
\end{algorithm}
The following proposition gives the NELBO form for $-\log \pdata{0}{}{}[\param]$ defined in \Cref{subsec:audm}. See \Cref{sec:ctmc} for background on CTMCs.
We write $\jpdata{0}{\noise}{\cdot}[\param]$ for the time-zero marginal of the CTMC reverse model conditioned on $U=\noise$, so that $\pdata{0}{}{\bx_0}[\param]=\sum_{\noise}\pnoise{}{}{\noise}\jpdata{0}{\noise}{\bx_0}[\param]$.
\newcounter{savedpropositionaudm}
\setcounter{savedpropositionaudm}{\value{proposition}}
\begingroup
\setcounterref{proposition}{prop:absorbing-uniform-cont-loss}
\addtocounter{proposition}{-1}
\renewcommand{\theHproposition}{appendix.audm.\arabic{proposition}}
\begin{mdframed}[style=propFrame]
\begin{proposition}
    \label{prop:absorbing-uniform-cont-loss-app}
    A continuous-time NELBO for $-\log \pdata{0}{}{\bx_0}[\param]$ is
\begin{multline}
    \label{eq:absorbing-ctmc-loss-app}
        \txts \mathrm{L}^{\audm} _{\infty}(\bx_0; \param)
        \eqdef
        - \int_0^1
         \frac{\alpha^\prime _t}{1-\alpha_t}\,
        \pE\big[
            \sum_{\ell=1}^{\len}
            \indic_{X^\ell _t = {\color{crimson}U^\ell}}
            \big\{
                1-\dotp{{\color{crimson}U^\ell}}{\hat{\bx}^\param _0(X_t,t; {\color{crimson}U})^\ell}
                \\
                -
                \indic_{\bx^\ell _0 \neq {\color{crimson}U^\ell}} \cdot
                \big(
                    1+ \log \, \dotp{\bx^\ell _0}{ \hat{\bx}^\param _0(X_t,t; {\color{crimson}U})^\ell}
                \big)
            \big\}
        \big]\rmd t,
\end{multline}
where the expectation is \wrt\ $U \sim \text{Uniform}(\intset{1}{\vocab})^{\otimes \len}$, and $X_t \sim \fw{t\tbar 0}{\bx_0,U}{}$.
\end{proposition}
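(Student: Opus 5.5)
Our plan is to reduce the statement to a conditional (fixed-$\noise$) masked-diffusion-type NELBO and then average over $U$. First, by Jensen's inequality applied to $-\log \pdata{0}{}{\bx_0}[\param] = -\log \sum_{\noise} \pnoise{}{}{\noise}\,\jpdata{0}{\noise}{\bx_0}[\param]$, we get $-\log \pdata{0}{}{\bx_0}[\param] \le \sum_{\noise}\pnoise{}{}{\noise}\big(-\log \jpdata{0}{\noise}{\bx_0}[\param]\big)$. It therefore suffices to show that, for each fixed $\noise$, the integrand of \eqref{eq:absorbing-ctmc-loss-app} with $U=\noise$ (expectation over $X_t\sim\fw{t\tbar 0}{\bx_0,\noise}{}$ only) upper bounds $-\log \jpdata{0}{\noise}{\bx_0}[\param]$. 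Conditionally on $\noise$, the process is a token-wise absorbing process with absorbing state $\noise^\ell$. The reverse model is the plug-in bridge \eqref{eq:audm-bridge} evaluated at \eqref{eq:carry-over-uniform}, and the initialization is exact since $\pdata{1}{\noise}{}=\updelta_{\noise}$ when $\alpha_1=0$.

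Second, we write the discrete-time conditional NELBO \eqref{eq:elbo-discrete} for this fixed-$\noise$ chain and pass to the limit $n\to\infty$. The standard route, as recalled in \Cref{sec:ctmc}, is to express it in terms of the forward and reverse CTMC generators. The conditional forward rate at coordinate $\ell$ is a jump from $x$ to $\noise^\ell$ at rate $-\alpha'_t/\alpha_t$ whenever $x\neq\noise^\ell$. The true reverse rate from $\noise^\ell$ to $y$ is $\frac{-\alpha'_t}{1-\alpha_t}\dotp{y}{\bx_0^\ell}$, while the model reverse rate is $\frac{-\alpha'_t}{1-\alpha_t}\dotp{y}{\hat\bx^\param_0(\bx_t,t;\noise)^\ell}$ for $y\neq\noise^\ell$. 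Both follow by differentiating the second branch of \eqref{eq:audm-bridge} at $s\uparrow t$. The CTMC path-space KL then has integrand
\[
\sum_{\ell}\indic_{\bx_t^\ell=\noise^\ell}\frac{-\alpha'_t}{1-\alpha_t}\Big[\sum_{y\neq \noise^\ell}\hat\bx^y - \sum_{y\neq\noise^\ell}\bx_0^y\big(1+\log \hat\bx^y\big) + \mathrm{const}\Big],
\]
where we abbreviate $\hat\bx^y = \dotp{y}{\hat\bx_0^\param(\bx_t,t;\noise)^\ell}$ and $\bx_0^y=\dotp{y}{\bx^\ell_0}$. The constant is the $\bx_0$-only term $\sum_y \bx_0^y\log\bx_0^y$, which vanishes for one-hot $\bx_0^\ell$. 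We then use $\sum_{y\ne\noise^\ell}\hat\bx^y = 1-\dotp{\noise^\ell}{\hat\bx}$, and note that $\sum_{y\ne\noise^\ell}\bx_0^y(1+\log\hat\bx^y)$ equals $\indic_{\bx_0^\ell\ne\noise^\ell}(1+\log\dotp{\bx_0^\ell}{\hat\bx})$. This reproduces exactly the braces in \eqref{eq:absorbing-ctmc-loss-app}.

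The positions with $\bx_t^\ell\neq\noise^\ell$ contribute nothing. There, both the true and model reverse transitions are the carry-over $\categorical(\bx_t^\ell)$, so these positions have zero rate and zero KL. The last-step reconstruction term $-\log \pdata{0\tbar t_1}{}{\bx_0}[\param]$ vanishes in the limit because $t_1\to0$ forces $\bx_{t_1}^\ell=\bx_0^\ell$ with probability tending to one, and the carry-over parameterization then reconstructs exactly.

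The main obstacle is making the $n\to\infty$ limit rigorous. Each KL term in \eqref{eq:elbo-discrete} is $O(\Delta t)$ only after expanding $\log$ of the bridge ratios to first order. The expansion must hold uniformly, including near $t=1$ where $1-\alpha_t$ is small, and the factor $\alpha'_t/(1-\alpha_t)$ must stay integrable, which may need an assumption on the schedule. We would handle this by appealing to the general CTMC NELBO of \Cref{sec:ctmc} (Girsanov-type formula for Markov jump processes) rather than proving the limit directly. That reduces the work to the rate computation above, plus a check that the model rate vanishes when the true rate vanishes, so that the KL is finite.
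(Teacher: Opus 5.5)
Your proposal is correct and is essentially the paper's proof. Both use Jensen over $U$, then the CTMC ELBO \eqref{eq:ctmc-specific-elbo} with $\pi^\ell=U^\ell$; on positions with $\bx_t^\ell=\noise^\ell$, your reverse rates $\frac{-\alpha'_t}{1-\alpha_t}\dotp{y}{\bx_0^\ell}$ and $\frac{-\alpha'_t}{1-\alpha_t}\dotp{y}{\hat\bx^\param_0(\bx_t,t;\noise)^\ell}$ are exactly $\beta_t$ times the paper's conditional and model scores, and the two-case simplification of $\Phi$ is the same.

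Your side remarks on the discrete-time limit, which you rightly bypass, are slightly off but harmless:
\begin{itemize}
\item Carry-over does not reconstruct positions with $\bx_0^\ell=\noise^\ell$; there the last-step term is $-\log\dotp{\noise^\ell}{\hat\bx^\param_0}$.
\item The weight $\alpha'_t/(1-\alpha_t)$ blows up at $t=0$, not at $t=1$.
\item A finite path KL requires the model rate to be positive wherever the true rate is (the softmax ensures this), not the converse.
\end{itemize}
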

\end{mdframed}
\endgroup
\setcounter{proposition}{\value{savedpropositionaudm}}
\begin{proof}
For every $\bx_0$, Jensen's inequality gives
\[
    \txts
    -\log \pdata{0}{}{\bx_0}[\param]
    =
    -\log\sum_{\noise}\pnoise{}{}{\noise}\pdata{0}{\noise}{\bx_0}[\param]
    \leq
    \sum_{\noise}\pnoise{}{}{\noise}
    \bigl[-\log \pdata{0}{\noise}{\bx_0}[\param]\bigr].
\]
It is therefore enough to upper bound each conditional negative log-likelihood by the CTMC ELBO \eqref{eq:ctmc-specific-elbo}, with $\pi^\ell=U^\ell$, and then average over $U$.

Since the prefactor is $\dotp{X^\ell _}{U^\ell}$, only the position $\ell$ such that $X^\ell _t = U^\ell$ contribute to the loss. Let $\ell$ denote such a position and let $\tilde x \neq U^\ell$. We first compute the two arguments of $\Phi$ in \eqref{eq:ctmc-specific-elbo}. By definition of the conditional score \eqref{eq:conditional-score},
\[
    \dotp{\tilde x}{\score{t}{X_0^\ell}{U^\ell}}
    =
    \frac{
        \dotp{\tilde x}{\alpha_t X_0^\ell + (1-\alpha_t)U^\ell}
    }{
        \fw{t\tbar 0}{X_0^\ell,U^\ell}{U^\ell}
    } = \frac{\alpha_t}{1-\alpha_t}
    \indic_{\tilde x = X_0^\ell}
    \indic_{X_0^\ell \neq U^\ell} \eqsp.
\]
Next, by the definition \eqref{eq:score-definition} and since $X^\ell _t = U^\ell$, we have for every $\tilde x \neq U^\ell$,
\begin{equation*}
    \dotp{\tilde x}{\score{t}{}{X_t}[\param]^\ell}
    =
    \frac{\alpha_t}{1-\alpha_t}
    \dotp{\tilde x}{\hat{\bx}^\param _0(X_t,t; U)^\ell} \eqsp.
\end{equation*}
Substituting both expressions into \eqref{eq:ctmc-specific-elbo}, the contribution of position $\ell$ at time $t$ becomes
\[
    \beta_t \indic\{X_t^\ell = U^\ell\}
    \sum_{\tilde x \neq U^\ell}
    \Phi\!\left(
        \frac{\alpha_t}{1-\alpha_t}
       \indic_{\tilde x = X_0^\ell}
    \indic_{X_0^\ell \neq U^\ell},
        \frac{\alpha_t}{1-\alpha_t}
        \dotp{\tilde x}{\hat{\bx}^\param _0(X_t,t; U)^\ell}
    \right).
\]
If $X_0^\ell = U^\ell$, then the first argument of $\Phi$ is zero for every $\tilde x \neq U^\ell$, so the sum reduces to
\[
    \beta_t \frac{\alpha_t}{1-\alpha_t}
    \indic_{X_t^\ell = U^\ell}
    \sum_{\tilde x \neq U^\ell}
    \dotp{\tilde x}{\hat{\bx}^\param _0(X_t,t; U)^\ell}
    =
    \frac{\alpha^\prime_t}{1-\alpha_t}
    \indic_{X_t^\ell = U^\ell}
    \bigl(1-\dotp{U^\ell}{\hat{\bx}^\param _0(X_t,t; U)^\ell}\bigr).
\]
If $X_0^\ell \neq U^\ell$, only the term $\tilde x = X_0^\ell$ has a nonzero first argument, and therefore
\begin{align*}
    &\beta_t \indic\{X_t^\ell = U^\ell\}
    \sum_{\tilde x \neq U^\ell}
    \Phi\!\left(
        \frac{\alpha_t}{1-\alpha_t}\indic_{\tilde x = X_0^\ell},
        \frac{\alpha_t}{1-\alpha_t}
        \dotp{\tilde x}{\hat{\bx}^\param _0(X_t,t; U)^\ell}
    \right) \\
    &\qquad=
     \frac{- \alpha^\prime_t}{1-\alpha_t}
    \indic_{X_t^\ell = U^\ell}
    \left(
        \sum_{\tilde x \neq U^\ell}
        \dotp{\tilde x}{\hat{\bx}^\param _0(X_t,t; U)^\ell}
        -
        1
        -
        \log \dotp{X_0^\ell}{\hat{\bx}^\param _0(X_t,t; U)^\ell}
    \right) \\
    &\qquad=
    \frac{- \alpha^\prime _t}{1-\alpha_t}
    \indic_{X_t^\ell = U^\ell}
    \left(
        1-\dotp{U^\ell}{\hat{\bx}^\param _0(X_t,t; U)^\ell}
        -
        1
        -
        \log \dotp{X_0^\ell}{\hat{\bx}^\param _0(X_t,t; U)^\ell}
    \right).
\end{align*}
Combining the two cases gives exactly the integrand in \eqref{eq:absorbing-ctmc-loss-app}.
\end{proof} 
\subsection{Resampled AUDM}
\label{sec:proof-remasking}
In the rest of this section we write $\bz=(\bx,\noise)$. The UDM denoiser admits the mixture decomposition
\begin{equation}
    \label{eq:audm-denoiser-mixture-app}
    \txts
    \pdata{0\tbar t}{\bx_t}{\bx_0}
    =
    \sum_{\noise_t}
    \pdata{0\tbar t}{\bx_t,\noise_t}{\bx_0}\,
    \pnoise{t}{\bx_t}{\noise_t}
    \eqsp,
    \qquad
    \pnoise{t}{\bx_t}{\noise_t}
    \propto
    \pnoise{}{}{\noise_t}\pdata{t}{\noise_t}{\bx_t}
    \eqsp.
\end{equation}
A first exact lifted sampler is obtained by augmenting the state as $\tilde Z_t=(X_t,\tilde U_t)$ with conditional law $\tilde U_t \sim \pnoise{t}{X_t}{}$. The conditional law of $\tilde U_t$ belongs to the current lifted marginal, so the backward transition can be written as
\[
    \txts
    \tilde p_{s\tbar t}(\bz_s|\bz_t)
    \eqdef
    \pnoise{s}{\bx_s}{\noise_s}
    \sum_{\bx_0}
    \fw{s\tbar 0,t}{\bx_0,\bx_t}{\bx_s}
    \pdata{0\tbar t}{\bx_t,\noise_t}{\bx_0}
    \eqsp.
\]
Using \eqref{eq:audm-denoiser-mixture-app}, this transition satisfies the one-step marginalization identity
\[
    \txts
    \sum_{\noise_t}
    \tilde p_{s\tbar t}(\bz_s|\bz_t)
    \pnoise{t}{\bx_t}{\noise_t}
    =
    \pdata{s\tbar t}{\bx_t}{\bx_s}
    \pnoise{s}{\bx_s}{\noise_s}
    \eqsp.
\]
This is the same propagation identity as in \Cref{lem:lifted-process-marginalization} that we show next: after additionally summing over $\noise_s$, the remaining transition is the UDM reverse transition. Iterating this identity over the time grid shows that the ideal lifted chain preserves the UDM marginal path law. It is not directly useful, however, because drawing $\noise_s$ from $\pnoise{s}{\bx_s}{}$ is intractable. 

The resampled AUDM sampler in \Cref{alg:clean-conditioned-remasking} avoids this step by resampling the absorbing state after drawing $(X_0,X_s)$. It is the Markov chain on $Z_t=(X_t,U_t)$ whose one-step transition is
\[ 
\barpdata{s\tbar t}{\bz_t}{\bz_s} = \sum_{\bx_0} \pnoise{s\tbar 0}{\bx_0, \bx_s}{\noise_s} \fw{s\tbar 0, t}{\bx_0, \bx_t}{\bx_s}  \, \pdata{0\tbar t}{\bx_t,\noise_t}{\bx_0} \eqsp.
\]  
This equation is exactly the inline algorithm in the main text: given $(\bx_t,\noise_t)$, first draw $\bx_0$ from the absorbing-state denoiser, then draw $\bx_s$ from the UDM bridge, and finally resample the absorbing state conditionally on $(\bx_0,\bx_s)$. The last step is tractable because
\begin{equation}
    \label{eq:noise_cond_x0_xs}
\pnoise{s\tbar 0}{\bx_0, \bx_s}{\noise_s} = \frac{\pnoise{}{}{\noise_s} \fw{s\tbar 0}{\bx_0,\noise_s}{\bx_s}}{\fw{s\tbar 0}{\bx_0}{\bx_s}} \eqsp. 
\end{equation}
The exact lifted generative model is
\begin{equation}
    \label{eq:lifted-joint}
    \barpdata{0:n}{}{\bz_{\tk{0}:\tk{n}}} = \barpdata{\tk{n}}{}{\bz_\tk{n}} \prod_{i = 1}^\last \barpdata{\sk{i} \tbar \tk{i}}{\bz_{\tk{i}}}{\bz_\sk{i}}  \eqsp, \quad \barpdata{\tk{n}}{}{\bz_\tk{n}} \eqdef \pnoise{\tk{n}}{\bx_\tk{n}}{\noise_\tk{n}} \pdata{\tk{n}}{}{\bx_\tk{n}} 
\end{equation}
with $\barpdata{0\tbar \tk{1}}{\bz_\tk{1}}{} = \delta_{\noise_\tk{1}}\!\otimes \pdata{0\tbar \tk{1}}{\bx_\tk{1},\noise_\tk{1}}{}$. The next lemma states the one-step marginalization identity. Iterating it over the grid gives \Cref{prop:lifted-marginal}, hence the trajectory produced by the sampler has the same marginal law on $(X_{\tk{0}},\ldots,X_{\tk{n}})$ as UDM.
\begin{lemma} 
    \label{lem:lifted-process-marginalization}
    For any $0 < s < t \leq 1$, 
    \[ 
        \sum_{\noise_t} \barpdata{s\tbar t}{\bz_t}{\bz_s} \, \pnoise{t}{\bx_t}{\noise_t} = \pdata{s\tbar t}{\bx_t}{\bx_s} \pnoise{s}{\bx_s}{\noise_s} \eqsp.
    \]
\end{lemma}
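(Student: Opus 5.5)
We substitute the definition of $\barpdata{s\tbar t}{}{}$ and perform the sum over $\noise_t$ first. This is the only factor depending on $\noise_t$. Since $\pnoise{s\tbar 0}{\bx_0,\bx_s}{\noise_s}$ and $\fw{s\tbar 0,t}{\bx_0,\bx_t}{\bx_s}$ do not involve $\noise_t$, the left-hand side equals
\[
\txts \sum_{\bx_0} \pnoise{s\tbar 0}{\bx_0,\bx_s}{\noise_s}\,\fw{s\tbar 0,t}{\bx_0,\bx_t}{\bx_s} \sum_{\noise_t}\pdata{0\tbar t}{\bx_t,\noise_t}{\bx_0}\,\pnoise{t}{\bx_t}{\noise_t}\eqsp.
\]
By the mixture decomposition \eqref{eq:audm-denoiser-mixture-app}, the inner sum is the UDM denoiser $\pdata{0\tbar t}{\bx_t}{\bx_0}$. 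To justify this, I would check it from Bayes: $\pdata{0\tbar t}{\bx_t,\noise}{\bx_0}\pnoise{t}{\bx_t}{\noise} = \pdata{0}{}{\bx_0}\fw{t\tbar 0}{\bx_0,\noise}{\bx_t}\pnoise{}{}{\noise}/\pdata{t}{}{\bx_t}$. Summing over $\noise$ and using the mixture identity $\sum_\noise \fw{t\tbar 0}{\bx_0,\noise}{\bx_t}\pnoise{}{}{\noise}=\fw{t\tbar 0}{\bx_0}{\bx_t}$ then gives the claim. Thus the left-hand side becomes $\sum_{\bx_0}\pnoise{s\tbar 0}{\bx_0,\bx_s}{\noise_s}\fw{s\tbar 0,t}{\bx_0,\bx_t}{\bx_s}\pdata{0\tbar t}{\bx_t}{\bx_0}$.

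Next, I rewrite the joint weight of $(\bx_0,\bx_s)$ given $\bx_t$ in the reversed order. For the Markov chain $X_0\to X_s\to X_t$, we have $\fw{s\tbar 0,t}{\bx_0,\bx_t}{\bx_s}\pdata{0\tbar t}{\bx_t}{\bx_0} = \pdata{s\tbar t}{\bx_t}{\bx_s}\,\pdata{0\tbar s}{\bx_s}{\bx_0}$. This holds because both sides equal $\pdata{0}{}{\bx_0}\fw{s\tbar 0}{\bx_0}{\bx_s}\fw{t\tbar s}{\bx_s}{\bx_t}/\pdata{t}{}{\bx_t}$, using the Markov property $\pdata{0\tbar s,t}{}{}=\pdata{0\tbar s}{}{}$. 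Pulling out $\pdata{s\tbar t}{\bx_t}{\bx_s}$, it remains to show
\[
\txts\sum_{\bx_0}\pnoise{s\tbar 0}{\bx_0,\bx_s}{\noise_s}\,\pdata{0\tbar s}{\bx_s}{\bx_0}=\pnoise{s}{\bx_s}{\noise_s}\eqsp.
\]

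For this last identity I plug in \eqref{eq:noise_cond_x0_xs} and the denoiser $\pdata{0\tbar s}{\bx_s}{\bx_0}=\pdata{0}{}{\bx_0}\fw{s\tbar 0}{\bx_0}{\bx_s}/\pdata{s}{}{\bx_s}$. The factor $\fw{s\tbar 0}{\bx_0}{\bx_s}$ cancels, which is legitimate since it is positive for UDM. This leaves $\pnoise{}{}{\noise_s}\sum_{\bx_0}\pdata{0}{}{\bx_0}\fw{s\tbar 0}{\bx_0,\noise_s}{\bx_s}/\pdata{s}{}{\bx_s} = \pnoise{}{}{\noise_s}\pdata{s}{\noise_s}{\bx_s}/\pdata{s}{}{\bx_s}$. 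By the marginal identity $\sum_\noise\pdata{s}{\noise}{\bx_s}\pnoise{}{}{\noise}=\pdata{s}{}{\bx_s}$, this equals the normalized $\pnoise{s}{\bx_s}{\noise_s}$. The main point requiring care is support: the cancellation and divisions require $\pdata{s}{}{\bx_s}>0$ and $\pdata{t}{}{\bx_t}>0$. Outside the support, both sides vanish or the conditionals are conventional, so I would state the identity on the support and note the trivial case separately.
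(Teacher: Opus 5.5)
Your proof is correct and is essentially the paper's argument: you sum out $\noise_t$ via the mixture identity \eqref{eq:audm-denoiser-mixture-app}, then use the bridge/Bayes identity, the explicit resampling law \eqref{eq:noise_cond_x0_xs} (cancelling $\fw{s\tbar 0}{\bx_0}{\bx_s}$), and the marginal $\sum_{\bx_0}\pdata{0}{}{\bx_0}\fw{s\tbar 0}{\bx_0,\noise_s}{\bx_s}=\pdata{s}{\noise_s}{\bx_s}$. The only difference is bookkeeping: you first factor the weight as $\pdata{s\tbar t}{\bx_t}{\bx_s}\,\pdata{0\tbar s}{\bx_s}{\bx_0}$, which isolates the consistency identity $\sum_{\bx_0}\pnoise{s\tbar 0}{\bx_0,\bx_s}{\noise_s}\,\pdata{0\tbar s}{\bx_s}{\bx_0}=\pnoise{s}{\bx_s}{\noise_s}$ that the paper's single chain of equalities proves implicitly.
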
 
\begin{proof} 
We have that 
\begin{align*} 
    \sum_{\noise_t} \barpdata{s\tbar t}{\bz_t}{\bz_s} \, \pnoise{t}{\bx_t}{\noise_t} & = \sum_{\bx_0} \pnoise{s\tbar 0}{\bx_0, \bx_s}{\noise_s} \fw{s\tbar 0, t}{\bx_0, \bx_t}{\bx_s}  \,\sum_{\noise_t} \pdata{0\tbar t}{\bx_t,\noise_t}{\bx_0} \pnoise{t}{\bx_t}{\noise_t} \\ 
    & = \sum_{\bx_0} \pnoise{s\tbar 0}{\bx_0, \bx_s}{\noise_s} \fw{s\tbar 0, t}{\bx_0, \bx_t}{\bx_s} \, \pdata{0\tbar t}{\bx_t}{\bx_0} \eqsp. 
\end{align*}
where we have used the identity \eqref{eq:audm-denoiser-mixture-app} in the second line. Next, using the definition $\pdata{0\tbar t}{\bx_t}{\bx_0} = \pdata{0}{}{\bx_0} \fw{t\tbar 0}{\bx_0}{\bx_t} \big/ \pdata{t}{}{\bx_t}$ and \eqref{eq:noise_cond_x0_xs}, we get 
\begin{align*} 
    \sum_{\noise_t} \barpdata{s\tbar t}{\bz_t}{\bz_s} \, \pnoise{t}{\bx_t}{\noise_t} & \!=\! 
    \frac{\pnoise{}{}{\noise_s}}{\pdata{t}{}{\bx_t}} \sum_{\bx_0} \frac{\fw{s\tbar 0}{\bx_0,\noise_s}{\bx_s}}{\fw{s\tbar 0}{\bx_0}{\bx_s}}  \fw{s\tbar 0, t}{\bx_0, \bx_t}{\bx_s} \fw{t\tbar 0}{\bx_0}{\bx_t}\, \pdata{0}{}{\bx_0} \\
    & = \frac{\pnoise{}{}{\noise_s}}{\pdata{t}{}{\bx_t}}  \fw{t\tbar s}{\bx_s}{\bx_t} \sum_{\bx_0} \fw{s\tbar 0}{\bx_0,\noise_s}{\bx_s}\, \pdata{0}{}{\bx_0} \\
    & =  \frac{\pnoise{}{}{\noise_s}}{\pdata{t}{}{\bx_t}}  \fw{t\tbar s}{\bx_s}{\bx_t} \pdata{s}{\noise_s}{\bx_s} \\ 
    & =   \frac{\pnoise{}{}{\noise_s} \pdata{s}{\noise_s}{\bx_s}}{\pdata{s}{}{\bx_s}} \frac{\pdata{s}{}{\bx_s}  \fw{t\tbar s}{\bx_s}{\bx_t}}{\pdata{t}{}{\bx_t}} 
\end{align*}
where in the second line we have used the bridge identity $\fw{s\tbar 0, t}{\bx_0, \bx_t}{\bx_s} \fw{t\tbar 0}{\bx_0}{\bx_t} = \fw{s\tbar 0}{\bx_0}{\bx_s} \fw{t\tbar s}{\bx_s}{\bx_t}$. 
\end{proof}

\begin{proposition}
\label{prop:lifted-marginal}
It holds that 
\begin{equation}
\label{eq:backward-markovized-law}
\sum_{\noise_{\tk{0}:\tk{n}}} \barpdata{0:n}{}{\bz_{\tk{0}:\tk{n}}} \eqdef
\pdata{\tk{n}}{}{\bx_\tk{n}} \prod_{i = 1}^{\last} \pdata{\sk{i}\tbar \tk{i}}{\bx_\tk{i}}{\bx_\sk{i}} \eqsp. 
\end{equation}
\end{proposition}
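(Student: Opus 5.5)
We prove \eqref{eq:backward-markovized-law} by backward induction over the time grid, summing out the auxiliary variables one at a time starting from $\noise_{\tk{n}}$. The induction hypothesis is the following identity: for each $k \in \intset{0}{n}$,
\[
\sum_{\noise_{\tk{k+1}:\tk{n}}} \barpdata{\tk{n}}{}{\bz_{\tk{n}}} \prod_{i=k+1}^{n} \barpdata{\sk{i}\tbar\tk{i}}{\bz_{\tk{i}}}{\bz_{\sk{i}}}
= \pdata{\tk{n}}{}{\bx_{\tk{n}}} \prod_{i=k+1}^{n} \pdata{\sk{i}\tbar \tk{i}}{\bx_{\tk{i}}}{\bx_{\sk{i}}} \, \pnoise{\tk{k}}{\bx_{\tk{k}}}{\noise_{\tk{k}}}.
\]
In words, after summing the noises at times $\tk{k+1},\ldots,\tk{n}$, the lifted law equals the UDM path law on $\bx_{\tk{k}:\tk{n}}$ times the posterior $\pnoise{\tk{k}}{\bx_{\tk{k}}}{}$ of the current noise.

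For the base case $k=n$, no noise has been summed out. The claim is then just the definition $\barpdata{\tk{n}}{}{\bz_{\tk{n}}} = \pnoise{\tk{n}}{\bx_{\tk{n}}}{\noise_{\tk{n}}} \pdata{\tk{n}}{}{\bx_{\tk{n}}}$ from \eqref{eq:lifted-joint}.

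For the inductive step from $k$ to $k-1$, we multiply both sides by $\barpdata{\sk{k}\tbar\tk{k}}{\bz_{\tk{k}}}{\bz_{\sk{k}}}$ and sum over $\noise_{\tk{k}}$. Only the factor $\pnoise{\tk{k}}{\bx_{\tk{k}}}{\noise_{\tk{k}}}$ depends on $\noise_{\tk{k}}$, apart from the new transition itself. \Cref{lem:lifted-process-marginalization}, applied with $s=\sk{k}$ and $t=\tk{k}$, therefore turns this sum into $\pdata{\sk{k}\tbar\tk{k}}{\bx_{\tk{k}}}{\bx_{\sk{k}}}\pnoise{\sk{k}}{\bx_{\sk{k}}}{\noise_{\sk{k}}}$, which is the hypothesis at level $k-1$ since $\sk{k}=\tk{k-1}$. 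This is valid for $k\ge 2$, where $0<\sk{k}$.

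The last step, $k=1$, is the main obstacle, because the lemma is stated only for $s>0$. At $s=0$ the lifted transition is $\barpdata{0\tbar\tk{1}}{\bz_{\tk{1}}}{} = \delta_{\noise_{\tk{1}}}\otimes\pdata{0\tbar\tk{1}}{\bx_{\tk{1}},\noise_{\tk{1}}}{}$. We treat it directly and sum over both $\noise_{\tk{1}}$ and $\noise_{\tk{0}}$. Summing the Dirac over $\noise_{\tk{0}}$ gives $1$. The remaining sum $\sum_{\noise_{\tk{1}}}\pdata{0\tbar\tk{1}}{\bx_{\tk{1}},\noise_{\tk{1}}}{\bx_0}\pnoise{\tk{1}}{\bx_{\tk{1}}}{\noise_{\tk{1}}}$ equals $\pdata{0\tbar \tk{1}}{\bx_{\tk{1}}}{\bx_0}$ by the mixture identity \eqref{eq:audm-denoiser-mixture-app}, and this is the UDM reverse transition at $s=0$ because the bridge $\fw{0\tbar 0,t}{\bx_0,\bx_t}{}$ is the Dirac at $\bx_0$. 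Alternatively, one can check that the proof of the lemma goes through at $s=0$ once the resampling kernel $\pnoise{0\tbar 0}{}{}$ is read as arbitrary, since it sums to one.

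Carrying the induction down to $k=0$ and summing the remaining $\noise_{\tk{0}}$ against its conditional law, which sums to one, yields \eqref{eq:backward-markovized-law}.
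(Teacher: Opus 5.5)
Your proof is correct and follows the paper's argument: a backward induction that sums out $\noise_{\tk{n}},\noise_{\tk{n-1}},\dots$ one at a time via \Cref{lem:lifted-process-marginalization}, with the final $s=0$ step handled separately through the Dirac component of $\barpdata{0\tbar\tk{1}}{}{}$ and the mixture identity \eqref{eq:audm-denoiser-mixture-app}; your induction hypothesis simply makes the paper's bookkeeping cleaner. One cosmetic point: with the Dirac transition, the level-$0$ instance of your hypothesis does not hold literally, because the leftover factor in $\noise_{\tk{0}}$ is the posterior of $U$ given $(\bx_0,\bx_{\tk{1}})$ rather than $\pnoise{\tk{0}}{\bx_{\tk{0}}}{}$, but this is harmless since that factor sums to one, as your direct treatment of $k=1$ already shows.
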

\begin{proof}
    We define $\barpdata{0:k\tbar k+1}{\bz_\tk{k+1}}{\bz_{\tk{0}:\tk{k}}} \eqdef \prod_{i = 1} ^{k+1} \barpdata{\sk{i} \tbar \tk{i}}{\bz_\tk{i}}{\bz_\sk{i}}$ and proceed by induction. 

    First, by the definition \eqref{eq:lifted-joint} and \Cref{lem:lifted-process-marginalization} we have that  
    \begin{align*} 
        \sum_{\noise_{\tk{0}: \tk{n}}} \barpdata{0:n}{}{\bz_{\tk{0}:\tk{n}}}& = \sum_{\noise_{\tk{0}: \tk{n}}} \pdata{\tk{n}}{}{\bx_\tk{n}} \pnoise{\tk{n}}{\bx_\tk{n}}{\noise_\tk{n}} \barpdata{\sk{\last}\tbar \tk{\last}}{\bz_\tk{\last}}{\bz_\sk{\last}} \barpdata{0:\last-2\tbar \last-1}{\bz_\sk{\last}}{\bz_{\tk{0}:\tk{\last-1}}} \\
        & = \sum_{\noise_{\tk{0}:\tk{n-1}}} \pdata{\tk{n}}{}{\bx_\tk{n}} \pdata{\sk{\last}\tbar \tk{\last}}{\bx_\tk{\last}}{\bx_\sk{\last}} \\
        & \hspace{2cm} \times \pnoise{\sk{n}}{\bx_\sk{n}}{\noise_\sk{n}} \barpdata{0:\last-2\tbar \last-1}{\bz_\sk{\last}}{\bz_{\tk{0}:\tk{\last-1}}} \eqsp.
    \end{align*}
    Next, assuming that for $k \in \intset{2}{\last-1}$, 
    \begin{equation*} 
        \sum_{\noise_{\tk{0}: \tk{n}}} \barpdata{0:n}{}{\bz_{\tk{0}:\tk{n}}}  = \sum_{\noise_{\tk{0}: \tk{k}}} \pdata{\tk{\last}}{}{\bx_\tk{\last}} \prod_{i = k+1}^\last \pdata{\sk{i}\tbar \tk{i}}{\bx_\tk{i}}{\bx_\sk{i}} \pnoise{\tk{k}}{\bx_\tk{k}}{\noise_\tk{k}} \barpdata{0:k-1\tbar k}{\bz_\tk{k}}{\bz_{\tk{0}:\tk{k-1}}} 
    \end{equation*}
    we get by applying \Cref{lem:lifted-process-marginalization} again,  
        \begin{align*} 
        \sum_{\noise_{\tk{0}: \tk{n}}} \barpdata{0:n}{}{\bz_{\tk{0}:\tk{n}}} & = \sum_{\noise_{\tk{0}: \tk{k-1}}} \pdata{\tk{\last}}{}{\bx_\tk{\last}} \prod_{i = k+1}^\last \pdata{\sk{i}\tbar \tk{i}}{\bx_\tk{i}}{\bx_\sk{i}} \\
        & \hspace{1cm} \times \sum_{\noise_\tk{k}} \pnoise{\tk{k}}{\bx_\tk{k}}{\noise_\tk{k}} \barpdata{\sk{k}\tbar\tk{k}}{\bz_\tk{k}}{\bz_\sk{k}}  \barpdata{0:k-2\tbar k-1}{\bz_\sk{k}}{\bz_{\tk{0}:\tk{k-2}}}  \\
        & = \sum_{\noise_{\tk{0}: \tk{k-1}}} \pdata{\tk{\last}}{}{\bx_\tk{\last}} \prod_{i = k+1}^\last \pdata{\sk{i}\tbar \tk{i}}{\bx_\tk{i}}{\bx_\sk{i}} \\
        & \hspace{1cm} \times \pdata{\tk{k-1}\tbar \tk{k}}{\bx_\tk{k}}{\bx_\tk{k-1}} \pnoise{\tk{k-1}}{\bx_\tk{k-1}}{\noise_\tk{k-1}} \barpdata{0:k-2\tbar k-1}{\bz_\tk{k-1}}{\bz_{\tk{0}:\tk{k-2}}}  \\
        & = \sum_{\noise_{\tk{0}: \tk{k-1}}} \pdata{\tk{\last}}{}{\bx_\tk{\last}} \prod_{i = k}^\last \pdata{\sk{i}\tbar \tk{i}}{\bx_\tk{i}}{\bx_\sk{i}} \\
        & \hspace{1cm} \times \pnoise{\tk{k-1}}{\bx_\tk{k-1}}{\noise_\tk{k-1}} \barpdata{0:k-2\tbar k-1}{\bz_\tk{k-1}}{\bz_{\tk{0}:\tk{k-2}}} 
    \end{align*}
	    which finishes the induction. We hence get for $k = 2$, upon using the identity \eqref{eq:audm-denoiser-mixture-app}, 
    \begin{align*} 
        \sum_{\noise_{\tk{0}: \tk{n}}} \barpdata{0:n}{}{\bz_{\tk{0}:\tk{n}}} & = \sum_{\noise_\tk{0}, \noise_\tk{1}} \pdata{\tk{\last}}{}{\bx_\tk{\last}} \prod_{i = 2}^\last \pdata{\sk{i}\tbar \tk{i}}{\bx_\tk{i}}{\bx_\sk{i}} \pnoise{\tk{1}}{\bx_\tk{1}}{\noise_\tk{1}} \barpdata{0\tbar \tk{1}}{\bz_\tk{1}}{\bz_\tk{0}} \\
        & = \pdata{\tk{\last}}{}{\bx_\tk{\last}} \prod_{i = 2}^\last \pdata{\sk{i}\tbar \tk{i}}{\bx_\tk{i}}{\bx_\sk{i}} \sum_{\noise_\tk{1}} \pnoise{\tk{1}}{\bx_\tk{1}}{\noise_\tk{1}} \pdata{0\tbar \tk{1}}{\bx_\tk{1},\noise_\tk{1}}{\bx_0} \\
        & = \pdata{\tk{\last}}{}{\bx_\tk{\last}} \prod_{i = 1}^\last \pdata{\sk{i}\tbar \tk{i}}{\bx_\tk{i}}{\bx_\sk{i}} \eqsp,
	    \end{align*}
	    which gives the result. 
	\end{proof}

The exact lifted model above is useful conceptually, but it still contains the intractable denoiser $\pdata{0\tbar t}{\bx_t,\noise_t}{}$. Its true one-position reverse transition is obtained by marginalizing the exact lifted kernel over all other positions. Since the bridge and the resampling law are token-wise conditionally on $\bx_0$, this gives
\[
    \txts
    \barpdata{s\tbar t}{\bz_t}{\bx_s^\ell,\noise_s^\ell}[\ell]
    \eqdef
    \sum_{\bz_s^{-\ell}}
    \barpdata{s\tbar t}{\bz_t}{\bz_s}
    =
    \sum_{\bx^\ell_0}
    \pnoise{s\tbar 0}{\bx^\ell_0,\bx^\ell_s}{\noise^\ell_s}
    \fw{s\tbar 0,t}{\bx^\ell_0,\bx^\ell_t}{\bx^\ell_s}[\ell]\,
    \pdata{0\tbar t}{\bx_t,\noise_t}{\bx^\ell_0}[\ell],
\]
where $\pdata{0\tbar t}{\bx_t,\noise_t}{\bx^\ell_0}[\ell] \eqdef \sum_{\bx_0^{-\ell}}\pdata{0\tbar t}{\bx_t,\noise_t}{\bx_0}$. ReAUDM replaces this exact one-position posterior marginal with the factorized approximation induced by \eqref{eq:carry-over-uniform},
\[
    \txts
    \pdata{0\tbar t}{\bx_t,\noise_t}{\bx_0}[\param]
    \eqdef
    \prod_{\ell=1}^{\len}
    \categorical\!\left(\bx_0^\ell;\hat{\bx}^\param_0(\bx_t,t;\noise_t)^\ell\right),
\]
while keeping the UDM bridge and the resampling law \eqref{eq:noise_cond_x0_xs} exact. This gives the parameterized token-wise reverse transition
\[
    \txts
    \barpdata{s\tbar t}{\bz_t}{\bx_s^\ell,\noise_s^\ell}[\param,\ell]
    \eqdef
    \sum_{\bx^\ell_0}
    \pnoise{s\tbar 0}{\bx^\ell_0,\bx^\ell_s}{\noise^\ell_s}
    \fw{s\tbar 0,t}{\bx^\ell_0,\bx^\ell_t}{\bx^\ell_s}[\ell]\,
    \categorical\!\left(\bx^\ell_0;\hat{\bx}^\param_0(\bx_t,t;\noise_t)^\ell\right),
\]
and therefore the full transition factorizes as
\[
    \txts
    \barpdata{s\tbar t}{\bz_t}{\bz_s}[\param]
    \eqdef
    \prod_{\ell=1}^{\len}
    \barpdata{s\tbar t}{\bz_t}{\bx_s^\ell,\noise_s^\ell}[\param,\ell].
\]
Following the same argument as in the main paper, this factorized transition is optimal when the factorized denoiser matches the one-position marginals of the exact noise-conditioned posterior. Indeed, the associated joint model is
\[
    \txts
    \barpdata{0:n}{}{\bz_{\tk{0}:\tk{n}}}[\param]
    =
    \barpdata{\tk{n}}{}{\bz_{\tk{n}}}
    \prod_{i=1}^{\last}
    \barpdata{\sk{i}\tbar \tk{i}}{\bz_{\tk{i}}}{\bz_{\sk{i}}}[\param].
\]
As in the main paper, its discrete time NELBO decomposes into KL divergences between backward transitions:
\begin{multline}
    \txts
    \mathcal L^{\mathrm{ReAUDM}}_n(\param;\bx_0)
    \\=
    \pE\!\left[
        -\log \pdata{0\tbar \tk{1}}{X_{\tk{1}},U_{\tk{1}}}{\bx_0}[\param]
        +
        \sum_{i=1}^{\last}
        \kldivergence{\barpdata{\sk{i}\tbar \tk{i}}{Z_{\tk{i}}}{}}{\barpdata{\sk{i}\tbar \tk{i}}{Z_{\tk{i}}}{}[\param]}
        \,\middle|\, X_0=\bx_0
    \right]
    \eqsp.
\end{multline}
Since $\barpdata{s\tbar t}{}{}$ and $\barpdata{s\tbar t}{}{}[\param]$ have the same backward decomposition except for the absorbing-state denoiser, these KL terms are minimized by matching $\pdata{0\tbar t}{\bx_t,\noise_t}{}$ with its factorized approximation. Equivalently, the optimal factorized approximation is obtained by matching the noise-conditioned posterior marginals at every position, namely
$
\hat{\bx}^\param_0(\bx_t,t;\noise_t)^\ell = \pE[X_0^\ell \mid X_t=\bx_t,U_t=\noise_t]
$,
with the carry-over constraint in \eqref{eq:carry-over-uniform}. In particular, positions satisfying $\bx_t^\ell \neq \noise_t^\ell$ are already optimal by setting $\hat{\bx}^\param_0(\bx_t,t;\noise_t)^\ell=\bx_t^\ell$.
The corresponding sampler is:
\begin{algorithm}[H]
\caption{ReAUDM sampler}
\label{alg:reaudm-sampler}
\begin{algorithmic}
\Require grid $0=\tk{0}<\cdots<\tk{n}=1$
\State Draw $U_{\tk{n}} \sim \pnoise{\tk{n}}{}{}$ and $X_{\tk{n}} = U_\tk{n}$
\For{$i=n{-}1,\dots,0$}
\State Draw $X_0 \sim \pdata{0\tbar \tk{i+1}}{X_{\tk{i+1}},U_{\tk{i+1}}}{}[\param]$
\State Draw $X_{\tk{i}} \sim \fw{\tk{i}\tbar 0,\tk{i+1}}{X_0,X_{\tk{i+1}}}{}$
    \vspace{.1cm}
    \State \emph{In parallel over $\ell$:}
    \State \quad if $X^\ell_{\tk{i}} \neq X^\ell_0$:\quad $U^\ell_{\tk{i}} \gets X^\ell_{\tk{i}}$
    \State \quad else:\quad $U^\ell_{\tk{i}} \sim \categorical\!\left(\frac{X^\ell_0+\alpha_{\tk{i}}(\one-X^\ell_0)}{1+(\vocab-1)\alpha_{\tk{i}}}\right)$
\EndFor
\State \Return $X_{\tk{0}}$
\end{algorithmic}
\end{algorithm}

\subsection{Related works}
    \label{sec:audm-related-works}
\paragraph{Planned denoising \cite{liu2024think}.} 
We now relate AUDM to the planned-denoising parameterization of \cite{liu2024think}. The connection is that both constructions start from the same UDM denoiser, but they expose different latent variables behind it: AUDM keeps the full absorbing state $U$, while planned denoising keeps only the binary mask indicating whether a position has been corrupted. This mask decomposition leads naturally to a planner-denoiser factorization as we now see.  

We may also write an analogous identity to \eqref{eq:audm-denoiser-mixture-app} using a binary noise mask rather than with the absorbing state $U$. Let $N_t\in\{0,1\}^{\len}$ with p.m.f. 
\[
    \nu_t(\bn_t)
    \eqdef
    \prod_{\ell=1}^{\len}
    \mathrm{Bern}(\bn_t^\ell; 1-\alpha_t)
    \eqsp .
\]
Then the UDM forward transition admits the augmented representation
\begin{equation}
    \label{eq:planned-mask-forward}
    \fw{t\tbar 0}{\bx_0}{\bx_t}
    \eqdef
    \sum_{\bn_t}
    \underbrace{\prod_{\ell=1}^{\len}
    \categorical\!\left(
        \bx_t^\ell;
        (1-\bn_t^\ell)\bx_0^\ell+\bn_t^\ell \one / \vocab
    \right)\, \nu_t(\bn_t)}_{\eqdef \fw{t\tbar 0}{\bx_0}{\bx_t, \bn_t}}
    \eqsp .
\end{equation}
We can thus get an equivalent of the identity \eqref{eq:audm-denoiser-mixture-app}: 
\[ 
    \pdata{0\tbar t}{\bx_t}{\bx_0} \eqdef \sum_{\bn_t} \pdata{0\tbar t}{\bx_t, \bn_t}{\bx_0} \, \nu_t(\bn_t | \bx_t) \eqsp, 
\]
where  
\[
    \nu_t(\bn_t | \bx_t)
    \eqdef
    \frac{
        \sum_{\tilde\bx_0}
        \pdata{0}{}{\tilde\bx_0}
        \fw{t\tbar 0}{\tilde\bx_0}{\bx_t,\bn_t}
    }{
        \pdata{t}{}{\bx_t}
    }
    \eqsp
\]
and
\begin{equation}
    \label{eq:planned-mask-posterior}
    \pdata{0\tbar t}{\bx_t,\bn_t}{\bx_0}
    \eqdef
    \frac{\pdata{0}{}{\bx_0}\fw{t\tbar 0}{\bx_0}{\bx_t,\bn_t}}
    {\sum_{\tilde\bx_0}\pdata{0}{}{\tilde\bx_0}\fw{t\tbar 0}{\tilde\bx_0}{\bx_t,\bn_t}}
    =
    \frac{
        \pdata{0}{}{\bx_0}
        \prod_{\ell:\bn_t^\ell=0}\indic_{\bx_0^\ell=\bx_t^\ell}
    }{
        \sum_{\tilde\bx_0}
        \pdata{0}{}{\tilde\bx_0}
        \prod_{\ell:\bn_t^\ell=0}\indic_{\tilde\bx_0^\ell=\bx_t^\ell}
    }
    \eqsp .
\end{equation}
The mask-conditioned posterior has thus the carry-over unmasking property and, in contrast with \eqref{eq:audm-posterior}, it is time-independent. Conditioning on $\bn_t$ instead removes the ambiguity that remains when one only conditions on $U$ and observes $\bx_t^\ell=U^\ell$.

We now focus on the factorized denoiser $\prod_{\ell = 1}^\len \pdata{0\tbar t}{\bx_t}{}[\ell]$. The $\ell$-th posterior marginal $\pdata{0\tbar t}{\bx_t}{}[\ell]$ is  obtained by marginalizing only the $\ell$-th mask variable:
\begin{equation}
    \label{eq:planned-token-denoiser-decomp}
    \txts \pdata{0\tbar t}{\bx_t}{\bx^\ell _0}[\ell]
    =
    \sum_{\bn_t^\ell}
    \pdata{0\tbar t}{\bx_t,\bn_t^\ell}{\bx^\ell _0}[\ell]\,
    \nu_t^\ell(\bn_t^\ell|\bx_t)
\end{equation}
where 
\[
  \txts   \nu_t^\ell(\bn^\ell _t|\bx_t)
    \eqdef
    \sum_{\bn_t^{-\ell}}
    \nu_t(\bn _t|\bx_t)
    \eqsp,
    \quad \bn^\ell _t \in \{ 0, 1 \} 
\]
and, whenever $\nu_t^\ell(\bn^\ell_t|\bx_t)>0$,
\[
    \nu_t^{-\ell}(\bn_t^{-\ell}|\bx_t,\bn^\ell _t)
    \eqdef \nu_t(\bn_t | \bx_t)
     \big / \nu_t^\ell(\bn^\ell _t|\bx_t)
    \eqsp .
\]
and finally the token-wise mask conditional posterior in \eqref{eq:planned-token-denoiser-decomp} is
\[
    \txts
    \pdata{0\tbar t}{\bx_t,\bn^\ell _t}{\bx^\ell _0}[\ell]
    \eqdef
    \sum_{\bn_t^{-\ell}}
    \sum_{\bx^{-\ell} _0}
    \pdata{0\tbar t}{\bx_t, \bn_t}{\bx_0} \, \nu_t^{-\ell}(\bn_t^{-\ell}|\bx_t, \bn^\ell _t)
    \eqsp .
\]
It follows from the carry-over property above that
\[
    \txts
    \pdata{0\tbar t}{\bx_t,\bn^\ell _t = 0}{\bx_0}[\ell]
    =
    \categorical(\bx_0; \bx_t^\ell)
    \eqsp .
\]
Thus the factorized denoiser has the two-component form
\begin{equation}
    \label{eq:planned-factorized-denoiser}
    \txts
    \pdata{0\tbar t}{\bx_t}{\bx_0}[\ell]
    =
    \categorical(\bx^\ell _0; (1-\bm{a}(\bx_t,t)^\ell)\bx_t^\ell + 
    \bm{a}(\bx_t,t)^\ell \bm{d}(\bx_t,t)^\ell)
    \eqsp,
\end{equation}
where the exact \emph{planner} is
\[
    \txts
    \bm{a}(\bx_t,t)^\ell
    \eqdef
    \nu_t^\ell(\bn^\ell _t = 1|\bx_t)
    \eqsp,
\]
and the corresponding one-position denoiser is the probability vector such that 
\[
    \txts
 \categorical(\bx^\ell _0; \bm{d}(\bx_t,t)^\ell) = \pdata{0\tbar t}{\bx_t,\bn^\ell _t = 1}{\bx^\ell _0}[\ell] 
    \eqsp .
\]
The posterior conditioned on the full mask is time-independent, but the practical factorized denoiser is not: both the planner and the one-position denoiser are functions of $\bx_t$ and $t$, because they marginalize over the posterior law of the unobserved mask variables. Thus planned denoising parameterizes
\begin{equation}
    \label{eq:planner-denoiser}
    \denoiser{t}{}{\bx_t}[\param]^\ell
    \eqdef
    \big(1-\bm{a}_\param(\bx_t,t)^\ell\big)\bx_t^\ell
    +
    \bm{a}_\param(\bx_t,t)^\ell \bm{d}_\param(\bx_t,t)^\ell
    \eqsp,
\end{equation}
where $\bm{a}_\param(\bx_t,t)^\ell$ approximates $\bm{a}(\bx_t,t)^\ell$ and $\bm{d}_\param(\bx_t,t)^\ell$ approximates $\bm{d}(\bx_t,t)^\ell$. The continuous-time sampler associated with this parameterization is the maximal-coupling CTMC of \Cref{sec:max-coupling}, not the standard UDM CTMC of \Cref{sec:ctmc}. By \Cref{prop:uniform-max-coupling-rate}, the exact reverse generator is
\[
    \txts
    \rratemat{t}{\bx}{\tilde\bx}[\mathrm{mc}]
    =
    \begin{cases}
        \lambda_t\,
        \pdata{0\tbar t}{\bx}{\tilde\bx^\ell}[\ell]
        & \text{if $\bx^{-\ell}=\tilde\bx^{-\ell}$ and $\bx^\ell\neq \tilde\bx^\ell$,}\\
        0 & \text{otherwise,}
    \end{cases}
    \quad
    \lambda_t\eqdef-\alpha_t^\prime/(1-\alpha_t)
    \eqsp .
\]
Replacing the exact denoiser by $\denoiser{t}{}{\bx}[\param]$ in \eqref{eq:max-coupling-reverse-generator} gives the planned-denoising rate
\begin{equation}
    \label{eq:planned-denoising-rate}
    \txts
    \rratemat{t}{\bx}{\tilde\bx}[\param]
    =
    \begin{cases}
        \lambda_t\,
        \bm{a}_\param(\bx,t)^\ell
        \dotp{\tilde\bx^\ell}{\bm{d}_\param(\bx,t)^\ell}
        & \text{if $\bx^{-\ell}=\tilde\bx^{-\ell}$ and $\bx^\ell\neq \tilde\bx^\ell$,}\\
        0 & \text{otherwise.}
    \end{cases}
    \eqsp ,
\end{equation}
where the carry-over term vanishes because $\tilde\bx^\ell\neq\bx^\ell$. The planner and denoiser are then learned with the cross-entropy losses of Theorem~4.1 in \cite{liu2024think}, rather than with the continuous-time loss \eqref{eq:max-coupling-ctmc-loss}.

There are several differences between this construction and AUDM. First, AUDM conditions on the full noise vector $U$ and does not rely on the mask perspective. As a result, AUDM uses a single noise-conditioned denoiser network, whereas planned denoising splits the marginal denoiser into the planner $\bm{a}_\param$ and denoiser $\bm{d}_\param$. Third, our framework allows us to recover the UDM process using a simple resampling operation and does not require training a planner. 

\paragraph{Reparameterized discrete diffusion models \cite{zheng2024reparameterizeddiscretediffusionmodel}.}
We describe the construction in \cite{zheng2024reparameterizeddiscretediffusionmodel} using our own notation. Their starting point is the exact UDM bridge $\fw{s\tbar 0,t}{x_0,x_t}{}[\ell]$, split according to whether $x_t$ is equal to $x_0$. This is the same split used in \Cref{app:bridge-extension}; with the constants $\gamma_0,\gamma_2$ of \eqref{eq:linear-bridge-extension-udm}, we have 
\begin{equation}    
    \label{eq:decomposed-udm-bridge}
    \fw{s\tbar 0, t}{x_0, x_t}{x_s}[\ell]
    \eqdef \begin{cases}
        (1 - \gamma_0) x_t + \gamma_0 \one / \vocab  & \quad \text{if $x_t = x_0$}\\
        \gamma_2x_0+(1 - \gamma_2) \big(\alpha_{t\tbar s} x_t+ (1 - \alpha_{t\tbar s}) \one/\vocab \big)  & \quad \text{if $x_t \neq x_0$} \eqsp, 
    \end{cases}
\end{equation}
Their Eq. 4 augments this bridge with Bernoulli latent random variables. In the notation above, this can be written as a joint p.m.f. over $x_s$ and $\br_s \in \{0,1\}^{\len \times 2}$:
\[
    \tilde q^\ell_{s\tbar 0,t}(x_s,\br^\ell _s|x_0,x_t)
    \eqdef
     \tilde q^\ell_{s\tbar 0,t}(x_s|x_0,x_t, \br^\ell _s) \, \mathrm{Bern}(\br^{\ell1} _s; 1 - \gamma_0) \mathrm{Bern}(\br^{\ell 2} _s; \gamma_2)
    \]
where 
\[
\tilde{q}^\ell_{s\tbar 0,t}(x_s|x_0,x_t, \br^\ell _s) \eqdef 
    \begin{cases}
        \categorical(x_s;\br^{\ell 1} _s x_t+(1-\br^{\ell 1} _s)\one/\vocab)
        & \quad \text{if $x_t=x_0$}\\
        \categorical(x_s;\br^{\ell 2} _s x_0+(1-\br^{\ell 2} _s)(\alpha_{t\tbar s}x_t+(1-\alpha_{t\tbar s})\one/\vocab))
        & \quad \text{if $x_t\neq x_0$} \eqsp.
    \end{cases}
\]
We emphasize here that marginalizing $\br_s$ recovers the original bridge above and that using \eqref{eq:param_marginalization} with \eqref{eq:decomposed-udm-bridge} yields the same generative model as in \Cref{app:bridge-extension}. Indeed, define the transition: 
\[
    \jpdata{s\tbar t}{\bx_{t}}{\bx_s, \br_{s}}[\param]
    \eqdef
    \prod_{\ell=1}^{\len}
    \sum_{\bx_0^\ell}
    \tilde q^\ell_{s \tbar 0, t}(\bx^\ell _s,\br_s ^\ell|\bx^\ell _0,\bx_{t}^\ell)
    \, \categorical(\bx_0^\ell;\denoiser{t}{}{\bx_{t}}[\param]^\ell)
\]
and the corresponding joint law
\[
    \jpdata{0:n}{}{\bx_{\tk{0:n}},\br_{\tk{0}:\tk{n-1}}}[\param]
    \eqdef
    \pdata{\tk{n}}{}{\bx_{\tk{n}}}
    \prod_{i=1}^{n}
    \jpdata{\sk{i}\tbar \tk{i}}{\bx_{\tk{i}}}{\bx_\sk{i},\br_\sk{i}}[\param]
    \eqsp.
\]
Summing over $\br_{\tk{0}:\tk{n-1}}$ gives exactly the reverse-chain law obtained from \eqref{eq:param_marginalization} and \eqref{eq:decomposed-udm-bridge}, hence the same generative model as in \Cref{app:bridge-extension}. Moreover, under this joint law of $(X_\tk{0:n}, R_{\tk{0:n-1}})$, the random variables $R_\sk{i}$ and $X_{\tk{i}}$ are independent. 

The RDM model in \cite{zheng2024reparameterizeddiscretediffusionmodel} changes this lifted model by adding a dependence of $R_\sk{i}$ on $X_\tk{i}$ and a second latent variable $\mathbf{b}_t$ whose purpose is to substitute the indicators $\langle \bx^\ell _t, \bx^\ell _0 \rangle$ in \eqref{eq:decomposed-udm-bridge}. The considered transitions are instead 
\[
    \begin{aligned}
    & \jpdata{s\tbar t}{\bx_t,\mathbf{b}_t}{x_s^\ell,\br_s,\mathbf{b}_s^\ell}[\param,\ell] \\
    & \hspace{1cm} \eqdef
    \mathrm{Bern}(\br^{\ell1}_s;\rho_{s,t,\param}(\bx_t,\mathbf{b}_t)^{\ell 1})
    \mathrm{Bern}(\br^{\ell2} _s;\rho _{s,t,\param}(\bx_t,\mathbf{b}_t)^{\ell 2})
    \indic\{\mathbf{b}_s^\ell=(\mathbf{b}_t^\ell\wedge \br^{\ell 1} _s)\vee \br^{\ell 2}_s\}
    \\
    &\hspace{1cm} \quad \times
    \begin{cases}
        \categorical(x_s^\ell;\br^{\ell 1} _s \cdot\bx_t^\ell+(1-\br^{\ell 1}_s)\one/\vocab)
        & \quad \text{if $\mathbf{b}_t^\ell=1$}\\
        \categorical(x_s^\ell;\br^{\ell 2}_s \cdot \denoiser{t}{}{\bx_t}[\param]^\ell +(1-\br^{\ell 2}_s)(\alpha_{t\tbar s}\bx_t^\ell+(1-\alpha_{t\tbar s})\one/\vocab))
        & \quad \text{if $\mathbf{b}_t^\ell=0$} \eqsp,
    \end{cases}
    \end{aligned}
\]
and $\rho_{s,t,\param}(\bx_t,\mathbf{b}_t) \in [0, 1]^{\len \times 2}$ are probability vectors computed with a given rule provided in \cite[Section 4.3]{zheng2024reparameterizeddiscretediffusionmodel}. 

These transitions are best viewed as a type of confidence-sampling for UDMs. Despite the latent variables, AUDM and ReAUDM are substantially different, since our data augmentation is part of an exact lifted model: $U$ is sampled from the uniform corruption law, the reverse chain is defined conditionally on this variable, and marginalizing $U$ recovers the UDM reverse-chain law by \Cref{prop:remasked-audm-udm-joint}. By contrast, once the route probabilities are made confidence-dependent, the variables $(\br_s,\mathbf{b}_s)$ in RDM no longer describe a marginally exact augmentation of the UDM reverse process. They are auxiliary decoding variables used to decide which positions should be treated as already denoised, and thus serve a completely different purpose than our data augmentation. 

\paragraph{Discrete non-Markov diffusion models \cite{chen2024fast}. }
We now describe the process used in \cite{chen2024fast} through the bridge it induces. In the UDM case, their construction first samples transition times $\btau\in\rset^\len_{\geq 0}$ with independent coordinates satisfying $\pP(\btau^\ell<t)=1-\alpha_t$. Conditionally on $(X_0,\btau)$, the process evolves in reverse time with $X_1\sim\mathrm{Uniform}(\intset{1}{\vocab})^{\otimes\len}$, and its bridge for $0<s<t<1$ is deterministic at every position:
\[
    \txts
    \fw{s\tbar 0,t}{\bx_0,\bx_t,\btau}{\bx_s}
    =
    \prod_{\ell=1}^{\len}
    \begin{cases}
        \categorical(\bx_s^\ell;\bx_0^\ell)
        & \text{if $s<\btau^\ell\leq t$,}\\
        \categorical(\bx_s^\ell;\bx_t^\ell)
        & \text{otherwise.}
    \end{cases}
    \eqsp .
\]
The corresponding conditional law at time $t$ is
\[
    \txts
    \fw{t\tbar 0}{\bx_0,\btau}{\bx_t}
    =
    \prod_{\ell=1}^{\len}
    \categorical\!\left(
        \bx_t^\ell;
        \indic_{\btau^\ell>t}\bx_0^\ell
        +
        \indic_{\btau^\ell\le t}\one/\vocab
    \right)
    \eqsp,
\]
and marginalizing over $\btau$ recovers the UDM forward transition. Equivalently, with $j(\btau)\eqdef\prod_{\ell=1}^{\len}(-\alpha'_{\btau^\ell})$, these bridge transitions satisfy
\[ 
    \txts
    \int \sum_{\bx_t}
    \fw{s\tbar 0, t}{\bx_0, \bx_t,\btau}{\bx_s}
    \fw{t\tbar 0}{\bx_0,\btau}{\bx_t}
    j(\btau)\,\rmd\btau
    =
    \fw{s\tbar 0}{\bx_0}{\bx_s}
    \eqsp.
\]
Thus this process is marginal preserving.

The transition of the Markovian projection relevant for reverse sampling conditioned on $\btau$ is given by
\[
    \txts
    \pdata{s\tbar t}{\bx_t,\btau}{\bx_s}
    =
    \sum_{\bx_0}
    \fw{s\tbar 0,t}{\bx_0,\bx_t,\btau}{\bx_s}\,
    \pdata{0\tbar t}{\bx_t,\btau}{\bx_0}
    \eqsp .
\]
The posterior appearing in this reverse transition is
\[
    \pdata{0\tbar t}{\bx_t,\btau}{\bx_0}
    =
    \frac{
        \pdata{0}{}{\bx_0}
        \prod_{\ell:\btau^\ell>t}\indic_{\bx_0^\ell=\bx_t^\ell}
    }{
        \sum_{\tilde\bx_0}
        \pdata{0}{}{\tilde\bx_0}
        \prod_{\ell:\btau^\ell>t}\indic_{\tilde\bx_0^\ell=\bx_t^\ell}
    }
    \eqsp,
\]
Writing $\bn_t^\ell\eqdef\indic_{\btau^\ell\leq t}$, this posterior is exactly the full-mask posterior of \eqref{eq:planned-mask-posterior}:
\[
    \txts
    \pdata{0\tbar t}{\bx_t,\btau}{\bx_0}
    =
    \pdata{0\tbar t}{\bx_t,\bn_t}{\bx_0}
    \eqsp .
\]
This makes the link with planned denoising explicit. The transition times define a deterministic planner mask between $s$ and $t$: positions satisfying $s<\btau^\ell\leq t$ are resampled from the denoising posterior through the bridge, while all other positions are carried over from $\bx_t$. Thus the exact denoising object suggested by the Markovian projection is the $\btau$-conditioned posterior $\pdata{0\tbar t}{\bx_t,\btau}{}$. In \cite{chen2024fast}, however, the implemented transition plugs in a standard denoiser $\pdata{0\tbar t}{\bx_t}{}$ which is not conditioned on $\btau$.

%% file: appendix/mudm.tex
\section{Masked Uniform Diffusion}
\label{app:mudm}
This appendix complements \Cref{sec:mudm}. Unless explicitly stated otherwise, all forward and reverse quantities without an additional argument are the ones associated with UDM. We write $j(\btau) \eqdef \prod_{\ell=1}^{\len}(-\alpha'_{\btau^\ell})$ for the density of the transition times.

\begin{lemma}
    \label{lem:mudm-forward-tau}
    For every $t \in [0,1]$,
    \begin{equation}
        \label{eq:mudm-forward-tau}
        \txts
        \fw{t\tbar 0}{\bx_0,\btau}{\bx_t}
        =
        \prod_{\ell = 1}^\len
        \categorical\!\left(
            \bx_t^\ell;
            \indic_{\btau^\ell>t}\bx_0^\ell
            +
            \indic_{\btau^\ell\le t}\one/\vocab
        \right)
        \eqsp .
    \end{equation}
\end{lemma}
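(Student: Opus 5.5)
The proof is a backward induction along the reverse-time dynamics, carried out position by position. Fix $(\bx_0,\btau)$. The bridge $\fw{s\tbar 0,t}{\bx_0,\bx_t,\btau}{}$ factorizes across positions and $X_1 \sim \mathrm{Uniform}(\intset{1}{\vocab})^{\otimes\len}$ is a product measure, so conditionally on $(\bx_0,\btau)$ the coordinates $X_t^\ell$ are independent. It therefore suffices to show, for each $\ell$, that $X^\ell_t$ has law $\categorical(\indic_{\btau^\ell>t}\bx_0^\ell+\indic_{\btau^\ell\le t}\one/\vocab)$, and then take the product.

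For a single coordinate we read off the pathwise description from the bridge. On any interval $(s,t]$ the coordinate is frozen ($X_s^\ell=X_t^\ell$) unless $\btau^\ell\in(s,t]$, in which case it jumps to $\bx_0^\ell$. Starting from $X_1^\ell\sim \mathrm{Uniform}$, this gives two cases.

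\emph{Case $t\ge \btau^\ell$.} No interval $(t,u]$ with $u\le 1$ contains $\btau^\ell$, since $\btau^\ell\le t<$ every left endpoint. Hence $X_t^\ell=X_1^\ell$ is uniform, matching $\one/\vocab$. This requires $\btau^\ell\le 1$, which holds up to the convention for $\btau^\ell>1$ when $\alpha_1\neq 0$; those mass points are handled by reading the case $\btau^\ell> t$ below.

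\emph{Case $t<\btau^\ell\le 1$.} Take any grid $t=u_0<u_1<\dots<u_m=1$, which is consistent with the Markov composition of the bridges. Exactly one interval $(u_{k-1},u_k]$ contains $\btau^\ell$. On it the coordinate is set to $\bx_0^\ell$, and on all earlier-in-reverse-time intervals it is carried over. Hence $X_t^\ell=\bx_0^\ell$ almost surely.

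The main point needing care is that this path description is consistent, meaning independent of the chosen time discretization. This follows because the bridges compose under Chapman–Kolmogorov: in each case the composition of two deterministic carry/jump steps over $(s,u]$ and $(u,t]$ gives the single step over $(s,t]$, since at most one of them contains $\btau^\ell$. With consistency in hand, the statement reduces to evaluating at a single step from $t$ to $1$. That evaluation is exactly the case split above, and taking the product over $\ell$ yields \eqref{eq:mudm-forward-tau}.
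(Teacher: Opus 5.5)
Your proof is correct and is essentially the paper's argument. The paper also runs a backward induction along a grid ending at $t_n=1$, splitting each step according to whether $\btau^\ell\le t_i$, $t_i<\btau^\ell\le t_{i+1}$ or $\btau^\ell>t_{i+1}$; this is your carry-over/jump dichotomy, written for marginals rather than paths.

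The one loose point is your aside on $\btau^\ell>1$. With $X_1$ uniform, no step in $(t,1]$ resets that coordinate, so it stays uniform and the formula actually fails on that event. The clean fix, which the paper also relies on implicitly since its base case at $t_n=1$ needs it, is to assume $\btau^\ell\le 1$ a.s., i.e.\ $\alpha_1=0$.
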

\begin{proof}
Fix a grid $0=\tk{0}<\cdots<\tk{n}=1$. Starting from $\fw{\tk{0}\tbar 0}{\bx_0,\btau}{\bx_{\tk{0}}} = \categorical(\bx_{\tk{0}};\bx_0)$, define recursively
\[
    \fw{\tk{i}\tbar 0}{\bx_0,\btau}{\bx_{\tk{i}}}
    \eqdef
    \sum_{\bx_{\tk{i+1}}}
    \fw{\tk{i}\tbar 0,\tk{i+1}}{\bx_0,\bx_{\tk{i+1}},\btau}{\bx_{\tk{i}}}\,
    \fw{\tk{i+1}\tbar 0}{\bx_0,\btau}{\bx_{\tk{i+1}}}
    \eqsp .
\]
We prove by backward induction on the grid that \eqref{eq:mudm-forward-tau} holds at every time $\tk{i}$. Assume it holds at time $\tk{i+1}$. Since both the bridge and the induction hypothesis factorize over positions, it is enough to compute one position $\ell$:
\[
    \begin{aligned}
        &\sum_{\bx_{\tk{i+1}}^\ell}
        \fw{\tk{i}\tbar 0,\tk{i+1}}{\bx_0^\ell,\bx_{\tk{i+1}}^\ell,\btau^\ell}{\bx_{\tk{i}}^\ell}[\ell]\,
        \categorical\!\left(
            \bx_{\tk{i+1}}^\ell;
            \indic_{\btau^\ell>\tk{i+1}}\bx_0^\ell
            +
            \indic_{\btau^\ell\le \tk{i+1}}\one/\vocab
        \right) \\
        &\qquad =
        \indic_{\btau^\ell\le \tk{i}}
        \sum_{\bx_{\tk{i+1}}^\ell}
        \categorical(\bx_{\tk{i}}^\ell;\bx_{\tk{i+1}}^\ell)\,
        \categorical(\bx_{\tk{i+1}}^\ell;\one/\vocab) \\
        &\qquad\quad +
        \indic_{\tk{i}<\btau^\ell\le \tk{i+1}}
        \sum_{\bx_{\tk{i+1}}^\ell}
        \categorical(\bx_{\tk{i}}^\ell;\bx_0^\ell)\,
        \categorical(\bx_{\tk{i+1}}^\ell;\one/\vocab) \\
        &\qquad\quad +
        \indic_{\btau^\ell>\tk{i+1}}
        \sum_{\bx_{\tk{i+1}}^\ell}
        \categorical(\bx_{\tk{i}}^\ell;\bx_{\tk{i+1}}^\ell)\,
        \categorical(\bx_{\tk{i+1}}^\ell;\bx_0^\ell) \\
        &\qquad =
        \indic_{\btau^\ell\le \tk{i}}\categorical(\bx_{\tk{i}}^\ell;\one/\vocab)
        +
        \indic_{\tk{i}<\btau^\ell\le \tk{i+1}}\categorical(\bx_{\tk{i}}^\ell;\bx_0^\ell)
        +
        \indic_{\btau^\ell>\tk{i+1}}\categorical(\bx_{\tk{i}}^\ell;\bx_0^\ell) \\
        &\qquad =
        \categorical\!\left(
            \bx_{\tk{i}}^\ell;
            \indic_{\btau^\ell>\tk{i}}\bx_0^\ell
            +
            \indic_{\btau^\ell\le \tk{i}}\one/\vocab
        \right)
        \eqsp .
    \end{aligned}
\]
This proves the induction step. Hence \eqref{eq:mudm-forward-tau} holds for every time of the grid, and therefore for every $t$.
\end{proof}

\begin{lemma}
    \label{lem:mudm-denoiser-equivalence}
    If
    \[
        \tilde{\bx}_t(\btau)^\ell
        \eqdef
        \begin{cases}
            \bx_t^\ell & \text{if $\btau^\ell>t$,}\\
            \mask & \text{if $\btau^\ell\le t$,}
        \end{cases}
    \]
    then
    \[
        \pdata{0\tbar t}{\bx_t,\btau}{\bx_0}
        =
        \pdata{0\tbar t}{\tilde{\bx}_t(\btau)}{\bx_0}[\mathrm{mask}]
        \eqsp .
    \]
\end{lemma}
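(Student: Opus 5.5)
The plan is to compute both sides directly from Bayes' formula and check that they have the same unnormalized form as functions of $\bx_0$. Conditionally on $\btau$, the posterior is
\[
\pdata{0\tbar t}{\bx_t,\btau}{\bx_0} \propto \pdata{0}{}{\bx_0}\,\fw{t\tbar 0}{\bx_0,\btau}{\bx_t},
\]
where $\btau$ is independent of $X_0$. This uses the joint law $\pdata{0}{}{\bx_0}\, j(\btau)\,\fw{t\tbar 0}{\bx_0,\btau}{\bx_t}$ given by the construction of \Cref{sec:mudm}.

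By \Cref{lem:mudm-forward-tau}, the likelihood factorizes over positions. A position with $\btau^\ell\le t$ contributes the constant $1/\vocab$, which does not depend on $\bx_0^\ell$ and so drops from the normalization. A position with $\btau^\ell>t$ contributes $\indic_{\bx_0^\ell=\bx_t^\ell}$. Hence
\[
\pdata{0\tbar t}{\bx_t,\btau}{\bx_0}\propto \pdata{0}{}{\bx_0}\prod_{\ell:\btau^\ell>t}\indic_{\bx_0^\ell=\bx_t^\ell},
\]
which recovers the expression already displayed at the end of \Cref{sec:audm-related-works}.

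Next I would compute the MDM posterior at $\tilde\bx_t(\btau)$. The MDM forward kernel gives:
\begin{itemize}
\item on positions with $\tilde\bx_t(\btau)^\ell=\mask$, the factor $\fw{t\tbar 0}{\bx_0^\ell}{\mask}[\ell]=1-\alpha_t$, constant in $\bx_0^\ell$;
\item on positions with $\tilde\bx_t(\btau)^\ell=\bx_t^\ell\neq\mask$, the factor $\alpha_t\indic_{\bx_0^\ell=\bx_t^\ell}$.
\end{itemize}
Here I use that $\pdata{0}{}{}$ puts no mass on mask tokens and that $\bx_t^\ell\in\msv$ is a genuine token (in MDM the mask is a separate state). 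Dropping the constants $(1-\alpha_t)$ and $\alpha_t$, the MDM posterior is proportional to the same product $\pdata{0}{}{\bx_0}\prod_{\ell:\btau^\ell>t}\indic_{\bx_0^\ell=\bx_t^\ell}$. Normalizing both sides over $\bx_0$ then gives equality.

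The main obstacle is bookkeeping rather than substance, and there are three points to settle.
\begin{itemize}
\item The vocabularies must be matched: the MDM is defined on $\msv\cup\{\mask\}$ with $p_0$ supported on non-mask tokens. If the mask is literally $e_\vocab$, one needs $\bx_t^\ell\neq\mask$ on the unmasked positions, which should be stated as the convention of the augmented vocabulary.
\item Both posteriors must be well defined. The normalizing constant $\sum_{\tilde\bx_0}\pdata{0}{}{\tilde\bx_0}\prod_{\ell:\btau^\ell>t}\indic_{\tilde\bx_0^\ell=\bx_t^\ell}$ is the same for both and must be positive. I would restrict to such $(\bx_t,\btau)$, which is exactly the support of the joint law; off the support, both are defined by the same convention.
\item The boundary $\btau^\ell=t$ needs no care, since it has probability zero and the convention $\btau^\ell\le t\Rightarrow$ masked matches \Cref{lem:mudm-forward-tau}.
\end{itemize}
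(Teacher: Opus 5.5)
Your proposal is correct and follows the paper's proof. Both sides are computed by Bayes' rule: the $\btau$-conditioned likelihood from \Cref{lem:mudm-forward-tau} reduces the left side to $\pdata{0}{}{\bx_0}\prod_{\ell:\btau^\ell>t}\indic_{\bx_0^\ell=\bx_t^\ell}$ up to normalization, and this is matched with the MDM posterior at $\tilde{\bx}_t(\btau)$. You add two things the paper leaves implicit: an explicit derivation of the MDM posterior from the MDM kernel, and the remark that $\mask$ must be a symbol distinct from every token of $\msv$, which the paper's step ``$\tilde{\bx}_t(\btau)^\ell\neq\mask$ iff $\btau^\ell>t$'' silently requires.
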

\begin{proof}
By Bayes' rule and \eqref{eq:mudm-forward-tau},
\[
    \pdata{0\tbar t}{\bx_t,\btau}{\bx_0}
    =
    \frac{
        \pdata{0}{}{\bx_0}\fw{t\tbar 0}{\bx_0,\btau}{\bx_t}
    }{
        \sum_{\tilde{\bx}_0}
        \pdata{0}{}{\tilde{\bx}_0}\fw{t\tbar 0}{\tilde{\bx}_0,\btau}{\bx_t}
    }
    =
    \frac{
        \pdata{0}{}{\bx_0}
        \prod_{\ell:\btau^\ell>t}\indic_{\bx_0^\ell=\bx_t^\ell}
    }{
        \sum_{\tilde{\bx}_0}
        \pdata{0}{}{\tilde{\bx}_0}
        \prod_{\ell:\btau^\ell>t}\indic_{\tilde{\bx}_0^\ell=\bx_t^\ell}
    }
    \eqsp .
\]
Since $\tilde{\bx}_t(\btau)^\ell \neq \mask$ if and only if $\btau^\ell>t$, and then necessarily $\tilde{\bx}_t(\btau)^\ell=\bx_t^\ell$, we have
\[
    \prod_{\ell:\btau^\ell>t}\indic_{\bx_0^\ell=\bx_t^\ell}
    =
    \prod_{\ell:\tilde{\bx}_t(\btau)^\ell\neq\mask}
    \indic_{\bx_0^\ell=\tilde{\bx}_t(\btau)^\ell}
    \eqsp,
\]
and similarly in the denominator. Therefore
\[
    \pdata{0\tbar t}{\bx_t,\btau}{\bx_0}
    =
    \frac{
        \pdata{0}{}{\bx_0}
        \prod_{\ell:\tilde{\bx}_t(\btau)^\ell\neq\mask}
        \indic_{\bx_0^\ell=\tilde{\bx}_t(\btau)^\ell}
    }{
        \sum_{\tilde{\bx}_0}
        \pdata{0}{}{\tilde{\bx}_0}
        \prod_{\ell:\tilde{\bx}_t(\btau)^\ell\neq\mask}
        \indic_{\tilde{\bx}_0^\ell=\tilde{\bx}_t(\btau)^\ell}
    }
    =
    \pdata{0\tbar t}{\tilde{\bx}_t(\btau)}{\bx_0}[\mathrm{mask}]
    \eqsp .
\]
\end{proof}

Using the $\btau$-conditioned bridge and reverse transition introduced in \Cref{sec:mudm}, we denote by $j_t(\btau|\bx_t)$ the posterior density of $\btau$ given $X_t=\bx_t$, namely
\[
    j_t(\btau|\bx_t)
    \eqdef
    \sum_{\bx_0}
    \frac{
        \pdata{0}{}{\bx_0}\fw{t\tbar 0}{\bx_0,\btau}{\bx_t}
    }{
        \pdata{t}{}{\bx_t}
    }
    j(\btau)
    \eqsp ,
\]
then the exact UDM denoising posterior decomposes as
\begin{equation}
    \label{eq:mudm-udm-denoising-posterior-mixture}
    \txts
    \pdata{0\tbar t}{\bx_t}{\bx_0}
    =
    \int
    \pdata{0\tbar t}{\bx_t,\btau}{\bx_0}\,
    j_t(\btau|\bx_t)
    \eqsp .
\end{equation}
This is the exact $\btau$-mixture identity behind the masked-diffusion view of UDM. The difficulty is that $j_t(\cdot|\bx_t)$ is not tractable, so it cannot be used directly in a sampler.

To obtain an exact lifted chain with a tractable resampling step, we instead condition on $(X_0,X_s)$ and define
\begin{equation}
    \label{eq:mudm-tau-cond-x0-xs}
    j_{s\tbar 0}(\btau_s|\bx_0,\bx_s)
    \eqdef
    \frac{
        j(\btau_s)\fw{s\tbar 0}{\bx_0,\btau_s}{\bx_s}
    }{
        \fw{s\tbar 0}{\bx_0}{\bx_s}
    }
    =
    \prod_{\ell=1}^{\len}
    \begin{cases}
        \frac{-\alpha'_{\btau^\ell_s}}{1-\alpha_s}\indic_{\btau^\ell_s\le s}
        & \bx^\ell_s \neq \bx^\ell_0, \\
        \frac{-\alpha'_{\btau^\ell_s}}{1+(\vocab-1)\alpha_s}
        \big(
            \indic_{\btau^\ell_s\le s}
            +
            \vocab\indic_{\btau^\ell_s>s}
        \big)
        & \bx^\ell_s = \bx^\ell_0.
    \end{cases}
\end{equation}
If $\bx^\ell_s \neq \bx^\ell_0$, then the transition must already have occurred before time $s$, so $\btau^\ell_s$ is sampled from the prior restricted to $[0,s]$. If $\bx^\ell_s = \bx^\ell_0$, then either the position has not transitioned yet, or it has already transitioned and the uniform resampling happened to return $\bx^\ell_0$. The law $j_{s\tbar 0}$ exactly reweights these two possibilities.

The lifted reverse transition is then
\begin{equation}
    \label{eq:mudm-lifted-reverse-tau}
    \barpdata{s\tbar t}{\bx_t,\btau_t}{\bx_s,\btau_s}
    \eqdef
    \sum_{\bx_0}
    j_{s\tbar 0}(\btau_s|\bx_0,\bx_s)\,
    \fw{s\tbar 0,t}{\bx_0,\bx_t}{\bx_s}\,
    \pdata{0\tbar t}{\bx_t,\btau_t}{\bx_0}\, 
    \eqsp .
\end{equation}
For a grid $0=\tk{0}<\cdots<\tk{n}=1$, let $\barpdata{0:n}{}{}$ denote the corresponding path law, with initialization $\barpdata{\tk{n}}{}{\bx_{\tk{n}},\btau_{\tk{n}}} \eqdef \pdata{\tk{n}}{}{\bx_{\tk{n}}} j_{\tk{n}}(\btau_{\tk{n}}|\bx_{\tk{n}})$. If $\alpha_{\tk{n}}=0$, this reduces to $X_{\tk{n}} \sim \bpiu^{\otimes \len}$ and $\btau_{\tk{n}} \sim j$.

\begin{algorithm}[t]
\caption{Resampled MUDM}
\label{alg:mudm-flag-resampling}
\begin{algorithmic}
\Require grid $0=\tk{0}<\cdots<\tk{n}=1$
\State Draw $X_{\tk{n}} \sim \pdata{\tk{n}}{}{\cdot}$ and $\btau_{\tk{n}} \sim j_{\tk{n}}(\cdot|X_{\tk{n}})$
\For{$i=n{-}1,\dots,0$}
    \State Draw $X_0 \sim \pdata{0\tbar \tk{i+1}}{\tilde X_{\tk{i+1}}(\btau_{\tk{i+1}})}{}[\param, \mathrm{mask}]$
    \State Draw $X_{\tk{i}} \sim \fw{\tk{i}\tbar 0,\tk{i+1}}{X_0,X_{\tk{i+1}}}{}$
    \For{$\ell=1,\dots,\len$}
        \If{$X^\ell_{\tk{i}} \neq X^\ell_0$}
            \State Draw $\tau^\ell_{\tk{i}}$ with density $\frac{-\alpha'_{\tau}}{1-\alpha_{\tk{i}}}\indic_{\tau\le \tk{i}}$
        \Else
            \State Draw $\tau^\ell_{\tk{i}}$ with density $\frac{-\alpha'_{\tau}}{1+(\vocab-1)\alpha_{\tk{i}}}\big(\indic_{\tau\le \tk{i}}+\vocab\indic_{\tau>\tk{i}}\big)$
        \EndIf
    \EndFor
\EndFor
\State \Return $X_{\tk{0}}$
\end{algorithmic}
\end{algorithm}

\begin{mdframed}[style=propFrame]
    \begin{proposition}
        \label{prop:mudm-udm-joint}
        The trajectory generated by \Cref{alg:mudm-flag-resampling} satisfies
        \[
            \law(X_{\tk{0}},\ldots,X_{\tk{n}})
            =
            \pdata{0:n}{}{}
            \eqsp ,
        \]
        where $\pdata{0:n}{}{}$ is the UDM reverse-chain law.
    \end{proposition}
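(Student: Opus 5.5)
We mirror the proof of \Cref{prop:remasked-audm-udm-joint} (i.e.\ \Cref{lem:lifted-process-marginalization} and \Cref{prop:lifted-marginal}), replacing the discrete absorbing variable $\noise$ by the continuous transition times $\btau$ and sums over $\noise$ by integrals against $\rmd\btau$. First, by \Cref{lem:mudm-denoiser-equivalence}, the denoiser used in \Cref{alg:mudm-flag-resampling} (taking the exact masked denoiser) equals $\pdata{0\tbar t}{\bx_t,\btau_t}{}$. Hence the algorithm samples exactly from the lifted chain with initialization $\pdata{\tk{n}}{}{\bx_{\tk{n}}} j_{\tk{n}}(\btau_{\tk{n}}|\bx_{\tk{n}})$ and transitions \eqref{eq:mudm-lifted-reverse-tau}. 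The last step is handled by the denoiser draw alone, since only $X_{\tk{0}}$ is returned.

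The key step is the analogue of \Cref{lem:lifted-process-marginalization}:
\[
\int \barpdata{s\tbar t}{\bx_t,\btau_t}{\bx_s,\btau_s}\, j_t(\btau_t|\bx_t)\,\rmd\btau_t = \pdata{s\tbar t}{\bx_t}{\bx_s}\, j_s(\btau_s|\bx_s) .
\]
To prove it, integrate $\btau_t$ inside the sum over $\bx_0$. The mixture identity \eqref{eq:mudm-udm-denoising-posterior-mixture} turns $\int \pdata{0\tbar t}{\bx_t,\btau_t}{\bx_0}\, j_t(\btau_t|\bx_t)\,\rmd\btau_t$ into $\pdata{0\tbar t}{\bx_t}{\bx_0}=\pdata{0}{}{\bx_0}\fw{t\tbar 0}{\bx_0}{\bx_t}/\pdata{t}{}{\bx_t}$. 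Next, substitute \eqref{eq:mudm-tau-cond-x0-xs} for $j_{s\tbar 0}$. The UDM bridge identity $\fw{s\tbar 0,t}{\bx_0,\bx_t}{\bx_s}\fw{t\tbar 0}{\bx_0}{\bx_t}=\fw{s\tbar 0}{\bx_0}{\bx_s}\fw{t\tbar s}{\bx_s}{\bx_t}$ then cancels the factor $\fw{s\tbar 0}{\bx_0}{\bx_s}$ in the denominator. What remains is
\[
\frac{\fw{t\tbar s}{\bx_s}{\bx_t}}{\pdata{t}{}{\bx_t}}\sum_{\bx_0}\pdata{0}{}{\bx_0}\fw{s\tbar 0}{\bx_0,\btau_s}{\bx_s}\, j(\btau_s) .
\]
The sum times $j(\btau_s)$ is exactly $\pdata{s}{}{\bx_s}\, j_s(\btau_s|\bx_s)$ by the definition of $j_s$. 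Multiplying and dividing by $\pdata{s}{}{\bx_s}$ and applying Bayes' rule, $\pdata{s}{}{\bx_s}\fw{t\tbar s}{\bx_s}{\bx_t}/\pdata{t}{}{\bx_t}=\pdata{s\tbar t}{\bx_t}{\bx_s}$, gives the claim.

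Finally, iterate this identity backward along the grid, exactly as in the induction of \Cref{prop:lifted-marginal}. Start from the initialization $\pdata{\tk{n}}{}{}\, j_{\tk{n}}(\cdot|\bx_{\tk{n}})$ and integrate out $\btau_{\tk{n}},\btau_{\tk{n-1}},\dots$ one at a time; each integration replaces a lifted kernel by the UDM reverse kernel and propagates $j$ to the next time. At the last step, $\int \pdata{0\tbar \tk{1}}{\bx_{\tk{1}},\btau}{\bx_0}\, j_{\tk{1}}(\btau|\bx_{\tk{1}})\,\rmd\btau=\pdata{0\tbar\tk{1}}{\bx_{\tk{1}}}{\bx_0}$, again by \eqref{eq:mudm-udm-denoising-posterior-mixture}.

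The main obstacle is verifying the closed form of $j_{s\tbar 0}$ and its consistency with Bayes' rule. One needs $\int j(\btau)\fw{s\tbar 0}{\bx_0,\btau}{\bx_s}\rmd\btau=\fw{s\tbar 0}{\bx_0}{\bx_s}$, which follows from \Cref{lem:mudm-forward-tau} together with $\pP(\btau^\ell\le s)=1-\alpha_s$. Checking the normalizing constants $1-\alpha_s$ and $1+(\vocab-1)\alpha_s$ (the latter equal to $\vocab$ times $\alpha_s+(1-\alpha_s)/\vocab$) is routine. Measure-theoretic care is also needed, since $\btau$ has a density and the boundary conventions at $\btau=s$ are null sets.
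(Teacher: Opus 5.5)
Your proposal is correct and follows the paper's proof essentially step for step. The paper also reduces the claim to the one-step identity $\int \bar p_{s\mid t}(\bx_s,\btau_s\mid\bx_t,\btau_t)\, j_t(\btau_t\mid\bx_t)\,\rmd\btau_t = p_{s\mid t}(\bx_s\mid\bx_t)\, j_s(\btau_s\mid\bx_s)$, proves it with the $\btau$-mixture identity for the denoiser, the UDM bridge identity and the closed form of $j_{s\mid 0}$, and then iterates along the grid as in the ReAUDM argument. Your explicit use of the masked-denoiser equivalence lemma and your normalization check for $j_{s\mid 0}$ make explicit steps that the paper leaves implicit.
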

\end{mdframed}
\begin{proof}
Similar to the proof of Proposition \ref{prop:remasked-audm-udm-joint}, it is enough to prove the one-step identity:
\[
    \int
    \barpdata{s\tbar t}{\bx_t,\btau_t}{\bx_s,\btau_s}\,
    j_t(\btau_t|\bx_t)
    =
    \pdata{s\tbar t}{\bx_t}{\bx_s}\,
    j_s(\btau_s|\bx_s)
    \eqsp .
\]
Using \eqref{eq:mudm-lifted-reverse-tau}, we obtain
\[
    \begin{aligned}
        &\int
        \barpdata{s\tbar t}{\bx_t,\btau_t}{\bx_s,\btau_s}\,
        j_t(\btau_t|\bx_t) \\
        &\qquad =
        \sum_{\bx_0}
        j_{s\tbar 0}(\btau_s|\bx_0,\bx_s)\,
        \fw{s\tbar 0,t}{\bx_0,\bx_t}{\bx_s}
        \int
        \pdata{0\tbar t}{\bx_t,\btau_t}{\bx_0}\,
        j_t(\btau_t|\bx_t)
        \eqsp .
    \end{aligned}
\]
By \eqref{eq:mudm-udm-denoising-posterior-mixture}, the last integral is $\pdata{0\tbar t}{\bx_t}{\bx_0}$. Therefore
\[
    \int
    \barpdata{s\tbar t}{\bx_t,\btau_t}{\bx_s,\btau_s}\,
    j_t(\btau_t|\bx_t)
    =
    \sum_{\bx_0}
    j_{s\tbar 0}(\btau_s|\bx_0,\bx_s)\,
    \fw{s\tbar 0,t}{\bx_0,\bx_t}{\bx_s}
    \pdata{0\tbar t}{\bx_t}{\bx_0}
    \eqsp .
\]
Now use $\pdata{0\tbar t}{\bx_t}{\bx_0} = \pdata{0}{}{\bx_0}\fw{t\tbar 0}{\bx_0}{\bx_t}\big/\pdata{t}{}{\bx_t}$ and the UDM bridge identity
\[
    \fw{s\tbar 0,t}{\bx_0,\bx_t}{\bx_s}\,
    \fw{t\tbar 0}{\bx_0}{\bx_t}
    =
    \fw{s\tbar 0}{\bx_0}{\bx_s}\,
    \fw{t\tbar s}{\bx_s}{\bx_t}
    \eqsp .
\]
Substituting \eqref{eq:mudm-tau-cond-x0-xs} then gives
\[
    \int
    \barpdata{s\tbar t}{\bx_t,\btau_t}{\bx_s,\btau_s}\,
    j_t(\btau_t|\bx_t)
    =
    \frac{\fw{t\tbar s}{\bx_s}{\bx_t}}{\pdata{t}{}{\bx_t}}
    \sum_{\bx_0}
    \pdata{0}{}{\bx_0}\,
    j(\btau_s)\,
    \fw{s\tbar 0}{\bx_0,\btau_s}{\bx_s}
    \eqsp .
\]
The sum on the right-hand side is exactly the joint forward law of $(X_s,\btau_s)$, hence it is equal to $\pdata{s}{}{\bx_s} j_s(\btau_s|\bx_s)$. We conclude that
\[
    \begin{aligned}
        &\int
        \barpdata{s\tbar t}{\bx_t,\btau_t}{\bx_s,\btau_s}\,
        j_t(\btau_t|\bx_t) \\
        &\qquad =
        \frac{\pdata{s}{}{\bx_s}\fw{t\tbar s}{\bx_s}{\bx_t}}{\pdata{t}{}{\bx_t}}
        j_s(\btau_s|\bx_s) \\
        &\qquad =
        \pdata{s\tbar t}{\bx_t}{\bx_s}\,
        j_s(\btau_s|\bx_s)
        \eqsp .
    \end{aligned}
\]
This proves the one-step identity. Iterating along the grid yields the claimed equality of the $X$-trajectory law with the UDM reverse-chain law.
\end{proof}

%% file: appendix/bridge_linearization.tex
\section{On the role of the bridge extension in the plug-in parameterization}
\label{app:bridge-extension}

The plug-in parameterization \ref{eq:param_plug_in} is only well defined once one specifies how the token-wise bridge $\fw{s\tbar 0,t}{\cdot,\cdot}{x_s}[\ell]$, originally defined for pairs of one-hot vectors, is extended to a first argument in the simplex $\Simplex{\vocab}$. As a consequence, different extensions coinciding on one-hot inputs define the same exact bridge, while leading to different plug-in parameterizations once the network output is interpreted as a probability vector.

The extension used in the main text is the ``canonical'' one obtained by keeping the Bayes formula \eqref{eq:bridge-prob} for $\mu \in \Simplex{\vocab}$, when it is possible. In the UDM case, this yields
\[
\fw{s\tbar 0,t}{\mu,x_t}{}[\ell]
=
\categorical\!\left(
    \cdot;
    \frac{
        \vocab \alpha_t \dotp{x_t}{\mu}x_t
        +
        (\alpha_{t\tbar s}-\alpha_t)x_t
        +
        (\alpha_s-\alpha_t)\mu
        +
        D_{s,t}\one/\vocab
    }{
        \vocab \alpha_t \dotp{x_t}{\mu} + 1-\alpha_t
    }
\right),
\]
which is nonlinear in $\mu$ because of the denominator. This is precisely the extension for which \Cref{prop:loo} shows that the minimizer of the expected NELBO is the leave-one-out posterior.

Another admissible choice is the barycentric extension obtained by averaging the exact bridge over a clean token drawn from $\mu$. Since every $\mu \in \Simplex{\vocab}$ admits the decomposition $\mu = \sum_{x_0 \in \msv} \dotp{x_0}{\mu} x_0$, this extension is uniquely characterized by
\begin{equation}
\label{eq:linear-bridge-extension}
\barfw{s\tbar 0,t}{\mu,x_t}{}[\ell]
\eqdef
\sum_{x_0 \in \msv}
\dotp{x_0}{\mu}
\fw{s\tbar 0,t}{x_0,x_t}{}[\ell].
\end{equation}
By construction, \eqref{eq:linear-bridge-extension} coincides with the exact bridge on the vertices of the simplex and is affine in $\mu$. In MDM, because the bridge is already affine, this extension coincides with the canonical one. In UDM, the two differ.

For UDM, \eqref{eq:linear-bridge-extension} admits a simple closed form. Define
\[
\gamma_0 \eqdef \frac{D_{s,t}}{1+(\vocab-1)\alpha_t},
\quad
\gamma_1 \eqdef \frac{\alpha_{t\tbar s}-\alpha_t}{1-\alpha_t},
\quad
\gamma_2 \eqdef \frac{\alpha_s-\alpha_t}{1-\alpha_t},
\quad
\gamma_3 \eqdef \frac{D_{s,t}}{1-\alpha_t},
\]
and write $x_t = \onehot{k}$ and $\mu_k \eqdef \dotp{\onehot{k}}{\mu}$. Then
\begin{equation}
\label{eq:linear-bridge-extension-udm}
\begin{aligned}
\barfw{s\tbar 0,t}{\mu,\onehot{k}}{}[\ell]
&=
\categorical\!\left(
    \cdot;
    \gamma_2\mu
    +
    \left(\gamma_1 + (\gamma_3-\gamma_0)\mu_k\right)\onehot{k}
    +
    \left(\gamma_3 + (\gamma_0-\gamma_3)\mu_k\right)\frac{\one}{\vocab}
\right).
\end{aligned}
\end{equation}
Indeed, when $x_t = x_0$, the exact UDM bridge is
\[
\fw{s\tbar 0,t}{x_t,x_t}{}[\ell]
=
\categorical\!\left(
    \cdot;
    \left(
        1-\frac{D_{s,t}}{1+(\vocab-1)\alpha_t}
    \right)x_t
    +
    \frac{D_{s,t}}{1+(\vocab-1)\alpha_t}\frac{\one}{\vocab}
\right),
\]
whereas for $x_0 \neq x_t$ it is
\[
\fw{s\tbar 0,t}{x_0,x_t}{}[\ell]
=
\categorical\!\left(
    \cdot;
    \frac{\alpha_s-\alpha_t}{1-\alpha_t}x_0
    +
    \frac{\alpha_{t\tbar s}-\alpha_t}{1-\alpha_t}x_t
    +
    \frac{D_{s,t}}{1-\alpha_t}\frac{\one}{\vocab}
\right),
\]
and averaging these two cases against $\mu$ gives \eqref{eq:linear-bridge-extension-udm}.

This extension makes the denoiser reappear as the plug-in optimum. Indeed, for every $\ell \in \{1,\dots,\len\}$ and every $\bx_t \in \msx$,
\begin{align*}
\pdata{s\tbar t}{\bx_t}{}[\ell]
&=
\sum_{x_0 \in \msv}
\barfw{s\tbar 0,t}{x_0,\bx_t^\ell}{}[\ell]
\pdata{0\tbar t}{\bx_t}{x_0}[\ell] \\
&=
\barfw{s\tbar 0,t}{\matrixdenoiser{t}{}{\bx_t}^\ell,\bx_t^\ell}{}[\ell],
\end{align*}
where the second line is exactly the definition of the barycentric extension with $\mu = \matrixdenoiser{t}{}{\bx_t}^\ell$. Therefore, if one defines the plug-in reverse kernel by
\[
\hpdata{s\tbar t}{\bx_t}{\bx_s}[\param]
\eqdef
\prod_{\ell=1}^{\len}
\barfw{s\tbar 0,t}{\denoiser{t}{}{\bx_t}[\param]^\ell,\bx_t^\ell}{\bx_s^\ell}[\ell],
\]
then the argument used for the marginalization parameterization applies, and shows that the denoising posterior $\pdata{0\tbar t}{\bx_t}{}[\ell]$ is a minimizer of the expected NELBO, i.e., $\denoiser{t}{}{\bx_t}[\param_\star] = \matrixdenoiser{t}{}{\bx_t}$. 

This observation clarifies the role of \Cref{prop:loo}. The difference between the denoiser and the leave-one-out posterior is not caused by the plug-in idea itself, but by the specific extension of the bridge used to interpret a probability vector as a clean-token argument. The canonical nonlinear extension leads to the leave-one-out target, whereas the barycentric affine extension leads back to the denoiser.

A useful consequence of this observation is that it makes it simple to express the continuous-time ELBO of the marginalization parameterization directly as a plug-in of the denoiser.

We could derive this ELBO by taking $T \to \infty$ in the discrete-time ELBO, but in order to emphasize the role of the extension, we will derive it here using the CTMC point of view (see \Cref{sec:ctmc}). Again, any extension of the conditional rate matrix that coincides on one-hots gives a similar process. Here the conditional rate of the CTMC, corresponding to the linear-bridge extension, is itself linear and is given by
\[
\ratemat{t}{x, \mu}{\tilde x}[\mathrm{lin}, \ell]
\eqdef
\ratemat{t}{x}{\tilde x}[\ell]
\left(
1
-
\frac{\vocab\alpha_t}{1+(\vocab-1)\alpha_t}\dotp{x}{\mu}
+
\frac{\vocab\alpha_t}{1-\alpha_t}\dotp{\tilde x}{\mu}
\right).
\]
where the UDM forward rate is $\ratemat{t}{x}{\tilde x}[\ell]=\beta_t/\vocab$ with $\beta_t = \frac{-\alpha_t'}{\alpha_t}$. This formula can be derived by linearizing $x_0 \mapsto \frac{\fw{t \tbar 0}{x_0}{\tilde x}[\ell]}{\fw{t \tbar 0}{x_0}{x}[\ell]}$, or by taking the limit when $s \to t$ in the linear-bridge extension \eqref{eq:linear-bridge-extension-udm}.

The corresponding continuous-time marginalization NELBO is (see \Cref{sec:ctmc} for the general case)
\begin{equation}
\label{eq:marginalization-ct-elbo-linear-bridge}
\begin{aligned}
\mathcal L^{\mathrm{marg}}_\infty(\param)
&\eqdef
\pE\!\left[
    \kldivergence{\fw{1\tbar 0}{X_0}{}}{\pdata{1}{}{}}
\right]
\\
&\quad
+
\int_0^1
\frac{\beta_t}{\vocab}\,
\pE\!\bigg[
    \sum_{\ell=1}^{\len}
    \sum_{\tilde x \neq X_t^\ell}
    \Phi\!\bigg(
        1
        -
        \frac{\vocab\alpha_t}{1+(\vocab-1)\alpha_t}\dotp{X_t^\ell}{X_0^\ell}
        +
        \frac{\vocab\alpha_t}{1-\alpha_t}\dotp{\tilde x}{X_0^\ell},
        \\
        &\hspace{0.4cm}
        1
        -
        \frac{\vocab\alpha_t}{1+(\vocab-1)\alpha_t}\dotp{X_t^\ell}{\denoiser{t}{}{X_t}[\param]^\ell}
        +
        \frac{\vocab\alpha_t}{1-\alpha_t}\dotp{\tilde x}{\denoiser{t}{}{X_t}[\param]^\ell}
    \bigg)
\bigg]\rmd t .
\end{aligned}
\end{equation}
Here $\Phi(a,b)\eqdef b-a+a\log(a/b)$, and the expectation is with respect to $X_0 \sim \pdata{0}{}{}$ and $X_t \sim \fw{t\tbar 0}{X_0}{}$. Since $\ratemat{t}{x, \mu}{\tilde x}[\mathrm{lin}, \ell]$ is affine in $\mu$, the exact rate averaged over the conditional law of $X_0^\ell$ given $X_t$ is obtained by replacing $X_0^\ell$ with the denoising posterior $\pdata{0\tbar t}{X_t}{}[\ell]$, which is why \eqref{eq:marginalization-ct-elbo-linear-bridge} targets the denoiser.

In order to understand the duality between linearization of the considered bridge extension and parameterization (marginalization or plug-in), it is worth noting that the same expression can be obtained from the canonical plug-in parameterization if the network is understood to parameterize a denoiser and converted to the leave-one-out representation before it is plugged into the bridge. More precisely, let $\loodenoiser{t}{}{\bx_t}[\param]$ be obtained from $\matrixdenoiser{t}{}{\bx_t}$ by the inverse conversion \eqref{eq:loo-from-denoiser-general}. In UDM, substituting this $\loodenoiser{t}{}{\bx_t}[\param]$ into the canonical plug-in CTMC rate $\ratemat{t}{\bx_t^\ell}{\tilde x}[\ell]\fw{t\tbar 0}{\loodenoiser{t}{}{\bx_t}^\ell}{\tilde x}[\ell]/\fw{t\tbar 0}{\loodenoiser{t}{}{\bx_t}^\ell}{\bx_t^\ell}[\ell]$ gives exactly $\ratemat{t}{x, \matrixdenoiser{t}{}{\bx_t}^\ell}{\tilde x}[\mathrm{lin}, \ell]$. Thus the linear-bridge marginalization ELBO and the $T\to\infty$ limit of the converted plug-in ELBO are the same generalized-KL objective, written in denoiser coordinates.


%% file: appendix/predictor_corrector.tex
\section{Leave-one-out and predictor-corrector}
\label{app:predictor-corrector}

\paragraph{Gibbs samplers.}
Let $\pi$ be a probability distribution on a product space $\mathcal{X}_1 \times \cdots \times \mathcal{X}_{\len}$. For each coordinate $\ell \in \{1,\dots,\len\}$, denote by $\pi(x^\ell \mid x^{-\ell})$ the conditional distribution of the $\ell$-th coordinate given all the others. The Gibbs sampler associated with $\pi$ is the Markov chain obtained by repeatedly choosing a coordinate $\ell$ and replacing only $x^\ell$ by a fresh draw from $\pi(\cdot \mid x^{-\ell})$, while keeping $x^{-\ell}$ fixed. Equivalently, the one-coordinate Gibbs kernel is
\[
K_\ell(x,\mathrm{d}y)
=
\delta_{x^{-\ell}}(\mathrm{d}y^{-\ell}) \pi(\mathrm{d}y^\ell \mid x^{-\ell}).
\]
The invariance of $\pi$ under $K_\ell$ follows immediately from the definition of conditional distributions. Indeed, for any bounded measurable test function $\varphi$,
\begin{align*}
\int \pi(\mathrm{d}x) \int K_\ell(x,\mathrm{d}y)\varphi(y)
&= \int \pi(\mathrm{d}x) \int \pi(\mathrm{d}y^\ell \mid x^{-\ell}) \varphi(y^\ell,x^{-\ell}) \\
&= \int \pi(\mathrm{d}x^{-\ell}) \int \pi(\mathrm{d}x^\ell \mid x^{-\ell}) \int \pi(\mathrm{d}y^\ell \mid x^{-\ell}) \varphi(y^\ell,x^{-\ell}) \\
&= \int \pi(\mathrm{d}x^{-\ell}) \int \pi(\mathrm{d}y^\ell \mid x^{-\ell}) \varphi(y^\ell,x^{-\ell}) \\
&= \int \pi(\mathrm{d}y)\varphi(y).
\end{align*}
Hence $\pi K_\ell = \pi$. It follows that deterministic scan, which composes the kernels $K_1,\dots,K_{\len}$, and random scan, which averages them, also leave $\pi$ invariant.

At a fixed time $t$, we apply this construction with $\pi = \pdata{t}{}{}$. Identity \eqref{eq:gibbs-conditional} gives the corresponding one-coordinate conditional explicitly. Starting from a current state $\bx_t$, one chooses a coordinate $\ell$ and resamples only this coordinate according to
\[
\bx_t^\ell \sim \pdata{t}{\bx_t^{- \ell}}{}[\ell]
=
\fw{t\tbar 0}{\loodenoiser{t}{}{\bx_t}^\ell}{}[\ell],
\]
while keeping $\bx_t^{- \ell}$ unchanged. This yields a Gibbs sampler with invariant distribution $\pdata{t}{}{}$.

\paragraph{Predictor-corrector samplers.}
Predictor-corrector schemes were introduced in the continuous setting in \cite{song2021score}. They alternate a predictor step, which moves along the reverse dynamics, and a corrector step, which applies an MCMC kernel preserving the current marginal. They make use of the fact that during training, we learn the approximate score $s_\param(t, \bx_t) \approx \nabla \log \pdata{t}{}{\bx_t}$ to construct a Langevin dynamic $dX_t = s_\param(t, \bx_t)dt + \sqrt{2} dW_t$ which approximately leaves $\pdata{t}{}{}$ invariant. In the discrete setting, \cite{campbell2022continuous} proposed an analogue based on the reverse CTMC for the predictor and on a forward-backward CTMC for the corrector. The role of the corrector is to improve the sample quality at a fixed time before continuing the reverse trajectory. The identity \eqref{eq:gibbs-conditional} provides another such corrector: instead of using an uninformed transition that preserves $\pdata{t}{}{}$, one may apply Gibbs updates based on the exact one-coordinate conditionals of $\pdata{t}{}{}$.

\paragraph{Informed Gibbs corrector.}
This observation is particularly useful in UDM. Indeed, \Cref{prop:loo} shows that, under the \ref{eq:param_plug_in} parameterization, the network already learns the leave-one-out object needed to evaluate the Gibbs conditional. Therefore, unlike \cite{zhao2025informed}, no auxiliary leave-one-out model is required. Among exact Gibbs schemes, deterministic scan and random scan are the canonical choices. In practice, however, it is often more efficient to concentrate the corrector effort on the positions where the predictor is least confident. Following \cite{zhao2025informed}, we therefore use the margin score
\[
c_{\ell}^{\mathrm{margin}}(\bx_t)
\eqdef
\log \pdata{t}{\bx_t^{- \ell}}{\bx_t^{\ell}}[\ell]
- \max_{x \neq \bx_t^\ell} \log \pdata{t}{\bx_t^{- \ell}}{x}[\ell],
\]
which measures the gap between the current token and the most likely alternative. Low values of $c_{\ell}^{\mathrm{margin}}(\bx_t)$ correspond to `unlikely' positions, and the corrector updates the coordinates with the smallest margins first. This is the practical scheme used in our experiments. As in \cite{zhao2025informed}, one may also update several coordinates in parallel, in the spirit of Hogwild Gibbs sampling \cite{johnson2013hogwild}. This parallel version should be viewed as a heuristic acceleration of the exact single-coordinate Gibbs corrector.

\begin{algorithm}
\caption{Predictor-corrector sampler with margin-based Gibbs corrector}
\label{alg:pc}
\begin{algorithmic}[1]
\Require time grid $0 = \tk{0} < \tk{1} < \cdots < \tk{n} = 1$
\Require integers $(m_i)_{i=1}^n$, number of parallel updates $k$, and a model providing $\pdata{\sk{i}\tbar\tk{i}}{}{}[\param]$ and $\loodenoiser{\sk{i}}{}{}$
\Ensure a sample approximately distributed according to $\pdata{0}{}{}$
\State Draw $\bx_{\tk{n}} \sim \pdata{1}{}{}$
\For{$i=n,n-1,\dots,1$}
    \State Draw predictor state $\bx_{\sk{i}} \sim \pdata{\sk{i}\tbar\tk{i}}{\bx_{\tk{i}}}{}[\param]$
    \For{$r=1,\dots,m_i$}
        \For{$\ell=1,\dots,\len$} \Comment{in parallel over $\ell$}
            \State Define $\pdata{\sk{i}}{\bx_{\sk{i}}^{- \ell}}{\cdot}[\ell] \gets \fw{\sk{i}\tbar 0}{\loodenoiser{\sk{i}}{}{\bx_{\sk{i}}}^\ell}{}[\ell]$
            \State Compute $c_\ell \gets \log \pdata{\sk{i}}{\bx_{\sk{i}}^{- \ell}}{\bx_{\sk{i}}^\ell}[\ell] - \max_{x \neq \bx_{\sk{i}}^\ell} \log \pdata{\sk{i}}{\bx_{\sk{i}}^{- \ell}}{x}[\ell]$
        \EndFor
        \State Let $\ell_{r,1},\dots,\ell_{r,k}$ be the coordinates with smallest scores $c_\ell$
        \For{$j=1,\dots,k$}
            \State Resample $\bx_{\sk{i}}^{\ell_{r,j}} \sim \pdata{\sk{i}}{\bx_{\sk{i}}^{- \ell_{r,j}}}{\cdot}[\ell_{r,j}]$ in parallel
        \EndFor
    \EndFor
    \State Set $\bx_{\tk{i-1}} \eqdef \bx_{\sk{i}}$
\EndFor
\State \Return $\bx_0$
\end{algorithmic}
\end{algorithm}

\Cref{fig:predictor-corrector} shows that the Gibbs corrector significantly improves sample quality at a comparable computational cost. As already emphasized in \cite{campbell2022continuous}, predictor-corrector schemes are often particularly effective in the discrete setting. The reason is that a corrector step may modify the value of a token altogether, rather than merely perturbing the state locally. As a consequence, correcting a few uncertain positions can substantially alter the subsequent reverse trajectory.

This effect is typically stronger than in continuous domains, where corrector steps usually act as local refinements of an already smooth state. In discrete domains, by contrast, the uncertainty is concentrated on a finite set of alternatives, and resolving it can produce qualitatively different samples. This explains why corrector-based methods have become a central tool in discrete diffusion models; see for instance \cite{campbell2022continuous, zhao2025informed,wang2026remaskingdiscretediffusionmodels}.

\subsection{On the distinction between LOO and denoiser in \cite{deschenaux2026diffusiondualitychapterii}}
\label{sec:loo-denoiser-confusion}

In \cite{deschenaux2026diffusiondualitychapterii}, the authors introduce, for each position $\ell$, a family of superposition posteriors which, in the notation of the present paper, can be written as
\begin{equation}
\label{eq:psi-posterior}
\Psi^\ell _{s\tbar 0,t}(\cdot|\bx_0^\ell,\bx_t^\ell)
\eqdef
\kappa_t \fw{s\tbar 0,t}{\bx_0^\ell,\bx_t^\ell}{}[\ell]
+
(1-\kappa_t)\fw{s\tbar 0}{\bx_0^\ell}{}[\ell]
\eqsp.
\end{equation}
The first term is the usual reverse bridge and the second term is a forward noising step. After averaging \eqref{eq:psi-posterior} against the clean-token posterior, one obtains a reverse kernel with the same marginals as the original diffusion. In the masked case, different choices of $\kappa_t$ recover ReMDM and related predictor-corrector samplers \cite{wang2026remaskingdiscretediffusionmodels,campbell2022continuous}.
The reverse transition is given by  
\[
\txts \Psi_{s \tbar t}(\bx_s | \bx_t) = \sum_{\bx_0^\ell} \Psi^\ell _{s\tbar 0,t}(\bx_s | \bx_0^\ell,\bx_t^\ell) \, \pdata{0\tbar t}{\bx_t}{\bx_0^{\ell}}[\ell] \eqsp, 
\]
Following \Cref{prop:loo}, since DUO \cite{sahoo2025diffusion,deschenaux2026diffusiondualitychapterii} trains a LOO denoiser similarily to \cite{schiff2024simple}, the useful rewriting of the previous transition is
\begin{equation}
    \label{eq:psi-sampler-correct}
    \txts \Psi_{s \tbar t}(\bx_s | \bx_t) = \kappa_t \fw{s\tbar 0, t}{\denoiser{t}{}{\bx_t}[\mathrm{loo}], \bx_t}{\bx_s} +  (1 - \kappa_t) \fw{s\tbar 0}{\denoiser{t}{}{\bx_t}}{\bx_s} 
\end{equation}
and $\denoiser{t}{}{\bx_t}$ is the plain denoiser. On the other hand \cite[Eqn 12]{deschenaux2026diffusiondualitychapterii}, plugs inside the forward transition $\fw{s\tbar 0}{}{}{}$ the quantity $\fw{0\tbar 0, t}{\denoiser{t}{}{\bx_t}[\loo], \bx_t}{}$, \emph{i.e.} \eqref{eq:bridge-prob} with $s = 0$. It turns out that this is equivalent to the LOO denoiser to denoiser conversion formula \eqref{eq:loo-to-denoiser-uniform} and so \cite{deschenaux2026diffusiondualitychapterii} does indeed implement the transition \eqref{eq:psi-sampler-correct}. This is however valid only if we train the network to approximate a LOO denoiser. If instead we train a plain denoiser, then we need to first convert it into a LOO denoiser using \eqref{eq:denoiser-to-loo} in order to plug it in the bridge as in \eqref{eq:psi-sampler-correct}. 

%% file: appendix/ctmc.tex
\section{Continuous time perspective}
\label{sec:ctmc}
Here we consider a CTMC $(X_t)_{t\in[0, 1]}$ on $\msx$ with infinitesimal generator $\ratemat{t}{}{}$ defined, for any $\bx, \tilde\bx \in \msx$ as 
\begin{equation}
    \label{eq:factorization}
\ratemat{t}{\bx}{\tilde\bx} \eqdef \begin{cases} 
    \ratemat{t}{\bx^\ell}{\tilde\bx^\ell}[\ell] & \quad \text{if $\bx^{-\ell} \!=\! \tilde\bx^{-\ell}$ and $\bx^\ell \neq \tilde\bx^\ell$}  \\
    0  & \quad  \text{otherwise,}\\ 
\end{cases}
\end{equation}
and for all $\ell \in [1:\len]$, $\ratemat{t}{}{}[\ell]$ is the generator of the CTMC $(X^\ell _t)_{t\in[0,1]}$ on $\msv$ thus satisfying $\ratemat{t}{x}{\tilde{x}}[\ell] \geq 0$ if $x \neq \tilde{x}$ and $\sum_{\tilde{x} \in \msv} \ratemat{t}{x}{\tilde{x}}[\ell] = 0$. 
Writing $\beta_t \eqdef -\alpha_t'/\alpha_t \geq 0$, we consider the following form of the token-wise generator
\[
    \ratemat{t}{x}{\tilde{x}}[\ell]
    =
    \beta_t \dotp{\tilde{x}}{\pi}
    \quad \text{for } x \neq \tilde{x},
    \qquad
    \ratemat{t}{x}{x}[\ell]
    =
    -\beta_t(1-\dotp{x}{\pi}) \eqsp.
\]
Then, since $\exp(-\int_s^t \beta_u \rmd u) = \alpha_{t\tbar s}$ and using the Kolmogorov forward equation we find that the resulting conditional law of the sequence-level CTMC satisfies $X_t \mid X_s = \bx_s \sim \fw{t\tbar s}{\bx_s}{}$ for all $0 \le s < t \le 1$.
With $\ratemat{t}{x}{\tilde{x}}[\ell] = \beta_t/\vocab$, for all $\tilde{x}\neq x$, we recover the UDM transition. For the MDM transitions, the generator is $\ratemat{t}{x}{\onehot{\vocab}}[\ell] = \beta_t$ for $x\neq \onehot{\vocab}$ and all other off-diagonal rates vanish. 
The reverse CTMC starts from \(\pdata{1}{}{}\) at time \(1\) and evolves backward to reach $\pdata{0}{}{}$ at time $0$. Its backward transition kernels are \(\pdata{s\tbar t}{\bx_t}{\bx_s}\), and the off-diagonal elements of its generator are given by 
$
    \rratemat{t}{\bx}{\tilde{\bx}}
    \eqdef
    \ratemat{t}{\tilde{\bx}}{\bx}\, \pdata{t}{}{\tilde{\bx}} / \pdata{t}{}{\bx}
$. 
Next, the generator of the chain $(X_t)_{t\in[0, 1]}$ running backwards and conditioned on $X_0 = \bx_0$ is given by 
\begin{equation*}
\rratemat{t}{\bx_0, \bx}{\tilde{\bx}} \eqdef \begin{cases} 
    \beta_t \dotp{\bx^\ell}{\pi}\!\cdot\!\dotp{\tilde\bx^\ell}{\score{t}{\bx^\ell _0}{\bx^\ell}}& \quad \text{if $\bx^{-\ell} \!=\! \tilde\bx^{-\ell}$ and $\bx^\ell \neq \tilde\bx^\ell$}  \\
    0  & \quad  \text{otherwise,}\\ 
\end{cases}
\end{equation*}
where we define the conditional score 
\begin{equation}
    \label{eq:conditional-score}
    \score{t}{x_0}{x}
    \eqdef
    (\alpha_t x_0 + (1 - \alpha_t) \pi) / \fw{t\tbar0}{x_0}{x}
\end{equation}
if $\fw{t\tbar0}{x_0}{x} > 0$ and $0$ otherwise. Following \cite{campbell2022continuous}, we have the following identity 
\begin{equation}
    \label{eq:reverse-score}
\rratemat{t}{\bx}{\tilde{\bx}} \eqdef \begin{cases} 
    \beta_t \dotp{\bx^\ell}{\pi}\!\cdot\!\dotp{\tilde\bx^\ell}{\score{t}{}{\bx}^\ell}& \quad \text{if $\bx^{-\ell} \!=\! \tilde\bx^{-\ell}$ and $\bx^\ell \neq \tilde\bx^\ell$}  \\
    0  & \quad  \text{otherwise,}\\ 
\end{cases}
\end{equation}
where $\score{t}{}{\bx} \in \rset^{\len \times \vocab} _{\geq 0}$ and its $\ell$-th row is 
\begin{equation}
    \label{eq:score-definition}
\score{t}{}{\bx}^\ell \eqdef \sum_{\bx^\ell _0} \score{t}{\bx^\ell _0}{\bx^\ell} \pdata{0\tbar t}{\bx}{\bx^\ell _0}[\ell]. 
\end{equation}
Following SEDD, we consider next the parameterized reverse generator $\rratemat{t}{}{}[\param]$ defined similarly to \eqref{eq:reverse-score} but with the intractable score $\score{t}{}{}$ replaced with a model $\score{t}{}{}[\param]$. 
Following \cite{campbell2022continuous, lou2024discrete}, the continuous-time ELBO specialized to the generator \eqref{eq:reverse-score} is
\begin{multline}
    \label{eq:ctmc-specific-elbo}
    \mathcal L_{\infty}(\param)
     \eqdef
    \pE\!\left[
        \kldivergence{\fw{1\tbar 0}{X_0}{}}{\pdata{1}{}{}}
    \right]
    \\
    + \int_0^1
    \beta_t \,
    \pE\!\Bigg[
        \sum_{\ell=1}^{\len}
        \dotp{X_t^\ell}{\pi^\ell}
        \sum_{\tilde x \neq X_t^\ell}
        \Phi\Big(
            \dotp{\tilde\bx^\ell}{\score{t}{X_0^\ell}{X_t^\ell}},
            \dotp{\tilde\bx^\ell}{\score{t}{}{X_t}[\param]^\ell}
        \Big)
    \Bigg]\rmd t \eqsp.
\end{multline}
where
$
    \Phi(a,b)
    \eqdef
    b-a+a\log(a/b)
$
for $a,b>0$, with the usual convention $0 \cdot \log 0 = 0$, and both expectations are \wrt\ the joint law $X_0 \sim \pdata{0}{}{}$ and $X_t \sim \fw{t\tbar 0}{X_0}{}$.

\paragraph{Sampling from a CTMC} Sampling from the reverse CTMC can be done exactly with the Gillespie algorithm, but this updates only one position at a time. In practice one introduces a decreasing grid $1=t_N>\cdots>t_0=0$, writes $\Delta_i \eqdef t_i-t_{i-1}$, and approximates the reverse dynamics over each interval $[t_{i-1},t_i]$. Given the current state $\bx_i$ at time $t_i$, we freeze the model reverse rates over the interval and define
\begin{equation}
    \label{eq:reverse-ctmc-frozen-rates}
    \hat \lambda^\ell_i(\bx_i)
    \eqdef
    \sum_{\tilde\bx:\, \tilde\bx^{-\ell}=\bx_i^{-\ell},\, \tilde\bx^\ell \neq \bx_i^\ell}
    \rratemat{t_i}{\bx_i}{\tilde\bx}[\param],
    \qquad
    \hat \lambda_i(\bx_i)
    \eqdef
    \sum_{\ell=1}^{\len}\hat \lambda^\ell_i(\bx_i)
    \eqsp.
\end{equation}
The simplest discretization is the explicit Euler scheme. It keeps at most one jump over the whole interval and samples
\begin{equation}
    \label{eq:reverse-ctmc-euler-update}
    p^\mathrm{euler}_{i-1\tbar i}(\tilde\bx \mid \bx_i)
    \eqdef
    \indic_{\tilde\bx = \bx_i}\bigl(1-\Delta_i\hat \lambda_i(\bx_i)\bigr)
    +
    \Delta_i \rratemat{t_i}{\bx_i}{\tilde\bx}[\param]\indic_{\tilde\bx \neq \bx_i}
    \tag{Euler}
\end{equation}
provided $\Delta_i \hat \lambda_i(\bx_i) \leq 1$. Equivalently, one keeps $\bx_i$ with probability $1-\Delta_i\hat \lambda_i(\bx_i)$ and otherwise performs a single jump drawn from the normalized off-diagonal rates of $\rratemat{t_i}{}{}[\param]$.

Tau-leaping \cite{campbell2022continuous} keeps the same frozen rates but allows several positions to jump in parallel. Concretely, for every $\ell$ and every $\tilde x \neq \bx_i^\ell$, one draws independent Poisson variables
\[
    N_{i,\ell,\tilde x}
    \sim
    \mathrm{Poisson}\!\left(
        \Delta_i
        \rratemat{t_i}{\bx_i}{(\bx_i^{-\ell},\tilde x)}[\param]
    \right)
\]
and sets
\begin{equation}
    \label{eq:reverse-ctmc-tau-update}
    \bx_{i-1}^\ell
    \eqdef
    \begin{cases}
        \bx_i^\ell & \text{if $\sum_{\tilde x \neq \bx_i^\ell} N_{i,\ell,\tilde x} \neq 1$},\\
        \sum_{\tilde x \neq \bx_i^\ell} N_{i,\ell,\tilde x}\tilde x & \text{if $\sum_{\tilde x \neq \bx_i^\ell} N_{i,\ell,\tilde x} = 1$},
    \end{cases}
    \tag{$\tau$-leaping}
\end{equation}
independently over $\ell \in \intset{1}{\len}$. Thus, if exactly one jump occurs at a position, the token is replaced by its destination; if no jump or several jumps occur, the update is rejected at that position.

%% file: appendix/max_coupling.tex
\section{Uniform Diffusion and Maximal Coupling}
\label{app:max-coupling}

\subsection{Maximal couplings}

Let $P$ and $Q$ be two probability distributions on a finite state space $\mathcal X$. A coupling of $(P,Q)$ is a probability distribution $\gamma$ on $\mathcal X^2$ such that
\[
    \sum_{y \in \mathcal X} \gamma(x,y) = P(x),
    \qquad
    \sum_{x \in \mathcal X} \gamma(x,y) = Q(y).
\]
If $(X,Y) \sim \gamma$, a maximal coupling is, by definition, a coupling that minimizes the mismatch probability $\pP(X \neq Y)$ over all couplings of $(P,Q)$, namely $\inf_{\gamma \in \Gamma(P,Q)} \pP(X \neq Y),$ where $\Gamma(P,Q)$ denotes the set of couplings of $(P,Q)$.
It is standard that
\[
    \inf_{\gamma \in \Gamma(P,Q)} \pP(X \neq Y)
    =
    \|P-Q\|_{\mathrm{TV}}.
\]
Equivalently, since $\pP(X \neq Y)=1-\pP(X=Y)$, maximal couplings are exactly those that maximize the coincidence probability. In the discrete setting this is equivalent to
\[
    \sum_{x \in \mathcal X}\gamma(x,x)
    =
    \sum_{x \in \mathcal X}\min\{P(x),Q(x)\}
    =
    1 - \|P-Q\|_{\mathrm{TV}}.
\]

\subsection{Maximal couplings and masked diffusion} 

Masked diffusion provides a simple example of a maximal coupling at the token level. Fix $\ell \in \intset{1}{\len}$, $0 \leq s < t \leq 1$, and $x_0 \neq \mask$. The MDM forward transition is
\[
    \fw{t\tbar 0}{x_0;\mask}{x_t}[\ell]
    =
    \categorical\!\left(x;\alpha_t x_0+(1-\alpha_t)\mask\right)
    =
    \alpha_t \indic_{x=x_0}+(1-\alpha_t)\indic_{x=\mask}
    \eqsp.
\]
Consider the joint distribution with p.m.f. 
\begin{equation}
    \label{eq:maxcoupl-mdm}
    \txts
    \fw{s,t\tbar 0}{x_0;\mask}{x_s,x_t}[\ell]
    \eqdef
    \fw{s\tbar 0,t}{x_0,x_t;\mask}{x_s}[\ell]\,
    \fw{t\tbar 0}{x_0;\mask}{x_t}[\ell],
\end{equation}
which is explicitly
\[
    \txts
    \fw{s,t\tbar 0}{x_0;\mask}{x_s,x_t}[\ell]
    =
    \alpha_t \indic_{x_s=x_0}\indic_{x_t=x_0}
    +
    (\alpha_s-\alpha_t)\indic_{x_s=x_0}\indic_{x_t=\mask}
    +
    (1-\alpha_s)\indic_{x_s=\mask}\indic_{x_t=\mask}
    \eqsp.
\]
Therefore its coincidence probability is
\[
    \txts
    \sum_{x \in \msv}
    \fw{s,t\tbar 0}{x_0;\mask}{x,x}[\ell]
    =
    \alpha_t + 1-\alpha_s.
\]
Since $s<t$ implies $\alpha_s \geq \alpha_t$, the overlap of the two MDM marginals is
\[
    \txts
    \sum_{x \in \msv}
    \min\!\left\{
        \fw{s\tbar 0}{x_0;\mask}{x}[\ell],
        \fw{t\tbar 0}{x_0;\mask}{x}[\ell]
    \right\}
    =
    \alpha_t + 1-\alpha_s.
\]
Thus \eqref{eq:maxcoupl-mdm} is a maximal coupling between $\fw{s\tbar 0}{x_0;\mask}{}[\ell]$ and $\fw{t\tbar 0}{x_0;\mask}{}[\ell]$.

\subsection{Maximum coupling bridge and ELBO for uniform diffusion}
\label{sec:max-coupling}
We now introduce a token-wise bridge for uniform diffusion that is a maximal coupling between the forward marginals while preserving the same one-time marginals as the original forward process. We also derive the corresponding continuous-time reverse rate matrix and the associated CTMC ELBO. Interestingly, in their Appendix A, \cite{song2021ddim} derive a bridge that can modulate the stochasticity of the reverse process and that recovers the same maximal coupling bridge in the limit where one sets the coefficient $\sigma_t$ to $\frac{1 - \alpha_{t-1}}{1 - \alpha_{t}}$.

\begin{proposition}
    \label{prop:uniform-max-coupling-bridge}
    Fix $\ell \in \intset{1}{\len}$, $x_0 \in \msv$ and $0 \le s < t \le 1$. Consider the bridge
    \begin{equation}
        \label{eq:frozen-audm-bridge}
        \fw{s\tbar 0,t}{x_0,x_t}{x_s}[\ell]
        =
            \categorical\!\left(x_s;
                \frac{\alpha_s-\alpha_t}{1-\alpha_t}x_0
                +
                \frac{1-\alpha_s}{1-\alpha_t}x_t
            \right) \eqsp.
    \end{equation}
    The joint distribution with p.m.f.
    \[
        \fw{s,t\tbar 0}{x_0}{x_s,x_t}[\ell]
        \eqdef
        \fw{s\tbar 0,t}{x_0,x_t}{x_s}[\ell]
        \fw{t\tbar 0}{x_0}{x_t}[\ell]
    \]
    satisfies
    \begin{align}
        \label{eq:max-coupling-pair}
        \fw{s,t\tbar 0}{x_0}{x_s,x_t}[\ell]
        &=
        \alpha_t \indic_{x_s = x_0}\indic_{x_t = x_0}
        \nonumber\\
        &\quad +
        \frac{\alpha_s-\alpha_t}{\vocab}\indic_{x_s = x_0}
        +
        \frac{1-\alpha_s}{\vocab}\indic_{x_t = x_s}.
    \end{align}
    Moreover, it preserves the forward marginals,
    \[
        \sum_{x_s \in \msv}
        \fw{s,t\tbar 0}{x_0}{x_s,x_t}[\ell]
        =
        \fw{t\tbar 0}{x_0}{x_t}[\ell],
        \qquad
        \sum_{x_t \in \msv}
        \fw{s,t\tbar 0}{x_0}{x_s,x_t}[\ell]
        =
        \fw{s\tbar 0}{x_0}{x_s}[\ell],
    \]
    and is a maximal coupling between $\fw{s\tbar 0}{x_0}{}[\ell]$ and $\fw{t\tbar 0}{x_0}{}[\ell]$.
\end{proposition}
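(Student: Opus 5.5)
The plan is a direct computation in three steps: expand the joint p.m.f., read off the two marginals, and then compare the coincidence probability with the overlap of the marginals. Throughout, the UDM forward kernel is
\[
\fw{t\tbar 0}{x_0}{x_t}[\ell] = \alpha_t \indic_{x_t=x_0} + \frac{1-\alpha_t}{\vocab}.
\]

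\textbf{Step 1: the joint p.m.f.} By the bridge \eqref{eq:frozen-audm-bridge}, the joint p.m.f. is
\[
\Big(\tfrac{\alpha_s-\alpha_t}{1-\alpha_t}\indic_{x_s=x_0}+\tfrac{1-\alpha_s}{1-\alpha_t}\indic_{x_s=x_t}\Big)\Big(\alpha_t\indic_{x_t=x_0}+\tfrac{1-\alpha_t}{\vocab}\Big).
\]
Expanding gives four terms.
\begin{itemize}
\item The two cross terms carrying the factor $1/\vocab$ simplify, because $1-\alpha_t$ cancels, to $\frac{\alpha_s-\alpha_t}{\vocab}\indic_{x_s=x_0}$ and $\frac{1-\alpha_s}{\vocab}\indic_{x_t=x_s}$.
\item The two terms carrying $\alpha_t\indic_{x_t=x_0}$ both force $x_s=x_t=x_0$. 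Their combined coefficient is $\alpha_t\big(\tfrac{\alpha_s-\alpha_t}{1-\alpha_t}+\tfrac{1-\alpha_s}{1-\alpha_t}\big)=\alpha_t$.
\end{itemize}
Together these give \eqref{eq:max-coupling-pair}.

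\textbf{Step 2: the marginals.} Both are read off from \eqref{eq:max-coupling-pair} by summing one variable.
\begin{itemize}
\item Summing over $x_s$ gives $\alpha_t\indic_{x_t=x_0}+\frac{\alpha_s-\alpha_t}{\vocab}+\frac{1-\alpha_s}{\vocab}=\fw{t\tbar 0}{x_0}{x_t}[\ell]$.
\item Summing over $x_t$ gives $\alpha_t\indic_{x_s=x_0}+(\alpha_s-\alpha_t)\indic_{x_s=x_0}+\frac{1-\alpha_s}{\vocab}=\fw{s\tbar 0}{x_0}{x_s}[\ell]$.
\end{itemize}

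\textbf{Step 3: maximality.} This is the only step needing a little care. First compute the coincidence probability by setting $x_s=x_t=x$ in \eqref{eq:max-coupling-pair} and summing over $x$:
\[
\sum_x \fw{s,t\tbar 0}{x_0}{x,x}[\ell] = \alpha_t+\frac{\alpha_s-\alpha_t}{\vocab}+(1-\alpha_s).
\]
Next compute the overlap of the marginals. Since $s<t$ gives $\alpha_s\ge\alpha_t$, the pointwise minimum of the two marginals is the $t$-marginal at $x=x_0$, namely $\alpha_t+\frac{1-\alpha_t}{\vocab}$. For each of the $\vocab-1$ tokens $x\neq x_0$ it is the $s$-marginal, namely $\frac{1-\alpha_s}{\vocab}$. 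The overlap is therefore
\[
\alpha_t+\frac{1-\alpha_t}{\vocab}+\frac{(\vocab-1)(1-\alpha_s)}{\vocab}.
\]
Simplifying both expressions shows they coincide, and both equal $1-(\vocab-1)(\alpha_s-\alpha_t)/\vocab$.

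Finally, apply the characterization recalled above: a coupling is maximal exactly when its coincidence probability equals $\sum_x\min\{P(x),Q(x)\}$. This concludes the proof.
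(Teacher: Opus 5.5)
Your proof is correct and follows the paper's argument: expand the joint p.m.f., sum out each variable to recover the two forward marginals, and show that the coincidence probability equals $\sum_x \min\{\cdot,\cdot\}$ of the marginals. The only cosmetic difference is in Step 1, where you expand the product of indicators directly using that the two bridge weights sum to one. The paper instead splits into the cases $x_t = x_0$ and $x_t \neq x_0$.
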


\begin{proof}
    If $x_t = x_0$, the bridge in \eqref{eq:frozen-audm-bridge} is deterministic and therefore
    \[
        \fw{s,t\tbar 0}{x_0}{x_s,x_t}[\ell]
        =
        \left(
            \alpha_t + \frac{1-\alpha_t}{\vocab}
        \right)
        \indic_{x_s = x_0}\indic_{x_t = x_0}.
    \]
    If $x_t \neq x_0$, then
    \[
        \fw{t\tbar 0}{x_0}{x_t}[\ell]
        =
        \frac{1-\alpha_t}{\vocab},
    \]
    so multiplying the second line of \eqref{eq:frozen-audm-bridge} by $\fw{t\tbar 0}{x_0}{x_t}[\ell]$ gives
    \[
        \fw{s,t\tbar 0}{x_0}{x_s,x_t}[\ell]
        =
        \frac{\alpha_s-\alpha_t}{\vocab}\indic_{x_s = x_0}
        +
        \frac{1-\alpha_s}{\vocab}\indic_{x_s = x_t}.
    \]
    Combining the two cases yields \eqref{eq:max-coupling-pair}. We next verify the marginals. Summing \eqref{eq:max-coupling-pair} over $x_s$ gives
    \begin{align*}
        \sum_{x_s \in \msv}
        \fw{s,t\tbar 0}{x_0}{x_s,x_t}[\ell]
        &=
        \alpha_t \indic_{x_t = x_0}
        +
        \frac{\alpha_s-\alpha_t}{\vocab}
        +
        \frac{1-\alpha_s}{\vocab}\\
        &=
        \alpha_t \indic_{x_t = x_0}
        +
        \frac{1-\alpha_t}{\vocab}
        =
        \fw{t\tbar 0}{x_0}{x_t}[\ell].
    \end{align*}
    Summing instead over $x_t$ gives
    \begin{align*}
        \sum_{x_t \in \msv}
        \fw{s,t\tbar 0}{x_0}{x_s,x_t}[\ell]
        &=
        \alpha_t \indic_{x_s = x_0}
        +
        \frac{\vocab(\alpha_s-\alpha_t)}{\vocab}\indic_{x_s = x_0}
        +
        \frac{1-\alpha_s}{\vocab}\\
        &=
        \alpha_s \indic_{x_s = x_0}
        +
        \frac{1-\alpha_s}{\vocab}
        =
        \fw{s\tbar 0}{x_0}{x_s}[\ell].
    \end{align*}

    It remains to prove maximality. Under the coupling above,
    \[
        \sum_{x \in \msv}
        \fw{s,t\tbar 0}{x_0}{x,x}[\ell]
        =
        \alpha_t + (1-\alpha_s) + \frac{\alpha_s-\alpha_t}{\vocab}.
    \]
    On the other hand, for the two uniform-diffusion marginals we have
    \[
        \fw{s\tbar 0}{x_0}{x}[\ell]
        =
        \alpha_s \indic_{x = x_0} + \frac{1-\alpha_s}{\vocab},
        \qquad
        \fw{t\tbar 0}{x_0}{x}[\ell]
        =
        \alpha_t \indic_{x = x_0} + \frac{1-\alpha_t}{\vocab},
    \]
    so
    \begin{align*}
        \sum_{x \in \msv}
        \min\!\left\{
            \fw{s\tbar 0}{x_0}{x}[\ell],
            \fw{t\tbar 0}{x_0}{x}[\ell]
        \right\}
        &=
        \alpha_t + \frac{1-\alpha_t}{\vocab}
        +
        (\vocab-1)\frac{1-\alpha_s}{\vocab}\\
        &=
        \alpha_t + (1-\alpha_s) + \frac{\alpha_s-\alpha_t}{\vocab}.
    \end{align*}
    Hence the probability of equality reaches the maximal-coupling upper bound.
\end{proof}

\begin{proposition}
    \label{prop:uniform-max-coupling-rate}
    Let $\beta_t \eqdef -\alpha'_t / \alpha_t$. The token-wise backward generator conditioned on $X^\ell_0 = x_0$ and associated with the bridge \eqref{eq:frozen-audm-bridge} is, for $\tilde x \neq x$,
    \[
        \rratemat{t}{x, x_0}{\tilde x}[\mathrm{mc}]
        =
        \frac{\alpha_t\beta_t}{1-\alpha_t}
        \indic_{x \neq x_0}
        \indic_{\tilde x = x_0}.
    \]
    The diagonal term is given by row-sum zero. Consequently, the exact reverse generator obtained after averaging against the denoising posterior is
    \[
        \rratemat{t}{\bx}{\tilde\bx}[\mathrm{mc}]
        =
        \begin{cases}
            \frac{\alpha_t\beta_t}{1-\alpha_t}
            \pdata{0\tbar t}{\bx}{\tilde\bx^\ell}[\ell]
            & \text{if $\bx^{-\ell} = \tilde\bx^{-\ell}$ and $\bx^\ell \neq \tilde\bx^\ell$},\\
            0 & \text{otherwise}.
        \end{cases}
    \]
\end{proposition}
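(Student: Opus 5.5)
The plan is to compute the conditional backward rate as the derivative of the bridge \eqref{eq:frozen-audm-bridge} at $s=t$, and then average it against the denoising posterior, as in \Cref{sec:ctmc}.

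\textbf{Conditional rate.} For $\tilde x\neq x$, the conditioned backward generator is defined by
\[
\rratemat{t}{x,x_0}{\tilde x}[\mathrm{mc}]=\lim_{s\uparrow t}\frac{\fw{s\tbar 0,t}{x_0,x}{\tilde x}[\ell]}{t-s}.
\]
Reading off \eqref{eq:frozen-audm-bridge}, the mass placed on $\tilde x\neq x$ is $\frac{\alpha_s-\alpha_t}{1-\alpha_t}\indic_{\tilde x=x_0}$. This term vanishes when $x=x_0$, because then $\tilde x\neq x$ forces $\tilde x\neq x_0$. So we may insert the factor $\indic_{x\neq x_0}$ freely.

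Dividing by $t-s$ and letting $s\uparrow t$ gives $\frac{-\alpha'_t}{1-\alpha_t}\indic_{x\neq x_0}\indic_{\tilde x=x_0}$. Since $-\alpha'_t=\alpha_t\beta_t$, this is the claimed rate. The diagonal entry then follows from the row-sum-zero convention.

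Two remarks on this step. The pair-law formula \eqref{eq:max-coupling-pair} is not needed here, only the bridge itself. The assumption that $\alpha$ is differentiable is the only regularity used.

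\textbf{Averaging.} The exact reverse generator of a factorized process is the posterior mixture of the conditioned generators, i.e.\ the Markovian projection of \Cref{sec:ctmc}. For $\bx^{-\ell}=\tilde\bx^{-\ell}$ and $\bx^\ell\neq\tilde\bx^\ell$, this gives
\[
\rratemat{t}{\bx}{\tilde\bx}[\mathrm{mc}]=\sum_{x_0}\rratemat{t}{\bx^\ell,x_0}{\tilde\bx^\ell}[\mathrm{mc},\ell]\,\pdata{0\tbar t}{\bx}{x_0}[\ell].
\]
The summand is nonzero only for $x_0=\tilde\bx^\ell$. For that value, $\indic_{\bx^\ell\neq x_0}=1$ holds automatically because $\tilde\bx^\ell\neq\bx^\ell$. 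This yields $\frac{\alpha_t\beta_t}{1-\alpha_t}\pdata{0\tbar t}{\bx}{\tilde\bx^\ell}[\ell]$.

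For transitions that change two or more coordinates, the rate is zero. The reason is that the bridge factorizes, so the probability of two simultaneous changes is $O((t-s)^2)$.

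\textbf{Main obstacle.} The step that needs justification is the averaging identity. I would establish it by writing
\[
\pdata{s\tbar t}{\bx}{\tilde\bx}=\sum_{\bx_0}\fw{s\tbar 0,t}{\bx_0,\bx}{\tilde\bx}\,\pdata{0\tbar t}{\bx}{\bx_0},
\]
which is \eqref{eq:reverse} with this bridge. I would then marginalize over $\bx_0^{-\ell}$, divide by $t-s$, and exchange the limit with the finite sum.

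The consistency of the bridge, namely the marginal preservation shown in \Cref{prop:uniform-max-coupling-bridge}, is what makes this the true reverse kernel of a process with UDM marginals.
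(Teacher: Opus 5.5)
Your proposal is correct and follows essentially the paper's route. You differentiate the bridge at $s=t$ to get the conditioned rate, then take the limit of $p_{t-h\mid t}(\tilde{\mathbf{x}}\mid\mathbf{x})/h$ written as the posterior average of the factorized bridge, where only $\mathbf{x}_0^\ell=\tilde{\mathbf{x}}^\ell$ contributes. Your $O((t-s)^2)$ remark for multi-coordinate jumps is a small addition the paper leaves implicit. One point of care: at finite $s$ the factors for $j\neq\ell$ equal $1$ or $(1-\alpha_s)/(1-\alpha_t)$ depending on $\mathbf{x}_0^j$, so you should pass to the limit first (where they tend to $1$) and only then marginalize over $\mathbf{x}_0^{-\ell}$.
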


\begin{proof}
    Fix $x_0$ and let $h > 0$ be such that $t-h \ge 0$. If $x = x_0$, then the first line of \eqref{eq:frozen-audm-bridge} gives
    \[
        \fw{t-h\tbar 0,t}{x_0,x}{\tilde x}[\ell]
        =
        \indic_{\tilde x = x},
    \]
    so there is no off-diagonal jump. If $x \neq x_0$, the bridge gives
    \[
        \fw{t-h\tbar 0,t}{x_0,x}{\tilde x}[\ell]
        =
        \frac{\alpha_{t-h}-\alpha_t}{1-\alpha_t}\indic_{\tilde x = x_0}
        +
        \frac{1-\alpha_{t-h}}{1-\alpha_t}\indic_{\tilde x = x}.
    \]
    Since $\alpha_{t-h} = \alpha_t + h\beta_t\alpha_t + o(h)$, for every $\tilde x \neq x$ we obtain
    \[
        \lim_{h \downarrow 0}
        \frac{
            \fw{t-h\tbar 0,t}{x_0,x}{\tilde x}[\ell]
        }{h}
        =
        \frac{\alpha_t\beta_t}{1-\alpha_t}
        \indic_{\tilde x = x_0}.
    \]
    This proves the conditioned rate formula.

    For the exact reverse generator, let $\bx,\tilde\bx \in \msx$ differ only at position $\ell$, with $\bx^\ell \neq \tilde\bx^\ell$. By construction,
    \[
        \pdata{t-h\tbar t}{\bx}{\tilde\bx}
        =
        \sum_{\bx_0}
        \fw{t-h\tbar 0,t}{\bx_0,\bx}{\tilde\bx}
        \pdata{0\tbar t}{\bx}{\bx_0}.
    \]
    Because the bridge factorizes across positions and $\tilde\bx$ differs from $\bx$ only at $\ell$, the only nonzero off-diagonal contribution comes from clean tokens such that $\bx^\ell_0 = \tilde\bx^\ell$. Therefore
    \[
        \lim_{h \downarrow 0}
        \frac{\pdata{t-h\tbar t}{\bx}{\tilde\bx}}{h}
        =
        \frac{\alpha_t\beta_t}{1-\alpha_t}
        \sum_{\bx_0:\, \bx^\ell_0 = \tilde\bx^\ell}
        \pdata{0\tbar t}{\bx}{\bx_0}
        =
        \frac{\alpha_t\beta_t}{1-\alpha_t}
        \pdata{0\tbar t}{\bx}{\tilde\bx^\ell}[\ell],
    \]
    which is the claimed formula.
\end{proof}

Interestingly, this backward rate matrix coincides with the one obtained in \cite{campbell2024generativeflowsdiscretestatespaces}. Even though it arises from a different perspective, in both cases, the effect is to reduce the stochasticity of the reverse process by making the token-wise coupling maximal while preserving the correct forward marginals.

We now derive the corresponding continuous-time ELBO. We parameterize the reverse generator by a denoiser model $\denoiser{t}{}{\bx}[\param] \in (\Simplex{\vocab})^{\len}$ and define
\begin{equation}
    \label{eq:max-coupling-reverse-generator}
    \rratemat{t}{\bx}{\tilde\bx}[\param]
    \eqdef
    \begin{cases}
        \frac{\alpha_t\beta_t}{1-\alpha_t}
        \dotp{\tilde\bx^\ell}{\denoiser{t}{}{\bx}[\param]^\ell}
        & \text{if $\bx^{-\ell} = \tilde\bx^{-\ell}$ and $\bx^\ell \neq \tilde\bx^\ell$},\\
        0 & \text{otherwise.}
    \end{cases}
\end{equation}

\begin{proposition}
    \label{prop:max-coupling-ctmc-loss}
    Up to a $\param$-independent constant, the continuous-time ELBO associated with the maximal-coupling reverse generator \eqref{eq:max-coupling-reverse-generator} is
    \begin{multline}
        \label{eq:max-coupling-ctmc-loss}
        \txts \mathcal L^{\maxcoupling}_{\infty}(\param)
        \eqdef
        - \int_0^1
        \frac{\alpha'_t}{1-\alpha_t}\,
        \pE\big[
            \sum_{\ell=1}^{\len}
            \big\{
                1-\dotp{X_t^\ell}{\denoiser{t}{}{X_t}[\param]^\ell}
                \\
                -
                \indic_{X_t^\ell \neq X_0^\ell}
                \big(
                    1 + \log \dotp{X_0^\ell}{\denoiser{t}{}{X_t}[\param]^\ell}
                \big)
            \big\}
        \big]\rmd t,
    \end{multline}
    where the expectation is \wrt\ $X_0 \sim \pdata{0}{}{}$ and $X_t \sim \fw{t\tbar 0}{X_0}{}$.
\end{proposition}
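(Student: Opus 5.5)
We specialize the generic CTMC ELBO \eqref{eq:ctmc-specific-elbo}, exactly as in the proof of \Cref{prop:absorbing-uniform-cont-loss-app}. The only change is the reverse generator. Instead of \eqref{eq:reverse-score}, we use the maximal-coupling generator \eqref{eq:max-coupling-reverse-generator} for the model. For the reference, we use the $X_0$-conditioned backward generator of \Cref{prop:uniform-max-coupling-rate}. The continuous-time NELBO for a CTMC with forward generator $\ratemat{t}{}{}$ is the path-space KL between the $X_0$-conditioned reverse process and the model reverse process. It equals $\pE[\kldivergence{\fw{1\tbar 0}{X_0}{}}{\pdata{1}{}{}}]$ plus
\[
\int_0^1 \pE\Big[\sum_{\tilde\bx\neq X_t}\Phi\big(\rratemat{t}{X_0,X_t}{\tilde\bx},\rratemat{t}{X_t}{\tilde\bx}[\param]\big)\Big]\rmd t,
\]
where $\Phi(a,b)=b-a+a\log(a/b)$. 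The terminal KL term is $\param$-independent and is absorbed in the constant. I will invoke this form, which is the generalized-KL version of \eqref{eq:ctmc-specific-elbo}, with the conditional generator replaced by the one from \Cref{prop:uniform-max-coupling-rate}. One point needs care: the bridge \eqref{eq:frozen-audm-bridge} defines a valid non-Markov-in-reverse process with the correct marginals (\Cref{prop:uniform-max-coupling-bridge}). So the Girsanov/KL argument applies to the conditioned reverse chain with these rates. This justification is the main subtlety.

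Both generators change one token at a time, so the integrand decomposes over $\ell$ and $\tilde x\neq X_t^\ell$. Set $c_t\eqdef\alpha_t\beta_t/(1-\alpha_t)=-\alpha_t'/(1-\alpha_t)$. The reference rate is $a=c_t\indic_{X_t^\ell\neq X_0^\ell}\indic_{\tilde x=X_0^\ell}$. The model rate is $b=c_t\dotp{\tilde x}{\denoiser{t}{}{X_t}[\param]^\ell}$.

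The computation then splits into two cases, mirroring the AUDM proof.

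\emph{Case $X_t^\ell=X_0^\ell$.} Every $a$ vanishes. The sum of $\Phi(0,b)=b$ over $\tilde x\neq X_t^\ell$ gives $c_t\bigl(1-\dotp{X_t^\ell}{\denoiser{t}{}{X_t}[\param]^\ell}\bigr)$.

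\emph{Case $X_t^\ell\neq X_0^\ell$.} Only the term $\tilde x=X_0^\ell$ has $a=c_t$. Summing gives
\[
c_t\Big(1-\dotp{X_t^\ell}{\denoiser{t}{}{X_t}[\param]^\ell}-1-\log\dotp{X_0^\ell}{\denoiser{t}{}{X_t}[\param]^\ell}\Big)+c_t\log c_t - c_t\log c_t .
\]
The terms $c_t\log c_t$ from $a\log a$ and $-a\log c_t$ cancel exactly. What remains is $c_t\bigl(1-\dotp{X_t^\ell}{\hat\bx}-\indic_{X_t^\ell\neq X_0^\ell}(1+\log\dotp{X_0^\ell}{\hat\bx})\bigr)$, writing $\hat\bx$ for $\denoiser{t}{}{X_t}[\param]^\ell$.

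Combining the two cases and substituting $c_t=-\alpha_t'/(1-\alpha_t)$ yields \eqref{eq:max-coupling-ctmc-loss}. Beyond the justification of the ELBO form noted above, the remaining risk is purely bookkeeping: tracking the $a\log a$ terms and signs so that no spurious $\param$-independent pieces are mis-dropped.
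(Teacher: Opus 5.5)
Your proposal is correct and follows the paper's proof: both insert the $X_0$-conditioned rate $c_t\indic_{X_t^\ell\neq X_0^\ell}\indic_{\tilde x=X_0^\ell}$ from \Cref{prop:uniform-max-coupling-rate}, with $c_t=-\alpha_t'/(1-\alpha_t)$, together with the model rate into the generalized-KL CTMC ELBO, and then split on whether $X_t^\ell=X_0^\ell$. The one cosmetic difference is that the paper factors $c_t$ out of $\Phi$ by homogeneity, whereas you cancel the $c_t\log c_t$ terms by hand. On your ``subtlety'': conditionally on $X_0$ the bridges \eqref{eq:frozen-audm-bridge} compose consistently, so the conditioned reverse process is genuinely Markov, which is exactly what the path-space KL formula requires.
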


\begin{proof}
    By \Cref{prop:uniform-max-coupling-rate}, conditioned on $(X_0^\ell, X_t^\ell)$, the token-wise off-diagonal reverse rate is
    \[
        \frac{\alpha_t\beta_t}{1-\alpha_t}
        \indic_{X_t^\ell \neq X_0^\ell}
        \indic_{\tilde x = X_0^\ell}
    \]
    for every $\tilde x \neq X_t^\ell$. Therefore, exactly as in the derivation of \eqref{eq:ctmc-specific-elbo}, the contribution of position $\ell$ at time $t$ to the continuous-time ELBO is
    \[
        \frac{\alpha_t\beta_t}{1-\alpha_t}
        \sum_{\tilde x \neq X_t^\ell}
        \Phi\!\left(
            \indic_{X_t^\ell \neq X_0^\ell}\indic_{\tilde x = X_0^\ell},
            \dotp{\tilde x}{\denoiser{t}{}{X_t}[\param]^\ell}
        \right).
    \]
    If $X_t^\ell = X_0^\ell$, the first argument of $\Phi$ is zero for every $\tilde x \neq X_t^\ell$, hence
    \begin{align*}
        \sum_{\tilde x \neq X_t^\ell}
        \Phi\!\left(
            0,
            \dotp{\tilde x}{\denoiser{t}{}{X_t}[\param]^\ell}
        \right)
        &=
        \sum_{\tilde x \neq X_t^\ell}
        \dotp{\tilde x}{\denoiser{t}{}{X_t}[\param]^\ell}\\
        &=
        1-\dotp{X_t^\ell}{\denoiser{t}{}{X_t}[\param]^\ell}.
    \end{align*}
    If $X_t^\ell \neq X_0^\ell$, only the term $\tilde x = X_0^\ell$ has a nonzero first argument, and therefore
    \begin{align*}
        &\sum_{\tilde x \neq X_t^\ell}
        \Phi\!\left(
            \indic_{\tilde x = X_0^\ell},
            \dotp{\tilde x}{\denoiser{t}{}{X_t}[\param]^\ell}
        \right)\\
        &\qquad=
        \sum_{\tilde x \neq X_t^\ell}
        \dotp{\tilde x}{\denoiser{t}{}{X_t}[\param]^\ell}
        -
        1
        -
        \log \dotp{X_0^\ell}{\denoiser{t}{}{X_t}[\param]^\ell}\\
        &\qquad=
        -\dotp{X_t^\ell}{\denoiser{t}{}{X_t}[\param]^\ell}
        -
        \log \dotp{X_0^\ell}{\denoiser{t}{}{X_t}[\param]^\ell}.
    \end{align*}
    Combining the two cases gives
    \[
        1-\dotp{X_t^\ell}{\denoiser{t}{}{X_t}[\param]^\ell}
        -
        \indic_{X_t^\ell \neq X_0^\ell}
        \big(
            1+\log \dotp{X_0^\ell}{\denoiser{t}{}{X_t}[\param]^\ell}
        \big).
    \]
    Multiplying by $\alpha_t\beta_t/(1-\alpha_t) = -\alpha'_t/(1-\alpha_t)$ and summing over $\ell$ yields exactly \eqref{eq:max-coupling-ctmc-loss}.
\end{proof}


%% file: appendix/loo_sensitivity.tex
\section{Leave-one-out and Hollow Transformers}
\label{app:loo-sensitivity}

As noted in \Cref{sec:loo}, the $\ell$-th component of the leave-one-out posterior is independent of the local observation $\bx_t^\ell$. This is a necessary structural property of the optimum identified in \Cref{prop:loo}. It therefore has two practical consequences. First, it provides a direct diagnostic for suboptimality: after converting a trained model to the leave-one-out representation, any residual dependence of the $\ell$-th output on $\bx_t^\ell$ certifies that the model has not reached the leave-one-out optimum. Second, it suggests an architectural design: instead of hoping that optimization learns this invariance, one may try to enforce it by construction.

\paragraph{LOO sensitivity diagnostic.} We check the sensitivity of the leave-one-out posterior to the local observation $\bx_t^\ell$ on some of the models that we have trained. More precisely, we vary $\bx_t^\ell$ while keeping $\bx_t^{-\ell}$ fixed and measure the induced change in the $\ell$-th leave-one-out prediction. The results are reported in \Cref{fig:loo-sensitivity}.

\begin{figure}[t]
    \centering
    \begin{subfigure}[b]{0.48\textwidth}
        \centering
        \includegraphics[width=\textwidth]{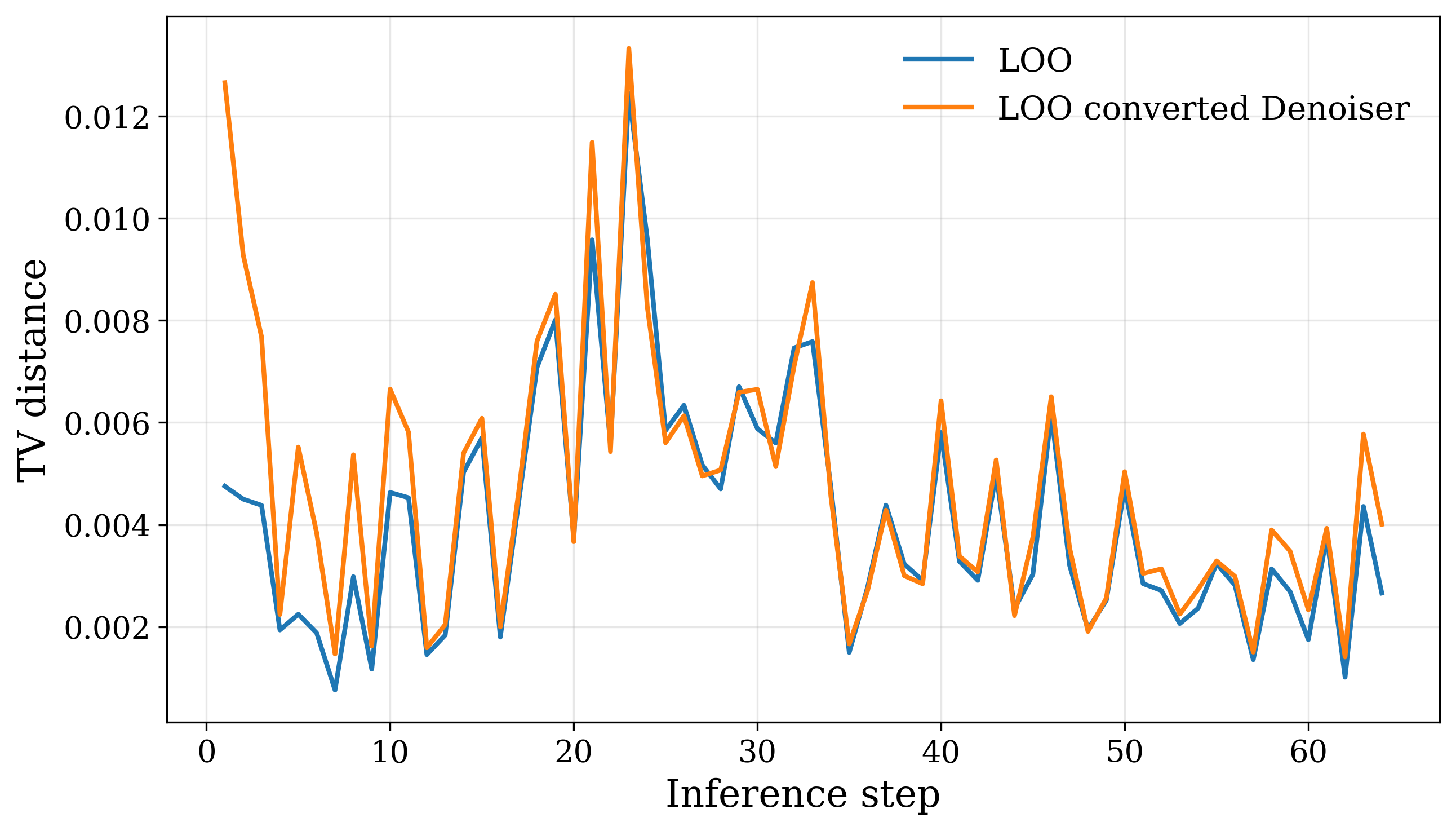}
        \caption{MNIST, converted denoiser}
        \label{fig:loo-sensitivity-mnist-converted}
    \end{subfigure}
    \hfill
    \begin{subfigure}[b]{0.48\textwidth}
        \centering
        \includegraphics[width=\textwidth]{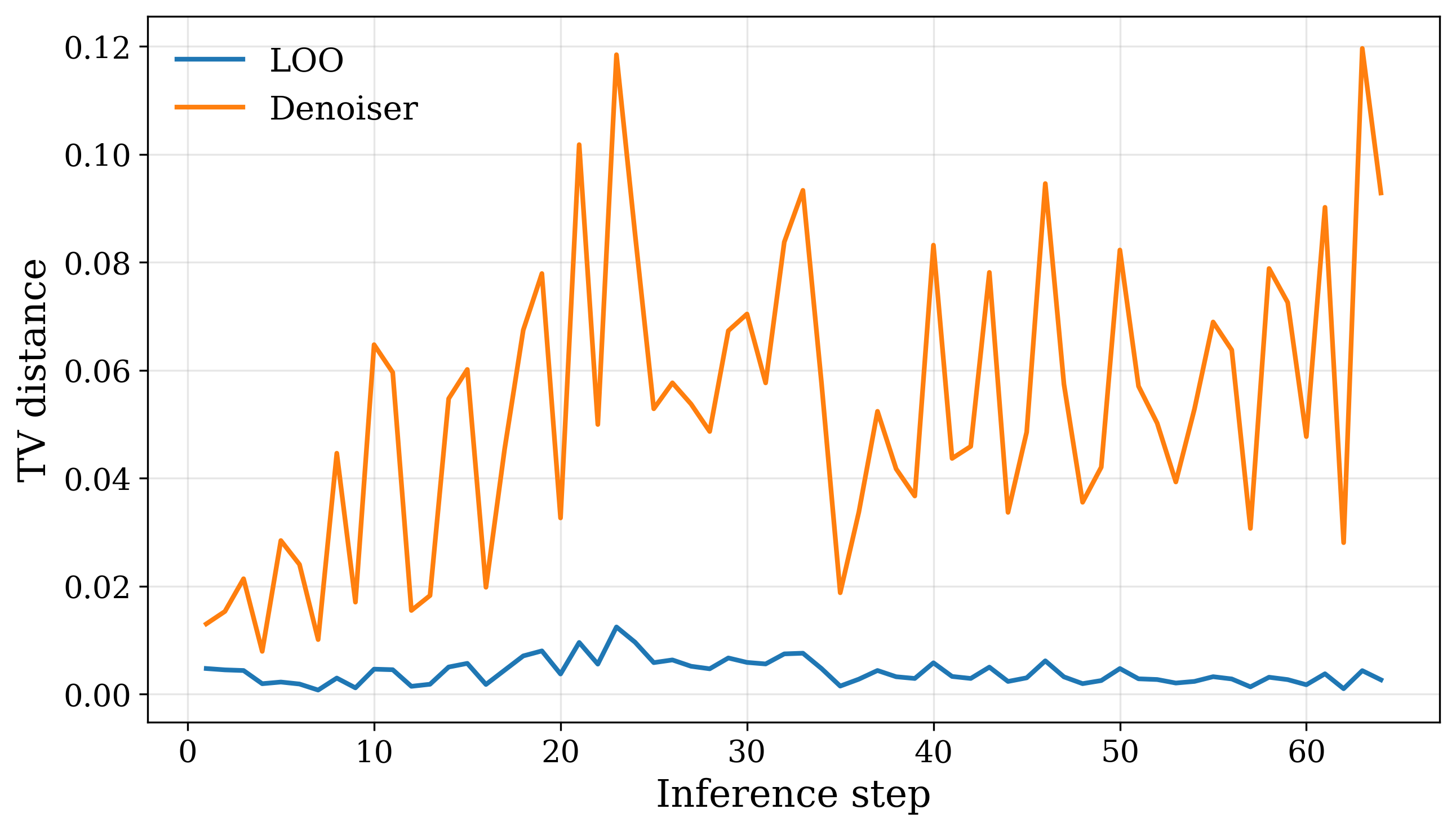}
        \caption{MNIST, direct leave-one-out}
        \label{fig:loo-sensitivity-mnist-loo}
    \end{subfigure}
    
    \vspace{0.5em}
    
    \begin{subfigure}[b]{0.48\textwidth}
        \centering
        \includegraphics[width=\textwidth]{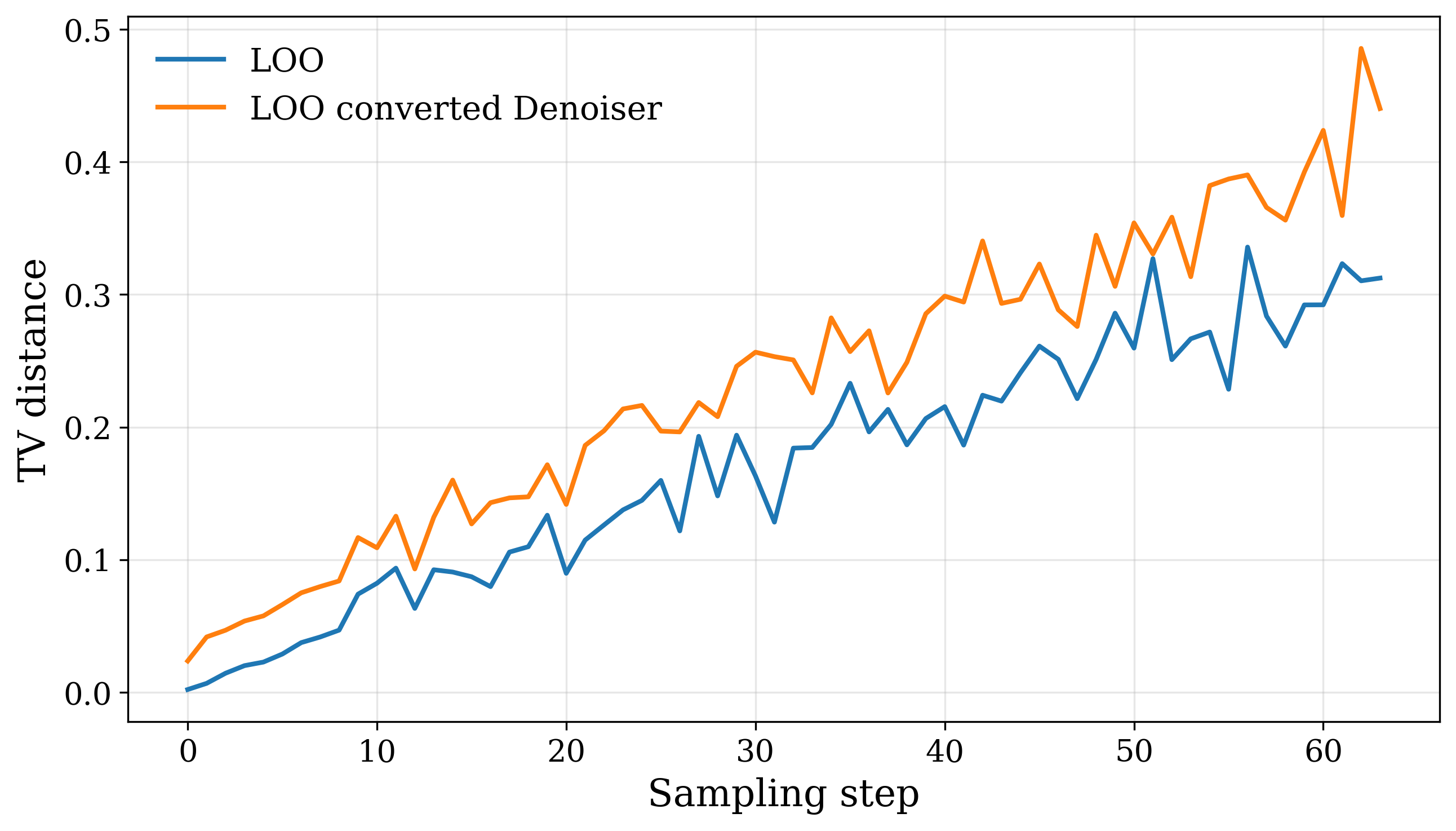}
        \caption{OWT, converted denoiser}
        \label{fig:loo-sensitivity-owt-converted}
    \end{subfigure}
    \hfill
    \begin{subfigure}[b]{0.48\textwidth}
        \centering
        \includegraphics[width=\textwidth]{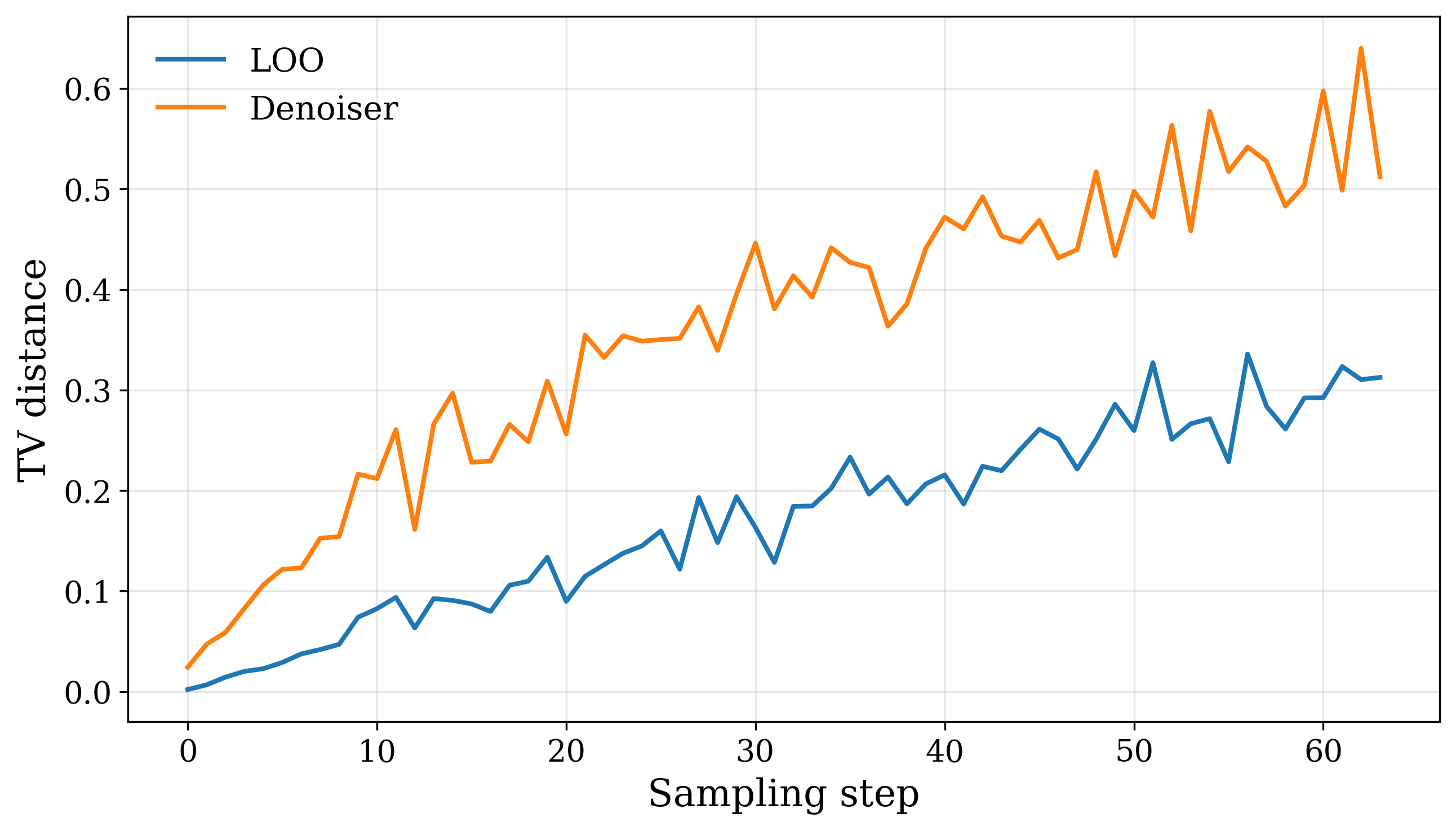}
        \caption{OWT, direct leave-one-out}
        \label{fig:loo-sensitivity-owt-loo}
    \end{subfigure}
    \caption{Sensitivity of the leave-one-out prediction to the local observation $\bx_t^\ell$. On MNIST, the converted denoiser and the directly trained leave-one-out model are both nearly insensitive to $\bx_t^\ell$. On OWT, the sensitivity remains visible for both models, but it is lower for the directly trained leave-one-out predictor than for the converted denoiser. This indicates that the leave-one-out invariance is not automatically recovered in the high-dimensional regime, and that proximity to this invariance is informative about the quality of the attained optimum.}
    \label{fig:loo-sensitivity}
\end{figure}

On low-dimensional examples such as MNIST, the leave-one-out posterior is indeed nearly insensitive to the local observation. Moreover, the denoiser converted into the leave-one-out representation and the leave-one-out predictor trained directly produce almost identical sensitivity curves. This strongly suggests that both models are already close to their respective optima, in which case the conversion formula of \Cref{prop:loo_to_denoiser} is effectively exact.

On larger sequence models, such as the transformer trained on OWT, the sensitivity remains clearly nonzero. This shows that in realistic high-dimensional settings the leave-one-out invariance is not automatically recovered by finite-capacity models trained with standard objectives. More generally, the sensitivity diagnostic appears to be well aligned with downstream performance: models that are closer to the leave-one-out optimum (using the local sensitivity as proxy) also tend to be the ones that perform better generatively.

This observation naturally suggests enforcing the invariance directly in the architecture. For transformers, however, doing so while preserving access to the rest of the context is not entirely straightforward.

\paragraph{Hollow transformer.} A proposal for such an architecture is the hollow transformer introduced by \cite{sun2023scorebased} and later used by \cite{zhao2025informed}. The idea is to combine two autoregressive streams, one left-to-right and one right-to-left, and to offset the representations so that the output at position $\ell$ never attends to the input token at the same position, while still depending on all the other positions. Equivalently, in the continuous relaxation used to describe the architecture, the $\ell$-th output is constrained to satisfy $\partial f(\bx_t, t)^\ell / \partial \bx_t^\ell = 0$. This makes the architecture structurally compatible with the leave-one-out target.

In \cite{zhao2025informed}, this construction is used in the absorbing setting to learn the additional leave-one-out information needed by the informed corrector without modifying the ELBO objective. The same motivation applies here: if the target at position $\ell$ should ignore $\bx_t^\ell$, it is appealing to remove that dependence by design rather than by optimization.

This inductive bias, however, is not free. Besides requiring two autoregressive streams, the hollow transformer also restricts the information flow relative to a standard transformer, since the output at position $\ell$ cannot aggregate the full context in a single attention step. More precisely, when computing the $\ell$-th output, the tokens to the left and to the right of $\ell$ interact within their respective groups, but cross-group interactions can only occur through deeper layers. This weakens the directness of the attention mechanism and makes the prediction problem harder. In our own experiments on LM1B, this translated into worse optimization and worse final performance than the corresponding standard DiT models, in line with the observations of \cite{zhao2025informed}. We therefore do not use the hollow transformer in the main experiments. Our view is that it remains a conceptually clean way to enforce the leave-one-out structure, but in large-scale language modeling this structural benefit can be outweighed by the optimization cost.

%% file: appendix/experiments.tex
\section{Experimental Details}
\label{app:exp-details}
  
\subsection{Setup and hyperparameters}

We report here the details relevant to the language-model experiments. We use the same setting as \cite{lou2024discrete,sahoo2024simple}. All models are trained with the same DDiT backbone and the same optimization scheme, and the only dataset-specific architectural change is the sequence length. Both LM1B and OWT are trained from scratch for $10^6$ steps with global batch size $512$, Adam with learning rate $3 \times 10^{-4}$, $\beta_1 = 0.9$, $\beta_2 = 0.999$, $\epsilon = 10^{-8}$, no weight decay, gradient clipping at $1.0$, and a constant learning-rate schedule with $2500$ warmup steps. Training uses bfloat16, exponential moving average with decay $0.9999$, antithetic time sampling, and sampling floor $\epsilon = 10^{-3}$. Unless stated otherwise, all experiments use seed $1$. For generative evaluation we report GPT-2 Large generative perplexity. The sampling uses higher precision $\mathrm{float64}$. These hyperparameters are summarized in \Cref{tab:exp-training}.

\begin{table}[t]
\centering
\captionsetup{font=small}
\caption{Training setup for the language-model experiments.}
\label{tab:exp-training}
\makebox[\linewidth][c]{%
\resizebox{0.6\linewidth}{!}{%
\begin{tabular}{lll}
\toprule
Setting & LM1B / OWT \\
\midrule
Backbone & DDiT \\
Hidden size & $768$ \\
Conditioning dimension & $128$ \\
Number of blocks & $12$ \\
Number of heads & $12$ \\
Dropout & $0.1$ \\
Training steps & $10^6$ \\
Global batch size & $512$ \\
Optimizer & Adam \\
Learning rate & $3 \times 10^{-4}$ \\
$\beta_1, \beta_2, \epsilon$ & $0.9, 0.999, 10^{-8}$ \\
Weight decay & $0$ \\
Learning-rate schedule & constant with $2500$ warmup steps \\
Precision & bfloat16 \\
Gradient clipping & $1.0$ \\
EMA decay & $0.9999$ \\
Antithetic time sampling & yes \\
Sampling floor & $10^{-3}$ \\
Noise schedule & linear schedule \\
\bottomrule
\end{tabular}
}}
\end{table}

\begin{table}[t]
  \centering
  \captionsetup{font=small}
  \caption{Sampling runtime in seconds per sample for the predictor-only temperature and nucleus frontiers, and the predictor-corrector runs. Nucleus is split into filtered top-$p$ sampling ($p<1$) and the unfiltered case ($p=1$).}
  \label{tab:corrector_runtime_per_sample}
  \resizebox{0.6\linewidth}{!}{%
  \begin{tabular}{lrrrr}
  \toprule
  Method & NFE 128 & NFE 256 & NFE 512 & NFE 1024 \\
  \midrule
  Temperature & 8.512 & 15.895 & 30.602 & 60.316 \\
  Nucleus ($p=1$) & 8.410 & 15.883 & 30.402 & 59.492 \\
  Nucleus ($p<1$) & 27.141 & 52.969 & 104.504 & 207.531 \\
  Predictor-corrector & 7.621 & 13.945 & 26.543 & 53.199 \\
  \bottomrule
  \end{tabular}
  }
\end{table}


LM1B is trained and evaluated on the same dataset family with the BERT tokenizer, with sequence length $L=128$. OWT is trained on the OpenWebText train split and validated on the held-out OpenWebText validation split with the GPT-2 tokenizer, with sequence length $L=1024$. The zero-shot perplexity evaluation of \Cref{tab:ppl-eval-multidataset_ppl} uses the OWT checkpoints and evaluates them on OWT together with AG News \cite{zhang2016characterlevelconvolutionalnetworkstext}, PubMed \cite{cohan2018discourseawareattentionmodelabstractive}, Lambada \cite{paperno2016lambadadatasetwordprediction}, WikiText \cite{merity2016pointersentinelmixturemodels}, PTB \cite{marcus-etal-1993-building}, and LM1B retokenized with GPT-2, all at sequence length $1024$. We use sentence packing for all datasets.

In the absorbing-uniform variants, the noised data and the uniform noise variable $(x_t, u_t)$ are first embedded with the same embedding table with hidden size $d$, concatenated together and with $\indic_{x_t = u_t}$, forming a vector of dimension $2d + 1$, and then passed through an MLP with hidden dimension $4d$ and output dimension $d$. This adds 7M parameters, around $4\%$ of the total parameter count. 



Across all comparisons, we keep the previous training setup fixed and vary only the modeling choices that are central to the paper: the training loss, the prediction target and the associated reverse-process parameterization. The denoiser-based UDM baseline uses the standard denoiser prediction, while the leave-one-out model replaces this with the leave-one-out target. The maximal-coupling models change only the path coupling. The masked baseline uses the standard absorbing-state parameterization of \cite{sahoo2024simple} with carry-over unmasking and zero-masking probability. We provide a summary of the available modeling choices in \Cref{tab:exp-algos}.


The sampling frontiers are obtained either with temperature sampling or top-$p$ sampling \citep{Holtzman2020The} at fixed checkpoint, and plotting generative perplexity against the resulting entropy.
For top-$p$ frontiers we use $p \in \{0.80,0.85,0.90,0.92,0.94,0.96,0.98,1.00\}$ and $\mathrm{NFE} \in \{8,16,32,64,128,256,512,1024\}$. For temperature frontiers we use $T \in \{0.80,0.82,\dots,1.10\}$ over the same NFE grid. The predictor-corrector experiments are run on OWT with the confidence-based corrector described in the paper, sweeping the number of corrector steps $M \in \intset{1}{5}$ and the number of parallel updates $k \in \intset{1}{16}$ for $\mathrm{NFE} \in \{64,128,256,512,1024\}$.

The training time for each OWT run is approximately 1000 H100 hours. For LM1B, it is approximately 250 H100 hours. Each generative frontier evaluations compute time is approximately 1 hour on a H100 GPU.

\begin{table}[t]
\centering
\captionsetup{font=small}

\begin{minipage}[t]{0.48\linewidth}
\centering
\caption{Modeling choices for considered methods.}
\label{tab:exp-algos}
\resizebox{\linewidth}{!}{%
\begin{tabular}{lll}
\toprule
Method family & Training loss & Prediction target \\
\midrule
UDM  & cross-entropy or ELBO & denoiser or leave-one-out \\
Max-coupling UDM & ELBO & denoiser or leave-one-out \\
MDM & cross-entropy or ELBO & denoiser \\
AUDM & ELBO & denoiser \\
ReAUDM & ELBO & denoiser \\
\bottomrule
\end{tabular}
}
\end{minipage}
\hfill
\begin{minipage}[t]{0.48\linewidth}
\centering
\caption{Sudoku experiment setup and sweep ranges. 
}
\label{tab:sudoku-exp}
\resizebox{\linewidth}{!}{%
\begin{tabular}{ll}
\toprule
Quantity & Value \\
\midrule
Task & $9 \times 9$ Sudoku completion \\
Training split & 50k puzzles \\
Model & DiT, about 6M parameters \\
Learning-rate sweep & $\{10^{-4}, 3{\times}10^{-4}, 5{\times}10^{-4}, 7{\times}10^{-4}, 10^{-3}\}$ \\
NFE sweep & $\{16, 32, 64, 128\}$ \\
Evaluation & $100$ repeats with $1000$ validation puzzles \\
Metric & solve rate, reported as Sudoku accuracy \\
\bottomrule
\end{tabular}
}
\end{minipage}

\end{table}

\subsection{Additional results}

\paragraph{Temperature and top-$p$ sampling} We report the temperature frontier across NFEs $\{16, 128, 512\}$ in \Cref{fig:parameterization_temperature}. The full top-$p$ frontiers across all NFEs are reported in \Cref{fig:parameterization-nucleus-appendix}. They confirm the same trend as in the main text: for uniform diffusion, the leave-one-out parameterization is consistently preferable in the low-entropy regime. For the maximal-coupling variants the difference between the two parameterizations is smaller but shows a similar trend.

\begin{figure}[t]
\centering
    \includegraphics[width=0.7\linewidth]{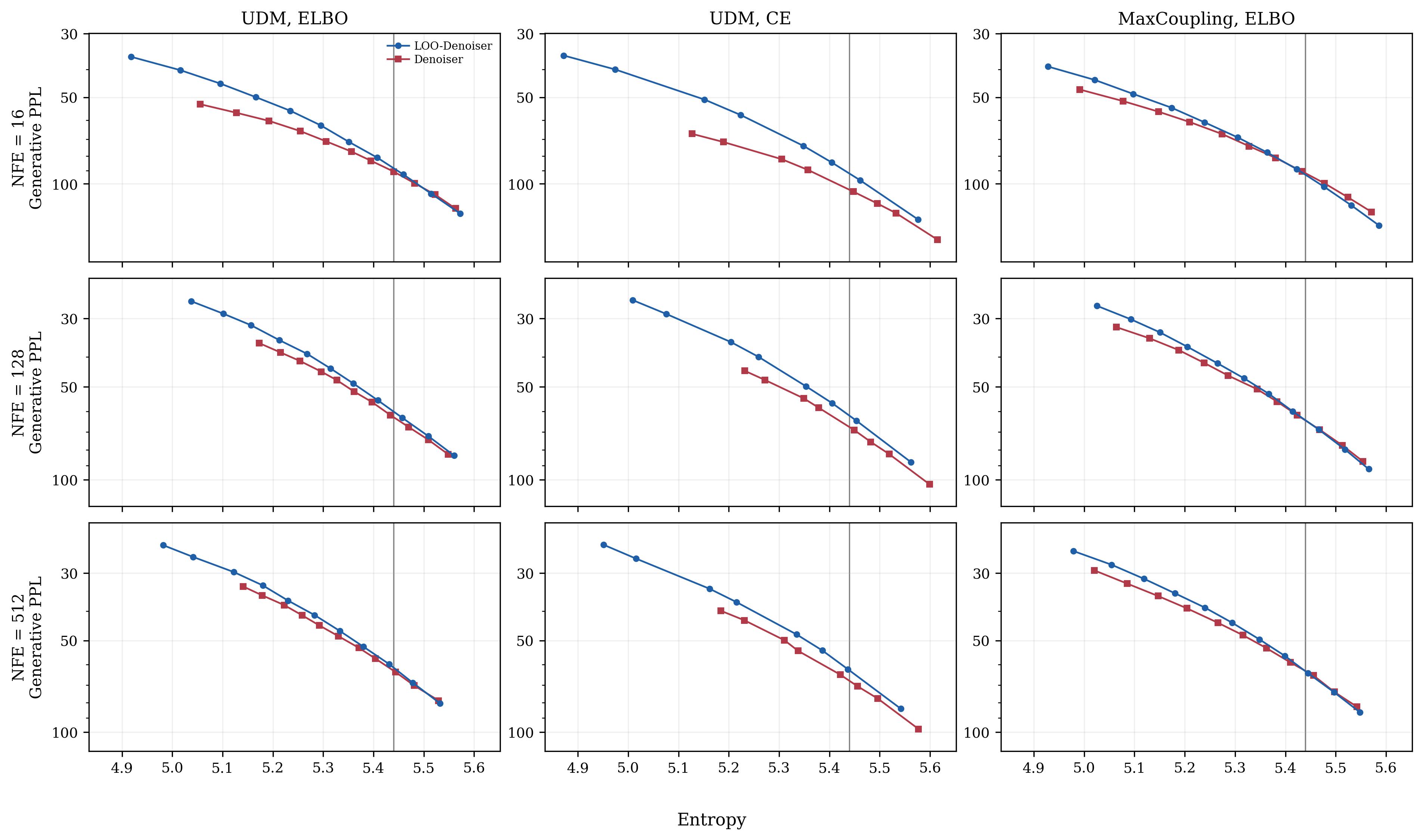}
\caption{Generative frontiers with varying temperature across NFEs for the denoiser and leave-one-out parameterizations.}
\label{fig:parameterization_temperature}
\end{figure}

\begin{figure}[t]
    \centering
    \includegraphics[width=1.0\textwidth]{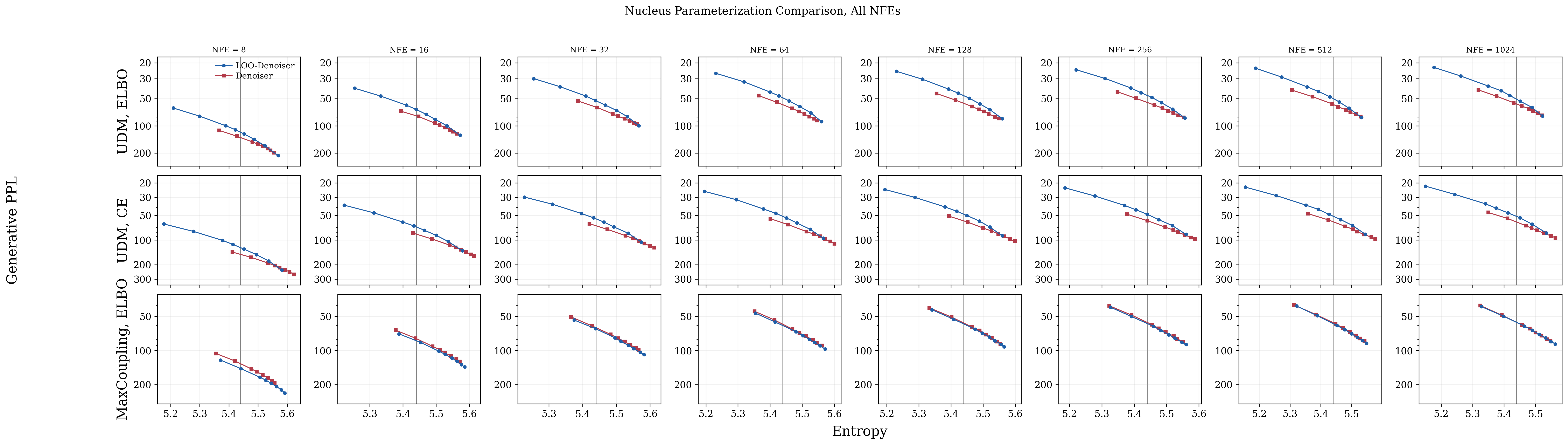}
    \caption{Top-$p$ frontiers across all NFEs for the denoiser and leave-one-out parameterizations.}
    \label{fig:parameterization-nucleus-appendix}
\end{figure}

Applying nucleus sampling after converting a denoiser prediction to the leave-one-out representation also improves the frontier, as shown in \Cref{fig:nucleus-application-frontier}. The converted denoiser closes a substantial part of the gap to the model trained directly with the leave-one-out target, while remaining clearly better than applying top-$p$ to the denoiser itself.

The predictor-corrector comparison with filtered top-$p$ sampling is shown in \Cref{fig:corrector-vs-nucleus}. At 128 and 256 NFEs, the two approaches are broadly competitive, while at 512 and 1024 NFEs several predictor-corrector configurations move to a strictly better region of the frontier.

\paragraph{Performance comparison with AUDM}
Finally, \Cref{fig:audm-udlm-frontier-appendix} reports the full frontier comparison for AUDM, resampled AUDM, UDM, and MDM. The two AUDM variants track each other closely and remain competitive throughout the NFE range. They outperform MDM and the denoiser parameterization variants, which is consistent with the main-text conclusion that the absorbing-uniform construction preserves strong sampling behavior while improving likelihood.


\begin{figure}[ht!]
    \centering
    \includegraphics[width=0.95\textwidth]{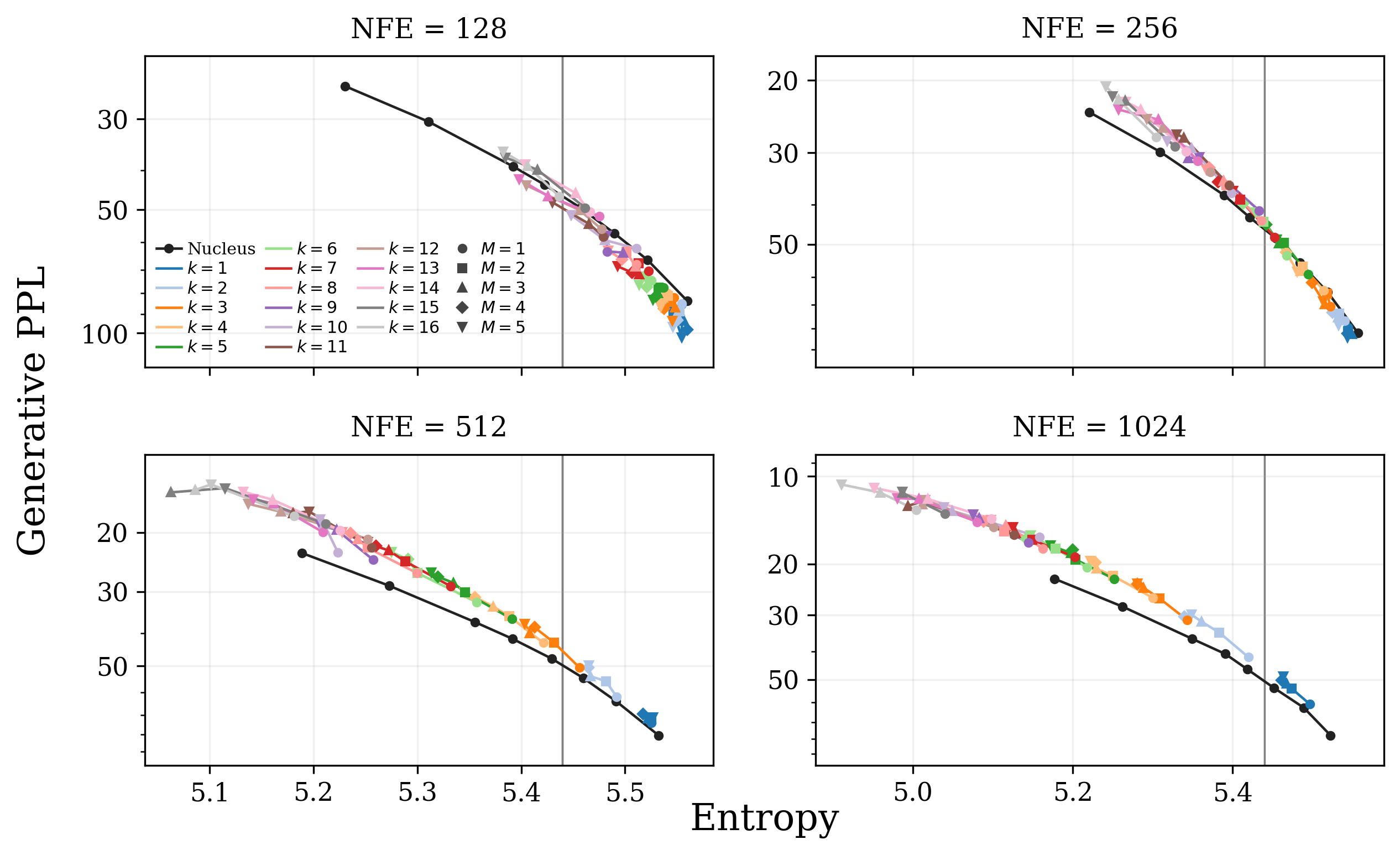}
    \caption{Comparison between predictor-corrector sampling and top-$p$ sampling.}
    \label{fig:corrector-vs-nucleus}
\end{figure}

\begin{figure}[ht!]
    \centering
    \includegraphics[width=0.95\textwidth]{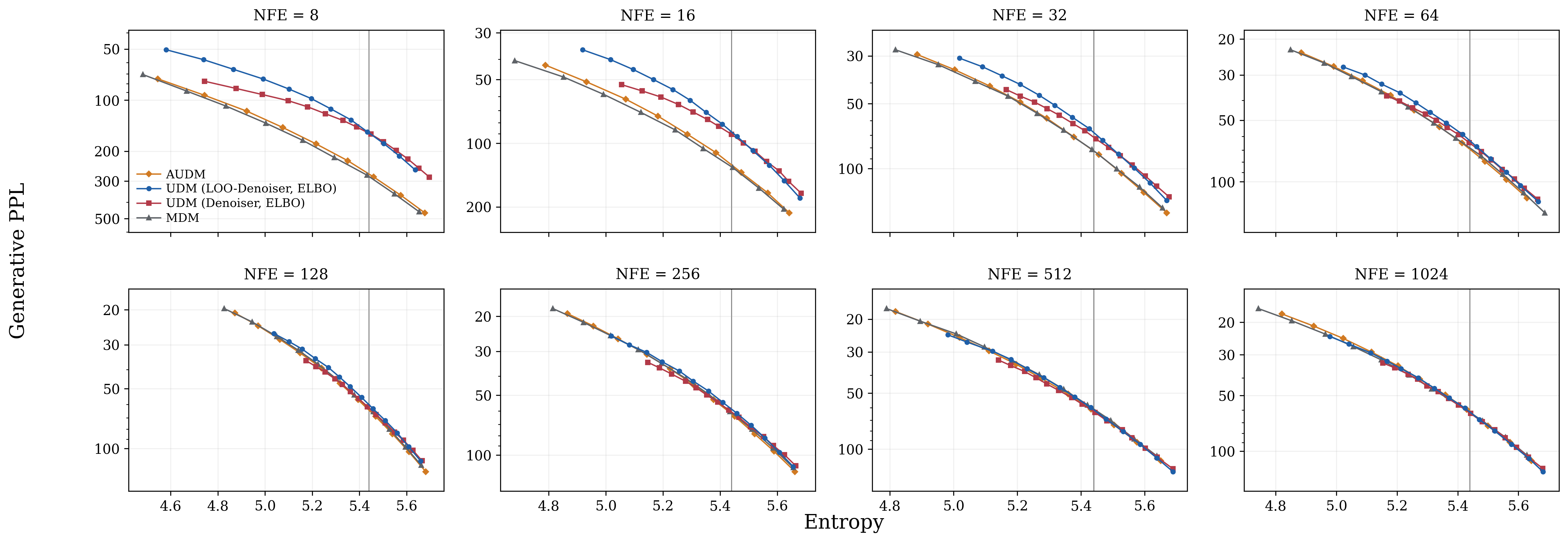}
    \caption{Full frontier comparison for AUDM, resampled AUDM, UDM, and MDM across all NFEs.}
    \label{fig:audm-udlm-frontier-appendix}
\end{figure}

\newpage

\subsection{Generated samples}


We show representative generated text from the OWT top-$p$ sweep below. For each method, we select the $\mathrm{NFE}=64$ sample whose empirical entropy is closest to $5.4$, matching the data entropy estimate used in the main text. We also compare top-$p$ sampling and predictor-corrector for the UDM ELBO / LOO-Denoiser model at $\mathrm{NFE}=512$, selecting the closest-entropy sample in each sweep.
\label{app:owt-generated-samples}

\input{appendix/generated_samples}
\clearpage

%% file: appendix/generated_samples.tex
\noindent\textbf{Top-$p$ sampling examples ($\mathrm{NFE}=64$).}

\vspace{0.35em}

\noindent\begin{mdframed}[linecolor=black!20,backgroundcolor=gray!4,linewidth=0.3pt,innerleftmargin=6pt,innerrightmargin=6pt,innertopmargin=5pt,innerbottommargin=5pt]
\textbf{UDM CE / LOO-Denoiser}\hfill {\small Entropy $=5.418$, Gen-PPL $=47.0$}

\smallskip
{\small\itshape
because they still play on the same team, rather than each other. Having met the slower contenders, those who think they are the only the fastest have fast. Miguel from the start to start of the race looked good with a decent first finish in the field but struggles in the first 10, as he crosses \#13, trailing at ten, and starts off at \#13 limping for about 21 seconds.

\medskip
By 1:20, he is out. He eventually breaks back, however, with a string of misses throughout the race before slamming home on laps 19 to knock out yet another winner, Miguel. In a video provided by Ahmed, Victor Sane of the five-man pack also stated that he has opted not to meet the \#13 Chachea after running only 492 with him in lap 2.}
\end{mdframed}

\vspace{0.35em}

\noindent\begin{mdframed}[linecolor=black!20,backgroundcolor=gray!4,linewidth=0.3pt,innerleftmargin=6pt,innerrightmargin=6pt,innertopmargin=5pt,innerbottommargin=5pt]
\textbf{UDM CE / Denoiser}\hfill {\small Entropy $=5.401$, Gen-PPL $=54.8$}

\smallskip
{\small\itshape
the current installment of the series, Macrossi and showmaster would give us the two Init-List volumes, with the fundamental decision in place: collecting the six on the first issue and those other five in. Update: in the next time, he no longer acknowledges Nina and his planet, Nina, barely mentioning the source of ``contract...'' stories.

\medskip
Those insights from the side that were all straightened for me. TWI's...I think And...yeah yeah probably that means a lot. Now, let's take for a look what we see the comics by Robert Sottarzio and Blake Nagle, and as well as comic book ``from the crew'' Miles Gerhart and Mike Campbell. Speaking of, those guys: Pat Bubuman and Fluffy Rainplie.}
\end{mdframed}

\vspace{0.35em}

\noindent\begin{mdframed}[linecolor=black!20,backgroundcolor=gray!4,linewidth=0.3pt,innerleftmargin=6pt,innerrightmargin=6pt,innertopmargin=5pt,innerbottommargin=5pt]
\textbf{UDM ELBO / Denoiser}\hfill {\small Entropy $=5.421$, Gen-PPL $=54.8$}

\smallskip
{\small\itshape
the current directions of what to expect out of the show. It would get much larger bigger. And then we would have to make sure of it. Things to expect would be forced Alan to begin with puberty. Update: at the next episode, he no longer seems to be finally starting with puberty, but at the next remaining episode of one. What follows is an intriguing glimpse into basic goals that they all had in the past.

\medskip
The key are those who feel keen that watching these shows will probably reveal what is good for them, such as pimping for its sponsor or profiting from money, which is good for themselves. And than that though, might as well keep the people out of this.}
\end{mdframed}

\vspace{0.35em}

\noindent\begin{mdframed}[linecolor=black!20,backgroundcolor=gray!4,linewidth=0.3pt,innerleftmargin=6pt,innerrightmargin=6pt,innertopmargin=5pt,innerbottommargin=5pt]
\textbf{UDM ELBO / LOO-Denoiser}\hfill {\small Entropy $=5.400$, Gen-PPL $=42.2$}

\smallskip
{\small\itshape
the current state of the series as one of my favorites. It would be much easier to see this story evolve to get to 2021, and it shows Pats' POV on 9/11 and the other five timelines. Update: Alan tells it to himself he will ``move to his planet with a black cube when the sequence is set to one.'' Ok, so happy to say this is something that makes all the jokes just disappear.

\medskip
Yes, I feel bad that the relatively short storytelling has taken too far in the direction of the dust-covering action ending. I love all the tension and action of this show, and, for some reason, might as well watch the hell out of the ending because it's much deeper than that.}
\end{mdframed}

\vspace{0.35em}

\noindent\begin{mdframed}[linecolor=black!20,backgroundcolor=gray!4,linewidth=0.3pt,innerleftmargin=6pt,innerrightmargin=6pt,innertopmargin=5pt,innerbottommargin=5pt]
\textbf{AUDM}\hfill {\small Entropy $=5.439$, Gen-PPL $=53.9$}

\smallskip
{\small\itshape
the current condition of the series as to its ratings. Some critics also point to the legacy of Michaels in Michaels hands, and question his decision to leave the still on-air show running ``the other way around.'' Sills, however, have no recent backlash against Michaels. Whether with the network.

\medskip
Whether the failure of the show, the artistic intent of the show, or their reputation is something that they all share, we should not expect such a backlash. It bears noting that the failure of the Murgons franchise was one of the goals of Michaels earlier. Sports-fors often say ``It's good for the teeth,'' but that's true when key figures in the sports enterprise are replaced. We see it this way: Viewers assume they will have to part ways.}
\end{mdframed}

\vspace{0.35em}

\noindent\begin{mdframed}[linecolor=black!20,backgroundcolor=gray!4,linewidth=0.3pt,innerleftmargin=6pt,innerrightmargin=6pt,innertopmargin=5pt,innerbottommargin=5pt]
\textbf{Resampled AUDM}\hfill {\small Entropy $=5.446$, Gen-PPL $=53.6$}

\smallskip
{\small\itshape
Simply put, the author notes and feedback tend to always be anonymous, not multiauthored. The higher your feedback, the less it will be ``no reading more'' and ``how do I have more time,'' etc. Being anonymous does not play into the reader character and feedback, however. I've spent more time reading what reviews say than watching what I read.

\medskip
I believe it's not bad, because the text fits into itself, so that's why I can't read all. Get the text card Every web site has a text card: the main page, mailing list, talkback list, the comments, thumbs/crumbs up, reviews etc. I generally do more with 50 cards, no one reads them when you want post reviews, but they're doing that most of the time, so I rely on it.}
\end{mdframed}

\noindent\textbf{UDM ELBO / LOO-Denoiser comparison ($\mathrm{NFE}=512$).}

\vspace{0.35em}

\noindent\begin{mdframed}[linecolor=black!20,backgroundcolor=gray!4,linewidth=0.3pt,innerleftmargin=6pt,innerrightmargin=6pt,innertopmargin=5pt,innerbottommargin=5pt]
\textbf{Top-$p$ sampling}\hfill {\small Entropy $=5.392$, Gen-PPL $=41.5$}

\smallskip
{\small\itshape
average 7.8 assists. That's where he went from getting high to major in the '90s. When it was season's end, though, Westbrook brought Seth Jones in. He didn't buy himself a chance to run with that. ``I think we had a lot of matchups, and his matchup looked good,'' he says. ``He also made a lot of adjustments in his playing style. I think Boris (Green) and Kemba Walker are getting better and they both make a lot of plays. But he's also made his game look good. We had guys like David West, but my team is losing guys. I'd be afraid to run into that.''

\medskip
Oh, it's midseason, but Westbrook will have to lead this new roster with Hassan Whiteside, whom the Thunder played with this past year as his Big Sam. Aldridge talks about a three-team rotation based on his Thibs. ``They're from San Jose, they play at Portland, San Antonio, it's a rivalry there, I think,'' he says. ``That's the biggest. Their team here is from California. They scored 110 field goals at the 12,000 throw line, they converted on 90. They're learning a new style. That's the big one. Part of it is just being able to incorporate the play and the tendencies and last year, the wins. You want to give the win a building block for a rotation.''}
\end{mdframed}

\vspace{0.35em}

\noindent\begin{mdframed}[linecolor=black!20,backgroundcolor=gray!4,linewidth=0.3pt,innerleftmargin=6pt,innerrightmargin=6pt,innertopmargin=5pt,innerbottommargin=5pt]
\textbf{Predictor-corrector}\hfill {\small Entropy $=5.403$, Gen-PPL $=37.5$}

\smallskip
{\small\itshape
But the Windows operating system is quickly falling into relative obscurity and/or complete obscurity. In fact it has come to be called WinTV Meets Windows. I was shocked at that in my eyes. And the WinTV Framework? Jonathan (Jackson's nickname) is a developer himself. He created it. He wrote, all the architecture and components along with all of the servers and apps running it on Windows. Moreover, Jonathan is a senior developer of the C++, so he is responsible for providing the Windows Store API for the software.

\medskip
4: 1/1\% Speedtest list is to help it's team experience the software as it is. Moreover, it ensures that it has everything it needs to build a new system, during development. Even if you want to try different components and projects, the whole code works. 3: Please write tests every time again. As with building the system, a simple, simple user interface, it needs very many tests. If you have more than 1\% of such a test (CLS), then the speedtest list is also very useful. It shows how far the code needs to go if you include penetration testing (TPT). So in case you'd like to make a quick test and even then 1-2 in the long run, the new code is only possible.}
\end{mdframed}